\tiny\color{gray},
\newcommand{\E}{\mathbb{E}}
\newcommand{\N}{\mathbb{N}}
\newcommand{\norm}[1]{\left\|#1\right\|}
\newcommand{\de}{\partial}
\newcommand{\wt}{\widetilde}
\newcommand{\mb}{\mathbf}
\newcommand{\ol}{\overline}
\DeclareMathOperator{\pol}{pol}
\newcommand{\R}{\mathbb{R}}
\newcommand{\abs}[1]{\left|#1\right|}
\numberwithin{equation}{section}
\theoremstyle{plain}
\newtheorem{theorem}{Theorem}[section]
\newtheorem{corollary}[theorem]{Corollary}
\newtheorem{lemma}[theorem]{Lemma}
\newtheorem{proposition}[theorem]{Proposition}
\newtheorem{assumption}[theorem]{Assumption}
\crefname{assumption}{Assumption}{Assumption}
\theoremstyle{definition}
\newtheorem{definition}[theorem]{Definition}
\crefname{equation}{}{}
\tikzset{join/.code=\tikzset{after node path={%
\ifx\tikzchainprevious\pgfutil@empty\else(\tikzchainprevious)%
edge[every join]#1(\tikzchaincurrent)\fi}}}
\tikzset{>=stealth',every on chain/.append style={join},
         every join/.style={->}}
\tikzstyle{labeled}=[execute at begin node=$\scriptstyle,
\newcommand{\Max}[1]{}
\newcommand{\red}{\textcolor{red}}
\definecolor{OliveGreen}{rgb}{0,0.6,0}
\newcommand{\bill}[1]{}
\title{Higher Order Generalization Error for \\ 
First Order Discretization of Langevin Diffusion}
  \author{Mufan (Bill) Li\thanks{
  Department of Statistical Sciences at
   University of Toronto, and Vector Institute, \texttt{mufan.li@mail.utoronto.ca}
   }
   \thanks{This work was completed during an internship at Borealis AI, 
          supported by the MITACS Accelerate Fellowship.}
   \and
   Maxime Gazeau\thanks{
    LG Toronto AI Lab, \texttt{maxime.yves.gazeau@gmail.com}
 }
}
\begin{document}

\maketitle

\begin{abstract}

We propose a novel approach to analyze generalization error 
for discretizations of Langevin diffusion, 
such as the stochastic gradient Langevin dynamics (SGLD). 
For an $\epsilon$ tolerance of expected generalization error, 
it is known that a first order discretization can reach this target 
if we run $\Omega(\epsilon^{-1} \log (\epsilon^{-1}) )$ iterations 
with $\Omega(\epsilon^{-1})$ samples. 
In this article, we show that with additional smoothness assumptions, 
even first order methods can achieve arbitrarily runtime complexity. 
More precisely, for each $N>0$, 
we provide a sufficient smoothness condition on the loss function 
such that a first order discretization 
can reach $\epsilon$ expected generalization error 
given $\Omega( \epsilon^{-1/N} \log (\epsilon^{-1}) )$ iterations 
with $\Omega(\epsilon^{-1})$ samples. 

\end{abstract}

\tableofcontents

\section{Introduction}
\label{sec:intro}

Let $f:\mathbb{R}^d \times \mathcal{Z} \to \mathbb{R}$ 
be a known loss function with respect to 
parameters $x \in \mathbb{R}^d$ 
and a single data point $z\in \mathcal{Z}$. 
We define $F(x) := \mathbb{E} \, f(x, z)$ 
where the expectation is over $z \sim \mathcal{D}$ 
for an unknown distribution $\mathcal{D}$ over $\mathcal{Z}$. 
We consider the problem of minimizing an expected loss function 
\begin{equation}
	\min_{x \in \mathbb{R}^d} F(x) \,. 
\end{equation}

Since the distribution is unknown, 
we rely on a sample of $n$-independent and 
identically distributed (i.i.d.) data points 
$\mb{z} = \{z_i\}_{i=1}^n \sim \mathcal{D}^n$, 
and evaluate the empirical loss function denoted by 
$F_{\mb{z}}(x) := \frac{1}{n} \sum_{i=1}^n f(x, z_i)$. 
In this setting, a learning algorithm is a random map 
$X : \mathcal{Z}^n \to \mathbb{R}^d$, 
and we use $X_{\mb{z}}$ to denote the output. 
Since the empirical loss $F_{\mb{z}}(X_{\mb{z}})$ can be evaluated, 
it remains to study the expected generalization error 
\begin{equation}
\label{eq:gen_err}
	\mathbb{E} \, F(X_{\mb{z}}) 
	- \mathbb{E} \, F_{\mb{z}}(X_{\mb{z}}) \,, 
\end{equation}
where the expectation is over both 
the randomness of $X$ and $\mb{z} \sim \mathcal{D}^n$. 

In this article, we are interested in the class of algorithms 
that can be viewed as a discretization of the (overdamped) Langevin diffusion, 
defined by the stochastic differential equation (SDE) 
\begin{equation}
\label{eq:langevin_diffusion}
	dX(t) 
	= 
	- \nabla F_{\mb{z}}( X(t) ) \, dt 
	+ \sqrt{ \frac{2}{\beta} } \, dW(t) \,, 
\end{equation}
where $\beta > 0$ is the inverse temperature parameter, 
and $\{ W(t) \}_{t \geq 0}$ is a standard Brownian motion in $\mathbb{R}^d$. 
It is well known that the Langevin diffusion 
converges (as $t \to \infty$) to the Gibbs distribution 
$\rho_{\mb{z}}(x) \propto \exp( -\beta F_{\mb{z}}(x) )$ 
\citep{bakry2013analysis}. 
Furthermore, using the uniform stability condition 
introduced by \citet{Bousquet:2002}, 
the Gibbs distribution is shown to have generalization error $O(n^{-1})$ 
\citep{raginsky2017nonconvex}. 
Many optimization results also rely on 
the Gibbs distribution's minimizing property 
\citep{raginsky2017nonconvex,XuCG17,erdogdu2018global}. 

Using this approach, \citet{XuCG17} studied 
a first order discretization of 
the Langevin diffusion $\{X_{\mb{z},k}\}_{k \geq 0}$ 
with step size $\eta > 0$, 
and showed the approximation error between 
the algorithm and the Gibbs distribution 
is on the order of $O( e^{ - \Omega( k \eta) } + \eta )$, 
where $k > 0$ is the number of iterations. 
In terms of runtime and sample complexity for an $\epsilon$ tolerance 
on generalization error, 
this implies we need to choose a small step size $\eta = O(\epsilon)$, 
leading to $k \geq \Omega( \epsilon^{-1} \log \epsilon^{-1} )$ 
and $n \geq \Omega( \epsilon^{-1} )$. 
This approach corresponds to the top path of approximation steps 
to generalization error in \cref{fig:approx_diagram}.

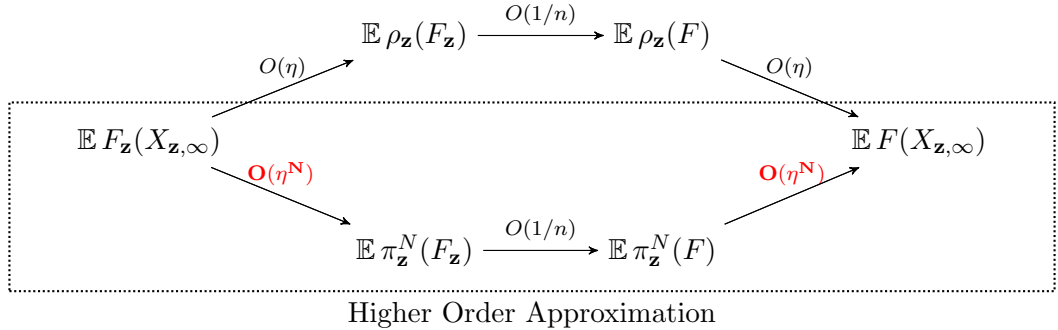
\begin{figure}[h!]
\centering
\begin{tikzpicture}[
ourbox/.style={draw=red, thick, densely dotted,
    inner xsep=2em,
    inner ysep=.5em},
prevbox/.style={draw=black, thick, densely dotted,
    inner xsep=2em,
    inner ysep=.5em}
]
\matrix (m) [matrix of math nodes, row sep=2em, column sep=4em, 
ampersand replacement=\&]
{ 
	\& 
	{\mathbb{E} \, \rho_{\mb{z}}(F_{\mb{z}})}  \& 
	\mathbb{E} \, \rho_{\mb{z}}(F) \& \\ 
	\mathbb{E} \, F_{\mb{z}}(X_{\mb{z}, \infty}) \& \& \& 
	\mathbb{E} \, F(X_{\mb{z}, \infty})
	\\ 
	\& \mathbb{E} \, \pi^N_{\mb{z}}(F_{\mb{z}}) \& 
	\mathbb{E} \, \pi^N_{\mb{z}}(F) \& \\ 
};
{ [start chain] \chainin (m-2-1);
	{[start branch=A] 
		\chainin (m-1-2) [join={node[midway, above, labeled] 
			{O(\eta)}}]; 
		\chainin (m-1-3) [join={node[midway, above,labeled] 
			{O(1/n)}}]; 
		\chainin (m-2-4) [join={node[midway, above, labeled] 
			{O(\eta)}}]; 
	}
	\chainin (m-3-2) [join={node[midway, above, labeled] 
		{\mathbf{\red{ O( \eta^N ) }} } }]; 
	\chainin (m-3-3) [join={node[above,labeled] 
		{O(1/n)}}]; 
	\chainin (m-2-4) [join={node[midway, above, labeled] 
		{\mathbf{\red{ O( \eta^N ) }} } }]; 
}
\node[prevbox, fit=(m-2-1) (m-3-2) (m-2-4), label={270:Higher Order Approximation}] {};
\end{tikzpicture}
\caption{
	Overview of approximation steps taken to bound 
	the generalization error 
	$\mathbb{E} \, F_{\mb{z}}(X_{\mb{z},\infty}) 
	- \mathbb{E} \, F(X_{\mb{z}, \infty})$. 
	Here we use $X_{\mb{z},\infty}$ to denote a sample 
	from the stationary distribution, 
	$O(\cdot)$ to display an approximation error bound with dependence  
	on step size $\eta$ and number of samples $n$, 
	and $\rho_{\mb{z}}(F)$ to denote the integral 
	$\int F \, d\rho_{\mb{z}}$. 
}
\label{fig:approx_diagram}
\end{figure}

In contrast to previous works, 
our approach avoids viewing first order methods as 
a discretization of Langevin diffusion. 
Instead, we view first order methods as 
higher order discretizations of a modified process, 
which we construct via weak backward error analysis 
\citep{DebusscheF11,Kopec13,kopec2015}. 
With the explicit construction, 
we can recover the stationary distribution $ \pi^N_{\mb{z}} $ 
of the modified process, 
which approximates the first order methods to order $O(\eta^N)$, 
where $N>0$ depends on the smoothness of $f(\cdot,z)$. 
Then our main result establishes a generalization bound of $O(n^{-1})$
for this distribution $\pi^N_{\mb{z}}$, 
which is of the same order as Gibbs. 
As a result, our approach shows that the generalization property 
of Gibbs is actually a generic property of Poisson equations. 
Putting these together, our results imply 
we can choose a much larger stepsize of $\eta = O(\epsilon^{1/N})$, 
leading to an improved runtime complexity of 
$k \geq \Omega( \epsilon^{-1/N} \log \epsilon^{-1} )$ 
while keeping the same sample complexity of 
$n \geq \Omega( \epsilon^{-1} )$. 
This is described in the bottom path in 
\cref{fig:approx_diagram}.

We summarize our main contributions as follows. 
We provide an explicit construction of 
the modified process approximating SGLD, 
as well as the stationary distribution $\pi^N_{\mb{z}}$, 
up to an error of order $O(\eta^N)$. 
Under additional smoothness conditions, 
we provide an improved generalization bound 
for first order discretizations of Langevin diffusion. 

The rest of the article is organized as follows. 
We discuss related works and comparison next in 
\cref{subsec:related}. 
In \cref{sec:main_results}, 
we introduce the precise notation and main results. 
In \cref{sec:sketch}, 
we provide an overview of the proofs, 
deferring technical details to the appendix. 
In \cref{sec:discussion}, 
we discuss the assumptions and further extensions 
of this work. 
In \cref{sec:appendix_weak_backward,sec:appendix_uniform_stability,sec:appendix_proof_corollary}, 
we provide the proof of the main results in full detail.

\subsection{Related Works}
\label{subsec:related}

Langevin algorithms for sampling and optimization 
have been very well studied 
\citep{gelfand1991recursive,raginsky2017nonconvex,cheng2018sharp,dalalyan2019user,durmus2017nonasymptotic,erdogdu2018global,li2019stochastic,vempala2019rapid,erdogdu2020convergence}. 
Most of these articles establish approximations 
using the Langevin diffusion, 
either in finite time or in terms of stationary distributions. 
In particular, several existing works have studied 
the approximation of stationary distributions 
\citep{talay1990expansion,MST10,erdogdu2018global}. 
This line of work is most similar to our approach in 
studying the error in the distributional sense. 

\begin{table}[h]
\centering
\renewcommand{\arraystretch}{1.8}
\begin{tabular}{|l|l|l|}
\hline
Paper 
& Regularity Assumptions 
& 
	\bgroup
	\def\arraystretch{1}%
	\begin{tabular}{@{}c@{}}
		Steps $k \geq \Omega(\cdot)$ to reach \\ 
		$\epsilon$ generalization error 
	\end{tabular}
	\egroup
\\ \hline
\cite{raginsky2017nonconvex} 
& Gradient Lipschitz 
& $\epsilon^{-4} (\log (\epsilon^{-1}))^{3}$ 
\\ \hline
\cite{XuCG17} 
& Gradient Lipschitz 
& $\epsilon^{-1} \log(\epsilon^{-1})$ 
\\ \hline
\cite{erdogdu2018global} 
& $ f(\cdot, z) \in C^4_{\pol} $
& $\epsilon^{-2}$ 
\\ \hline 
Present Work 
& Gradient Lipschitz, $ f(\cdot, z) \in C^{6N+2}_{\pol} $ 
& $\epsilon^{-1/N} \log (\epsilon^{-1}) $ \\ \hline
\end{tabular}
\caption{
Comparison of the assumptions and resulting runtime complexities 
to reach $\epsilon$ approximation error by recent works. 
Here $C^\ell_{\pol}$ denotes the space of $C^\ell$ functions 
where all derivatives have polynomial growth, 
and $\Omega(\,\cdot\,)$ notation hides constants 
independent of $\epsilon$. 
We emphasize that all previous works have approximated 
the Langevin algorithm using the Gibbs density, 
while the present work constructs a modified density. 
We provide precise details on assumptions and results 
in \cref{sec:main_results}. 
}
\label{tb:related}
\end{table}

The idea of backward error analysis traces back to 
the study of numerical linear algebra \citep{wilkinson1960error} 
and numerical methods for ordinary differential equations (ODEs) 
\citep{hairer2006geometric}. 
This approach views each update step of a numerical algorithm as 
the exact solution of a modified differential equation. 
For example, if we solve the ODE $y'(t) = y(t)^2$ 
with the Euler update $y_{k+1} = y_k + \eta y_k^2$ 
for some $\eta > 0$, 
then the modified equation can be written as an infinite series 
\citep[Chapter IX, Example 1.1]{hairer2006geometric} 
\begin{equation*}
	\wt{y}' 
	= \wt{y}^2 - \eta \wt{y}^3 
		+ \eta^2 \frac{3}{2} \wt{y}^4 
		- \eta^3 \frac{8}{3} \wt{y}^5 
		+ \eta^4 \frac{31}{6} \wt{y}^6 
		+ \cdots \,, 
\end{equation*}
where given initial condition $\wt{y}(0) = y_k$, 
we will have that $\wt{y}(\eta) = y_{k+1}$. 

However, directly extending this construction to SDEs 
pathwise is not straightforward \citep{shardlow2006modified}. 
Instead, \citet{DebusscheF11} introduced a construction approximating 
the numerical method in distribution. 
The authors derived a partial differential equation (PDE) 
describing the evolution of the distribution for the numerical algorithm, 
and consequently the stationary distribution as well. 
We will provide more details in \cref{sec:sketch}. 
This work has been extended to implicit Langevin algorithms 
\citep{Kopec13,kopec2015}, 
higher order discretizations 
\citep{abdulle2012high,abdulle2014high,laurent2020exotic}, 
and stochastic Hamiltonian systems with symplectic schemes 
\citep{wang2016modified,anton2017error,anton2019weak}. 

We summarize a comparison of related approximation results for SGLD 
in \cref{tb:related}.

\section{Main Results}
\label{sec:main_results}

Throughout the article we denote the Euclidean inner product 
by $\langle x, y \rangle$ for $x, y \in \mathbb{R}^d$, 
and the corresponding norm by $\abs{x} := \langle x, x \rangle^{1/2}$. 
Unless otherwise specified, all expectations $\mathbb{E}[\,\cdot\,]$ 
are with respect to all sources of randomness 
including ${\mb{z}}$. 
We denote the conditional expectation on a random variable 
using the subscript notation, more precisely we use 
$\mathbb{E}_{\mb{z}} [ \,\cdot\, ] := \mathbb{E} [\,\cdot\,| {\mb{z}} ]$. 
For any measure or density $\pi$ on $\mathbb{R}^d$, 
we denote the integral $\int \phi \, d\pi$ as $\pi(\phi)$. 
We also use the subscript notation on distributions $\pi_{\mb{z}}$ 
(and other objects) 
to denote the dependence on the dataset $\mb{z} \in \mathcal{Z}^n$. 

Given a multi-index $\alpha = (\alpha_1, \ldots, \alpha_d) \in\mathbb{N}^d$, 
we define $|\alpha| = \alpha_1 + \cdots + \alpha_d$. 
For any function $\phi \in C^{\infty}(\R^d)$, 
we define the short hand derivative notation
$\de_\alpha \phi(x) 
= (\de_{x_1})^{\alpha_1} \cdots (\de_{x_d})^{\alpha_d} \phi(x)$. 
For all $k, \ell \in \mathbb{N}$, 
we also define the function norm 
$  \| \phi \|_{k, \ell} := 
    \sup_{\substack{
            |\alpha| \leq k}} \,
    \sup_{x \in \mathbb{R}^d}
    |\de_\alpha \phi(x)| (1 + |x|^\ell)^{-1}. 
$
This leads to the natural function space 
    $C^k_\ell(\mathbb{R}^d) := 
    \{ \phi \in C^k(\mathbb{R}^d) : 
        \| \phi \|_{k,\ell} < \infty 
    \}$.

We are now ready to state the main assumptions on 
the loss function $f(x,z)$. 
\begin{assumption}[Regularity]
\label[assumption]{asm:smooth_loss}
There exists positive integers $N,\ell$ 
such that $f(\cdot, z) \in C^{6N+2}_{\ell}(\mathbb{R}^d)$ and 
$\sup_{z \in \mathcal{Z}} \| f(\cdot, z) \|_{6N+2,\ell} < \infty$. 
Furthermore, there exists a constant $ M > 0$ such that 
\begin{equation}
    \abs{ \nabla f(x,z) - \nabla f(y,z)}
    \leq M \abs{x - y}, 
    \quad \forall x,y \in \mathbb{R}^d \,. 
\end{equation}
\end{assumption}

Without loss of generality, 
this assumption implies 
for each $\alpha \in \mathbb{N}^d$ with $|\alpha| \leq 6N+2$, 
there exists a constant $M_\alpha > 0$ such that 
$\sup_{z \in \mathcal{Z}} |\de_\alpha f(x, z)|
\leq M_\alpha (1 + |x|^{\ell})$. 
For each $k \in \mathbb{N}$, we define
$M_k = \sum_{|\alpha| = k} M_\alpha$.

\begin{assumption}[Dissipative]
\label[assumption]{asm:dissipative}
There exist constants $m>0$ and $b\geq 0$ such that 
\begin{equation}
    \langle x, \nabla f(x, z) \rangle
    \geq m \abs{x}^2 - b, 
    \quad \forall x \in \mathbb{R}^d, z \in \mathcal{Z}. 
\end{equation}
\end{assumption}

We remark this is a commonly used sufficient condition 
for exponential convergence of Langevin diffusion 
\citep{raginsky2017nonconvex,bakry2008simple}, 
and it can be replaced by any other sufficient condition 
\citep{villani2009hypocoercivity,bakry2013analysis}. 

We define stochastic gradient Langevin dynamics (SGLD) 
by the following update rule 
\begin{equation}
\label{eq:gld}
    X_{k+1}  = X_{k} - \eta \nabla F_{\zeta_k}(X_k) 
    + \sqrt{\frac{2\eta}{\beta}} \xi_k \,, 
\end{equation}
where $X_0 = x \in \mathbb{R}^d$ is 
a deterministic initial condition, 
$\eta > 0$ is a constant step size (or learning rate), 
$\beta > 0$ is the inverse temperature parameter, 
$\{\zeta_k\}_{k \in \mathbb{N}}$ 
are uniform (minibatch) subsamples of ${\mb{z}}$ 
(with replacement) of size $n_b \leq n$, 
and $\{\xi_k\}_{k \in \mathbb{N}}$ are 
i.i.d. samples from $\mathcal{N}(0, I_d)$. 
Here we let $\{\zeta_k\}_{k \in \mathbb{N}}, 
\{\xi_k\}_{k \in \mathbb{N}}$ be independent 
conditioned on ${\mb{z}}$. 

We are ready to state our first main result.

\begin{theorem}
[Approximation of SGLD]
\label{thm:backward_analysis_main}
Suppose $f(x,z)$ satisfies \cref{asm:smooth_loss,asm:dissipative} 
with order of approximation $N \in \mathbb{N}$. 
Then there exist positive constants $C,\lambda,\ell'$ (depending on $N$), 
such for all step sizes $0 < \eta < \frac{2m}{M^2}$ 
and $\mb{z} \in \mathcal{Z}^n$, 
we can construct a modified stationary measure $\pi^N_{\mb{z}}$, 
with the property that 
for all steps $k \geq 0$, initial condition $X_0 = x$, 
and test function $\phi \in C^{6N + 2}_{\ell}(\mathbb{R}^d)$, 
the following approximation bound on 
the SGLD algorithm $\{X_k\}_{k \geq 0}$ \eqref{eq:gld} holds 
\begin{equation}
	\left| \, \mathbb{E}_{\mb{z}} \, \phi(X_k) 
		- \pi^N_{\mb{z}}(\phi) \, 
	\right| 
    \leq 
    	C 
        \left( 
            e^{ - \lambda k \eta / 2 } + \eta^N
        \right) 
        (1 + |x|)^{\ell'} \, 
        \| \phi - \rho_{\mb{z}} (\phi) \|_{6N+2, \ell} 
        \,.
\end{equation}
In particular, the above result holds for $\phi \in \{ F, F_{\mb{z}} \}$.
\end{theorem}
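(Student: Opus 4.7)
The plan is to carry out the weak backward error analysis construction of \citet{DebusscheF11} in the SGLD setting, reducing the approximation bound to two ingredients: an asymptotic expansion of the one-step transition in $\eta$, and geometric ergodicity of SGLD in a weighted $C^{6N+2}_\ell$ norm.

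First I would derive a Taylor expansion in $\eta$ of the one-step SGLD transition operator $P_\eta \phi(x) := \E_{\mb{z}}[\phi(X_{k+1}) \mid X_k = x]$. Expanding $\phi$ to order $2N$ around $x$ and using $\E_{\mb{z}}[\xi_k] = 0$, $\E_{\mb{z}}[\xi_k\xi_k^\top] = I_d$, together with $\E_{\mb{z}}[\nabla F_{\zeta_k}(x)] = \nabla F_{\mb{z}}(x)$ and moments of $\nabla F_{\zeta_k}(x)$, produces
\begin{equation*}
    P_\eta \phi = \phi + \eta \mathcal{L}\phi + \eta^2 A_2\phi + \cdots + \eta^N A_N\phi + R_{N,\eta}\phi,
\end{equation*}
where $\mathcal{L}\phi = -\langle \nabla F_{\mb{z}}, \nabla\phi\rangle + \beta^{-1}\Delta\phi$ is the generator of \eqref{eq:langevin_diffusion}, each $A_j$ is an explicit differential operator of order at most $2j$ (with coefficients that are polynomials in derivatives of $F_{\mb{z}}$ and in minibatch variance moments), and the remainder $R_{N,\eta}$ is bounded in $C^{6N+2}_\ell$ by $O(\eta^{N+1})(1+|x|)^{\ell'}\|\phi\|_{2N+2,\ell'}$. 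The polynomial growth control here is where \cref{asm:smooth_loss} with a $C^{6N+2}_\ell$ loss is used, and dissipativity will later supply uniform-in-$k$ moment bounds on $|X_k|$ so that the polynomial weights are harmless.

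Next I would construct the modified measure as an asymptotic correction of the Gibbs measure,
\begin{equation*}
    \pi^N_{\mb{z}}(\phi) := Z_\eta^{-1} \rho_{\mb{z}}\bigl( (1 + \eta u_1 + \cdots + \eta^{N-1}u_{N-1}) \phi \bigr),
\end{equation*}
where $Z_\eta$ is a normalization. The coefficient functions $u_j$ are determined recursively by imposing approximate invariance $\pi^N_{\mb{z}}(P_\eta \phi - \phi) = O(\eta^{N+1})\|\phi\|_{6N+2,\ell}$. Matching coefficients of $\eta^{j+1}$ on the left, using that $\rho_{\mb{z}}$ is invariant for $\mathcal{L}$ (so the $O(\eta)$ term vanishes automatically), gives a hierarchy of Poisson equations of the form
\begin{equation*}
    \mathcal{L}^* u_j = g_j(u_1,\ldots,u_{j-1}, A_2, \ldots, A_{j+1}),
\end{equation*}
against the reference measure $\rho_{\mb{z}}$, where each $g_j$ integrates to zero against $\rho_{\mb{z}}$ (this is the required solvability condition and follows inductively from the construction). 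Under \cref{asm:dissipative} the diffusion $\mathcal{L}$ admits a Poincar\'e inequality with respect to $\rho_{\mb{z}}$ \citep{bakry2013analysis}, hence the Poisson equation has a unique mean-zero solution $u_j$; standard elliptic regularity together with the polynomial growth of $\nabla F_{\mb{z}}$ propagates regularity, so inductively $u_j \in C^{6N+2-6j}_{\ell'}(\R^d)$. This count is exactly why $6N+2$ derivatives are required: each step in the hierarchy consumes a fixed number of derivatives for regularity, and the final test function $\phi$ must retain enough smoothness for the remainder in the Taylor expansion.

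Finally, to convert approximate invariance into the stated bound, I would split
\begin{equation*}
    P_\eta^k \phi(x) - \pi^N_{\mb{z}}(\phi) = \bigl[P_\eta^k \phi(x) - \mu^*_\eta(\phi)\bigr] + \bigl[\mu^*_\eta(\phi) - \pi^N_{\mb{z}}(\phi)\bigr],
\end{equation*}
where $\mu^*_\eta$ is the true invariant measure of $P_\eta$, which exists for $0 < \eta < 2m/M^2$ by dissipativity. The first bracket decays as $C e^{-\lambda k\eta/2}(1+|x|)^{\ell'} \|\phi - \rho_{\mb{z}}(\phi)\|_{6N+2,\ell}$ by a discrete-time geometric ergodicity argument using the Lyapunov function $V(x) = 1+|x|^2$ (dissipativity gives drift $P_\eta V \leq (1-\lambda\eta) V + c\eta$). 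The second bracket is $O(\eta^N)$ by iterating the near-invariance: using the discrete Poisson equation $(I - P_\eta)\Psi = \phi - \mu^*_\eta(\phi)$ and evaluating against the near-invariant $\pi^N_{\mb{z}}$ converts the per-step $O(\eta^{N+1})$ error into a summable telescoping bound. The reason the final constant multiplies $\|\phi - \rho_{\mb{z}}(\phi)\|_{6N+2,\ell}$ rather than $\|\phi\|_{6N+2,\ell}$ is precisely that all Poisson equations require centering against $\rho_{\mb{z}}$.

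The main obstacle will be bookkeeping the polynomial growth of derivatives through the Poisson-equation hierarchy: each application of $\mathcal{L}^{-1}$ potentially inflates both the degree of growth $\ell$ and consumes derivatives, and the inflation has to be controlled uniformly over $j \leq N-1$ while the test function $\phi$ is still in $C^{6N+2}_\ell$ at the end. A secondary technical issue is that the stochastic gradient terms introduce minibatch variance contributions into $A_j$ for $j \geq 2$, which must be shown to preserve the zero-mean solvability condition for each $g_j$ when integrated against $\rho_{\mb{z}}$; a short calculation using that $\E_{\mb{z}}[\nabla F_{\zeta_k}] = \nabla F_{\mb{z}}$ handles this, but it is the step where the analysis differs genuinely from the full-gradient Langevin case and is easy to get wrong.
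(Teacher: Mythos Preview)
Your first two steps---the one-step expansion of $P_\eta$ producing operators $A_j$, and the construction of $\pi^N_{\mb{z}}$ as $\rho_{\mb{z}}$ times $\sum_\ell \eta^\ell \mu_\ell$ with $\mu_\ell$ solving a Poisson hierarchy---match the paper closely. (Minor cosmetic point: the paper does not normalize by $Z_\eta^{-1}$; the $\mu_\ell$ are chosen mean-zero under $\rho_{\mb{z}}$, so $\pi^N_{\mb{z}}(1)=1$ automatically, though the measure is allowed to be signed.)

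The third step is where you diverge. You split via the \emph{true} SGLD invariant measure $\mu^*_\eta$: discrete Lyapunov ergodicity for $P_\eta^k\phi - \mu^*_\eta(\phi)$, and near-invariance plus a discrete Poisson equation for $\mu^*_\eta(\phi) - \pi^N_{\mb{z}}(\phi)$. The paper never introduces $\mu^*_\eta$. Instead it builds a continuous-time \emph{modified flow} $v^{(N)}(t,x)=\sum_{\ell=0}^N \eta^\ell v_\ell(t,x)$, where each $v_\ell$ solves the inhomogeneous backward equation $\partial_t v_\ell - L_{\mb{z}} v_\ell = \sum_{j=1}^\ell L_{j,\mb{z}} v_{\ell-j}$. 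This flow simultaneously satisfies $v^{(N)}(\eta,x)=P_\eta\phi(x)+O(\eta^{N+1})$ and $|v^{(N)}(t,x)-\pi^N_{\mb{z}}(\phi)|\leq Ce^{-\lambda t/2}$ (from the Poincar\'e inequality for $\rho_{\mb{z}}$). Telescoping $\E_{\mb{z}}\phi(X_k)-v^{(N)}(k\eta,x)$ along the chain then yields $\sum_{\ell} \eta^{N+1}e^{-\lambda\ell\eta/2}=O(\eta^N)$ directly, and the exponential term is what remains from $|v^{(N)}(k\eta,x)-\pi^N_{\mb{z}}(\phi)|$.

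Your route has a gap you did not flag. The near-invariance bound $|\pi^N_{\mb{z}}((P_\eta-I)\psi)|=O(\eta^{N+1})\|\psi\|_{6N+2,\ell}$ only applies for $\psi\in C^{6N+2}_\ell$, but you feed it the discrete Poisson solution $\Psi=\sum_{k\geq 0}(P_\eta^k\phi-\mu^*_\eta(\phi))$. Controlling $\|\Psi\|_{6N+2,\ell}$ requires derivative bounds of order $6N+2$ on $P_\eta^k\phi$ that decay geometrically in $k$---a regularity statement for the discrete SGLD semigroup that does not follow from Lyapunov drift alone (drift gives only a $V$-norm contraction, not control of high-order spatial derivatives). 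This is precisely what the modified-flow construction buys: both the exponential decay and the derivative control come from the continuous generator $L_{\mb{z}}$, whose spectral gap and parabolic smoothing are already available under \cref{asm:dissipative}. A related imprecision: your claimed ergodicity bound for the first bracket carries the norm $\|\phi-\rho_{\mb{z}}(\phi)\|_{6N+2,\ell}$, but discrete Lyapunov ergodicity naturally centers by $\mu^*_\eta$, not $\rho_{\mb{z}}$, and controls a weighted sup-norm, not a $C^{6N+2}$ norm; in the paper the centering by $\rho_{\mb{z}}$ arises because it is $v^{(N)}$, built from the continuous semigroup, that converges.
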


The full proof can be found in \cref{sec:appendix_weak_backward}. 

Our second main result is on bounding the generalization error 
of the approximate stationary distribution. 
Here we note that beyond SGLD, 
many discretizations of Langevin diffusion admits 
a higher order approximate stationary distribution $\pi^N_{\mb{z}}$ 
via the same weak backward error analysis construction. 
In particular, the implicit Euler method \citep{Kopec13} 
and weak Runge-Kutta methods \citep{laurent2020exotic} are well studied. 
Therefore, we state the result for all such methods. 

\begin{theorem}
[Generalization Bound of $\pi^N_{\mb{z}}$]
\label{thm:pi_gen_bound}
Suppose $\{X_k\}_{k\geq 0}$ is any discretization 
of Langevin diffusion \eqref{eq:langevin_diffusion} 
with an approximate stationary distribution $\pi^N_{\mb{z}}$ 
as in \cref{thm:backward_analysis_main}. 
Then there exists a constant $C>0$ (depending on $N$), 
such that for all choices of $k,n$ and $\eta \in (0,1)$ 
the following expected generalization bound holds 
\begin{equation}
	\left| \, 
		\mathbb{E} 
		\left[ \pi^N_{\mb{z}}( F ) 
			- \pi^N_{\mb{z}}( F_{\mb{z}} ) 
		\right] \, 
	\right| 
	\leq 
		\frac{C}{n (1-\eta) } \,, 
\end{equation}
where the expectation is with respect to 
$\mb{z} \sim \mathcal{D}^n$. 

\end{theorem}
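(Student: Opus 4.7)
My plan is to adapt the uniform stability framework of Bousquet--Elisseeff, as used by \citet{raginsky2017nonconvex} for the Gibbs measure, and promote it to the higher order measure $\pi^N_{\mb{z}}$ by exploiting the fact that $\pi^N_{\mb{z}}$ is built as a polynomial-in-$\eta$ correction to $\rho_{\mb{z}}$ by weak backward error analysis.

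The first step is a symmetrization argument. For each $i$, let $\mb{z}^{(i)}$ denote the dataset obtained from $\mb{z}$ by replacing $z_i$ with an independent draw $z_i'\sim\mathcal{D}$. Using that the samples are i.i.d.\ and exchangeable, I can rewrite
\begin{equation*}
\mathbb{E}\bigl[\pi^N_{\mb{z}}(F)\bigr]
\;=\;
\frac{1}{n}\sum_{i=1}^n \mathbb{E}\bigl[\pi^N_{\mb{z}^{(i)}}(f(\cdot,z_i))\bigr],
\qquad
\mathbb{E}\bigl[\pi^N_{\mb{z}}(F_{\mb{z}})\bigr]
\;=\;
\frac{1}{n}\sum_{i=1}^n \mathbb{E}\bigl[\pi^N_{\mb{z}}(f(\cdot,z_i))\bigr],
\end{equation*}
so that the generalization gap reduces to the average stability
\begin{equation*}
\bigl|\mathbb{E}[\pi^N_{\mb{z}}(F)-\pi^N_{\mb{z}}(F_{\mb{z}})]\bigr|
\;\le\;
\frac{1}{n}\sum_{i=1}^n \mathbb{E}\bigl|\pi^N_{\mb{z}^{(i)}}(f(\cdot,z_i))-\pi^N_{\mb{z}}(f(\cdot,z_i))\bigr|.
\end{equation*}
It now suffices to prove the pointwise stability bound
$|\pi^N_{\mb{z}^{(i)}}(\phi)-\pi^N_{\mb{z}}(\phi)|\le C/(n(1-\eta))$ uniformly for $\phi=f(\cdot,z)$ with $z\in\mathcal{Z}$, using the regularity in \cref{asm:smooth_loss}.

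The second step is to use the explicit structure of $\pi^N_{\mb{z}}$ coming from \cref{thm:backward_analysis_main}. The weak backward error construction furnishes a density expansion of the form $\pi^N_{\mb{z}} = \rho_{\mb{z}}\bigl(1+\sum_{j=1}^{N-1}\eta^j u^{(j)}_{\mb{z}}\bigr)$ (after the usual normalization), where each correction $u^{(j)}_{\mb{z}}$ is a polynomial in the derivatives of $F_{\mb{z}}$ built from the Poisson equation associated with the Langevin generator. Writing the difference as a telescoping sum gives
\begin{equation*}
\pi^N_{\mb{z}^{(i)}}(\phi)-\pi^N_{\mb{z}}(\phi)
\;=\;
\bigl(\rho_{\mb{z}^{(i)}}(\phi)-\rho_{\mb{z}}(\phi)\bigr)
+\sum_{j=1}^{N-1}\eta^j\bigl(\rho_{\mb{z}^{(i)}}(u^{(j)}_{\mb{z}^{(i)}}\phi)-\rho_{\mb{z}}(u^{(j)}_{\mb{z}}\phi)\bigr),
\end{equation*}
so the goal reduces to a per-order stability estimate $|\rho_{\mb{z}^{(i)}}(u^{(j)}_{\mb{z}^{(i)}}\phi)-\rho_{\mb{z}}(u^{(j)}_{\mb{z}}\phi)|\le C_j/n$ for each $j=0,\ldots,N-1$. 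Summing the geometric series $\sum_{j=0}^{N-1}\eta^j\le 1/(1-\eta)$ then yields the target.

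The third step, and the main obstacle, is to control this per-order stability. For $j=0$ this is the standard uniform stability of the Gibbs measure of \citet{Bousquet:2002}: $F_{\mb{z}}$ and $F_{\mb{z}^{(i)}}$ differ by $(f(\cdot,z_i')-f(\cdot,z_i))/n$, and the boundedness together with the dissipativity of \cref{asm:dissipative} give a $C/n$ total variation type bound after integrating against $\phi$ of polynomial growth. For $j\ge 1$, the correction $u^{(j)}_{\mb{z}}$ is a concrete polynomial in $\partial_\alpha F_{\mb{z}}$ of bounded multi-index order. The strategy is therefore to expand
\begin{equation*}
u^{(j)}_{\mb{z}^{(i)}}-u^{(j)}_{\mb{z}}
\;=\;
\tfrac{1}{n}\,\mathcal{L}_j\bigl(f(\cdot,z_i')-f(\cdot,z_i)\bigr)
\;+\;O(1/n^2),
\end{equation*}
where $\mathcal{L}_j$ is a linear differential operator whose coefficients depend on derivatives of $F_{\mb{z}}$; the smoothness hypothesis $f(\cdot,z)\in C^{6N+2}_\ell$ is exactly what keeps these derivatives bounded in $\|\cdot\|_{k,\ell}$ with constants uniform in $z$. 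Combining this differentiation-of-the-integrand estimate with the known sub-Gaussian/polynomial moment bounds for $\rho_{\mb{z}}$ under \cref{asm:dissipative} delivers the $C_j/n$ bound at each order. The main technical delicacy is keeping the constants $C_j$ finite and independent of $\mb{z}$; this should follow from the uniform-in-$\mb{z}$ regularity built into \cref{asm:smooth_loss} and the dissipativity of $F_{\mb{z}}$. Putting the three steps together gives the stated $C/(n(1-\eta))$ bound.
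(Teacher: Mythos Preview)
Your first two steps are right and match the paper: the symmetrization reducing generalization to uniform stability, the decomposition $\pi^N_{\mb{z}}=\rho_{\mb{z}}\sum_{j=0}^{N}\eta^j\mu_{j,\mb{z}}$, and the $1/(1-\eta)$ from summing the geometric series in $\eta$. The gap is in step three. The corrections $\mu_{j,\mb{z}}$ are \emph{not} ``concrete polynomials in $\partial_\alpha F_{\mb{z}}$'': each is defined only implicitly as the solution of the Poisson equation $L_{\mb{z}}\mu_{j,\mb{z}}=G_{j,\mb{z}}$ with $G_{j,\mb{z}}=-\sum_{\ell=1}^j L^*_{\ell,\mb{z}}\mu_{j-\ell,\mb{z}}$. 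The dependence on $\mb{z}$ runs through both the operator $L_{\mb{z}}$ and the recursively defined source $G_{j,\mb{z}}$, so the linearization you call $\mathcal{L}_j$ is not a differential operator at all---it necessarily contains $L_{\mb{z}}^{-1}$, which is non-local. Your claim that $\mathcal{L}_j$ is controlled by derivative norms of $f$ plus polynomial moments of $\rho_{\mb{z}}$ has no force until you can invert $L_{\mb{z}}$ quantitatively, and that is precisely the hard part.

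The paper never tries to write $\mu_{j,\mb{z}}-\mu_{j,\overline{\mb{z}}}$ explicitly. It works in $L^2(\rho_{\mb{z}^{(i)}})$ for the common base measure, derives a weak-form identity showing that $\mu_{j,\mb{z}}q_{z_i}-\mu_{j,\overline{\mb{z}}}q_{\overline{z}_i}$ satisfies a perturbed Poisson problem whose source is $O(1/n)$, and uses the Poincar\'e inequality (spectral gap) for $\rho_{\mb{z}^{(i)}}$ as the quantitative substitute for $L_{\mb{z}}^{-1}$ via energy estimates. Because $G_{j,\mb{z}}$ contains up to $2j{+}2$ derivatives of the lower-order $\mu_{\ell,\mb{z}}$, closing the induction on $j$ requires controlling not just $\|\mu_{\ell,\mb{z}}q_{z_i}-\mu_{\ell,\overline{\mb{z}}}q_{\overline{z}_i}\|_{L^2(\rho_{\mb{z}^{(i)}})}$ but also weighted higher-order norms $\|\phi\,\partial_\alpha(\mu_{\ell,\mb{z}}q_{z_i}-\mu_{\ell,\overline{\mb{z}}}q_{\overline{z}_i})\|_{L^2(\rho_{\mb{z}^{(i)}})}$ for arbitrary $\phi\in C^\infty_{\pol}(\mathbb{R}^d)$ and multi-indices $\alpha$. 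This inductive energy-estimate machinery is the substantive content of the proof and is absent from your sketch.
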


The full proof can be found in \cref{sec:appendix_uniform_stability}. 
As a corollary of the two results, 
we have a runtime complexity as follows. 

\begin{corollary}
[Runtime Complexity]
\label{cor:runtime}
Suppose $\{X_k\}_{k\geq 0}$ is any discretization 
of Langevin diffusion \eqref{eq:langevin_diffusion} 
admiting an approximate stationary distribution $\pi^N_{\mb{z}}$ 
of the type in \cref{thm:backward_analysis_main}. 
Then there exists a constant $C>0$ (depending on $N$), 
such that for all 
\begin{equation}
    \epsilon > 0 \,, \quad 
	0 < \eta < \min\left\{\frac{2m}{M^2} \,, C \epsilon^{1/N} \right\} \,, \quad 
	n \geq \frac{C}{\epsilon (1-\eta)} \,, \quad 
	k \geq \frac{C}{\epsilon^{1/N}} \log \frac{1}{\epsilon} \,, 
\end{equation}
we achieve the following expected generalization bound 
\begin{equation}
	\left| \, 
		\mathbb{E} 
		\left[ F( X_k )
			- F_{\mb{z}}( X_k )
		\right] \, 
	\right| 
	\leq \epsilon \,. 
\end{equation}
\end{corollary}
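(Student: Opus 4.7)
The plan is straightforward: insert the intermediate distribution $\pi^N_{\mb{z}}$ via triangle inequality to trace the bottom path of \cref{fig:approx_diagram}, apply \cref{thm:backward_analysis_main} and \cref{thm:pi_gen_bound} to each of the three resulting terms, and then tune the parameters $(\eta, n, k)$ so each term is of order $\epsilon$.

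More precisely, I would write
\begin{align*}
    \left| \E\!\left[ F(X_k) - F_{\mb{z}}(X_k) \right] \right|
    &\leq \left| \E\!\left[ F(X_k) - \pi^N_{\mb{z}}(F) \right] \right| \\
    &\quad + \left| \E\!\left[ \pi^N_{\mb{z}}(F) - \pi^N_{\mb{z}}(F_{\mb{z}}) \right] \right| \\
    &\quad + \left| \E\!\left[ \pi^N_{\mb{z}}(F_{\mb{z}}) - F_{\mb{z}}(X_k) \right] \right|,
\end{align*}
with all expectations over both the algorithmic randomness and $\mb{z} \sim \mathcal{D}^n$. The middle term is controlled directly by \cref{thm:pi_gen_bound} and yields a bound of $C/(n(1-\eta))$. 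For the outer two terms, I would condition on $\mb{z}$, apply \cref{thm:backward_analysis_main} with $\phi = F$ and $\phi = F_{\mb{z}}$ respectively, and then take the outer expectation using Jensen's inequality. \cref{asm:smooth_loss} guarantees that both $F$ and $F_{\mb{z}}$ lie in $C^{6N+2}_\ell(\R^d)$ with seminorm uniformly bounded in $\mb{z}$, since each is an average of $f(\cdot,z)$ whose derivatives share the same polynomial-growth bound. Consequently, each outer term is at most $C(e^{-\lambda k \eta / 2} + \eta^N)$, where the constant absorbs $(1+|x|)^{\ell'}$ and the (uniformly bounded) test-function seminorm.

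The final step is parameter calibration. Choosing $\eta \leq C \epsilon^{1/N}$ yields $\eta^N \leq \epsilon$; choosing $k \geq (C/\epsilon^{1/N}) \log(1/\epsilon)$ gives $k\eta \geq (2/\lambda)\log(1/\epsilon)$ and hence $e^{-\lambda k\eta /2} \leq \epsilon$; and choosing $n \geq C/(\epsilon(1-\eta))$ makes the middle term at most $\epsilon$. After redefining the single constant $C$ in the statement to absorb the universal prefactors coming out of \cref{thm:backward_analysis_main,thm:pi_gen_bound}, the three contributions sum to something of order $\epsilon$, and a further rescaling of $C$ into the $\eta, n, k$ thresholds delivers exactly the bound $\epsilon$ claimed in the corollary.

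I do not foresee a genuine technical obstacle, since \cref{thm:backward_analysis_main} and \cref{thm:pi_gen_bound} have already absorbed the analytic content. The only delicate point is verifying that $\|\phi - \rho_{\mb{z}}(\phi)\|_{6N+2,\ell}$ is finite and bounded uniformly in $\mb{z}$ for $\phi \in \{F, F_{\mb{z}}\}$; this reduces to a uniform polynomial-moment bound on $\rho_{\mb{z}}$, which is standard under \cref{asm:dissipative} and should be stated once and reused for both $\phi$.
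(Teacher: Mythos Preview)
Your proposal is correct and matches the paper's proof essentially line for line: the same three-term triangle-inequality decomposition through $\pi^N_{\mb{z}}$, the same application of \cref{thm:backward_analysis_main} to the two outer terms and \cref{thm:pi_gen_bound} to the middle term, and the same parameter calibration $\eta = O(\epsilon^{1/N})$, $n = \Omega(\epsilon^{-1})$, $k = \Omega(\epsilon^{-1/N}\log\epsilon^{-1})$. The paper is terser about the uniform-in-$\mb{z}$ boundedness of $\|\phi - \rho_{\mb{z}}(\phi)\|_{6N+2,\ell}$, simply absorbing it into $C$, but your explicit remark that this follows from \cref{asm:smooth_loss} and moment bounds for $\rho_{\mb{z}}$ under \cref{asm:dissipative} is exactly the right justification.
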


The proof can be found in \cref{sec:appendix_proof_corollary}. 
Once again, we remark this implies a runtime complexity of 
$k \geq \Omega( \epsilon^{-1/N} \log (\epsilon^{-1}))$ 
despite $\{X_k\}_{k\geq 0}$ being a first order discretization 
of Langevin diffusion.

\section{Proof Overview}
\label{sec:sketch}

In this section, 
we provide a sketch of the main results. 
Here we omit most of the technical details, 
with the goal of explaining the core ideas clearly and concisely.

\subsection{Construction of $\pi^N_{\mb{z}}$} 
\label{subsec:sketch_construction}

Before we go to the distributional setting, 
it is instructive to build intuitions from ODEs. 
In particular, we return to 
\citep[Chapter IX, Example 1.1]{hairer2006geometric}, 
where we consider solving $y'(t) = y(t)^2$ 
with the Euler update $y_{k+1} = y_k + \eta y_k^2$ 
for some $\eta > 0$. 
Following this example, we hypothesize the existence of 
a modified ODE as the formal series  
\begin{equation}
\label{eq:formal_series}
	\wt{y}'(t) 
	= \wt{y}(t)^2 
		+ \sum_{\ell = 1}^\infty c_\ell( \wt{y}(t) ) \, \eta^\ell \,, 
\end{equation}
such that $\wt{y}(0) = y_k$ and $\wt{y}(\eta) = y_{k+1}$. 
If $\wt{y}$ has a Taylor expansion satisfying 
the constraint $\wt{y}(\eta) = \wt{y}(0) + \eta \wt{y}(0)^2$, 
we can solve for all the coefficients $c_\ell(\wt{y}(t))$ 
by matching the terms with the same polynomial order of $\eta^\ell$. 

Observe that if this formal series converges, 
we have an exact reconstruction of the Euler method 
$\{y_k\}_{k\geq 1}$ via a modified ODE. 
However, since this is often not easy (or even possible), 
we alternatively consider a truncation of this series, 
leading to a high order approximation. 
This is exactly the approach known as backward error analysis 
\citep{hairer2006geometric}. 
We plot this particular example in \cref{fg:backward_error}.

\begin{figure}[h]
\centering     %
(a)
\includegraphics[width=0.46\textwidth]{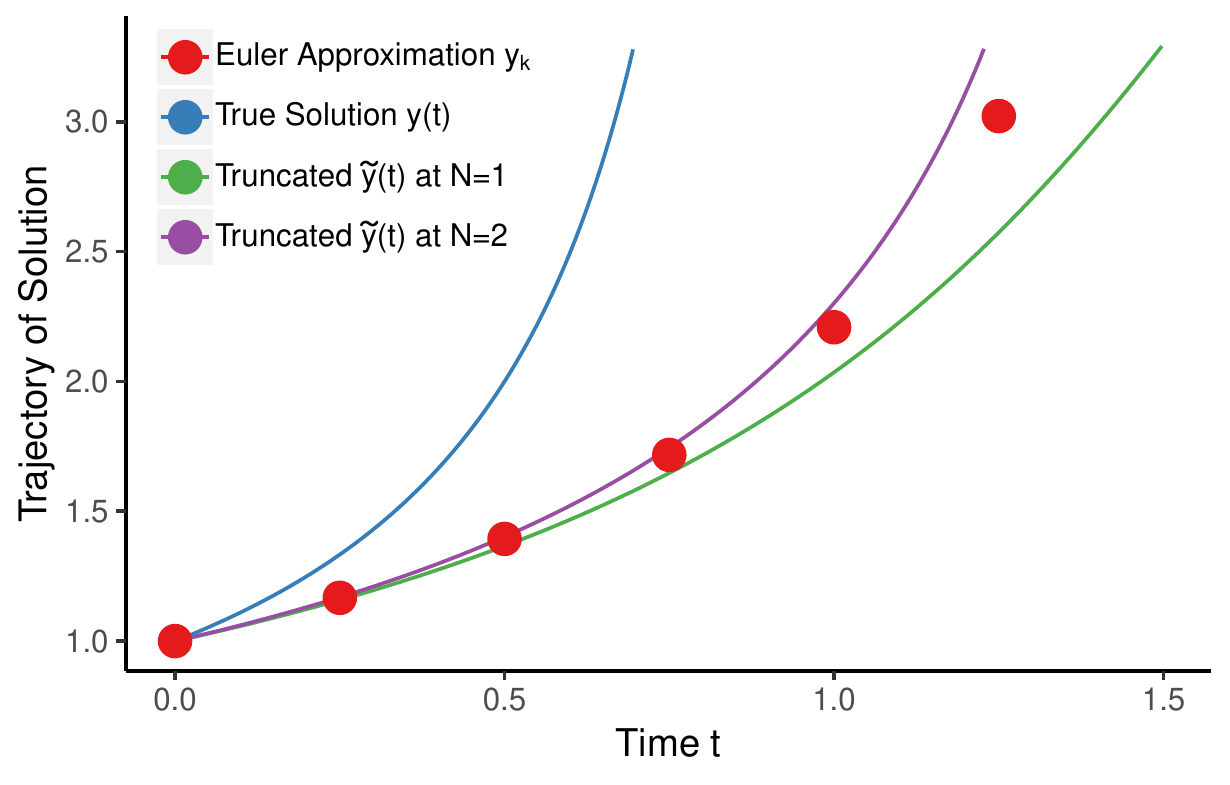}
(b)\includegraphics[width=0.46\textwidth]{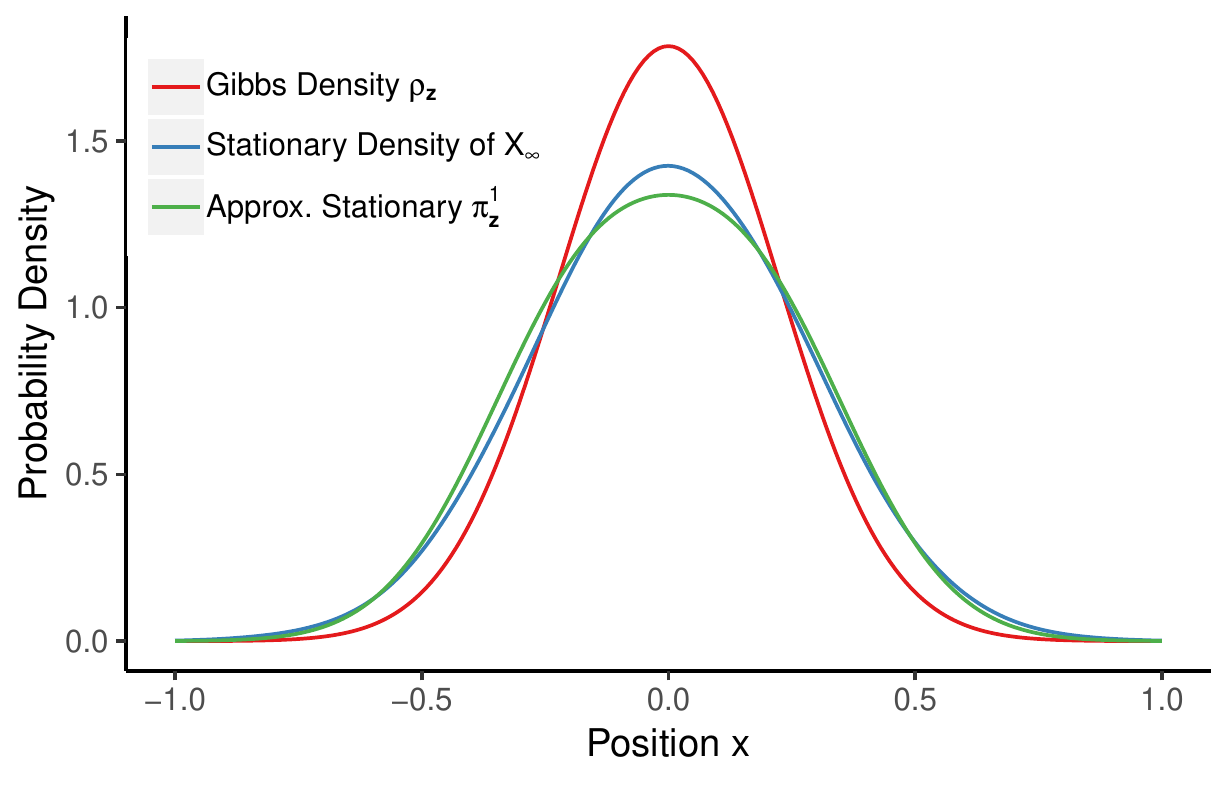}
\caption{
(a): A plot of \emph{backward error analysis} 
for the example ODE \eqref{eq:formal_series}, 
where truncations of $\wt{y}(t)$ 
are taken to better approximate the Euler discretization. 
(b): A toy example to show how the distribution 
$\pi^N_\mathbf{z}$ constructed using 
\emph{weak backward error analysis} 
compares against the true stationary distribution 
of SGLD $\mathcal{L}(X_{\infty})$. 
In this case, we choose the loss to be 
$f(x,z) = \frac{1}{2} x^2$
so we can obtain explicit formulas for the densities. 
We observe a large discrepancy between the Gibbs measure 
$\rho_\mathbf{z}$ and $\mathcal{L}(X_{\infty})$ (for SGLD), 
where as the first order approximation 
$\pi^1_\mathbf{z}$ is a much better approximation 
to $\mathcal{L}(X_{\infty})$. 
See \cref{sec:calc} for details of the calculations. 
}
\label{fg:backward_error}
\end{figure}

Once again, we emphasize that extending this method directly through pathwise 
approximation of the diffusion will be difficult \citep{shardlow2006modified}. 
However, if we forego the pathwise information of 
the Langevin diffusion $\{X(t)\}_{t\geq 0}$ \eqref{eq:langevin_diffusion}, 
and only consider its distribution via $\mathbb{E} \phi(X(t))$ 
for some test function $\phi$, 
we can study the evolution via the Kolmogorov backward equation 
\begin{equation}
\label{eq:kolmogorov}
	\de_t u(t, x) = L_{\mb{z}} \, u(t,x) \,, \quad 
	u(0, x) = \phi(x) \,, 
\end{equation}
where $L_{\mb{z}} \, u := \langle -\nabla F_{\mb{z}}, \nabla u \rangle 
+ \frac{1}{\beta} \Delta u$ is the It\^{o} generator, 
and $u(t,x) = \mathbb{E}[ \phi(X(t)) | X_0 = x ]$ 
is the well known stochastic representation 
\citep[Theorem 3.43]{pardoux2014stochastic}. 

Since the evolution of Langevin diffusion's distribution 
can be interpreted as Wasserstein gradient flow 
in the space of probability distributions 
\citep{jordan1998variational}, 
it is natural to consider extending 
backward error analysis in the space of distributions. 
Indeed, this is the approach taken by \citet{DebusscheF11} 
using a modified PDE instead. 

Similar to \cref{eq:formal_series}, 
we instead write down a modified Kolmogorov equation 
as a formal series 
\begin{equation}
\label{eq:kolmogorov_formal_series}
	\de_t v 
	= 
		L_{\mb{z}} v 
		+ \sum_{\ell = 1}^\infty 
			\eta^\ell \, L_{\ell,\mb{z}} \, v 
			\,, \quad 
	v(0,x) = \phi(x) \,, 
\end{equation}
where $\{L_{\ell,\mb{z}}\}_{\ell \geq 1}$ are differential operators 
playing the same role as the coefficients $c_\ell$ in \cref{eq:formal_series}. 
These operators are solved such that an It\^{o} type Taylor expansion 
matches the distribution of SGLD in the sense 
$v(\eta,x) = \mathbb{E} [ \phi(X_1) | X_0 = x ]$ formally, 
where $X_1$ is the one step update of \cref{eq:gld} 
and $\phi$ is the same test function as in \cref{eq:kolmogorov_formal_series}.

Once again, this formal series is an exact reconstruction 
of SGLD's evolution if the series is convergent. 
Similar to the modified equation for ODEs, 
we avoid justifying the convergence by 
taking a truncation at order $N$, 
matching the regularity condition of \cref{asm:smooth_loss}, 
which is required for the Taylor type expansion in $\eta$. 
More precisely, we construct the truncation using 
functions $v_m(t,x)$ defined recursively as 
the solutions of the following equations 
\begin{equation}
\label{eq:kolmogorov_truncation}
	\de_t v_m - L_{\mb{z}} v_m 
	= \sum_{\ell=1}^m L_{\ell,\mb{z}} v_{m-\ell} \,, \quad 
	v_m(0,x) = \begin{cases}
		\phi(x) \,, & m = 0 \,, \\ 
		0 \,, & m > 0 \,. 
	\end{cases}
\end{equation}
Using this construction, 
we show that the truncation of $v$ defined by 
$v^{(N)} = \sum_{\ell=0}^N \eta^\ell v_\ell$ 
approximates $\mathbb{E}[ \phi(X_1) | X_0 = x ]$ 
up to an error of $O(\eta^{N+1})$ 
. 

Next, we establish the convergence of the truncation terms $v_m$ 
via a standard spectral gap argument 
for non-homogenous parabolic PDEs \citep{pardoux2003poisson}. 
More precisely, we can write 
\begin{equation}
\label{eq:truncation_convergence}
	\left| v_m(t,x) - \lim_{s\to\infty} v_m(s, x) \right| 
	\leq 
		C \, e^{ -\lambda t / 2 } \,, 
\end{equation}
where $\lambda > 0$ is the Poincar\'{e} constant 
arising as a consequence of \cref{asm:dissipative} 
\citep{bakry2008simple,raginsky2017nonconvex}. 
By standard adjoint equation arguments
\citep[Theorem 2.2]{pavliotis2014stochastic}, 
we can recover the stationary measure from 
\eqref{eq:kolmogorov_truncation} 
by setting $\de_t v_m = 0$ and 
replacing all differential operators with their 
corresponding adjoint. 
More precisely, we have 
$\lim_{t\to\infty} v_m(t,x) = \rho_{\mb{z}}( \phi \, \mu_{m,\mb{z}} )$, 
where the Radon-Nikodym derivative $\mu_{m,\mb{z}}(x)$ 
is the unique solution of the Poisson equation 
 
\begin{equation}
\label{eq:poisson_truncation}
	L_{\mb{z}} \mu_{m,\mb{z}} 
	= - \sum_{\ell=1}^m L_{\ell,\mb{z}}^* \mu_{m-\ell,\mb{z}} \,, 
\end{equation}
where we use $L_{\ell,\mb{z}}^*$ to denote 
the adjoint operator of $L_{\ell,\mb{z}}$ in $L^2(\rho_{\mb{z}})$, 
and $\mu_{0,\mb{z}} = 1$. 
Observe the above equation is exactly 
stationary adjoint of \eqref{eq:kolmogorov_truncation}, 
where we note $L_{\mb{z}}^* = L_{\mb{z}}$ is self-adjoint. 

Using linearity of expectation over $\rho_{\mb{z}}$, 
we can write 
\begin{equation}
	\lim_{t\to\infty} v^{(N)}(t,x) 
	= 
		\rho_{\mb{z}}\left( 
			\phi \, 
			\sum_{\ell=0}^N \eta^\ell \, \mu_{\ell,\mb{z}}
		\right) \,, 
\end{equation}
hence we recover the approximate stationary measure as 
$\pi^N_{\mb{z}} = \rho_{\mb{z}} \sum_{\ell=0}^N \eta^\ell \mu_{\ell,\mb{z}}$. 

Finally, to extend the one step error bound of $O(\eta^{N+1})$ 
to arbitrary steps, we use a telescoping argument to write 
\begin{equation}
\label{eq:error_telescope}
\begin{aligned}
	\left| \, \mathbb{E}_{\mb{z}} \, \phi(X_k) 
	    - v^{(N)}(t,x) \, 
	\right| 
	&\leq 
		\mathbb{E} \, \sum_{\ell=1}^k \, 
		\left| \mathbb{E}[ v^{(N)}((\ell-1)\eta, X_{k-\ell+1}) - 
			v^{(N)}(\ell\eta, X_{k-\ell}) 
			| \mathcal{F}_{k-\ell} ] 
		\right| \\ 
	&\leq 
		C \, \eta^{N+1} \sum_{\ell=1}^k e^{-\lambda \ell \eta / 2 } \,, 
\end{aligned}
\end{equation}
where $\{\mathcal{F}_{k}\}_{k \geq 0}$ 
is the filtration generated by the algorithm $\{X_k \}_{k\geq 0}$. 
Here, we used the one step error bound 
and exponential convergence \eqref{eq:truncation_convergence}. 
Note that we can bound the geometric series by 
$\frac{1}{1 - e^{-\lambda \eta / 2}} = O( \eta^{-1} )$, 
hence leading to the error order of $O(\eta^N)$ 
in \cref{thm:backward_analysis_main}. 
We plot an example construction of $\pi^1_{\mb{z}}$, 
where we can compute the density explicitly in \cref{fg:backward_error}.

\subsection{Generalization Bound}

We start by recalling the definition and generalization property of 
the uniform stability \citep{Bousquet:2002} for distributions. 

\begin{definition}[Uniform Stability]
\label{def:uniform_stability}
A collection of distributions $\{\pi_\mathbf{z}\}$ 
on $\mathbb{R}^d$ indexed by $\mathbf{z} \in \mathcal{Z}^n$ 
is said to be \textbf{$\epsilon$-uniformly stable} if 
for all $\mathbf{z}, \overline{\mathbf{z}} \in \mathcal{Z}^n$
with only one differing coordinate 
\begin{equation}
	\sup_{z \in \mathcal{Z}} 
	\left| 
		\pi_{\mb{z}}( f(\cdot, z) ) - 
		\pi_{\overline{\mb{z}}}( f(\cdot, z) )
	\right| 
	\leq \epsilon.
\end{equation}
\end{definition}

\begin{proposition}
[Generalization]
\label{prop:signed_stability}
Suppose the collection of distributions $\{\pi_\mathbf{z}\}$
is $\epsilon$-uniformly stable, 
and that for all $(\mathbf{z}, z) \in \mathcal{Z}^{n+1}$, 
we also have $f(\cdot, z) \in L^1( \pi_{\mb{z}} )$. 
Then the expected generalization error of $\{\pi_\mathbf{z}\}$ 
is bounded by $\epsilon$, or more precisely 
\begin{equation}
	\left| \, 
		\mathbb{E} \left[
		\pi_\mathbf{z}( F_{\mb{z}} )
		- 
		\pi_\mathbf{z}( F ) 
		\right] \, 
	\right| 
	\leq 
		\epsilon \,.
\end{equation}
\end{proposition}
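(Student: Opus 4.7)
The plan is to adapt the classical Bousquet--Elisseeff ghost-sample argument to the distributional setting, exploiting the fact that the target inequality only involves linear functionals $\phi \mapsto \pi_{\mb{z}}(\phi)$ of the distribution, even though $\pi_{\mb{z}}$ itself depends on $\mb{z}$ in a complicated manner.

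First, I would introduce an independent copy $\mb{z}' = (z_1', \ldots, z_n') \sim \mathcal{D}^n$ of the dataset and, for each $i \in \{1, \ldots, n\}$, define the swapped dataset $\mb{z}^{(i)} := (z_1, \ldots, z_{i-1}, z_i', z_{i+1}, \ldots, z_n)$, which differs from $\mb{z}$ in exactly the $i$-th coordinate. The key observation is an exchangeability identity: since $z_i$ is independent of $\mb{z}^{(i)}$ and has the same law $\mathcal{D}$ as the ``missing'' entry, renaming $z_i \leftrightarrow z_i'$ under the product measure $\mathcal{D}^{n+1}$ yields
\begin{equation*}
    \mathbb{E}\bigl[ \pi_{\mb{z}^{(i)}}( f(\cdot, z_i) ) \bigr]
    = \mathbb{E}\bigl[ \pi_{\mb{z}}( f(\cdot, z_i') ) \bigr]
    = \mathbb{E}\bigl[ \pi_{\mb{z}}( F ) \bigr],
\end{equation*}
where the last equality uses that $z_i'$ is independent of $\mb{z}$ and that $F(x) = \mathbb{E}_{z\sim\mathcal{D}}\, f(x,z)$. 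The $L^1$ hypothesis on $f(\cdot, z)$ under $\pi_{\mb{z}}$ for all $(\mb{z}, z)$ ensures each expectation above is well-defined and that Fubini can be invoked when swapping the order of integration.

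Next, I would write the empirical risk expectation as a symmetric average and subtract to get a telescoping-style identity:
\begin{equation*}
    \mathbb{E}\bigl[ \pi_{\mb{z}}(F) - \pi_{\mb{z}}(F_{\mb{z}}) \bigr]
    = \frac{1}{n} \sum_{i=1}^{n}
      \mathbb{E}\bigl[ \pi_{\mb{z}^{(i)}}( f(\cdot, z_i) ) - \pi_{\mb{z}}( f(\cdot, z_i) ) \bigr].
\end{equation*}
Each summand compares $\pi_{\mb{z}}$ and $\pi_{\mb{z}^{(i)}}$ evaluated on the \emph{same} function $f(\cdot, z_i)$, with the two datasets differing in only one coordinate. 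The $\epsilon$-uniform stability hypothesis then applies pointwise in $(\mb{z}, \mb{z}', z_i)$ to give an almost-sure bound of $\epsilon$ inside the expectation. Taking absolute values through the sum via the triangle inequality collapses the factor of $1/n$ against the $n$ terms and yields the claimed bound $\epsilon$.

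The routine steps are the swap identity and the triangle inequality; the only place that requires genuine care is verifying that the swap identity can be stated cleanly without measurability hiccups, since $\pi_{\mb{z}}$ is a random probability measure and we are implicitly asking that $(\mb{z}, z) \mapsto \pi_{\mb{z}}(f(\cdot, z))$ be jointly measurable. This is mild given the assumed $L^1$ integrability, and it is the only non-cosmetic technicality I anticipate. Everything else is the standard Bousquet--Elisseeff template, modified to treat $\pi_{\mb{z}}(\phi)$ as a linear functional in $\phi$ rather than point evaluation $\phi(X_{\mb{z}})$ of a deterministic output.
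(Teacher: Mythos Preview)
Your proposal is correct and follows essentially the same ghost-sample argument as the paper: introduce an independent copy of the dataset, use the swap identity to rewrite $\mathbb{E}[\pi_{\mb{z}}(F)]$ as an average of $\mathbb{E}[\pi_{\mb{z}^{(i)}}(f(\cdot,z_i))]$, subtract the empirical-risk side, and bound each summand by $\epsilon$ via uniform stability. The paper's version phrases the final step as defining a single remainder $\delta$ and bounding it, but the content is identical to your term-by-term triangle inequality.
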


Additional details can be found in \cref{sec:appendix_signed_measure}. 
With this approach in mind, we introduce several additional notations. 
Without loss of generality, 
we let $\mb{z} = \{ z_1,\cdots,z_i,\cdots,z_n \}$ 
and $\ol{\mb{z}} = \{ z_1,\cdots,\ol{z}_i,\cdots,z_n \}$ 
such that they only differ in the $i^\text{th}$ coordinate. 
We also define $\mb{z}^{(i)} := \mb{z} \cap \ol{\mb{z}}$ 
and extend previous notations 
\begin{equation}
	F_{\mb{z}^{(i)}} 
	:= 
		\frac{1}{n} \sum_{j=1,j\neq i}^n f(x,z_i) \,, \quad 
	\rho_{\mb{z}^{(i)}} 
	:= 
		\frac{1}{Z^{(i)}} \, \exp ( -\beta F_{\mb{z}^{(i)}}) \,, \quad 
	q_{z_i} 
	:= 
		\frac{ Z^{(i)} }{ Z } \, \exp\left( -\frac{\beta}{n} f(x,z_i) \right) \,, 
\end{equation}
such that we can write $F_{\mb{z}^{(i)}} + f(\cdot,z_i) = F_{\mb{z}}$, 
$\rho_{\mb{z}^{(i)}} q_{z_i} = \rho_{\mb{z}}$, 
and vice versa for the $\ol{z}_i$ and $\ol{\mb{z}}$ terms. 
We also define the norm 
$\| \phi \|_{L^2(\rho_{\mb{z}^{(i)}})} 
:= \left[ \rho_{\mb{z}^{(i)}}( \phi^2 ) \right]^{1/2}$. 

We start by putting 
the two integrals with respect to 
$\pi^N_{\mb{z}}, \pi^N_{\ol{\mb{z}}}$ 
under one integral with respect to $\rho_{\mb{z}^{(i)}}$, 
and use triangle and Cauchy-Schwarz inequalities to get 
\begin{equation}
\begin{aligned}
	\left| 
		\pi_{\mb{z}}( f ) - \pi_{\ol{\mb{z}}}( f ) 
	\right| 
	&= 
		\left| 
		\rho_{\mb{z}^{(i)}} \left( 
			f \sum_{\ell=0}^N \eta^\ell \,  
			( \mu_{\ell,\mb{z}} q_{z_i} 
			- \mu_{\ell,\ol{\mb{z}}} q_{\ol{z}_i} ) 
		\right) 
		\right| \\ 
	&\leq 
		\sum_{\ell=0}^N \eta^\ell \, 
		\| f \|_{L^2(\rho_{\mb{z}^{(i)}})} 
		\, 
		\| \mu_{\ell,\mb{z}} q_{z_i} 
			- \mu_{\ell,\ol{\mb{z}}} q_{\ol{z}_i} 
		\|_{L^2(\rho_{\mb{z}^{(i)}})} \,. 
\end{aligned}
\end{equation}

This implies that it is sufficient to bound the norm 
$\| \mu_{\ell,\mb{z}} q_{z_i} - \mu_{\ell,\ol{\mb{z}}} q_{\ol{z}_i} 
\|_{L^2(\rho_{\mb{z}^{(i)}})}$ 
to achieve uniform stability (\cref{lm:bound_l2_norm}). 
At the same time, we recall from \eqref{eq:poisson_truncation} 
that $\mu_{m,\mb{z}}, \mu_{m,\ol{\mb{z}}}$ 
solve very similar Poisson equations 
\begin{equation}
\begin{aligned}
	L_{\mb{z}} \mu_{m,\mb{z}} 
	&= 
		- \sum_{\ell=1}^m L_{\ell,\mb{z}}^* \, 
		\mu_{m-\ell,\mb{z}} 
	=: 
		G_{m, \mb{z}}
		\,, \\ 
	L_{\ol{\mb{z}}} \mu_{m,\ol{\mb{z}}} 
	&= 
		- \sum_{\ell=1}^m L_{\ell,\ol{\mb{z}}}^* \, 
		\mu_{m-\ell,\ol{\mb{z}}} 
	=: 
		G_{m, \ol{\mb{z}}}
		\,, 
\end{aligned}
\end{equation}
where we observe the left hand side operator only differs by 
$( L_{\mb{z}} - L_{\ol{\mb{z}}} ) \mu 
= \frac{1}{n} \langle \nabla f(x,\ol{z}_i) - \nabla f(x,z_i), 
\nabla \mu \rangle $, 
which is of order $O(\frac{1}{n})$. 
Furthermore, the right hand side only depends on 
$\mu_{\ell, \mb{z}}, \mu_{\ell,\ol{\mb{z}}}$ for $\ell < m$. 
This suggests the following induction structure  
\begin{equation}
\begin{aligned}
	& 
		\mu_{\ell,\mb{z}} q_{z_i} - \mu_{\ell,\ol{\mb{z}}} q_{\ol{z}_i} 
		= O\left( \frac{1}{n} \right) 
		\text{ for all } {{0 \leq \ell \leq m-1}} 
		\\ 
	\implies& 
		G_{m, \mb{z}} q_{z_i} - G_{m, \ol{\mb{z}}} q_{\ol{z}_i} 
		= O\left( \frac{1}{n} \right) 
		\\ 
	\implies& 
		\mu_{\ell,\mb{z}} q_{z_i} - \mu_{\ell,\ol{\mb{z}}} q_{\ol{z}_i} 
		= O\left( \frac{1}{n} \right) 
		\text{ for all } {{0 \leq \ell \leq m}} \,, 
\end{aligned}
\end{equation}
where we observe the induction step incremented 
the set of $\ell \in \{0,1,\cdots,m-1\}$ to $\{0,1,\cdots,m\}$. 

To complete the proof of uniform stability, 
we need to make the above sketch precise. 
This requires the control of norms for higher order derivatives
of $\mu$ and $G$ terms, 
which is detailed in \cref{lm:energy_est2}.

\section{Discussion}
\label{sec:discussion}

\textbf{On the Poincar\'{e} Constant}. 
Without a careful analysis, a non-convex potential $f(\cdot, z)$ generally 
lead to a Poincar\'{e} constant with exponentially poor dependence on 
the inverse temperature $\beta$ and dimension $d$ \citep{raginsky2017nonconvex}. 
However, there are many useful applications with universal Poincar\'{e} constants 
(independent of $\beta,d$). 
Most famously, when $F$ is strongly convex, 
we can use the Bakry-\'{E}mery curvature condition 
to achieve an universal constant \citep{bakry2013analysis}. 
\citet{cattiaux2021functional} and references within have studied 
many perturbations of convex potential $f$ 
with applications to Bayesian inference. 
\citet{menz2014poincare,li2020riemannian} 
have also extended universal Poincar\'{e} constant to non-convex $f$, 
where all critical points are either strict saddle 
or the unique secondary order stationary point. 
In particular, this class now contains the Burer--Monteiro relaxation 
of semidefinite programs \citep{burer2003nonlinear,boumal2016non}.

\vspace{0.2cm}
\noindent
\textbf{On the Smoothness Conditions}. 
In \cref{asm:smooth_loss}, we assumed $f(\cdot,z) \in C^{6N+2}$. 
Indeed, without additional smoothness, 
there is a lower bound on the runtime complexity 
\citep{cao2020complexity}, 
and therefore higher order analysis is not appropriate in this setting. 
In fact, higher order discretizations in general require higher order smoothness 
\citep{hairer2006geometric}. 
Therefore, any application that calls for a higher order discretization 
can be studied using the approximation method of this work. 
Furthermore, we remark that one can always consider smoothing 
the Gibbs distribution via convolution with a Gaussian 
\citep{chaudhari2019entropy,block2020fast}, 
which leads to an infinitely smooth potential.

\vspace{0.2cm}
\noindent
\textbf{On Further Extensions}. 
While this article is focused on the analysis of generalization error, 
the framework can be extended to other analyses of interest. 
In general, weak backward error analysis saves the approximation error 
between the discrete time algorithm to the diffusion process. 
Therefore, any property of the algorithm can be studied via 
the approximate stationary distribution $\pi^N_{\mb{z}}$. 
For example, the expected suboptimality of Langevin discretizations 
is often analyzed via the Gibbs distribution 
\citep{raginsky2017nonconvex,erdogdu2018global,li2020riemannian}, 
which implies an opportunity to analyze 
the suboptimality of $\pi^N_{\mb{z}}$ instead.

\section{Weak Backward Error Analysis: Proof of \cref{thm:backward_analysis_main}}
\label{sec:appendix_weak_backward}

In this section, we will complete the proof for 
\cref{thm:backward_analysis_main}. 
Here, we adopt the notation 
$C^\infty_{\pol}(\mathbb{R}^d) := 
\cap_{m \geq 0} \cup_{\ell \geq 0} C^m_\ell(\mathbb{R}^d)$ 
for all smooth functions with polynomial growth. 
We remark while all the results in this sections 
are stated for $C^\infty_{\pol}(\mathbb{R}^d)$ functions, 
we only ever differentiate $6N+2$ times, 
therefore it does not contradict \cref{asm:smooth_loss}. 

We start by stating a few technical estimates required for the main result. 
The first result states that the moments of all order of 
the continuous process $(X(t))_{t \geq 0}$ are all uniformly bounded in time.
This result is a minor modification 
to \citet[Proposition 2.2]{Kopec13}. 
The proof is added in \cref{subsec:ME_continuous} for completeness and simply consists on adding the dependence on $\beta$.

\begin{proposition}
[Moment Estimates on the Continuous Process]
\label{prop:ME_continuous}
Let $x_0 \in  \R^d$ and $(X(t))_{t \geq 0}$ satisfying \cref{eq:langevin_diffusion}.
Under Assumption \cref{asm:dissipative}, for each $p \geq 1$ and $0 < \gamma < 2 m$, there exists a positive constant $C_p$ such that
\begin{equation}
\label{eq:sde_moment_est}
    \E\left( \left| X(t) \right|^{2p} \right) 
    \leq 
    C_p \left( |x_0|^{2p} \exp(-\gamma t) +1 \right), \quad \forall t > 0 \,.
\end{equation}
In particular, we have the recursive formula for $C_p$ 
\[ C_p = \max\left( \left( 2pb + \frac{2p}{\beta}(d + 2p - 2) \right) C_{p-1} + 1, 1 \right)
    \max\left( \frac{1}{2pm - \gamma_1}, 1 \right),
\]
where we observe $C_p$ is on the order of 
$\mathcal{O}((d/\beta)^p)$. 
\end{proposition}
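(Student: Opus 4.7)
My plan is to argue by induction on $p$, using It\^{o}'s formula applied to $\phi(x) = |x|^{2p}$ together with the dissipativity condition \cref{asm:dissipative}, and then resolving the resulting scalar differential inequality via a Gronwall argument. The base case $p=0$ is trivial since $\E|X(t)|^0 \equiv 1 \leq C_0(1 + e^{-\gamma t})$ for $C_0 = 1$.

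For the inductive step, I would first compute $\nabla \phi(x) = 2p|x|^{2p-2} x$ and $\Delta \phi(x) = 2p(d+2p-2)|x|^{2p-2}$, then apply It\^{o}'s formula to obtain
\begin{equation*}
    d|X(t)|^{2p}
    = -2p|X(t)|^{2p-2}\langle X(t), \nabla F_{\mb{z}}(X(t))\rangle\, dt
      + \frac{2p(d+2p-2)}{\beta} |X(t)|^{2p-2}\, dt + dM(t),
\end{equation*}
where $M(t)$ is a local martingale. Averaging \cref{asm:dissipative} over the minibatch indices gives $\langle x,\nabla F_{\mb{z}}(x)\rangle \geq m|x|^2 - b$, so after a standard localization (stopping at $\tau_R = \inf\{t : |X(t)| \geq R\}$ to kill the martingale, taking expectation, and then sending $R \to \infty$ via Fatou), I obtain the scalar differential inequality
\begin{equation*}
    \frac{d}{dt}\E|X(t)|^{2p}
    \leq -2pm\, \E|X(t)|^{2p}
       + \left(2pb + \frac{2p(d+2p-2)}{\beta}\right) \E|X(t)|^{2p-2}.
\end{equation*}

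Using the inductive hypothesis $\E|X(t)|^{2p-2} \leq C_{p-1}(|x_0|^{2p-2} e^{-\gamma t} + 1)$ and the elementary bound $|x_0|^{2p-2} \leq |x_0|^{2p} + 1$, the right-hand side is dominated by $-2pm\, \E|X(t)|^{2p} + B_p(|x_0|^{2p} e^{-\gamma t} + 1)$ where $B_p = (2pb + \tfrac{2p}{\beta}(d+2p-2)) C_{p-1} + 1$ (the $+1$ absorbs the cross term after the elementary inequality). Variation of parameters then yields
\begin{equation*}
    \E|X(t)|^{2p}
    \leq |x_0|^{2p} e^{-2pmt}
       + B_p \int_0^t e^{-2pm(t-s)} (|x_0|^{2p} e^{-\gamma s} + 1)\, ds,
\end{equation*}
and since $\gamma < 2m \leq 2pm$ for $p \geq 1$, the integrals evaluate to terms bounded by $\tfrac{1}{2pm-\gamma}(|x_0|^{2p} e^{-\gamma t}) + \tfrac{1}{2pm}$. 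Collecting coefficients and comparing to the target form $C_p(|x_0|^{2p} e^{-\gamma t}+1)$ produces exactly the recursion $C_p = \max(B_p,1)\max(\tfrac{1}{2pm-\gamma},1)$ stated in the proposition, and the $\mathcal{O}((d/\beta)^p)$ growth follows by unrolling the recursion since $B_p$ scales linearly in $d/\beta$ for large $p$.

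\textbf{Main obstacle.} The technical friction is almost entirely in the localization step: before the induction closes one does not yet know a priori that $\E|X(t)|^{2p}$ is finite, so the martingale term must be removed via a stopping time and Fatou's lemma before the Gronwall inequality can be applied. Once this is handled, the remainder reduces to bookkeeping on constants to match the stated recursion for $C_p$ exactly, including the $\max(\cdot,1)$ safeguards that handle the regime $2pm - \gamma < 1$.
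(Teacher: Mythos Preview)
Your proposal is correct and follows the same inductive skeleton as the paper, but packages the Gronwall step differently. The paper applies It\^{o}'s formula directly to the exponentially weighted quantity $|X(t\wedge\tau_N)|^{2p}\,e^{\gamma(t\wedge\tau_N)}$ and, in the same induction, proves an auxiliary time-integral estimate
\[
\E\int_0^{t\wedge\tau_N}|X(s)|^{2p}e^{\gamma s}\,ds \;\leq\; C_p\bigl(|x_0|^{2p}+1+\E\,e^{\gamma(t\wedge\tau_N)}\bigr),
\]
which is what absorbs the lower-order moment $|X(s)|^{2p-2}$ appearing in the drift bound and closes the recursion. You instead apply It\^{o} to $|X|^{2p}$ unweighted, obtain a scalar differential inequality for $\E|X(t)|^{2p}$, and close via variation of parameters using the inductive \emph{pointwise} bound on $\E|X(s)|^{2p-2}$. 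Your route is slightly more direct in that it bypasses the auxiliary integral estimate; the paper's exponential-weight trick folds the Gronwall step into the It\^{o} computation itself and makes the limit $R\to\infty$ cleaner (Fatou on the left, monotone convergence on the right), since one never has to justify differentiability of $t\mapsto\E|X(t)|^{2p}$ before knowing it is finite. Both routes deliver the stated recursion for $C_p$, up to the bookkeeping on the $\max(\cdot,1)$ safeguards.
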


Before the next result, 
we will state a technical lemma from \citet[Lemma 3.1]{raginsky2017nonconvex}. 
\begin{lemma}
[Quadratic Bounds on $f(\cdot, z)$]
\label{lm:quad_bd}
Under \cref{asm:smooth_loss,asm:dissipative}, 
for all $x\in \mathbb{R}^d$ and 
$z \in \mathcal{Z}$, we have 
\begin{equation*}
    | \nabla f(x,z) | \leq M |x| + M_1,
\end{equation*}
and
\begin{equation*}
    \frac{m}{3}|x|^2 - \frac{b}{2} \log 3
    \leq f(x, z) \leq \frac{M}{2} |x|^2 + M_1 |x| + M_0,
\end{equation*}
\end{lemma}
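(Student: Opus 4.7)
The plan is to prove the three estimates in sequence, each building on the previous step or directly on the standing assumptions.

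For the linear gradient bound $\abs{\nabla f(x,z)} \leq M\abs{x} + M_1$, I would invoke the Lipschitz condition in \cref{asm:smooth_loss} with $y = 0$ to get $\abs{\nabla f(x,z) - \nabla f(0,z)} \leq M\abs{x}$, then apply the triangle inequality together with the pointwise estimate $\abs{\nabla f(0,z)} \leq M_1$. This latter estimate follows from the polynomial-growth bound $\abs{\de_\alpha f(x,z)} \leq M_\alpha(1+\abs{x}^\ell)$ evaluated at $x=0$ and summed over $\abs{\alpha}=1$, using the definition $M_1 = \sum_{\abs{\alpha}=1} M_\alpha$ given immediately after \cref{asm:smooth_loss}.

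For the upper quadratic bound, I would invoke the fundamental theorem of calculus along the segment from $0$ to $x$, writing
\[
    f(x,z) - f(0,z) = \int_0^1 \langle x, \nabla f(sx, z)\rangle \, ds,
\]
then apply Cauchy--Schwarz inside the integrand and substitute the gradient bound just proved to obtain $\abs{f(x,z) - f(0,z)} \leq \int_0^1 (M s\abs{x}^2 + M_1\abs{x})\,ds = \tfrac{M}{2}\abs{x}^2 + M_1\abs{x}$. Combining with the uniform bound $\abs{f(0,z)} \leq M_0$ (again from the polynomial-growth estimate at $x=0$ with $|\alpha|=0$) yields the upper estimate.

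For the lower bound, the key trick is to integrate the dissipativity condition along radial rays. For $x \neq 0$, define $g(t) := f(tx/\abs{x}, z)$ for $t \geq 0$, so that $g'(t) = \langle x/\abs{x}, \nabla f(tx/\abs{x}, z)\rangle$. Multiplying by $t$ and invoking \cref{asm:dissipative} at the point $tx/\abs{x}$ gives $t g'(t) \geq m t^2 - b$, hence $g'(t) \geq mt - b/t$ for $t > 0$. Integrating from $t = \abs{x}/\sqrt{3}$ to $t = \abs{x}$ produces
\[
    g(\abs{x}) - g(\abs{x}/\sqrt{3})
    \geq \tfrac{m}{2}\bigl(\abs{x}^2 - \abs{x}^2/3\bigr) - b\log\sqrt{3}
    = \tfrac{m}{3}\abs{x}^2 - \tfrac{b}{2}\log 3,
\]
and the stated lower bound follows provided $g(\abs{x}/\sqrt{3}) = f(x/\sqrt{3}, z) \geq 0$. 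The case $x=0$ is handled trivially by continuity.

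The main obstacle is this last nonnegativity requirement: the stated lower bound effectively assumes $f(\cdot, z) \geq 0$, which is not explicit in \cref{asm:smooth_loss} or \cref{asm:dissipative}. It is the standard convention in the Langevin generalization literature (see \citet{raginsky2017nonconvex}) and may be imposed without loss of generality by subtracting a $z$-independent constant from $f$, which leaves both the dynamics \eqref{eq:gld} and the generalization gap \eqref{eq:gen_err} invariant. The otherwise nontrivial choice $r_0 = \abs{x}/\sqrt{3}$ for the lower integration limit is precisely what balances the $\tfrac{m}{2}(\abs{x}^2 - r_0^2)$ gain against the $-b\log(\abs{x}/r_0)$ correction to recover the explicit constants $\tfrac{m}{3}$ and $\tfrac{b}{2}\log 3$ stated in the lemma.
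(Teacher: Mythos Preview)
Your argument is correct and reproduces the standard proof from \citet{raginsky2017nonconvex}, which is exactly what the paper does here: it simply states the lemma and cites that reference without supplying any argument of its own. Your observation about the implicit nonnegativity hypothesis $f(\cdot,z)\geq 0$ is also on point---Raginsky et al.\ list it as an explicit standing assumption, whereas the present paper inherits the lemma verbatim without restating that condition.
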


We will need the similar estimates for the solution of the discrete equation as the ones obtained in \cref{eq:sde_moment_est}. 
For this proof, we followed similar arguments from 
\citet[Lemma 3.2]{raginsky2017nonconvex} and \cite[Proposition 2.5]{Kopec13}.
\begin{proposition}
[Moment Estimates on the Discrete Process]
\label{prop:me_discrete}
Let $x_0 \in  \R^d$ and $(X_k)_{k \in \mathbb{N}}$ 
be the discrete Langevin algorithm satisfying \labelcref{eq:gld}. 
Under \cref{asm:smooth_loss,asm:dissipative},
if we set $0 < \eta < \frac{2 m}{M^2}$,
then for each $p \geq 1$, 
there exists a positive constant $C_p$ 
uniform in $k$, such that
\[
    \E\left( \left| X_k \right|^{2p} \right) 
    \leq 
    C_p \left( |x_0|^{2p} + 1 \right), 
    \quad \forall k > 0.
\]
\end{proposition}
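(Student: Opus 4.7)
The plan is to mimic the classical stability argument for Euler--Maruyama discretizations of dissipative SDEs by establishing a one-step conditional recursion of the form
\[
\E[|X_{k+1}|^{2p}\mid \mathcal{F}_k,\mb{z}]\;\leq\;(1-\gamma_p\eta)\,|X_k|^{2p}+K_p\,\eta,
\]
with explicit positive $\gamma_p$ and finite $K_p$ not depending on $k$, and then iterating this recursion in $k$ to obtain a bound uniform in $k$. I will proceed by induction on $p$, with the base case $p=1$ supplying both the step-size threshold and the template for the inductive step.

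For the base case, I expand
\[
|X_{k+1}|^2 = |X_k|^2 - 2\eta\langle X_k,\nabla F_{\zeta_k}(X_k)\rangle + \eta^2 |\nabla F_{\zeta_k}(X_k)|^2 + 2\sqrt{\tfrac{2\eta}{\beta}}\langle X_k-\eta\nabla F_{\zeta_k}(X_k),\xi_k\rangle + \tfrac{2\eta}{\beta}|\xi_k|^2,
\]
and take conditional expectation. Independence and mean-zero of $\xi_k$ kill the cross term, $\E|\xi_k|^2=d$, and since $\zeta_k$ is a uniform minibatch, $\E[\nabla F_{\zeta_k}(X_k)\mid \mathcal{F}_k,\mb{z}]=\nabla F_{\mb{z}}(X_k)$. \Cref{asm:dissipative} applied to each sampled coordinate and averaged gives $\langle X_k,\nabla F_{\mb{z}}(X_k)\rangle\geq m|X_k|^2-b$, while \Cref{lm:quad_bd} together with Young's inequality with a small free parameter $\epsilon>0$ yields $|\nabla F_{\zeta_k}(X_k)|^2 \leq (1+\epsilon)M^2|X_k|^2 + C_\epsilon M_1^2$. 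Combining these produces a coefficient $1-2\eta m + (1+\epsilon)\eta^2 M^2$ in front of $|X_k|^2$, which under the hypothesis $\eta<2m/M^2$ equals $1-\eta\gamma_1$ with $\gamma_1>0$ after choosing $\epsilon$ small enough. The remaining terms are bounded by a constant times $\eta$, giving the base recursion.

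For the inductive step I write $X_{k+1}=Y_k+\sqrt{2\eta/\beta}\,\xi_k$ with $Y_k=X_k-\eta\nabla F_{\zeta_k}(X_k)$ measurable given $\mathcal{F}_k$ and $\zeta_k$, and expand
\[
|X_{k+1}|^{2p} = \bigl(|Y_k|^2 + 2\sqrt{\tfrac{2\eta}{\beta}}\langle Y_k,\xi_k\rangle + \tfrac{2\eta}{\beta}|\xi_k|^2\bigr)^{p}
\]
by the multinomial theorem. After integrating out $\xi_k$, the odd-power terms in $\xi_k$ vanish and each remaining term carries a Gaussian moment $\E|\xi_k|^{2j}\lesssim d^{j}$ multiplied by $\eta^{j}$ and $|Y_k|^{2(p-j)}$ for some $j\geq 0$. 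The leading term $|Y_k|^{2p}$ is controlled by applying the base-case contraction inside $(1-\eta\gamma_1)^p \leq 1-p\eta\gamma_1 + O(\eta^2)$ together with Young's inequality, while every strictly lower-order term in $|Y_k|$ is absorbed either into the contraction $(1-\gamma_p\eta)|X_k|^{2p}$ (by Young with a small parameter) or into the constants supplied by the induction hypothesis on $\E|X_k|^{2q}$ for $q<p$. Taking expectations, iterating the resulting recursion in $k$, and summing the geometric series produces
\[
\E|X_k|^{2p}\;\leq\;|x_0|^{2p}+K_p/\gamma_p\;\leq\;C_p(|x_0|^{2p}+1),
\]
as claimed.

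The main obstacle is the careful bookkeeping in the multinomial expansion: I must arrange the applications of Young's inequality so that the sub-leading mixed terms in $|X_k|$ and $|\nabla F_{\zeta_k}(X_k)|$ are absorbed into the $|X_k|^{2p}$ coefficient without flipping the sign of $\gamma_p$, and so that the step-size threshold $\eta<2m/M^2$ does not deteriorate with $p$. The free Young parameter at each level, together with the inductive boundedness of all lower moments $\E|X_k|^{2q}$ for $q<p$ and the explicit gradient bound from \Cref{lm:quad_bd}, is what keeps the threshold independent of $p$.
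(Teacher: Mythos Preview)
Your proposal is correct and follows essentially the same route as the paper: multinomial expansion of $|X_{k+1}|^{2p}$, vanishing of odd Gaussian moments, the pointwise contraction $|X_k-\eta\nabla F_{\zeta_k}(X_k)|^{2}\le r|X_k|^{2}+C$ from dissipativity and the gradient bound of \cref{lm:quad_bd} under $\eta<2m/M^2$, and iteration of the resulting one-step recursion. The only organizational difference is that you absorb the sub-leading powers $|X_k|^{2q}$, $q<p$, via an outer induction on $p$ together with Young's inequality, whereas the paper avoids induction by using the elementary inequality $|x|^{2(l-1)}\le\epsilon|x|^{2l}+C_{l,\epsilon}$ to push every lower-order term directly into the top-order coefficient; both devices serve the same purpose and neither changes the step-size threshold.
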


The proof can be found in \cref{subsec:me_discrete}.

The moments estimates, together with the Markov property, 
are used to extend the local analysis to the global analysis. 
The next result corresponds to a small modification of 
\citet[Theorem 3.2]{DebusscheF11}, 
where we need to apply a moment estimate using 
Proposition \ref{prop:me_discrete}. 

Before we state the result, we will define the semi-norm 
\[ |\phi|_{l,k} := 
    \sup_{\substack{\alpha \in \mathbb{N}^d \\
            0 < |\alpha| \leq l}} \,
    \sup_{x \in \mathbb{R}^d}
    |\de_\alpha \phi(x)| (1 + |x|^k)^{-1}.
\]

This next result is key to developing the modified PDE 
\eqref{eq:kolmogorov_formal_series}, 
as we prove an asymptotic expansion of the discrete time 
SGLD process \eqref{eq:gld}. 

\begin{proposition}
[Asymptotic Expansion for SGLD]
\label{prop:AsExp}
Let $\phi \in C^\infty_{\pol}(\mathbb{R}^d)$, 
then for all $N \in \mathbb{N}$ there exists an integer $l_{2N + 2}$
such that $\phi \in C^{2N+2}_{l_{2N + 2}}(\mathbb{R}^d)$.
Let $\{X_k\}_{k\in \mathbb{N}}$ 
be the discrete time SGLD algorithm from \labelcref{eq:gld}.
Then for every integer $j \geq 0$, 
there exist differential operators $A_j$ of order $2j$ 
with coefficients from $C^\infty_{\pol}(\mathbb{R}^d)$, 
such that for all integer $N \geq 1$
there exist constants $C_N$ and integer $\alpha$ depending on 
N and the polynomial growth rate of 
$F_\mathbf{z}(\cdot)$ and its derivatives,
such that for all $\eta < \frac{2m}{M^2}$ 
we have 
\[ \left| \mathbb{E} \phi(X_1) - \sum_{j=0}^N \eta^j A_j \phi(x) 
    \right|
    \leq C_N \eta^{N+1} (1 + |x|^\alpha) |\phi|_{2N+2, l_{2N+2}},
    \quad \forall x \in \mathbb{R}^d,
\]
where in particular we have $A_0 = I$ and $A_1 = L$.
\end{proposition}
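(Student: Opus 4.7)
The plan is to Taylor-expand $\phi(X_1)$ about $X_0 = x$ to order $2N+1$ and collect the expectation of the expansion in integer powers of $\eta$. Let $\Delta X := X_1 - x = -\eta\,\nabla F_{\zeta_0}(x) + \sqrt{2\eta/\beta}\,\xi_0$. Taylor's theorem with integral remainder yields
\begin{equation*}
\phi(X_1) = \sum_{k=0}^{2N+1} \frac{1}{k!}\sum_{|\alpha|=k} \de_\alpha \phi(x)\,(\Delta X)^\alpha + R_{2N+2}(x,\Delta X),
\end{equation*}
where $|R_{2N+2}|$ is controlled by $C\,|\phi|_{2N+2,l_{2N+2}}\,(1 + |x+\theta\Delta X|^{l_{2N+2}})\,|\Delta X|^{2N+2}$ for some random $\theta \in (0,1)$.

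Each monomial $(\Delta X)^\alpha$ expands into a sum of products of $a$ drift factors $-\eta\,\de_{i_j}F_{\zeta_0}(x)$ and $b$ noise factors $\sqrt{2\eta/\beta}\,(\xi_0)_j$ with $a+b = |\alpha|$, and therefore carries a factor $\eta^{a+b/2}$. Since $\xi_0$ is independent of $\zeta_0$ and the odd moments of $\xi_0 \sim \mathcal{N}(0,I_d)$ vanish, taking the expectation first in $\xi_0$ annihilates every term with $b$ odd, so only integer powers of $\eta$ survive. Averaging next over $\zeta_0$ turns each product of stochastic-gradient factors into a polynomial in the derivatives of $F_{\mb{z}}$ and the higher empirical tensor moments of $\nabla f(x,\cdot)$; by \cref{asm:smooth_loss} all such coefficients lie in $C^\infty_{\pol}(\mathbb{R}^d)$ uniformly in $\mb{z}$. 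Regrouping by $j := a + b/2$ defines $A_j$, and since each term collected into $A_j$ has $|\alpha| = a+b \leq 2(a+b/2) = 2j$ derivatives of $\phi$, $A_j$ is a differential operator of order at most $2j$. A direct computation of the $j=0$ and $j=1$ contributions gives $A_0 = I$ and $A_1\phi = -\langle \nabla F_{\mb{z}}, \nabla\phi\rangle + \frac{1}{\beta}\Delta\phi = L_{\mb{z}}\phi$.

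For the remainder term I would apply Cauchy--Schwarz together with \cref{lm:quad_bd} and the standard Gaussian moment bound $\E|\xi_0|^{2p} \leq c_p\,d^p$ to obtain $\E|\Delta X|^{2(N+1)} \leq C\,\eta^{N+1}(1+|x|^{2(N+1)})$ for $\eta < 2m/M^2$ (the drift contribution carries $\eta^{2(N+1)}$, hence is subdominant). The polynomial envelope $(1+|x+\theta\Delta X|^{l_{2N+2}})$ is absorbed by a one-step version of the moment bound in \cref{prop:me_discrete}. Combining these ingredients produces the stated estimate with some integer $\alpha$ depending on $N$ and on the polynomial growth exponent $\ell$ of $F_{\mb{z}}$ and its derivatives.

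The main obstacle is not any single deep step but the careful bookkeeping required to (i) verify that all half-integer powers of $\eta$ vanish after the expectation in $\xi_0$, leaving a clean expansion in integer powers; (ii) confirm that the coefficients defining each $A_j$ are smooth with uniform polynomial growth in $x$ and uniform in $\mb{z} \in \mathcal{Z}^n$, which crucially uses the uniform bound $\sup_z \|f(\cdot,z)\|_{6N+2,\ell} < \infty$ from \cref{asm:smooth_loss}; and (iii) ensure the Taylor remainder estimate does not lose a factor of $\eta$ because of the $x+\theta\Delta X$ argument inside its polynomial envelope, which is precisely why a one-step analogue of \cref{prop:me_discrete} is needed rather than the trivial bound on $|\Delta X|$ alone.
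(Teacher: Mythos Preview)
Your proposal is correct and leads to the stated expansion, but it takes a genuinely different route from the paper's proof. The paper does not Taylor-expand $\phi(X_1)$ directly. Instead it introduces the continuous-time interpolation $\widetilde X(t)$ of one SGLD step, with generator $\widetilde L(x,\zeta_0)$ frozen at the initial point, and then applies It\^o's formula to $\phi(\widetilde X(t))$ repeatedly: each application peels off one more power of $\eta$ and produces the next operator $\widetilde A_{j+1}$ from $\widetilde A_j$ by the explicit recursion $\widetilde A_{j+1} = \sum_{\mathbf{k}} \widetilde A_j^{\mathbf{k}}(x,\zeta_0)\bigl(-\de_i F_{\zeta_0}(x)\,\de_i + \beta^{-1}\de_{ii}\bigr)\de_{\mathbf{k}}$. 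A localization (stopping-time) argument kills the stochastic integrals, and $A_j$ is obtained by averaging $\widetilde A_j$ over the minibatch randomness $\zeta_0$.

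What each approach buys: your direct Taylor expansion is more elementary and avoids stochastic calculus entirely; the vanishing of half-integer powers is immediate from the odd Gaussian moments, and the remainder is handled by a single inequality. The paper's iterated-It\^o construction, on the other hand, yields the operators $A_j$ in a recursive form that is reused heavily downstream: the recursion is exactly what drives the construction of $L_j$ in \eqref{eq:appendix_L_j} and, more importantly, is the structure exploited in \cref{lm:closeness_of_operators} to show $A_{N,\mathbf z}-A_{N,\overline{\mathbf z}}=\tfrac1n\widehat A_N$. If you only need \cref{prop:AsExp} in isolation your argument is cleaner; if you intend to feed into the later stability analysis, the paper's recursive description of $A_j$ saves work there. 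One small point you should make explicit: the terms in your Taylor sum with $a+b/2>N$ (but $a+b\le 2N+1$) are not part of any $A_j$ with $j\le N$ and must be thrown into the $O(\eta^{N+1})$ error alongside the integral remainder; this is routine but should be stated.
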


The proof can be found in \cref{subsec:AsExp}.

The rest of the results will follow directly from 
the steps of \citet{Kopec13}, 
which is an extension of \citet{DebusscheF11} 
from a torus to $\mathbb{R}^d$. 
We will sketch the steps here, 
and check the conditions that leads to the results 
of \citet{Kopec13}.

Specifically, we will construct operators $\{L_{j}\}_{j \in \mathbb{N}}$ 
such that the operator 
\[
\mathcal{L} := L + \eta L_1 + \cdots + 
    \eta^j L_j + \cdots \,,
\]
satisfies the following identity in a formal sense 
\begin{equation}
\label{eq:appendix_formal_equiv}
  \exp(\eta \mathcal{L}) 
    = \sum_{\ell=0}^\infty \frac{\eta^\ell}{\ell!} 
      (L + \eta L_1 + \eta^2 + L_2 + \cdots)^\ell
    = \sum_{j=0}^\infty \eta^j A_j \, .
\end{equation}

Observe that it is sufficient to the coefficients 
to each term of the power series, i.e. $\eta^j$. 
Using a formal series inverse approach from 
\citet{hairer2006geometric}, 
we obtain the following formal equivalence 
\begin{equation}
\label{eq:appendix_L_j}
    L_j = A_{j+1} + \sum_{\ell = 1}^j 
    \frac{B_\ell}{ \ell ! }
    \sum_{n_1 + \cdots + n_{\ell + 1}
    = n - \ell}
    L_{n_1} \cdots L_{n_\ell} 
    A_{n_{\ell + 1} + 1} \,,
\end{equation}
where $\{B_\ell\}$ are the Bernoulli numbers. 

Using this construction, we can define the following modified PDE 
as before in \eqref{eq:kolmogorov_formal_series}
\begin{equation}
  \frac{\de v}{\de t} = (L_{\mathbf{z}} + \eta L_{1, \mathbf{z}}
    + \eta^2 L_{2, \mathbf{z}} + \cdots ) v, 
  \quad v(0,x) = \phi(x), 
\end{equation}
such that we can write the formal solution as 
\[ v(\eta, x) = e^{\eta \mathcal{L}} \phi(x). 
\]

Using the formal equivalence for the operators in 
Equation \eqref{eq:appendix_formal_equiv}, 
we can obtain the formal equivalence of the solutions as well, i.e. 
\[ v(\eta, x) = e^{\eta \mathcal{L}} \phi(x)
  = \sum_{j=0}^\infty \eta^j A_j \phi(x)
  = \mathbb{E}_\mathbf{z}[ \phi(X_1) | X_0 = x ].
\]

Finally, we can study the stationary distribution 
$\pi_\mathbf{z} = \mu_\mathbf{z} \rho_\mathbf{z}$ by writing down 
the adjoint equation that $\mu_\mathbf{z}$ must satisfy 
\[ \mathcal{L}^* \mu_\mathbf{z} = 0,
\]
where $\mathcal{L}^*$ is the adjoint operator 
with respect to $\rho_\mathbf{z}$. 

While the above construction is formal, 
\citet{DebusscheF11, Kopec13} made 
these statements precise for truncated series 
by proving error bounds with the desired order in $\eta$. 
In particular, the same authors constructed the truncation 
by the following decomposition of $\pi^N_\mathbf{z}$
\begin{equation}
\label{eq:appendix_mu_decomp}
  \pi^N_\mathbf{z} 
  = \rho_\mathbf{z} \mu^N_\mathbf{z}
  := \rho_\mathbf{z} (1 + \eta \mu_1 + \cdots 
    + \eta^N \mu_N), 
\end{equation}
and if each $\mu_k$ satisfies the Poisson equation 
\begin{equation}
\label{eq:appendix_poisson_equation}
  L \mu_k = - \sum_{\ell = 1}^k 
    L_{\ell}^* \mu_{k-\ell} \,, 
\end{equation}
then we can show that 
\[ (L^* + \eta L_1^* + \cdots + \eta^N L_N^*) \mu^N = \mathcal{O}(\eta^{N+1}),
\]
which is sufficient close to the desired truncation to preserve 
the order of approximation error. 

To summarize, we will restate and prove the main result
of \cref{thm:backward_analysis_main}. 
\begin{theorem}
[Approximation of SGLD]
Suppose $f(x,z)$ satisfies \cref{asm:smooth_loss,asm:dissipative} 
with order of approximation $N \in \mathbb{N}$. 
Then there exist positive constants $C,\lambda,\ell'$ (depending on $N$), 
such for all step size $0 < \eta \leq \frac{2m}{M^2}$ 
and $\mb{z} \in \mathcal{Z}^n$, 
we can construct a modified stationary measure $\pi^N_{\mb{z}}$, 
with the property that 
for all steps $k \geq 0$, initial condition $X_0 = x$, 
and test function $\phi \in C^{6N + 2}_{\ell}(\mathbb{R}^d)$, 
the following approximation bound on 
the SGLD algorithm $\{X_k\}_{k \geq 0}$ \eqref{eq:gld} holds 
\begin{equation}
    \left| \, \mathbb{E}_{\mb{z}} \, \phi(X_k) 
        - \pi^N_{\mb{z}}(\phi) \, 
    \right| 
    \leq 
        C 
        \left( 
            e^{ - \lambda k \eta / 2 } + \eta^N
        \right) 
        (1 + |x|)^{\ell'} \, 
        \| \phi - \rho_{\mb{z}} (\phi) \|_{6N+2, \ell} 
        \,.
\end{equation}
In particular, the above result holds for $\phi \in \{ F, F_{\mb{z}} \}$.
\end{theorem}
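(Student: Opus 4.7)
The plan is to assemble the three pieces laid out in the paper: the construction of $\pi^N_{\mb{z}}$ via formal series and Poisson equations, the one-step asymptotic expansion from \cref{prop:AsExp}, and a telescoping plus spectral-gap argument. First, I construct the operators $L_{j,\mb{z}}$ recursively from \eqref{eq:appendix_L_j}; each is a differential operator of finite order with coefficients in $C^\infty_{\pol}(\mathbb{R}^d)$ built from $F_{\mb{z}}$ and its derivatives, which are controlled uniformly in $\mb{z}$ by \cref{asm:smooth_loss}. I then solve the Poisson hierarchy \eqref{eq:appendix_poisson_equation} inductively: \cref{asm:dissipative} yields a Poincaré inequality for $\rho_{\mb{z}}$ with some spectral-gap constant $\lambda > 0$, so each $\mu_{m,\mb{z}}$ exists, is unique up to a constant fixed by $\rho_{\mb{z}}(\mu_{m,\mb{z}})=0$ for $m \geq 1$, and inherits polynomial growth from the right-hand side. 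This defines $\pi^N_{\mb{z}} := \rho_{\mb{z}}\bigl(1 + \sum_{\ell=1}^N \eta^\ell \mu_{\ell,\mb{z}}\bigr)$.

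Next, for a fixed test function $\phi$ with $\rho_{\mb{z}}(\phi)=0$ (achieved by subtracting the constant, which does not change either side of the final inequality), I define $v_\ell(t,x)$ as the solution of the parabolic Cauchy problem \eqref{eq:kolmogorov_truncation} and set $v^{(N)}(t,x) := \sum_{\ell=0}^N \eta^\ell v_\ell(t,x)$. Using Duhamel's formula together with the $L^2(\rho_{\mb{z}})$-contraction of $e^{tL_{\mb{z}}}$ on mean-zero functions, together with the polynomial moment bound from \cref{prop:ME_continuous}, I obtain an $x$-pointwise exponential decay
\begin{equation*}
    \bigl| v_\ell(t,x) - \rho_{\mb{z}}(\phi\, \mu_{\ell,\mb{z}}) \bigr|
    \leq C\,e^{-\lambda t/2}\,(1+|x|)^{\ell'}\,\|\phi\|_{6N+2,\ell},
\end{equation*}
which summed over $\ell$ gives $|v^{(N)}(k\eta,x) - \pi^N_{\mb{z}}(\phi)| \lesssim e^{-\lambda k \eta/2}$, the first term of the target estimate.

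For the second term, I apply \cref{prop:AsExp} with test function $v^{(N)}(t,\cdot)$. Because the coefficients $L_{j,\mb{z}}$ were chosen precisely to match the formal identity \eqref{eq:appendix_formal_equiv}, the expansion collapses up to order $\eta^N$, yielding the one-step local error
\begin{equation*}
    \bigl| \mathbb{E}\bigl[v^{(N)}(t, X_1)\,\bigl|\,X_0=x\bigr] - v^{(N)}(t+\eta, x) \bigr|
    \leq C\,\eta^{N+1}(1+|x|^{\alpha})\,\|\phi\|_{6N+2,\ell}.
\end{equation*}
I then follow the telescoping argument \eqref{eq:error_telescope}: write $\mathbb{E}_{\mb{z}}\phi(X_k) - v^{(N)}(k\eta, x)$ as a sum of one-step increments, condition on $\mathcal{F}_{k-\ell}$, apply the local error bound, and use \cref{prop:me_discrete} to absorb $\mathbb{E}|X_{k-\ell}|^\alpha$ uniformly in $\ell$. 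The spectral-gap bound from the previous step converts $(1+|x|)^{\ell'}$ into the decay factor $e^{-\lambda \ell \eta/2}$, so the resulting series $\sum_{\ell=1}^k e^{-\lambda \ell \eta /2}$ is geometric with sum at most $\eta^{-1}\cdot O(1)$, turning the prefactor $\eta^{N+1}$ into $\eta^N$. Combining with the first-term estimate yields the stated bound, and the case $\phi \in \{F, F_{\mb{z}}\}$ follows since both are in $C^{6N+2}_\ell$ by \cref{asm:smooth_loss}.

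The main obstacle is the polynomial-growth bookkeeping in the spectral-gap estimates: each Poisson solve and each Duhamel integration increases the polynomial degree $\ell'$, and one must verify that the increment is finite and that all constants depend on $\phi$ only through $\|\phi - \rho_{\mb{z}}(\phi)\|_{6N+2,\ell}$. This is essentially the adaptation of \citet{Kopec13} from the torus to $\mathbb{R}^d$, where the moment estimates of \cref{prop:ME_continuous,prop:me_discrete} replace compactness. A minor additional subtlety is that the SGLD increment carries minibatch noise, but since the asymptotic expansion of \cref{prop:AsExp} is taken in conditional expectation, the minibatch randomness averages out and contributes only to the coefficients of the $L_{j,\mb{z}}$, without affecting the order of the local error.
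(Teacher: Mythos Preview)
Your proposal is correct and follows exactly the approach of the paper, which in its own proof simply verifies that the asymptotic expansion of \cref{prop:AsExp} and the discrete moment bound of \cref{prop:me_discrete} are the only new ingredients needed to invoke the framework of \citet{Kopec13}; you have essentially spelled out the steps of that framework rather than deferring to it. One small clarification on the telescoping step: the factor $e^{-\lambda\ell\eta/2}$ does not arise by ``converting'' the polynomial growth $(1+|x|)^{\ell'}$, but rather because the local error at stage $\ell$ is bounded by $C\eta^{N+1}(1+|X_{k-\ell}|^\alpha)\,|v^{(N)}((\ell-1)\eta,\cdot)|_{2N+2,\cdot}$, and it is this \emph{seminorm} that decays like $e^{-\lambda(\ell-1)\eta/2}$ since all spatial derivatives of $v^{(N)}(t,\cdot)$ tend to zero as $v^{(N)}$ converges to a constant --- the polynomial in $|x|$ survives via \cref{prop:me_discrete} and appears in the final bound.
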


\begin{proof}

It is sufficient to check that we have  
all the technical conditions to use the main result of 
\citet[Proposition 5.4]{Kopec13}. 

We start by checking that the modified flow result 
of \citet[Theorem 4.1]{Kopec13} only requires 
the asymptotic expansion result of Proposition \ref{prop:AsExp}. 
In particular, the only difference between our results are the construction of the operators $\{A_j\}$.
Since all the definitions of $\{L_j\}$ are in terms of $\{A_j\}$, 
the result follows by the exact same proof. 

Next we observe that \citet[Proposition 5.1 and 5.3]{Kopec13} 
does hinge on any earlier technical estimates. 

And finally the construction of the invariant measure in \citet[Proposition 5.4]{Kopec13} 
relies on the above intermediate results, 
and additionally requires bounding the discrete moments $\mathbb{E}|X_k|^{2p}$ given by Proposition \ref{prop:me_discrete}. 
Specifically, our discrete moment estimate replaces \citet[Proposition 2.5]{Kopec13}. 
Therefore, the result follows from the same proof. 

\end{proof}

\subsection{Proof of Proposition \ref{prop:ME_continuous} }
\label{subsec:ME_continuous}

\begin{proof}

Let $N \in \mathbb{N}$ be a positive integer, 
and we define the stopping time 
\[ \tau_{{N}} := \inf \{t \geq 0 : {\abs{X(t)}} \geq N \}.
\]

We will prove by induction both the main statement 
\cref{eq:sde_moment_est} and the following:
for all $p \in \mathbb{N}$ positive, 
$0 < \gamma < 2m$, there exists a constant $C_p > 0$  
such for all $t \geq 0$ we have
\begin{equation}
\label{eq:sde_moment_est_int}
    \mathbb{E} \int_0^{t \wedge \tau_N}
        |X(s)|^{2p} \exp(\gamma s) {ds}
    \leq C_p \left(
        |x_0|^{2p} + 1 + \mathbb{E} 
            \exp(\gamma(t\wedge \tau_N))
    \right).
\end{equation}

To prove the case for $p=1$, 
we start by making the following computation
\begin{equation}
\label{eq:gen_moment_bd}
    L |x|^2 = - 2 \langle x, {\nabla}  F_\mathbf{z}(x) \rangle + \frac{2d}{\beta}
    \leq -2m |x|^2 + 2b + \frac{2d}{\beta},
\end{equation}
where we used \cref{asm:dissipative}. 

Next we let $0 < \gamma_1 < 2m$, and apply It\^{o}'s Lemma to \\
$|X(t \wedge \tau_N)|^2 \exp(\gamma_1 (t\wedge \tau_N))$ we obtain
$\forall t \geq 0$
\begin{align*}
    |X(t \wedge \tau_N)|^2 \exp(\gamma_1 (t\wedge \tau_N))
    =& \, |X(0)|^2 + \gamma_1 \int_0^{t\wedge \tau_N} 
        |X(s)|^2 \exp(\gamma_1 s) ds \\
    & + \int_0^{t\wedge \tau_N} L(|X(s)|^2) \exp(\gamma_1 s) ds \\
    & + \int_0^{t\wedge \tau_N} {2X(s)\exp(\gamma_1 s)}  dW(s).
\end{align*}

Using the fact that stopping at $\tau_N$ bounds all terms of the integrals, 
we have that the final stochastic integral is a martingale. 
Next we take the expectation and use the above computation to get 
$\forall t \geq 0$
\begin{equation}
\label{eq:sde_moment_int_step1}
\begin{aligned}
    \mathbb{E} |X(t \wedge \tau_N)|^2 \exp(\gamma_1 (t\wedge \tau_N))
    \leq & \, |x_0|^2 + (\gamma_1 - 2m) 
        \mathbb{E} \int_0^{t\wedge \tau_N}
        |X(s)|^2 \exp( \gamma_1 s ) ds \\
    & + \left(2b + \frac{2d}{\beta} \right) 
        \mathbb{E} \int_0^{t\wedge \tau_N}
        \exp(\gamma_1 s) ds.
\end{aligned}    
\end{equation}

Since $\gamma_1 < 2m$, we can drop the first integral term. 
Next we use Fatou's Lemma on the left hand side, 
and Monotone Convergence Theorem on the right hand side to get 
$\forall t \geq 0$
\begin{equation*}
    \mathbb{E} |X(t)|^2 \exp(\gamma_1 t)
    \leq |x_0|^2 + \frac{2b + \frac{2d}{\beta}}{\gamma_1} \exp(\gamma_1 t),
\end{equation*}
which proves \cref{eq:sde_moment_est} for $p=1$.

To prove \cref{eq:sde_moment_est_int} for $p=1$, 
we return to \cref{eq:sde_moment_int_step1} and move 
the term $(\gamma_1 - 2m)$ to the left hand side and use 
the estimate above to get 
\[ (2m - \gamma_1) \mathbb{E} \int_0^{t\wedge \tau_N} 
    |X(s)|^2 \exp(\gamma_1 s) ds
    \leq 
    |x_0|^2 + \frac{2b + \frac{2d}{\beta}}{\gamma_1}
    \mathbb{E} \exp(\gamma_1 (t\wedge \tau_N)).
\]

This implies we have the constant 
\[ C_1 = \max\left(\frac{2b + \frac{2d}{\beta}}{\gamma_1}, 1 \right) 
    \cdot \max\left( \frac{1}{2m - \gamma_1}, 1 \right).
\]

Now we prove the induction step for $p$, 
assuming the results \cref{eq:sde_moment_est} and \cref{eq:sde_moment_est_int}
holds for $p-1$.
Similarly we make the following computations
\begin{align*}
    \nabla |x|^{2p} &= 2p |x|^{2p-2} x, \\
    (\nabla \cdot \nabla) |x|^{2p} 
    &= 2p |x|^{2p-2} d + 2p(2p-2) |x|^{2p-4} \langle x, x \rangle, \\
    L |x|^{2p} &= - 2p |x|^{2p-2} \langle x, \nabla F_\mathbf{z}(x) \rangle
    + \frac{2p}{\beta}(d + 2p - 2) |x|^{2p-2} \\
    &\leq - 2pm |x|^{2p} + 2pb |x|^{2p-2} 
    + \frac{2p}{\beta}(d + 2p - 2) |x|^{2p-2},
\end{align*}
where we used \cref{asm:dissipative} in the last inequality.

Then we can apply It\^{o}'s Lemma to 
$|X(t\wedge \tau_N)|^{2p} \exp(\gamma_1(t\wedge \tau_N))$
for some $0 < \gamma_1 < 2m$ to get that 
$\forall t \geq 0$
\begin{equation}
\label{eq:sde_moment_int_step2}
\begin{aligned}
    & \quad 
    \mathbb{E} |X(t\wedge \tau_N)|^{2p} \exp(\gamma_1(t\wedge \tau_N)) \\
    &= |x_0|^{2p} + \gamma_1 \mathbb{E} \int_0^{t \wedge \tau_N}
        \exp(\gamma_1 s) |X(s)|^{2p} ds
        + \mathbb{E} \int_0^{t \wedge \tau_N} L(|X(s)|^{2p}) 
            \exp(\gamma_1 s) ds \\
    &\leq |x_0|^{2p} + (\gamma_1 - 2pm) \mathbb{E} \int_0^{t \wedge \tau_N}
        \exp(\gamma_1 s) |X(s)|^{2p} ds \\
        &\quad + \left( 2pb + \frac{2p}{\beta}(d + 2p - 2) \right)
            \mathbb{E} \int_0^{t \wedge \tau_N}
            |X(s)|^{2p-2} \exp(\gamma_1 s) ds,
\end{aligned}
\end{equation}
where we used the above computation to in the last step. 

Using the fact that $\gamma < 2m \leq 2pm$, 
and \cref{eq:sde_moment_est_int} for $p-1$, we get that 
\begin{align*}
    &\quad \mathbb{E} |X(t\wedge \tau_N)|^{2p} \exp(\gamma_1(t\wedge \tau_N)) \\
    &\leq |x_0|^{2p} + \left( 2pb + \frac{2p}{\beta}(d + 2p - 2) \right)
        C_{p-1} \left( |x_0|^{2p-2} + 1 
        + \mathbb{E}\exp(\gamma_1 (t\wedge \tau_N)) \right).
\end{align*}

Once again using Fatou's Lemma and Monotone Convergence Theorem, 
we have proved \cref{eq:sde_moment_est} for $p$, with constant 
\[ \widetilde C_p = \max\left( \left( 2pb + \frac{2p}{\beta}(d + 2p - 2) \right)
        C_{p-1} + 1, 1 \right).
\]

To prove \cref{eq:sde_moment_est_int}, we return to an earlier step 
at \cref{eq:sde_moment_int_step2} and using the above estimate to get 
\begin{align*}
    &\quad (2pm - \gamma_1) \mathbb{E} \int_0^{t \wedge \tau_N} 
        |X(s)|^{2p} \exp(\gamma_1 s) ds \\
    &\leq \widetilde C_p \left( |x_0|^{2p} + 1 
        + \mathbb{E} \exp(\gamma_1 (t\wedge \tau_N))
        \right).
\end{align*}

This completes the induction proof with the constant 
\begin{align*}
    C_p &= \widetilde C_p \max\left( \frac{1}{2pm - \gamma_1}, 1 \right) \\
    &= \max\left( \left( 2pb + \frac{2p}{\beta}(d + 2p - 2) \right) C_{p-1} + 1, 1 \right)
    \max\left( \frac{1}{2pm - \gamma_1}, 1 \right) \,.
\end{align*}
\end{proof}

\subsection{Proof of Proposition \ref{prop:me_discrete} }
\label{subsec:me_discrete}

\begin{proof}

We start by showing a basic inequality: 
for every $l \in \mathbb{N}^*, \epsilon > 0$, 
there exist a constant $C_{l,\epsilon} > 0$, 
such that for all $x \in \mathbb{R}^d$ we have 
\begin{equation}
\label{eq:pol_est}
    |x|^{2(l-1)} \leq \epsilon |x|^{2l} + C_{l,\epsilon}.
\end{equation}

The result follows from the fact that 
$|x|^{2(l-1)} = \epsilon |x|^{2l}$ when $|x| = \epsilon^{-1}$, 
and therefore it is sufficient to choose 
$C_{l,\epsilon} = \epsilon^{-2(l-1)}$ 
to satisfy \cref{eq:pol_est}.

Next we consider expanding $\mathbb{E} |X_{k+1}|^{2p} $ directly
\begin{equation*}
\begin{aligned}
    &\quad \; \mathbb{E} |X_{k+1}|^{2p} \\
    &= \mathbb{E} \left|X_k - \eta \nabla F_{\zeta_k}(X_k) 
        - \sqrt{ \frac{2\eta}{\beta} } \xi_k
        \right|^{2p} \\
    &= \mathbb{E} \left(
        | X_k - \eta \nabla F_{\zeta_k}(X_k) |^2
        + \frac{2\eta}{\beta} |\xi_k|^2 
        - 2 \left\langle X_k - \eta \nabla F_{\zeta_k}(X_k), 
            \sqrt{ \frac{2\eta}{\beta} } \xi_k
            \right\rangle
        \right)^p \\
    &= \mathbb{E} \sum_{i+j+l = p} \frac{p!}{i!j!l!}
        | X_k - \eta \nabla F_{\zeta_k}(X_k) |^{2i}
        \left( \frac{2\eta}{\beta} |\xi_k|^2 \right)^j
        (-2)^l \left\langle X_k - \eta \nabla F_{\zeta_k}(X_k), 
            \sqrt{ \frac{2\eta}{\beta} } \xi_k
            \right\rangle^l.
\end{aligned}
\end{equation*}

Here we observe that whenever $l$ is odd, 
the term has zero mean due to an odd power of $\xi_k$.
Therefore using the Cauchy-Schwarz inequality we get 
\begin{equation}
\label{eq:moments_exp_terms}
\begin{aligned}
    \mathbb{E} |X_{k+1}|^{2p} 
    &\leq \mathbb{E} \sum_{i + j + 2l = p} \frac{p!}{i!j!(2l)!} 
        | X_k - \eta \nabla F_{\zeta_k}(X_k) |^{2(i + l)} 
        \left( \frac{2\eta}{\beta} |\xi_k|^2 \right)^{j+l}
        2^{2l} \\
    &= \mathbb{E} |X_k - \eta \nabla F_{\zeta_k}(X_k)|^{2p} \\
    &\quad + \sum_{\substack{i+j+2l = p \\ i > 0}}
        \frac{p!}{i!j!(2l)!} 
        | X_k - \eta \nabla F_{\zeta_k}(X_k) |^{2(i + l)} 
        \left( \frac{2\eta}{\beta} |\xi_k|^2 \right)^{j+l}
        2^{2l}
\end{aligned}
\end{equation}

Then we observe after replacing the index $l$ with $2l$, 
whenever $i \neq p$, we have that $i+l < p$, 
so we can isolate the only term with $|X_k - \eta \nabla F_{\zeta_k}(X_k)|^{2p}$.
At the same time, since $| \xi_k |^2 \sim \chi_d$, 
all the moments are bounded. 
This implies we have for all $i+j+2l = p, i > 0$ we have 
\begin{equation*}
\begin{aligned}
    &\quad \mathbb{E}_{X_k} \frac{p!}{i!j!(2l)!} 
        | X_k - \eta \nabla F_{\zeta_k}(X_k) |^{2(i + l)} 
        \left( \frac{2\eta}{\beta} |\xi_k|^2 \right)^{j+l}
        2^{2l} \\
    &= C |X_k - \eta \nabla F_{\zeta_k}(X_k)|^{2(i+l)} \\
    &\leq C \epsilon |X_k - \eta \nabla F_{\zeta_k}(X_k)|^{2p} + C_{2(i+l),\epsilon},
\end{aligned}
\end{equation*}
where $\mathbb{E}_{X_k}$ denotes the expectation conditioned on ${X_k}$, 
and $\epsilon > 0$ can be chosen arbitrarily small 
using \cref{eq:pol_est}.

It is then sufficient to control the terms of the form 
$| X_k - \eta \nabla F_{\zeta_k}(X_k) |^{2p}$
\begin{equation*}
\begin{aligned}
    | X_k - \eta \nabla F_{\zeta_k}(X_k) |^{2p}
    &= \left\langle X_k - \eta \nabla F_{\zeta_k}(X_k), 
            X_k - \eta \nabla F_{\zeta_k}(X_k)
        \right\rangle^{p} \\
    &= \left( |X_k|^2 + \eta^2 |\nabla F_{\zeta_k}(X_k)|^2 
        - 2 \eta \langle X_k, \nabla F_{\zeta_k}(X_k) \rangle
        \right)^p \\
    &\leq \left( |X_k|^2 + \eta^2 \left(
            M |X_k| + M_1 \right)^2
        - 2 \eta m |X_k|^2 + 2\eta b
        \right)^p \\
    &= \left( (1 - 2\eta m + \eta^2 M^2) |X_k|^2 
        + 2\eta^2 M M_1 |X_k| + \eta^2 M_1^2 + 2\eta b
        \right)^p
\end{aligned}
\end{equation*}
where for the inequality we used 
Lemma \ref{lm:quad_bd} and \cref{asm:dissipative} on $F_{\zeta_k}(x)$. 
We remark that this is possible since 
$F_{\zeta_k}(x) = \frac{1}{n_b}\sum_{z \in \zeta_k} f(x,z)$, 
and clearly an empirical average satisfies the same properties.

At this point we can use the condition $0 < \eta < \frac{2m}{M^2}$, 
which implies we can get $r := (1 - 2\eta m + \eta^2 M^2) < 1$. 
Now we separate into two cases, 
first when $0 < r < 1$ we have 
\begin{equation*}
    | X_k - \eta \nabla F_{\zeta_k}(X_k) |^{2p}
    \leq r^p |X_k|^{2p} + C \epsilon |X_k|^{2p} + C
    \leq R |X_k|^{2p} + C,
\end{equation*}
where we used \cref{eq:pol_est} with $\epsilon$ sufficiently small 
such that $0 < R < 1$.

In the second case when we have $r \leq 0$, 
observe 
\[ 0 \leq | X_k - \eta \nabla F_{\zeta_k}(X_k) |^2
    \leq r |X_k|^2 + C |X_k| + C
    \leq C |X_k| + C.
\]
From here we can use \cref{eq:pol_est} again to control the coefficients 
such that for some $0 < R < 1$ we have the same desired result
\[ | X_k - \eta \nabla F_{\zeta_k}(X_k) |^{2p} \leq R |X_k|^{2p} + C.
\]

To complete the proof we return to \cref{eq:moments_exp_terms}, 
and rewrite the terms as 
\begin{align*}
    \mathbb{E} |X_{k+1}|^{2p} 
    &\leq R \, \mathbb{E} |X_k|^2p + C + 
        \sum_{\substack{i+j+2l = p \\ i > 0}} \epsilon \, C_{ijl} \,
            \mathbb{E} |X_k|^{2p} + C_{ijl, \epsilon} \\
    &\leq \widetilde R \, \mathbb{E} |X_k|^{2p} + \widetilde C,
\end{align*}
where we choose $\epsilon$ sufficiently small such that $0 < \widetilde R < 1$.
Then we can simply expand the $X_k$ terms recursively to get 
\begin{align*}
    \mathbb{E} |X_{k+1}|^{2p} 
    &\leq \widetilde{R}^{k+1} |x_0|^{2p} + 
        \sum_{l = 0}^{k+1} \widetilde{R}^{l} \widetilde C
    \leq |x_0|^{2p} + \frac{\widetilde C}{1 - \widetilde R},
\end{align*}
where choosing $C_p = \max\left( \frac{\tilde C}{1 - \tilde R}, 1 \right)$ 
leads to desired result from the statement.

\end{proof}

\subsection{Proof of Proposition \ref{prop:AsExp}}
\label{subsec:AsExp}

The proof to construct an asymptotic expansion of $\E\phi\left( X_k \right)$ follows along the lines of \cite{DebusscheF11}, 
for any $\phi \in C^\infty_{\pol}(\R^d)$. 
Before we start the proof, we define a continuous time process $\widetilde X(t)$
corresponding to the discrete SGLD algorithm $\{X_k\}$ as the following
\begin{equation}
\label{eq:tilde_x}
    \widetilde X(t) := X_k - (t - k \eta) \nabla F_{\zeta_k}(X_k)
        + \sqrt{\frac{2}{\beta}} 
            \left( W(t) - W(k\eta) \right), 
    \quad \forall t \in [k \eta, (k+1)\eta ].
\end{equation}

This leads to the following SDE representation
\begin{equation*}
    d\widetilde X(t) = - \nabla F_{\zeta_k}(X_k) dt 
        + \sqrt{ \frac{2}{\beta} } dW(t), 
    \quad \forall t \in [k \eta, (k+1)\eta ],
\end{equation*}
where most importantly when $0\leq t \leq \eta$ 
we have the infinitesimal generator $\widetilde{L}$ 
only depending on the initial condition $X_0 = x$ 
\[ \widetilde{L}(x, \zeta_0) \phi(\widetilde X(t)) := 
    \sum_{i=1}^d - \de_i F_{\zeta_0}(x) \de_i \phi(\widetilde X(t))
    + \sum_{i=1}^d \frac{1}{\beta} \de_{ii} \phi(\widetilde X(t)), 
\]
and furthermore we also have $\mathbb{E} \widetilde{L}(x,\zeta_0) = L(x)$, 
since the subsample gradient $\nabla F_{\zeta_0}$
is an unbiased estimate of the true gradient $\nabla F_\mathbf{z}(x)$.

\begin{proof}

To start the proof we apply It\^{o}'s Lemma on 
$\phi( \widetilde X(t) )$ for $0 \leq t \leq \eta$ to get
\begin{equation}
\label{eq:ito_phi}
    \phi(\widetilde X(t)) = \phi(x) 
    + \int_0^t \widetilde{L}(x,\zeta_0) \phi(\widetilde X(s)) ds 
    + \sum_{i=1}^d \int_0^t \sqrt{\frac{2}{\beta}} 
        \de_i \phi(\widetilde X(s)) dW^i(s).
\end{equation}

Here we define the operator 
\[ R_{0,i}(x) := \sqrt{\frac{2}{\beta}} \de_i,
\]
and continue to expand by applying \cref{eq:ito_phi}
on $\de_i \phi(\widetilde X(t))$ and $\de_{ii} \phi(\widetilde X(t))$
to get

\begin{align*}
    & \widetilde{L}(x, \zeta_0) \phi(\widetilde X(s_1)) \\
    =& \widetilde{L}(x, \zeta_0) \phi(x) \\
    &+ \int_0^{s_1} \sum_{i_1, i_2 = 1}^d 
        \de_{i_1} F_{\zeta_0}(x) \de_{i_2} F_{\zeta_0}(x)
        \de_{i_1 i_2} \phi(\widetilde X(s_2))
        - \de_{i_1} F_{\zeta_0}(x) \frac{1}{\beta} 
            \de_{i_1 i_2 i_2} \phi(\widetilde X(s_2)) ds_2 \\
    &+ \int_0^{s_1} \sum_{i_1, i_2 = 1}^d 
        - \frac{1}{\beta} \de_{i_2} F_{\zeta_0}(x) 
            \de_{i_1 i_1 i_2} \phi(\widetilde X(s_2))
        + \frac{1}{\beta^2} \de_{i_1 i_1 i_2 i_2} \phi(\widetilde X(s_2)) ds_2 \\
    &+ \sum_{i_1,i_2 = 1}^d \int_0^{s_1}
        - \de_{i_1} F_{\zeta_0}(x) \sqrt{\frac{2}{\beta}} 
            \de_{i_1 i_2} \phi(\widetilde X(s_2))
        + \frac{1}{\beta} \sqrt{\frac{2}{\beta}}
            \de_{i_1 i_1 i_2} \phi(\widetilde X(s_2)) dW^{i_2}(s_2).
\end{align*}

Here we can define 
\begin{align*}
  \widetilde{A}_1(x,\zeta_0) &:= \widetilde{L}(x,\zeta_0), \\
    \widetilde{A}_2(x,\zeta_0) &:= \sum_{i_1, i_2 = 1}^d 
        \de_{i_1} F_{\zeta_0}(x) \de_{i_2} F_{\zeta_0}(x)
        \de_{i_1 i_2} 
        - \de_{i_1} F_{\zeta_0}(x) \frac{1}{\beta} 
            \de_{i_1 i_2 i_2} \\
        &\quad\quad
        + \sum_{i_1, i_2 = 1}^d - \frac{1}{\beta} \de_{i_2} F_{\zeta_0}(x) 
            \de_{i_1 i_1 i_2} 
        + \frac{1}{\beta^2} \de_{i_1 i_1 i_2 i_2},
    \\
    \widetilde{R}_{1,i_2}(x, \zeta_0) &:= \sum_{i_1 = 1}^d 
        - \de_{i_1} F_{\zeta_0}(x) \sqrt{\frac{2}{\beta}} 
            \de_{i_1 i_2} 
        + \frac{1}{\beta} \sqrt{\frac{2}{\beta}}
            \de_{i_1 i_1 i_2},
\end{align*}
which would lead to 
\begin{align*}
    \phi(\widetilde X(t)) &= \phi(x) + t \widetilde{A}_1(x,\zeta_0) \phi(x)
        + \int_0^t \int_0^{s_1} \widetilde{A}_2(x,\zeta_0) 
          \phi(\widetilde X(s_2)) ds_2 \\
    &\quad + \sum_{i_2 = 1}^d \int_0^t \int_0^{s_1} 
        \widetilde{R}_{1,i_2}(x,\zeta_0) \phi(\widetilde X(s_2)) dW^{i_2}(s_2) ds_1 \\
    &\quad + \sum_{i_1 = 1}^d \int_0^t R_{0,i_1} \phi(\widetilde X(s_1)) 
            dW^{i_1}(s_1).
\end{align*}

At this point we observe that the last two integrals are local martingales, 
therefore a localization argument can remove them in expectation. 
To be precise, we will define the stopping time 
$\tau_c := \inf\{ t \geq 0 : |\widetilde X(t)| \geq c \}$
for some $c > 0$. 
Therefore we have that 
\[ \mathbb{E} \phi(\widetilde X(t \wedge \tau_c ) )
    = \phi(x) + (t \wedge \tau_c) A_1(x) \phi(x)
    + \mathbb{E} \int_0^{t \wedge \tau_c} \int_0^{s_1}
        A_2(x) \phi(\widetilde X(s_2)) ds_2, 
\]
where $A_i(x) = \mathbb{E} \widetilde{A}_i(x,\zeta_0)$ 
is the expectation over the subsampling randomness $\zeta_0$.

Here we observe that $A_2(x)$ is a $4^\text{th}$ order 
differential operator, with $C_2^\infty(\mathbb{R}^d)$ coefficients, 
since we have $|\de_i F_{\zeta_0}(x)| \leq C(1 + |x|)$
from Lemma \ref{lm:quad_bd}. 
Therefore with $N=1$ we have 
\[ |A_2(x) \phi(\widetilde X(s_2))|
    \leq C ( 1 + |x|^2 ) (1 + |\widetilde X(s_2)|^{l_{2N + 2}}) 
        |\phi|_{2N + 2, l_{2N + 2}},
\]
for some $l_{2N + 2}$ satisfying the Proposition statement.

Now applying the moment estimate from 
Proposition \ref{prop:me_discrete} with $\eta = s_2$ 
we have that 
\begin{align*}
    & \quad \left| \mathbb{E} \phi(\widetilde X(t \wedge \tau_c))
        - \phi(x) - (t \wedge \tau_c) A_1(x) \phi(x)
    \right| \\
    & \leq C (t\wedge \tau_c)^2 (1 + |x|^2) (1 + \mathbb{E} |\widetilde X(s_2)|^{l_{2N+2}} )
        |\phi|_{2N+2, l_{2N+2}} \\
    & \leq C_N \eta^{N+1} (1 + |x|^\alpha) |\phi|_{2N+2, l_{2N+2}},
\end{align*}
where $\alpha = l_{2N + 2} + 2$, and we take $N=1$ here.
And since we can take $t = \eta$ and $c > 0$ is arbitrary, 
we have proven the Proposition statement for $N=1$.

For the general statement, we will prove inductively 
the following statement for all $N$
\begin{equation}
\label{eq:asymp_exp_induction}
\begin{aligned}
    \phi(\widetilde X(t)) =& \phi(x)
        + t \widetilde{L}(x,\zeta_0) \phi(x) 
        + \sum_{j=2}^N t^j \widetilde{A}_j(x,\zeta_0) \phi(x) \\
    & + \int_0^t \cdots \int_0^{s_N} \widetilde{A}_{N+1}(x,\zeta_0) 
        \phi(\widetilde X(s_{N+1}))
        ds_{N+1} \cdots ds_1 \\
    & + \sum_{j=0}^N \sum_{i=1}^d \int_0^t \cdots \int_0^{s_j}
        \widetilde{R}_{j,i}(x,\zeta_0) 
        \phi(\widetilde X(s_{j+1})) dW^i(s_{j+1}) ds_j \cdots ds_1.
\end{aligned}
\end{equation}

Assume the above statement is true for $N$, 
with $\widetilde{A}_{N+1}(x,\zeta_0)$ and 
$\widetilde{R}_{N,i}(x,\zeta_0)$ are known, 
and we will proceed to prove the case for $N+1$.

Here we start by decomposing $\widetilde{A}_{N+1}(x,\zeta_0)$ into 
\[ \widetilde{A}_{N+1}(x,\zeta_0) = 
  \sum_{\mathbf{k}} \widetilde{A}_{N+1}^\mathbf{k}(x,\zeta_0) \de_\mathbf{k},
\]
where $\mathbf{k} \in \mathbb{N}^d$ are multi-indices, 
and each $\widetilde{A}_{N+1}^\mathbf{k}(\cdot, \zeta_0) 
\in C^\infty_{\pol}(\mathbb{R}^d)$ 
since they are products of $\de_i F_{\zeta_0}(x)$ and $\frac{1}{\beta}$.

Then we similarly apply \cref{eq:ito_phi} to 
each of the terms $\de_\mathbf{k} \phi(\widetilde X(s_{N+1}))$ to get 
\begin{align*}
    & \sum_\mathbf{k} \widetilde{A}^\mathbf{k}_{N+1}(x,\zeta_0) \de_\mathbf{k} 
        \phi(\widetilde X(s_{N+1})) \\
    =& \sum_\mathbf{k} \widetilde{A}^\mathbf{k}_{N+1}(x,\zeta_0) 
    \de_\mathbf{k} \phi(x) \\
    &+ \sum_\mathbf{k} \int_0^{s_{N+1}} \sum_{i=1}^d 
        \widetilde{A}^\mathbf{k}_{N+1}(x,\zeta_0) 
        ( - \de_i F_{\zeta_0}(x) )
        \de_i \de_\mathbf{k} \phi(\widetilde X(s_{N+2})) ds_{N+2} \\
    &+ \sum_\mathbf{k} \int_0^{s_{N+1}} \sum_{i=1}^d
        \widetilde{A}^\mathbf{k}_{N+1}(x,\zeta_0) 
        \frac{1}{\beta} \de_{ii} \de_\mathbf{k}
        \phi(\widetilde X(s_{N+2})) ds_{N+2} \\
    &+ \sum_\mathbf{k} \sum_{i=1}^d \int_0^{s_{N+1}} 
        \widetilde{A}^\mathbf{k}_{N+1}(x,\zeta_0) \sqrt{\frac{2}{\beta}} \de_i \de_\mathbf{k} \phi(\widetilde X(s_{N+2}))
            dW^i(s_{N+2}).
\end{align*}

This implies the following definitions 
\begin{align*}
    \widetilde{A}_{N+2}(x,\zeta_0) &:= \sum_\mathbf{k} \sum_{i=1}^d 
        \widetilde{A}^\mathbf{k}_{N+1}(x,\zeta_0) 
        (- \de_i F_{\zeta_0}(x))
        \de_i \de_\mathbf{k}
        + \widetilde{A}^\mathbf{k}_{N+1}(x,\zeta_0) 
        \frac{1}{\beta} \de_{ii} \de_\mathbf{k}, \\
    \widetilde{R}_{N+1,i}(x,\zeta_0) &:= \sum_\mathbf{k} \sum_{i=1}^d 
        \widetilde{A}^\mathbf{k}_{N+1}(x,\zeta_0) 
        \sqrt{\frac{2}{\beta}} \de_i \de_\mathbf{k},
\end{align*}
which implies all coefficients of $\widetilde{A}_N(\cdot, \zeta_0)$ 
are in $C^\infty_N(\mathbb{R}^d)$, 
hence we obtained the result from \cref{eq:asymp_exp_induction} 
with $N+1$.

Using the same localization argument for $N=1$, we can obtain the desired result.

\end{proof}
\section{Uniform Stability: Proof of \cref{thm:pi_gen_bound}}
\label{sec:appendix_uniform_stability}

\subsection{Proof Overview}
\label{subsec:appendix_uniform_stability}

To help make the lengthy proof more readable, 
we start with a high level section outlining 
the key technical Lemmas containing the most important 
high level ideas. 

\subsubsection{Notations and Steps of the Proof}
Here, we adopt the notation 
$C^\infty_{\pol}(\mathbb{R}^d) := 
\cap_{m \geq 0} \cup_{\ell \geq 0} C^m_\ell(\mathbb{R}^d)$ 
for all smooth functions with polynomial growth. 
We remark while all the results in this sections 
are stated for $C^\infty_{\pol}(\mathbb{R}^d)$ functions, 
we only ever differentiate $6N+2$ times, 
therefore it does not contradict \cref{asm:smooth_loss}. 

Without loss of generality, let $i$ be the differing 
coordinate between two data points $\mathbf{z}$ and $\overline{\mathbf{z}}$, 
i.e. $\mathbf{z} = (z_1, \ldots, z_i, \ldots, z_n)$ and 
$\overline{\mathbf{z}} = (z_1, \ldots, \overline{z_i}, \ldots, z_n)$. 
We will add the subscript $\mathbf{z}$ or $\overline{\mathbf{z}}$
to make the dependence on the data explicit, for example 
$L_\mathbf{z}, \mu_{k,\mathbf{z}}, G_{k, \mathbf{z}}, \pi^N_\mathbf{z}$ etc.
We also define $\mathbf{z}^{(i)} := \mathbf{z} \cap \overline{\mathbf{z}}$ 
as the common set of data, and let 
\[ F_{\mathbf{z}^{(i)}} := \frac{1}{n} \sum_{j=1, j\neq i}^n 
    f(x, z_j).
\]
Thus $F_\mathbf{z} = F_{\mathbf{z}^{(i)}} + \frac{1}{n} f(x,z_i)$
and we define a new Gibbs probability measure $\rho_{\mathbf{z}^{(i)}}$ as 
\[ \rho_{\mathbf{z}^{(i)}} := \frac{1}{Z^{(i)}} \exp( -\beta F_{\mathbf{z}^{(i)}} ),
\]
where $Z^{(i)} = \int_{\mathbb{R}^d} \exp( -\beta F_{\mathbf{z}^{(i)}} ) dx $ 
is the normalizing constant. Finally, we define Radon-Nikodym derivatives of $\rho_{\mathbf{z}}$ and $\rho_{\overline{\mathbf{z}}}$ with respect to $\rho_{\mathbf{z}^{(i)}}$
\[ q_{z_i} := \frac{d\rho_{\mathbf{z}}}{d\rho_{\mathbf{z}^{(i)}}} =  \frac{Z^{(i)}}{Z} \exp\left( - \frac{\beta}{n} f(x,z_i) \right),
    \quad 
    q_{\overline{z_i}} := \frac{d\rho_{\overline{\mathbf{z}}}}{d\rho_{\mathbf{z}^{(i)}}}= \frac{Z^{(i)}}{ \overline{Z} } 
    \exp\left( - \frac{\beta}{n} f(x,\overline{z_i}) \right),
\]
where $\overline{Z}$ is the normalizing constant for $\rho_{\overline{\mathbf{z}}}$, 
so that we can write $\rho_\mathbf{z} = \rho_{\mathbf{z}^{(i)}} q_{z_i}$
and $\rho_{\overline{\mathbf{z}}} = \rho_{\mathbf{z}^{(i)}} q_{\overline{z_i}}$.

\vspace{1\baselineskip}

In this section, we prove a uniform stability bound 
for the modified invariant measure $\pi^N_\mathbf{z}$. 
This will imply a generalization bound on $T_2$ 
using \cref{prop:signed_stability}. 
The result is given below.

\begin{theorem}
[Generalization Bound of $\pi^N_{\mb{z}}$]
Suppose $\{X_k\}_{k\geq 0}$ is any discretization 
of Langevin diffusion \eqref{eq:langevin_diffusion} 
with an approximate stationary distribution $\pi^N_{\mb{z}}$ 
of the type in \cref{thm:backward_analysis_main}. 
Then there exists a constant $C>0$ (depending on $N$), 
such that for all choices of $k,n$ and $\eta \in (0,1)$ 
the following expected generalization bound holds 
\begin{equation}
    \left| \, 
        \mathbb{E} 
        \left[ \pi^N_{\mb{z}}( F ) 
            - \pi^N_{\mb{z}}( F_{\mb{z}} ) 
        \right] \, 
    \right| 
    \leq 
        \frac{C}{n (1-\eta) } \,, 
\end{equation}
where the expectation is with respect to 
$\mb{z} \sim \mathcal{D}^n$. 

\end{theorem}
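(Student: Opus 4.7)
The plan is to derive the stated inequality as a direct consequence of the uniform stability framework of \cref{def:uniform_stability} and \cref{prop:signed_stability}. By \cref{prop:signed_stability}, it suffices to show the family $\{\pi^N_{\mb{z}}\}$ is $O(1/(n(1-\eta)))$-uniformly stable, reducing the task to bounding
\[
\sup_{z \in \mathcal{Z}} \bigl| \pi^N_{\mb{z}}(f(\cdot,z)) - \pi^N_{\ol{\mb{z}}}(f(\cdot,z)) \bigr|
\]
for any pair $\mb{z},\ol{\mb{z}}$ differing only in coordinate $i$. Following the decomposition outlined in the proof sketch, I would rewrite both integrals against the common reference measure $\rho_{\mb{z}^{(i)}}$ so that $\pi^N_{\mb{z}} - \pi^N_{\ol{\mb{z}}} = \rho_{\mb{z}^{(i)}} \sum_{\ell=0}^N \eta^\ell ( \mu_{\ell,\mb{z}} q_{z_i} - \mu_{\ell,\ol{\mb{z}}} q_{\ol{z}_i})$, then apply Cauchy--Schwarz in $L^2(\rho_{\mb{z}^{(i)}})$ to reduce the task to bounding each $\|\mu_{\ell,\mb{z}} q_{z_i} - \mu_{\ell,\ol{\mb{z}}} q_{\ol{z}_i}\|_{L^2(\rho_{\mb{z}^{(i)}})}$ by $C/n$.

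The heart of the argument is a finite induction on $\ell$. The base case $\ell=0$ uses $\mu_{0,\cdot} \equiv 1$ and reduces to showing $\|q_{z_i} - q_{\ol{z}_i}\|_{L^2(\rho_{\mb{z}^{(i)}})} = O(1/n)$, which follows from a first-order expansion of the exponentials, \cref{lm:quad_bd}, and polynomial moment control under $\rho_{\mb{z}^{(i)}}$ (itself a consequence of \cref{asm:dissipative}). For the inductive step, I would exploit the Poisson equations $L_{\mb{z}}\mu_{m,\mb{z}} = G_{m,\mb{z}}$ and $L_{\ol{\mb{z}}}\mu_{m,\ol{\mb{z}}} = G_{m,\ol{\mb{z}}}$ together with the key identity $(L_{\mb{z}} - L_{\ol{\mb{z}}})\mu = \tfrac{1}{n}\langle \nabla f(\cdot,\ol{z}_i) - \nabla f(\cdot,z_i), \nabla \mu\rangle$, which is manifestly $O(1/n)$. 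Since $G_{m,\cdot}$ depends only on $\mu_{\ell,\cdot}$ with $\ell < m$ through the adjoint operators $L_{\ell,\cdot}^*$ (whose coefficients are polynomials in the derivatives of $f$), the inductive hypothesis transfers to $G_{m,\mb{z}} q_{z_i} - G_{m,\ol{\mb{z}}} q_{\ol{z}_i} = O(1/n)$; a Poincar\'{e}-based energy estimate for the Poisson equation (the analogue of \cref{lm:energy_est2} referenced in the sketch) then upgrades this source-side control to the same bound on the solution difference $\mu_{m,\mb{z}} q_{z_i} - \mu_{m,\ol{\mb{z}}} q_{\ol{z}_i}$, closing the induction.

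The main obstacle is bookkeeping regularity: because $L_{\ell,\cdot}$ has order $2\ell$, each induction step consumes additional derivatives of the previously controlled $\mu_{\ell,\cdot}$, and the Poincar\'{e} energy estimate must in fact be proved at the level of higher-order Sobolev-type norms so that the hypothesis is self-propagating. Tracking these orders explains precisely why \cref{asm:smooth_loss} requires $6N+2$ derivatives on $f(\cdot,z)$, and care is needed to verify that the implicit constants depend only on $\sup_z \|f(\cdot,z)\|_{6N+2,\ell}$ and on polynomial moments under $\rho_{\mb{z}^{(i)}}$ that are finite uniformly in $n$. Once the induction is complete, inserting the per-$\ell$ bound into the Cauchy--Schwarz reduction and summing the geometric series yields
\[
\bigl| \pi^N_{\mb{z}}(f(\cdot,z)) - \pi^N_{\ol{\mb{z}}}(f(\cdot,z)) \bigr| \leq \frac{C}{n} \sum_{\ell=0}^N \eta^\ell \leq \frac{C}{n(1-\eta)},
\]
from which \cref{prop:signed_stability} gives the asserted generalization bound.
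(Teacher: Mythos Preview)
Your proposal is correct and follows essentially the same approach as the paper: reduce to uniform stability via \cref{prop:signed_stability}, use Cauchy--Schwarz against the common reference measure $\rho_{\mb{z}^{(i)}}$ to obtain the sufficient condition of \cref{lm:bound_l2_norm}, and then run an induction on $\ell$ in which a Poincar\'e-based energy estimate for the Poisson equation upgrades the $O(1/n)$ control on $G_{m,\cdot}$ to the same control on $\mu_{m,\cdot}$, with the geometric sum producing the $(1-\eta)^{-1}$ factor. You have also correctly flagged the main technical difficulty---that the inductive hypothesis must be stated for weighted norms $\|\phi\,\de_\alpha(\cdot)\|_{L^2(\rho_{\mb{z}^{(i)}})}$ with arbitrary $\phi\in C^\infty_{\pol}$ and multi-indices $\alpha$ so that it self-propagates through the higher-order operators $L_{\ell,\cdot}^*$---which is exactly what the paper handles in \cref{lm:energy_est2} and its supporting lemmas.
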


From \cref{sec:appendix_weak_backward}, 
we recall the invariant measure $\pi^N_\mathbf{z}$ 
is constructed inductively using the terms $\mu_{k,\mathbf{z}}$
\eqref{eq:appendix_mu_decomp}, 
and each terms $\mu_{k,\mathbf{z}}$ in the asymptotic expansion satisfies 
a Poisson equation in $\R^d$ for the elliptic operators 
$L_{\mathbf{z}}$, i.e. for all $k \geq 0$ we have
\begin{align} 
&L_\mathbf{z} \mu_{k, \mathbf{z}} 
    = G_{k, \mathbf{z}}, \label{Eq:poisson_z}\\
   & L_{\overline{\mathbf{z}}} \mu_{k, \overline{\mathbf{z}}} = G_{k, \overline{\mathbf{z}}}. \label{Eq:poisson_zbar}
\end{align}
The main idea of the proof relies on comparing the different of solutions for the pairs of Poisson equations in $\R^d$.
We will breakdown the proof of this result into several steps: 
\begin{enumerate}
    \item \emph{Sufficient condition for uniform stability and spectral gap:} We derive 
    a sufficient condition for uniform stability, 
    reducing the problem down to studying the pairs of Poisson equations separately. 
    Using the common Gibbs measure $\rho_{\mathbf{z}^{(i)}}$, 
    we provide bounds on the desired norms using 
    a spectral gap on this measure.  
    Standard results guarantee the existence of a spectral gap of size $\lambda'>0$ for the measure $\rho_{\mathbf{z}^{(i)}}$ and equivalently 
    a Poincar\'e inequality with constant $\lambda'$. This result will be used to derive the following energy estimates.
    \item \emph{Zeroth-order energy estimates:} We give a further sufficient condition for uniform stability
    on the non-homogeneous terms $G_{k,\mathbf{z}}$ and $G_{k,\mathbf{z}}$. 
    The bounds are obtained using standard energy estimate techniques for the Poisson equations \citep[Section 6.2.2]{evans2010partial} and the Poincar\'e inequality. 
    \item \emph{Higher order energy estimates:} 
    We complete the proof by proving that the sufficient condition from the previous part holds, as $G_{k,\mathbf{z}}$ is defined recursively from $\mu_{l,\mathbf{z}}$ with $l < k$. 
    This requires us to obtain higher order energy estimates using similar techniques. 
\end{enumerate}

\subsubsection{Sufficient Condition for Uniform Stability and Spectral Gap}
\label{subsubsec:sufficient_condition}
We provide a sufficient condition for uniform stability, 
which allows us to reduce the problem to finding a bound on the difference 
$\mu_{k,\mathbf{z}} q_{z_i} - 
\mu_{k,\overline{\mathbf{z}}} q_{\overline{z_i}}$. 
\begin{lemma} [Sufficient to Bound $L^2$ norms]
\label{lm:bound_l2_norm}
A sufficient condition for uniform stability of $\pi^N_\mathbf{z}$ is 
if for each $k = 1, \ldots, N$, there exist a constant $C_{\mu_k}>0$, 
independent of data and $n$, such that
\begin{align} \label{eq:L2_norm}
\| \mu_{k,\mathbf{z}} q_{z_i} - 
    \mu_{k,\overline{\mathbf{z}}} q_{\overline{z_i}} \|_{
        L^2(\rho_{\mathbf{z}^{(i)}})} 
    := \left[ \int_{\mathbb{R}^d} (\mu_{k,\mathbf{z}} q_{z_i}
    - \mu_{k,\overline{\mathbf{z}}} q_{\overline{z_i}})^2
    d \rho_{\mathbf{z}^{(i)}}
    \right]^{1/2} 
    \leq \frac{ C_{\mu_k} }{n}.
\end{align}
\end{lemma}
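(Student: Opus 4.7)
The plan is to verify the definition of uniform stability (\cref{def:uniform_stability}) directly by controlling $|\pi^N_{\mb{z}}(\phi) - \pi^N_{\overline{\mb{z}}}(\phi)|$ for every test function $\phi = f(\cdot, z)$, $z \in \mathcal{Z}$, uniformly in $\mb{z}, \overline{\mb{z}}$ differing in the $i^\text{th}$ coordinate. Using the decomposition $\pi^N_{\mb{z}} = \rho_{\mb{z}^{(i)}} q_{z_i} \mu^N_{\mb{z}}$ with $\mu^N_{\mb{z}} = \sum_{\ell=0}^N \eta^\ell \mu_{\ell,\mb{z}}$ (and similarly for $\overline{\mb{z}}$), both integrals can be pushed onto the common reference measure $\rho_{\mb{z}^{(i)}}$, producing the identity
\[
\pi^N_{\mb{z}}(\phi) - \pi^N_{\overline{\mb{z}}}(\phi)
= \sum_{\ell=0}^N \eta^\ell \int_{\R^d} \phi \left( \mu_{\ell,\mb{z}} q_{z_i} - \mu_{\ell,\overline{\mb{z}}} q_{\overline{z_i}} \right) d\rho_{\mb{z}^{(i)}}.
\]
Applying the triangle inequality and Cauchy-Schwarz in $L^2(\rho_{\mb{z}^{(i)}})$ then reduces the problem to bounding $\|\phi\|_{L^2(\rho_{\mb{z}^{(i)}})}$ and each of the $N+1$ differences $\|\mu_{\ell,\mb{z}} q_{z_i} - \mu_{\ell,\overline{\mb{z}}} q_{\overline{z_i}}\|_{L^2(\rho_{\mb{z}^{(i)}})}$.

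\textbf{Key steps.} For $\ell = 1, \ldots, N$ the hypothesis of the lemma supplies $C_{\mu_\ell}/n$ bounds directly. The $\ell=0$ term, where $\mu_{0,\mb{z}} = \mu_{0,\overline{\mb{z}}} = 1$, must be handled separately: it reduces to establishing $\|q_{z_i} - q_{\overline{z_i}}\|_{L^2(\rho_{\mb{z}^{(i)}})} = O(1/n)$. I would prove this by expanding $q_{z_i} = (Z^{(i)}/Z) \exp(-\beta f(\cdot, z_i)/n)$, performing a first-order Taylor expansion of the exponential in the small parameter $1/n$, and similarly expanding the partition function ratio via $Z/Z^{(i)} = \rho_{\mb{z}^{(i)}}(\exp(-\beta f(\cdot, z_i)/n))$. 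Subtracting the analogous expansion for $q_{\overline{z_i}}$ produces a leading $O(1/n)$ term proportional to $f(\cdot, \overline{z_i}) - f(\cdot, z_i)$ plus remainders of order $1/n^2$, all of whose $L^2(\rho_{\mb{z}^{(i)}})$ norms are controlled by the polynomial growth of $f$ together with moment bounds for $\rho_{\mb{z}^{(i)}}$.

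\textbf{Test function norm and conclusion.} The uniform bound $\sup_{z \in \mathcal{Z}} \|f(\cdot, z)\|_{L^2(\rho_{\mb{z}^{(i)}})} \leq C$ follows from the polynomial growth in \cref{asm:smooth_loss} together with dissipativity of $F_{\mb{z}^{(i)}}$ (inherited from \cref{asm:dissipative}), which yields uniform polynomial moment bounds on $\rho_{\mb{z}^{(i)}}$ by standard Lyapunov arguments analogous to \cref{prop:ME_continuous}. Summing over $\ell$ and using $\sum_{\ell=0}^N \eta^\ell \leq 1/(1-\eta)$ for $\eta \in (0,1)$ then delivers the uniform stability estimate $|\pi^N_{\mb{z}}(\phi) - \pi^N_{\overline{\mb{z}}}(\phi)| \leq C/(n(1-\eta))$ uniformly in $z$. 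The main obstacle I expect is the $\ell=0$ step, since it requires carefully combining the Taylor expansion of $q_{z_i}$ with uniform-in-data control of the normalization constant $Z$; once that is handled, the hypothesis absorbs the $\ell \geq 1$ terms and the proof concludes.
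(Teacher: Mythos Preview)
Your proposal is correct and follows the same route as the paper: rewrite both measures over the common reference $\rho_{\mb{z}^{(i)}}$, apply the triangle inequality term by term in $\ell$, then Cauchy--Schwarz to split off $\|f(\cdot,z)\|_{L^2(\rho_{\mb{z}^{(i)}})}$ (bounded uniformly in $z$ via polynomial growth and moment bounds), and sum the geometric series in $\eta$.

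The one place you are actually more careful than the paper is the $\ell=0$ term. The lemma's hypothesis is stated only for $k=1,\ldots,N$, yet the paper's proof simply writes $\sum_{k=0}^N \eta^k C_{\mu_k}/n$ without singling out $k=0$; in effect it silently assumes $\|q_{z_i}-q_{\overline{z_i}}\|_{L^2(\rho_{\mb{z}^{(i)}})}=O(1/n)$. Your Taylor-expansion argument for $q_{z_i}=(Z^{(i)}/Z)\exp(-\beta f(\cdot,z_i)/n)$ together with the analogous expansion of the partition-function ratio is a clean way to justify this and closes the small gap. So your plan is not just aligned with the paper's proof, it is a slightly more complete version of it.
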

The proof can be found in \cref{subsec:bound_l2_norm}.

An important tool is the following Poincar\'e inequality for the space $C^1(\R^d) \cap L^2(\rho_{\mathbf{z}^{(i)}})$.
We will use this Lemma to derive energy estimates and obtain uniform stability bound of order $\mathcal{O}(1/n)$.

\begin{lemma}[Poincar\'e Inequality for $\rho_{\mathbf{z}^{(i)}}$]
\label{lemma:spectral_gap}
There exists a uniform spectral gap constant $\lambda'>0$ so that,
for every $\mathbf{z}^{(i)} \in \mathcal{Z}^{n-1}$
and every function $u \in C^1(\R^d) \cap L^2(\rho_{\mathbf{z}^{(i)}})$,
\[
\int_{\R^d} u d\rho_{\mathbf{z}^{(i)}} = 0  \Longrightarrow \int_{\R^d} \abs{u}^2 d\rho_{\mathbf{z}^{(i)}} \leq \frac{1}{\lambda'} \int_{\R^d} \abs{\nabla u}^2 d\rho_{\mathbf{z}^{(i)}}.
\]
\end{lemma}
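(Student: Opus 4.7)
The strategy is to apply a standard Lyapunov-function criterion for the Poincar\'e inequality (as in \citet{bakry2008simple}) to each potential $U_{\mathbf{z}^{(i)}} := \beta F_{\mathbf{z}^{(i)}}$, and then verify that every constant appearing in the argument depends only on $(m,b,M,\beta,d)$ and never on the specific dataset $\mathbf{z}^{(i)} \in \mathcal{Z}^{n-1}$. This uniformity is the entire content of the lemma, since a non-uniform Poincar\'e inequality for each fixed $\mathbf{z}^{(i)}$ follows immediately from \cref{asm:dissipative,asm:smooth_loss}.

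First I would observe that the dissipative and smoothness properties transfer cleanly from $f(\cdot,z)$ to the empirical average $F_{\mathbf{z}^{(i)}} = \tfrac{1}{n}\sum_{j\neq i} f(\cdot, z_j)$ with the same constants: for every $\mathbf{z}^{(i)} \in \mathcal{Z}^{n-1}$ we have $\langle x,\nabla F_{\mathbf{z}^{(i)}}(x)\rangle \geq m|x|^2 - b$ and $\nabla F_{\mathbf{z}^{(i)}}$ is $M$-Lipschitz. This is the only place where the data enters, and it enters uniformly.

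Next I would introduce the symmetric generator of $\rho_{\mathbf{z}^{(i)}}$,
\[
    L_{\mathbf{z}^{(i)}} = \tfrac{1}{\beta}\Delta - \langle \nabla F_{\mathbf{z}^{(i)}}, \nabla\,\cdot\,\rangle,
\]
and the candidate Lyapunov function $W(x) = \exp(\gamma |x|^2/2)$ with $\gamma>0$ to be chosen. A direct computation gives
\begin{equation*}
    \frac{L_{\mathbf{z}^{(i)}} W}{W}(x)
    = \tfrac{\gamma d}{\beta} + \tfrac{\gamma^2}{\beta}|x|^2 - \gamma\langle x, \nabla F_{\mathbf{z}^{(i)}}(x)\rangle
    \leq \tfrac{\gamma d}{\beta} + \gamma b + \bigl(\tfrac{\gamma^2}{\beta} - \gamma m\bigr)|x|^2,
\end{equation*}
where I used \cref{asm:dissipative} for $F_{\mathbf{z}^{(i)}}$. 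Choosing $\gamma = \beta m / 2$ makes the coefficient of $|x|^2$ equal to $-\beta m^2/4$, yielding a drift condition
\[
    \frac{L_{\mathbf{z}^{(i)}} W}{W}(x) \leq -a|x|^2 + c,
\]
with constants $a = \beta m^2/4$ and $c = \gamma(d/\beta + b)$ that are \emph{independent of $\mathbf{z}^{(i)}$}.

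I would then invoke the Lyapunov criterion for a Poincar\'e inequality from \citet[Theorem~1.4]{bakry2008simple} (or equivalently Cattiaux--Guillin). That criterion states that a drift condition $L W / W \leq -\phi + c\,\mathbf{1}_B$ on a Lyapunov function $W\geq 1$, combined with a local Poincar\'e inequality of $\rho_{\mathbf{z}^{(i)}}$ restricted to the ball $B$, yields a global Poincar\'e inequality whose constant depends only on $(a,c,\phi)$ and the local constant. The local Poincar\'e inequality on a fixed Euclidean ball $B(0,R)$ with density proportional to $\exp(-\beta F_{\mathbf{z}^{(i)}})$ can be controlled by Holley--Stroock from the oscillation of $\beta F_{\mathbf{z}^{(i)}}$ on $B(0,R)$; by \cref{lm:quad_bd}, that oscillation is bounded by a constant depending only on $\beta, M, M_1, M_0, R$ (in particular not on $\mathbf{z}^{(i)}$). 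Combining, I obtain a spectral gap $\lambda' = \lambda'(m,b,M,\beta,d) > 0$ that is uniform over $\mathbf{z}^{(i)} \in \mathcal{Z}^{n-1}$, which is exactly the claim.

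The genuine subtlety, and the step I would spend the most care on, is the \emph{uniformity} of the local Poincar\'e constant on $B(0,R)$: one must check that the Holley--Stroock perturbation only sees the oscillation of $F_{\mathbf{z}^{(i)}}$ on a ball, and that \cref{lm:quad_bd} provides a quadratic two-sided envelope depending only on the universal constants $M,M_1,M_0$. Once this is in place, every ingredient of the Lyapunov argument is explicit in $(m,b,M,\beta,d)$, and the proof is complete. If one prefers to avoid the Lyapunov machinery altogether, an alternative is to write $F_{\mathbf{z}^{(i)}}(x) = \tfrac{m}{2}|x|^2 + G_{\mathbf{z}^{(i)}}(x)$ and apply a perturbation of Bakry--\'Emery; however this requires $G_{\mathbf{z}^{(i)}}$ to be a bounded perturbation, which is not generally the case here, so the Lyapunov route is cleaner.
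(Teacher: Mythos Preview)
Your proposal is correct and follows essentially the same route the paper takes: the paper's proof simply defers to the Lyapunov-function argument of \citet{bakry2008simple} as implemented in \citet[Appendix~B]{raginsky2017nonconvex}, and you have written out that argument in detail. One small caveat: since $F_{\mathbf{z}^{(i)}} = \tfrac{1}{n}\sum_{j\neq i} f(\cdot,z_j)$ has only $n-1$ summands divided by $n$, the dissipativity constants pick up a harmless factor $\tfrac{n-1}{n}\in[\tfrac12,1)$ rather than transferring ``with the same constants''; this does not affect the uniformity you need.
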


\begin{proof}
The proof follows the same argument as in \citet[Appendix B]{raginsky2017nonconvex}, using Lyapunov functional techniques developed by \citet{bakry2008simple}. 
\cref{asm:smooth_loss,asm:dissipative} are sufficient 
to guarantee that $\lambda'>0$ for $\rho_{\mathbf{z}^{(i)}}$.
\end{proof}

\subsubsection{Zeroth-Order Energy Estimates}

Based on Lemma \ref{lm:bound_l2_norm}, we must prove an inequality of the form \eqref{eq:L2_norm}.
Our approach is based on energy estimates for the solutions to the Poisson equations \eqref{Eq:poisson_z} and \eqref{Eq:poisson_zbar}.
The following Lemma simply states that if the difference 
$G_{k, \mathbf{z}} q_{z_i} - 
G_{k, \overline{\mathbf{z}}} q_{\overline{z_i}}$ 
is of order $\mathcal{O}(1/n)$, then inequality \eqref{eq:L2_norm} 
holds, and the measure $\pi^N_{\mathbf{z}}$ is uniformly stable.

\begin{lemma}[Zeroth-Order Energy Estimates]
\label{lm:energy_est1}
For all $k \geq 0$, assume there exists a non-negative constant $C_{G_k}$ such that 
\[ \norm{G_{k, \mathbf{z}} q_{z_i} - 
    G_{k, \overline{\mathbf{z}}} q_{\overline{z_i}} }_{
            L^2\left( \rho_{\mathbf{z}^{(i)}} \right)} \leq \frac{C_{G_k}}{n}.
 \]
Then there exists another non-negative constant $C_{\mu_k}$ such that 
\[
\norm{ \mu_{k,\mathbf{z}} q_{z_i} - 
    \mu_{k,\overline{\mathbf{z}}} q_{\overline{z_i}}
    }_{
            L^2\left( \rho_{\mathbf{z}^{(i)}} \right)} \leq \frac{C_{\mu_k}}{n}.
\]
\end{lemma}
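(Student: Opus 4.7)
The plan is to derive a single weak-form equation for the difference $v := \mu_{k,\mathbf{z}} q_{z_i} - \mu_{k,\overline{\mathbf{z}}} q_{\overline{z_i}}$ with respect to the common measure $\rho_{\mathbf{z}^{(i)}}$, and then close the estimate using the Poincar\'e inequality of \cref{lemma:spectral_gap}. Because $L_{\mathbf{z}}$ is self-adjoint in $L^2(\rho_{\mathbf{z}})$, the Poisson equation $L_{\mathbf{z}}\mu_{k,\mathbf{z}} = G_{k,\mathbf{z}}$ has the weak form
\begin{equation*}
    -\frac{1}{\beta}\int q_{z_i}\,\langle \nabla \phi, \nabla \mu_{k,\mathbf{z}}\rangle\,d\rho_{\mathbf{z}^{(i)}}
    = \int \phi\, q_{z_i} G_{k,\mathbf{z}}\,d\rho_{\mathbf{z}^{(i)}}
\end{equation*}
for any test function $\phi$, and analogously with $\overline{z_i}$. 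Subtracting and using the identity $q\nabla\mu = \nabla(q\mu) - \mu\nabla q$, I would obtain
\begin{equation*}
    -\frac{1}{\beta}\int\langle\nabla\phi,\nabla v\rangle\,d\rho_{\mathbf{z}^{(i)}}
    + \frac{1}{\beta}\int\langle\nabla\phi,\, \mu_{k,\mathbf{z}}\nabla q_{z_i} - \mu_{k,\overline{\mathbf{z}}}\nabla q_{\overline{z_i}}\rangle\,d\rho_{\mathbf{z}^{(i)}}
    = \int\phi\, H\,d\rho_{\mathbf{z}^{(i)}},
\end{equation*}
where $H := G_{k,\mathbf{z}}q_{z_i} - G_{k,\overline{\mathbf{z}}}q_{\overline{z_i}}$.

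Next, I would verify that $\int v\,d\rho_{\mathbf{z}^{(i)}} = 0$ (for $k=0$ this follows from $q_{z_i}, q_{\overline{z_i}}$ both being probability densities w.r.t.\ $\rho_{\mathbf{z}^{(i)}}$; for $k\geq 1$ from the normalization $\int\mu_{k,\mathbf{z}}\,d\rho_{\mathbf{z}} = 0$ implicit in the construction of $\pi^N_{\mathbf{z}}$). Testing with $\phi=v$ and rearranging yields
\begin{equation*}
    \frac{1}{\beta}\|\nabla v\|_{L^2(\rho_{\mathbf{z}^{(i)}})}^2
    = -\int v\,H\,d\rho_{\mathbf{z}^{(i)}}
    + \frac{1}{\beta}\int \langle\nabla v,\, w\rangle\,d\rho_{\mathbf{z}^{(i)}},
\end{equation*}
where $w := \mu_{k,\mathbf{z}}\nabla q_{z_i} - \mu_{k,\overline{\mathbf{z}}}\nabla q_{\overline{z_i}}$. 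Cauchy--Schwarz followed by Poincar\'e on the left-hand norm $\|v\|_{L^2}\leq(\lambda')^{-1/2}\|\nabla v\|_{L^2}$ and dividing through by $\|\nabla v\|_{L^2}$ gives the linear bound
\begin{equation*}
    \|v\|_{L^2(\rho_{\mathbf{z}^{(i)}})}
    \leq \tfrac{\beta}{\lambda'}\,\|H\|_{L^2(\rho_{\mathbf{z}^{(i)}})}
    + \tfrac{1}{\sqrt{\lambda'}}\,\|w\|_{L^2(\rho_{\mathbf{z}^{(i)}})}.
\end{equation*}
The first term is $O(1/n)$ directly from the hypothesis. For the second, I would use the explicit formula $\nabla q_{z_i} = -\tfrac{\beta}{n}q_{z_i}\nabla f(\cdot,z_i)$ to write $w = -\tfrac{\beta}{n}[\mu_{k,\mathbf{z}}q_{z_i}\nabla f(\cdot,z_i) - \mu_{k,\overline{\mathbf{z}}}q_{\overline{z_i}}\nabla f(\cdot,\overline{z_i})]$, extracting the $1/n$ factor cleanly.

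The main obstacle is controlling $\|w\|_{L^2(\rho_{\mathbf{z}^{(i)}})}$: after the $1/n$ is pulled out, one still needs an \emph{a priori} bound on $\mu_{k,\mathbf{z}}$ and $\mu_{k,\overline{\mathbf{z}}}$ (independent of $n$) in a weighted norm compatible with the polynomial growth of $\nabla f$ guaranteed by \cref{asm:smooth_loss} and the moment bounds on $\rho_{\mathbf{z}^{(i)}}$ from \cref{prop:ME_continuous}. This is precisely the role played by the higher-order energy estimates of \cref{lm:energy_est2}, which inductively bound $\mu_{k,\mathbf{z}}$ and its derivatives in weighted $L^2$ norms uniformly in $\mathbf{z}$. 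Once that input is available, combining it with \cref{lm:quad_bd} finishes the proof by giving $\|w\|_{L^2(\rho_{\mathbf{z}^{(i)}})}\leq C/n$, yielding the desired $\|v\|_{L^2(\rho_{\mathbf{z}^{(i)}})}\leq C_{\mu_k}/n$.
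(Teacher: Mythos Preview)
Your approach is essentially the same as the paper's: both derive the weak form of the two Poisson equations against $\rho_{\mathbf{z}^{(i)}}$ via the product rule $q\nabla\mu=\nabla(q\mu)-\mu\nabla q$, subtract, test with $v=\mu_{k,\mathbf{z}}q_{z_i}-\mu_{k,\overline{\mathbf{z}}}q_{\overline{z_i}}$, apply Cauchy--Schwarz and the Poincar\'e inequality (\cref{lemma:spectral_gap}), and then extract the $1/n$ factor from $\nabla q_{z_i}=-\tfrac{\beta}{n}q_{z_i}\nabla f(\cdot,z_i)$. Your explicit verification that $\int v\,d\rho_{\mathbf{z}^{(i)}}=0$ is a point the paper uses silently.

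There is one genuine misstep in your last paragraph. After pulling out $1/n$, you need an a~priori bound on $\|\mu_{k,\mathbf{z}}q_{z_i}\nabla f(\cdot,z_i)\|_{L^2(\rho_{\mathbf{z}^{(i)}})}$ uniform in $\mathbf{z}$. You attribute this to \cref{lm:energy_est2}, but that lemma does something different: it bounds \emph{differences} like $L\mu_{k,\mathbf{z}}q_{z_i}-L\mu_{k,\overline{\mathbf{z}}}q_{\overline{z_i}}$ by $O(1/n)$, not $\mu_{k,\mathbf{z}}$ itself. Worse, \cref{lm:energy_est2} is proved \emph{using} \cref{lm:energy_est1} (through \cref{cor:gen_energy_est}, \cref{cor:energy_est1_improve}, and \cref{lm:energy_est1_pol}), so invoking it here would be circular. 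The correct ingredient is the pointwise polynomial-growth regularity $\mu_{k,\mathbf{z}}\in C^\infty_{\pol}(\mathbb{R}^d)$, which follows directly from the Poisson equation theory (\cref{prop:appendix_poisson_regularity}) applied recursively to \eqref{eq:appendix_poisson_equation}; combined with the moment bound \cref{lm:appendix_moment_bound}, this yields the required $L^2$ control without any reference to the stability machinery. This is exactly what the paper does in its final step.
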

The proof is can be found in \cref{subsec:energy_est1}. 

\subsubsection{Higher Order Energy Estimates}
To complete the proof of \cref{thm:pi_gen_bound}, 
we need to show that 
the sufficient condition in Lemma \ref{lm:bound_l2_norm} holds. 
This implies we need to control 
the $ L^2\left( \rho_{\mathbf{z}^{(i)}} \right)$ norm of 
the difference $G_{k, \mathbf{z}} q_{z_i} - 
G_{k, \overline{\mathbf{z}}} q_{\overline{z_i}}$ 
for all $k \geq 0$. Let us recall that by definition 
$ G_{k, \mathbf{z}} := - \sum_{\ell=1}^k 
    L^*_{\ell, \mathbf{z}} \mu_{k-\ell, \mathbf{z}},
$
which lets us write 
\begin{equation}
\label{eq:higher_order_energy_decomp}
\begin{aligned}
    G_{k, \mathbf{z}} q_{z_i} - 
    G_{k, \overline{\mathbf{z}}} q_{\overline{z_i}}
    &= - \sum_{\ell=1}^k L^*_{\ell, \mathbf{z}} \mu_{k-\ell, \mathbf{z}} q_{z_i}
        - L^*_{\ell, \overline{\mathbf{z}}} \mu_{k-\ell, \overline{\mathbf{z}}} 
            q_{\overline{z_i}} \\
    &= - \sum_{\ell=1}^k ( L^*_{\ell, \mathbf{z}} \mu_{k-\ell, \mathbf{z}} q_{z_i}
            - L^*_{\ell, \mathbf{z}} 
            \mu_{k-\ell, \overline{\mathbf{z}}} q_{\overline{z_i}} )
        + ( L^*_{\ell, \mathbf{z}} - L^*_{\ell, \overline{\mathbf{z}}} )
            \mu_{k-\ell, \overline{\mathbf{z}}} q_{\overline{z_i}} \\
    &=: - \sum_{\ell=1}^k T_{1, \ell} + T_{2, \ell} \,.
\end{aligned}
\end{equation}
Note that in the above expressions all operators $L^*_{\ell, \mathbf{z}}$ only act on the smooth functions $\mu$ and not on $q$.

To control $T_{2, \ell}$, 
notice that all operators $L^*_{\ell, \mathbf{z}}$
can be written in non-divergence form with $C^\infty_{\pol}(\R^d)$ coefficients.
\begin{lemma}
[$C^\infty_{\pol}(\R^d)$ Coefficients]
\label{lm:coefficients}
For all $\ell \geq 0$ and $\alpha \in \N^d$, 
the operator $L_{\ell, \mathbf{z}}^*$ has $C^\infty_{\pol}(\R^d)$ coefficients. 
I.e. there exist functions $\phi_{\ell, \alpha} \in C^{\infty}_{pol}(\R^d)$ 
such that we can write 
\[
L^*_{\ell, \mathbf{z}}  = \sum_{0 \leq \abs{\alpha} \leq 2\ell + 2}  \phi_{\ell, \alpha}(x)\partial_{\alpha} \,.
\]
\end{lemma}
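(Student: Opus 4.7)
The plan is induction on $\ell$, using the recursive formula \eqref{eq:appendix_L_j} together with \cref{prop:AsExp}, followed by a standard adjoint computation in $L^2(\rho_\mathbf{z})$.

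For the base case $\ell = 0$, we have $L_{0,\mathbf{z}} = L_\mathbf{z} = -\nabla F_\mathbf{z}\cdot\nabla + \tfrac{1}{\beta}\Delta$, a second order operator with coefficients $-\partial_i F_\mathbf{z}$ and $\tfrac{1}{\beta}$, all in $C^\infty_{\pol}(\mathbb{R}^d)$ by \cref{asm:smooth_loss}. For the inductive step, assume that for each $k < \ell$ the operator $L_{k,\mathbf{z}}$ has order at most $2k+2$ and its coefficients lie in $C^\infty_{\pol}(\mathbb{R}^d)$. Rewriting \eqref{eq:appendix_L_j} with a clean index,
\[
L_{\ell,\mathbf{z}} = A_{\ell+1,\mathbf{z}} + \sum_{m=1}^{\ell} \frac{B_m}{m!} \sum_{n_1+\cdots+n_{m+1}=\ell-m} L_{n_1,\mathbf{z}}\cdots L_{n_m,\mathbf{z}}\, A_{n_{m+1}+1,\mathbf{z}},
\]
each composition has order
\[
\sum_{i=1}^{m}(2n_i+2) + 2(n_{m+1}+1) = 2(\ell - m) + 2(m+1) = 2\ell+2,
\]
and $A_{\ell+1,\mathbf{z}}$ itself is of order $2\ell+2$ by \cref{prop:AsExp}. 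Since $C^\infty_{\pol}(\mathbb{R}^d)$ is closed under products and differentiation (Leibniz rule), composition of differential operators with $C^\infty_{\pol}(\mathbb{R}^d)$ coefficients yields another such operator, closing the induction for $L_{\ell,\mathbf{z}}$.

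To pass to the adjoint $L_{\ell,\mathbf{z}}^*$ in $L^2(\rho_\mathbf{z})$, I would use the duality identity
\[
L_{\ell,\mathbf{z}}^* u = \frac{1}{\rho_\mathbf{z}} \sum_{|\alpha|\le 2\ell+2} (-1)^{|\alpha|}\, \partial_\alpha\bigl(\phi_{\ell,\alpha}\, \rho_\mathbf{z}\, u\bigr),
\]
expand via Leibniz, and cancel $\rho_\mathbf{z}^{-1}$ against $\rho_\mathbf{z}$ throughout. The remaining terms take the form $(\partial^\gamma\phi_{\ell,\alpha})\cdot P(\partial^\delta\log\rho_\mathbf{z})\cdot \partial_\beta u$ with $|\beta|\leq 2\ell+2$ and $P$ a polynomial in the listed log-derivatives. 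Since $\log\rho_\mathbf{z} = -\beta F_\mathbf{z} - \log Z_\mathbf{z}$, every $\partial^\delta\log\rho_\mathbf{z}$ coincides with a derivative of $F_\mathbf{z}$ and hence lies in $C^\infty_{\pol}(\mathbb{R}^d)$ by \cref{asm:smooth_loss}; combined with $\phi_{\ell,\alpha}\in C^\infty_{\pol}(\mathbb{R}^d)$, gathering by the $\beta$ multi-index yields the desired representation with new coefficients in $C^\infty_{\pol}(\mathbb{R}^d)$.

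There is no genuine obstacle here; the lemma is essentially a structural bookkeeping claim. The only care required is (i) the order counting in the composition step, verifying that Bernoulli-weighted compositions never exceed order $2\ell+2$, and (ii) ensuring that the iterated Leibniz expansion in the adjoint computation combines into coefficients in $C^\infty_{\pol}(\mathbb{R}^d)$. Both reduce to closure of $C^\infty_{\pol}(\mathbb{R}^d)$ under products and differentiation, which is immediate from the definition.
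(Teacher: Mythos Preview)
Your proposal is correct and follows essentially the same route as the paper's proof: induction on $\ell$ via the recursive formula \eqref{eq:appendix_L_j} to show $L_{\ell,\mathbf{z}}$ has $C^\infty_{\pol}$ coefficients, then integration by parts to pass to the adjoint. Your explicit order count on the Bernoulli-weighted compositions is slightly more careful than the paper, which leaves that bound implicit, but otherwise the arguments coincide.
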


The proof can bound found in \cref{subsec:coefficients}.

It is easy to see that a control on the terms $T_{2, \ell}$ directly follows from the following Lemma. 

\begin{lemma}
\label{lm:closeness_of_operators}
For all $\ell > 0$, there exists a differential operator $\widehat L_\ell:= \widehat L_\ell(x)$ of order $2\ell+2$ with $C^\infty_{\pol}(\mathbb{R}^d)$ coefficients, and independent of $n$, 
such that we can write 
\[ \frac{1}{n} \widehat L_\ell^* 
    = L^*_{\ell, \mathbf{z}} - L^*_{\ell, \overline{\mathbf{z}}} \,.
\]
Furthermore, for all $\ell \geq 0$, 
$\phi \in C^\infty_{\pol}(\R^d)$, 
and $\alpha \in \mathbb{N}^d$, 
there exist non-negative constants $C_{\ell, \phi \de_\alpha}$, depending on the $L^2\left( \rho_{\mathbf{z}^{(i)}} \right)$-norm of $\mu_{k-\ell, \overline{\mathbf{z}}}$ and its derivatives up to order $2\ell+2 + |\alpha|$, such that
\[
\norm{\phi \de_\alpha (( L^*_{\ell, \mathbf{z}} - L^*_{\ell, \overline{\mathbf{z}}} )
            \mu_{k-\ell, \overline{\mathbf{z}}}) q_{\overline{z_i}} }_{ L^2\left( \rho_{\mathbf{z}^{(i)}} \right)} \leq \frac{C_{\ell,\phi\de_\alpha}}{n} \,.
\]
\end{lemma}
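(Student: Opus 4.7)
The plan is to prove both parts by leveraging the explicit recursive structure of $L_{\ell,\mb{z}}$ derived in \cref{sec:appendix_weak_backward}, together with the identity $F_{\mb{z}} - F_{\ol{\mb{z}}} = \tfrac{1}{n}(f(\cdot,z_i) - f(\cdot,\ol{z}_i))$. The key observation is that each coefficient of $L_{\ell,\mb{z}}^*$ is a polynomial in the derivatives of $F_{\mb{z}}$ with universal (data-independent) structure, so the difference $L_{\ell,\mb{z}}^* - L_{\ell,\ol{\mb{z}}}^*$ inherits a factor of $1/n$ through a standard polynomial-difference expansion, leaving a residual operator with $C^{\infty}_{\pol}(\R^d)$ coefficients.

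For the construction of $\wh L_\ell$, I would first argue by induction on $\ell$ (using the recursion \eqref{eq:appendix_L_j}, together with the explicit form of the $A_j$'s built in the proof of \cref{prop:AsExp}) that each $L_{\ell,\mb{z}}$ is a differential operator of order at most $2\ell+2$ whose coefficients are polynomial expressions in a finite collection of derivatives $\{\de_\beta F_{\mb{z}}\}_{|\beta|\le 2\ell+1}$, with the polynomial structure independent of $\mb{z}$. Passing to the $L^2(\rho_{\mb{z}})$-adjoint preserves this property, since each integration by parts produces additional coefficients of the same form. Writing each coefficient symbolically as $P(\de_{\beta_1} F_{\mb{z}},\ldots,\de_{\beta_r} F_{\mb{z}})$ and applying the telescoping identity
\begin{equation*}
P(a_1,\ldots,a_r) - P(b_1,\ldots,b_r) = \sum_{j=1}^r (a_j - b_j)\int_0^1 (\de_j P)(b_1,\ldots,b_j + s(a_j-b_j),\ldots,a_r)\,ds
\end{equation*}
with $a_j = \de_{\beta_j} F_{\mb{z}}$ and $b_j = \de_{\beta_j} F_{\ol{\mb{z}}}$, each factor $a_j - b_j = \tfrac{1}{n}\de_{\beta_j}(f(\cdot,z_i) - f(\cdot,\ol{z}_i))$ carries out the factor of $1/n$, while the remaining integrand is a polynomial in derivatives of $F_{\mb{z}}, F_{\ol{\mb{z}}}, f(\cdot,z_i), f(\cdot,\ol{z}_i)$, which lies in $C^\infty_{\pol}(\R^d)$ by \cref{asm:smooth_loss}. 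Collecting all contributions defines $\wh L_\ell^* := n(L_{\ell,\mb{z}}^* - L_{\ell,\ol{\mb{z}}}^*)$ as a differential operator of order $2\ell+2$ whose coefficients, while a priori depending on the particular $z_i,\ol{z}_i$, are uniformly in $C^\infty_{\pol}(\R^d)$ with norms dominated by constants depending only on $N$ and the universal bounds in \cref{asm:smooth_loss}.

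For the norm estimate, I would start from $(L_{\ell,\mb{z}}^* - L_{\ell,\ol{\mb{z}}}^*)\mu_{k-\ell,\ol{\mb{z}}} = \tfrac{1}{n}\wh L_\ell^* \mu_{k-\ell,\ol{\mb{z}}}$, apply $\phi\de_\alpha$, and use the Leibniz rule to expand into a finite sum of terms of the form $\psi(x) \cdot \de_\gamma \mu_{k-\ell,\ol{\mb{z}}}$ with $|\gamma| \le 2\ell+2+|\alpha|$ and $\psi \in C^\infty_{\pol}(\R^d)$. Multiplying by $q_{\ol{z}_i} = \tfrac{Z^{(i)}}{\ol Z}\exp(-\tfrac{\beta}{n}f(\cdot,\ol{z}_i))$ and using the quadratic lower bound on $f(\cdot,\ol{z}_i)$ from \cref{lm:quad_bd} together with a standard ratio bound on $Z^{(i)}/\ol Z$ (which is $O(1)$ uniformly in $n\geq 1$), I bound $q_{\ol{z}_i}$ by a fixed polynomial. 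Then Cauchy--Schwarz in $L^2(\rho_{\mb{z}^{(i)}})$ separates the polynomial-growth factors from $\de_\gamma \mu_{k-\ell,\ol{\mb{z}}}$, and \cref{asm:dissipative} guarantees that $\rho_{\mb{z}^{(i)}}$ has all polynomial moments finite, giving the claimed bound with constant $C_{\ell,\phi\de_\alpha}/n$ depending on $\|\de_\gamma \mu_{k-\ell,\ol{\mb{z}}}\|_{L^2(\rho_{\mb{z}^{(i)}})}$ for $|\gamma|\le 2\ell+2+|\alpha|$.

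The main obstacle is the first stage: verifying by induction that the polynomial structure of the coefficients of $L_{\ell,\mb{z}}^*$ is universal in $\mb{z}$ and rich enough that the telescoping identity cleanly factors $1/n$ with a $C^\infty_{\pol}$ remainder, and simultaneously tracking both the differential order ($\le 2\ell+2$) and the polynomial growth rate of the residual coefficients through the recursion \eqref{eq:appendix_L_j}. Once this bookkeeping is complete, the second stage reduces to a routine application of Leibniz, Cauchy--Schwarz, and the moment bounds implied by dissipativity.
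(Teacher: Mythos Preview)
Your overall strategy—extract a factor of $1/n$ via a polynomial-difference expansion of the coefficients, then bound the residual by moment estimates—is the right idea, and for the second part (the norm estimate) your argument is essentially what the paper does, appealing to \cref{lm:appendix_moment_bound}.

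There is, however, a genuine gap in the first part. Your induction rests on the claim that the coefficients of $L_{\ell,\mb{z}}$ (and hence of $L_{\ell,\mb{z}}^*$) are universal polynomials in the derivatives $\{\de_\beta F_{\mb{z}}\}$. This is true for full-batch gradient Langevin, but the lemma is stated for SGLD with minibatch subsampling: by construction (see the proof of \cref{prop:AsExp}) one has $A_{j,\mb{z}} = \mathbb{E}_{\zeta_0}\,\wt A_j(x,\zeta_0)$, and the coefficients of $\wt A_j$ are products such as $\de_{i_1}F_{\zeta_0}\cdots\de_{i_l}F_{\zeta_0}$. Taking the expectation over the random minibatch $\zeta_0$ does \emph{not} yield $\de_{i_1}F_{\mb{z}}\cdots\de_{i_l}F_{\mb{z}}$; the result is a genuinely different symmetric function of the individual data points $z_1,\ldots,z_n$. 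Consequently your telescoping identity, applied to polynomials in $\de_\beta F_{\mb{z}}$, never engages with the actual coefficients of $A_{j,\mb{z}}$, and the factor of $1/n$ does not fall out as you describe.

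The paper closes this gap by carrying out the telescoping one level down, at the $\zeta_0$ stage: since the coefficients of $\wt A_N(x,\zeta_0)$ \emph{are} polynomials in $\de_\beta F_{\zeta_0}$, one decomposes $F_{\zeta_0} = F_{\zeta_0^{(i)}} + \tfrac{1}{n_b}f(\cdot,z_i)\mathds{1}_{\{z_i\in\zeta_0\}}$ and telescopes there, producing a factor $\tfrac{1}{n_b}\mathds{1}_{\{z_i\in\zeta_0\}}$ times an $n$-independent remainder. The crucial extra step is then the expectation calculation $\mathbb{E}_{\zeta_0}\bigl[\tfrac{1}{n_b}\mathds{1}_{\{z_i\in\zeta_0\}}\phi_{\zeta_0}\bigr] = \tfrac{1}{n}\wt\phi$, which converts the minibatch factor $1/n_b$ into the desired $1/n$. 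With $\wh A_N$ in hand, the induction on $L_N$ via the recursion \eqref{eq:appendix_L_j} proceeds exactly as you outline. Your argument becomes correct once you replace ``polynomial in $\de_\beta F_{\mb{z}}$'' by ``expectation over $\zeta_0$ of a polynomial in $\de_\beta F_{\zeta_0}$'' and insert this averaging step.
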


The proof is done by induction on the recursive construction of the operators $A_j$ and $L_j$.
All operators are linear in the potential function $F_{\mathbf{z}}$ and $F_{\overline{\mathbf{z}}}$.
The full proof is in \cref{subsec:closeness_of_operators}

It remains to control the terms $T_{1, \ell}$. 
From the expression of $L^*_{\ell, \mathbf{z}}$, it is clear that 
zeroth-order estimates given in Lemma \ref{lm:energy_est1} are not sufficient to conclude the proof. 
we need to obtain estimates on higher order derivatives with non-constant coefficients.
\begin{lemma}[Higher Order Energy Estimates]
\label{lm:energy_est2}
Fix $k \in \mathbb{N}$. 
If for all $\phi \in C^\infty_{\pol}(\mathbb{R}^d)$, 
$\ell < k$, 
and multi-index $\alpha \in \mathbb{N}^d$, 
there exists a constant $C_{\phi \de_\alpha G_\ell} > 0$, 
such that we have
\[ \| \phi (\de_\alpha G_{\ell, \mathbf{z}}) q_{z_i} - 
    \phi (\de_\alpha G_{\ell, \overline{\mathbf{z}}}) q_{\overline{z_i}} \|_{
            L^2( \rho_{\mathbf{z}^{(i)}} )} 
    \leq \frac{C_{\phi \de_\alpha G_\ell}}{n}, 
\]
then for all $J \in \mathbb{N}$, 
and degree-$J$ differential operator $L$ 
with coefficients in $C^\infty_{\pol}(\mathbb{R}^d)$ (i.e. \\
$ L := \sum_{0 \leq |\alpha| \leq J} \phi^\alpha(x) \de_\alpha,
$
where $\phi^\alpha(x) \in C^\infty_{\pol}(\mathbb{R}^d)$ for each 
$|\alpha| \leq J$), 
there exist a constant $C_{L \mu_k} > 0$ such that 
\[ \| L \mu_{k,\mathbf{z}} q_{z_i} - 
    L \mu_{k,\overline{\mathbf{z}}} q_{\overline{z_i}} \|_{
            L^2( \rho_{\mathbf{z}^{(i)}} )} 
    \leq \frac{C_{L \mu_k}}{n}.
\]

In particular, the above inequality holds for 
$L = \phi' \de_{\alpha'} L^*_{\ell, \mathbf{z}}$
with $\phi' \in C^\infty_{\pol}(\R^d)$, 
$\alpha' \in \mathbb{N}^d$, 
and $J = 2\ell + 2 + |\alpha'|$, 
therefore there exists a constant $C_{\phi' \de_{\alpha'} G_k} > 0$ 
such that 
\[ \| \phi' (\de_{\alpha'} G_{k, \mathbf{z}}) q_{z_i} - 
    \phi' (\de_{\alpha'} G_{k, \overline{\mathbf{z}}}) q_{\overline{z_i}} \|_{
            L^2( \rho_{\mathbf{z}^{(i)}} )} 
    \leq \frac{C_{\phi' \de_{\alpha'} G_k}}{n}, 
\]
hence proving the induction step from $k-1$ to $k$. 
\end{lemma}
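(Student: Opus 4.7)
The plan is to derive a Poisson-type equation for the difference $u := \mu_{k,\mathbf{z}} q_{z_i} - \mu_{k,\overline{\mathbf{z}}} q_{\overline{z_i}}$ with respect to the common base measure $\rho_{\mathbf{z}^{(i)}}$, and then prove the conclusion by induction on $J$, the order of $L$, chaining back into the outer induction on $k$ via the particular choice $L = \phi'\de_{\alpha'} L^*_{\ell,\mathbf{z}}$ at the end. Expanding $L_{\mathbf{z}^{(i)}}(\mu q)$ by the product rule, combining the two Poisson equations \eqref{Eq:poisson_z}--\eqref{Eq:poisson_zbar}, and using the identity $L_{\mathbf{z}} - L_{\mathbf{z}^{(i)}} = -\frac{1}{n}\langle\nabla f(\cdot,z_i),\nabla\cdot\rangle$ (and its analogue for $\overline{\mathbf{z}}$), one arrives at
\[
L_{\mathbf{z}^{(i)}}\, u \;=\; H_{\mathrm{main}} + H_{\mathrm{cross}},
\]
where $H_{\mathrm{main}} := G_{k,\mathbf{z}} q_{z_i} - G_{k,\overline{\mathbf{z}}} q_{\overline{z_i}}$ and $H_{\mathrm{cross}}$ collects the terms $\nabla\mu_k\cdot\nabla q$, $\mu_k L_{\mathbf{z}^{(i)}} q$, and the $\frac{1}{n}\langle\nabla f(\cdot,z_i),\nabla\mu_k\rangle q$ commutators produced when trading $L_\mathbf{z}$ for $L_{\mathbf{z}^{(i)}}$. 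Each piece in $H_{\mathrm{cross}}$ carries an explicit $1/n$ factor from the identity $\nabla q_{z_i} = -\frac{\beta}{n}\nabla f(\cdot, z_i) q_{z_i}$ combined with the Lipschitz bound in \cref{asm:smooth_loss}.

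For the base case $J = 0$, Lemma \ref{lm:energy_est1} reduces the claim to $\|H_{\mathrm{main}} + H_{\mathrm{cross}}\|_{L^2(\rho_{\mathbf{z}^{(i)}})} \le C/n$. The $H_{\mathrm{cross}}$ bound is immediate from the explicit $1/n$ factor together with moment estimates on $\rho_{\mathbf{z}^{(i)}}$. For $H_{\mathrm{main}}$, apply the decomposition \eqref{eq:higher_order_energy_decomp} into $T_{1,\ell}$ and $T_{2,\ell}$: each $T_{2,\ell}$ is $O(1/n)$ by Lemma \ref{lm:closeness_of_operators}, while each $T_{1,\ell}$ expands via Lemma \ref{lm:coefficients} into a sum of terms $\phi_{\ell,\alpha}\bigl[\de_\alpha\mu_{k-\ell,\mathbf{z}}q_{z_i} - \de_\alpha\mu_{k-\ell,\overline{\mathbf{z}}}q_{\overline{z_i}}\bigr]$, each $O(1/n)$ by the outer induction on $k$ (i.e.\ the Lemma applied at index $k-\ell<k$ with $L = \phi_{\ell,\alpha}\de_\alpha$). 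For the inductive step $J-1 \to J$, differentiate the PDE to obtain
\[
L_{\mathbf{z}^{(i)}}(\de_\alpha u) \;=\; \de_\alpha H + [\de_\alpha,\, L_{\mathbf{z}^{(i)}}]\,u,
\]
for $|\alpha| = J$. The commutator is a differential operator of order at most $J+1$ whose action on $u$ uses only derivatives of $u$ of order $\le J-1$ (combined with derivatives of $F_{\mathbf{z}^{(i)}}$), and hence is controlled by the inner inductive hypothesis. The term $\de_\alpha H$ reduces through the same decompositions to $\phi\de_\beta G_\ell$-differences with $\ell<k$, which are handled by the hypothesis of the Lemma. The energy method from Lemma \ref{lm:energy_est1} — the Poincar\'e inequality of Lemma \ref{lemma:spectral_gap} together with the self-adjointness of $L_{\mathbf{z}^{(i)}}$ on $L^2(\rho_{\mathbf{z}^{(i)}})$ — then yields $\|\de_\alpha u\|_{L^2(\rho_{\mathbf{z}^{(i)}})} \le C/n$ for all $|\alpha| \le J$. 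Bounding the polynomial coefficients of $L$ by moments of $\rho_{\mathbf{z}^{(i)}}$ and applying Cauchy--Schwarz yields $\|Lu\|_{L^2(\rho_{\mathbf{z}^{(i)}})} \le C_{L\mu_k}/n$. The specific choice $L = \phi'\de_{\alpha'}L^*_{\ell,\mathbf{z}}$, combined with Lemma \ref{lm:closeness_of_operators} to absorb the residual $(L^*_{\ell,\mathbf{z}} - L^*_{\ell,\overline{\mathbf{z}}})\mu_{k,\overline{\mathbf{z}}} q_{\overline{z_i}}$, then gives the $\phi'\de_{\alpha'}G_k$ bound that closes the outer induction on $k$.

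The main obstacle is the bookkeeping: the PDE for $u$ simultaneously sees the operator perturbation $L_\mathbf{z}\to L_{\overline{\mathbf{z}}}$ and the weight perturbation $q_{z_i}\to q_{\overline{z_i}}$, and every cross term produced must be shown either to carry an explicit $1/n$ or to reduce to a strictly lower-index quantity handled by the outer induction on $k$. In the inner inductive step, the commutator $[\de_\alpha, L_{\mathbf{z}^{(i)}}]$ injects high-order derivatives of $F_{\mathbf{z}^{(i)}}$ with polynomial growth, and the coefficients $\phi_{\ell,\alpha}$ from Lemma \ref{lm:coefficients} accumulate derivatives of $F_\mathbf{z}$ as $\ell$ grows; these must be absorbed in the energy estimate using the uniform bound $\sup_z\|f(\cdot,z)\|_{6N+2,\ell}<\infty$ from \cref{asm:smooth_loss} without losing the $1/n$ factor. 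The final constant $C_{L\mu_k}$ therefore depends on $N$, $J$, and $\ell$, but remains uniform in $n$ and in the data $\mathbf{z}$, as required.
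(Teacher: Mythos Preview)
Your overall strategy --- induction on $J$, derive a Poisson equation for derivatives of the difference, control commutator terms by the inductive hypothesis --- matches the paper's. Working with a single equation $L_{\mathbf{z}^{(i)}}u = H$ for $u = \mu_{k,\mathbf{z}}q_{z_i} - \mu_{k,\overline{\mathbf{z}}}q_{\overline{z_i}}$ is a reasonable repackaging of the paper's use of the pair \eqref{Eq:poisson_z}--\eqref{Eq:poisson_zbar} together with Corollary~\ref{cor:gen_energy_est}.

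There is, however, a genuine gap at the step ``bounding the polynomial coefficients of $L$ by moments of $\rho_{\mathbf{z}^{(i)}}$ and applying Cauchy--Schwarz yields $\|Lu\|_{L^2}\le C_{L\mu_k}/n$.'' The energy method you invoke delivers only $\|\de_\alpha u\|_{L^2(\rho_{\mathbf{z}^{(i)}})}\le C/n$, and Cauchy--Schwarz cannot upgrade this to $\|\phi\,\de_\alpha u\|_{L^2(\rho_{\mathbf{z}^{(i)}})}\le C'/n$ for a polynomially growing $\phi$: that would require either $\phi\in L^\infty$ or an $L^p$ bound on $\de_\alpha u$ for some $p>2$, and neither is available from a Poincar\'e-based $L^2$ energy estimate. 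The paper isolates exactly this difficulty as a separate technical result (Lemma~\ref{lm:energy_est1_pol}), in which the energy estimate is carried out directly on the weighted quantity $\phi\,(\mu_{k,\mathbf{z}}q_{z_i}-\mu_{k,\overline{\mathbf{z}}}q_{\overline{z_i}})$; the cross terms coming from $L_\mathbf{z}(\phi\mu)-\phi\,L_\mathbf{z}\mu$ are then absorbed by a recursion on the polynomial degree of $\phi$. That recursion is the missing ingredient in your plan, and without it the conclusion for general $L$ does not follow from the unweighted $\|\de_\alpha u\|_{L^2}$ bounds.

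A smaller imprecision: the commutator $[\de_\alpha,L_{\mathbf{z}^{(i)}}]u$ for $|\alpha|=J$ does \emph{not} involve only derivatives of $u$ of order $\le J-1$. The Leibniz terms with $|\alpha_1|=1$ contribute $\langle\nabla\de_{\alpha_1}F_{\mathbf{z}^{(i)}},\,\nabla\de_{\alpha_2}u\rangle$ with $|\alpha_2|=J-1$, i.e.\ derivatives of $u$ of order exactly $J$, weighted by polynomial coefficients. The paper handles this by combining the first-order energy estimate (Corollary~\ref{cor:energy_est1_improve}), which extracts $\|\nabla v\|_{L^2}$ for free from the Poisson structure, with the weighted estimate of Lemma~\ref{lm:energy_est1_pol}; your inductive hypothesis at level $J-1$ alone is not enough to close this term.
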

The proof can be found in \cref{subsec:energy_est2}.

Using the above result, we can finally provide a proof 
for uniform stability. 

\begin{proof}(of \cref{thm:pi_gen_bound})

We first apply Lemma \ref{lm:bound_l2_norm} so that it is sufficient to show that for each $k \in \mathbb{N}$, there exists a positive constant $C_{\mu_k}$ independent of $n$ and data such that 
\[ \| \mu_{k,\mathbf{z}} q_{z_i} - 
    \mu_{k,\overline{\mathbf{z}}} q_{\overline{z_i}} 
    \|_{ L^2( \rho_{\mathbf{z}^{(i)}} ) } 
    \leq \frac{C_{\mu_k}}{n} \, .
\]

Then using Lemma \ref{lm:energy_est1}, 
it is sufficient to show that for all 
$k\in \mathbb{N}, \phi \in C^\infty_{\pol}(\R^d), \alpha \in \mathbb{N}^d$, 
there exists a positive constant $C_{\phi \de_\alpha G_k}$ such that 
\[ \| \phi \de_\alpha G_{k,\mathbf{z}} q_{z_i} - 
    \phi \de_\alpha G_{k,\overline{\mathbf{z}}} q_{\overline{z_i}} 
    \|_{ L^2( \rho_{\mathbf{z}^{(i)}} ) } 
    \leq \frac{ C_{\phi \de_\alpha G_k} }{n} \, .
\]

Finally we will prove the above condition using induction on $k$. 
Recalling the definition of $G_{k,\mathbf{z}} = 0$ when $k=0$, 
then the $k=0$ case follows trivially. 
Now assuming the cases $0, 1, \cdots, k-1$ are true, 
we follow the decomposition in \cref{eq:higher_order_energy_decomp}, 
and apply Lemmas \labelcref{lm:closeness_of_operators,lm:coefficients,lm:energy_est2} on the terms 
$\phi \de_\alpha T_{1,\ell}$ and 
$\phi \de_\alpha T_{2,\ell}$, 
for arbitrary $\phi \in C^\infty_{\pol}(\R^d)$
and $\alpha \in \mathbb{N}^d$. 
This proves the desired control for the difference $\phi \de_\alpha G_{k,\mathbf{z}} q_{z_i} - \phi \de_\alpha G_{k,\overline{\mathbf{z}}} q_{\overline{z_i}}$, 
hence showing the induction step for $k$. 
\end{proof}

\subsection{Poisson Equation Results}

Before we start the proofs, we will state a result of existence, uniqueness, and polynomial growth estimate for solutions of Poisson equations adapted from \citet[Theorem 1]{pardoux2001} and \citet[Proposition 1]{pardoux2003poisson} to fit our assumptions. 

\begin{proposition}
\label{prop:poisson_exist_unique}
Under \cref{asm:smooth_loss,asm:dissipative}, 
for Poisson equations of the form 
\[ L_\mathbf{z} u = G,
\]
where $\int_{\R^d} G d\rho_\mathbf{z} = 0$, 
there exist a unique function $u$ that solves this equation. Furthermore, if we have for some constants $C>0,k>0$ such that
\[ |G(x)| \leq C (1 + |x|^k), \]
then for all $k' > k + 2$, there exists a constant $C'>0$ such that 
\[ |u(x)| + |\nabla u(x)| \leq C' (1 + |x|^{k'}). \]
\end{proposition}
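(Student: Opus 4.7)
The plan is to follow the classical probabilistic construction going back to Pardoux–Veretennikov, verifying that \cref{asm:smooth_loss,asm:dissipative} supply precisely the ingredients their results require. Define the candidate solution via the semigroup representation
\begin{equation*}
	u(x) = - \int_0^\infty \mathbb{E}\bigl[ G(X(t)) \,\big|\, X(0) = x \bigr] \, dt,
\end{equation*}
where $X(t)$ is the Langevin diffusion \eqref{eq:langevin_diffusion} with drift $-\nabla F_{\mathbf{z}}$. The integrand is $P_t^{\mathbf{z}} G(x)$, which converges to $\int G \, d\rho_{\mathbf{z}} = 0$ by the centering hypothesis, and the convergence is exponential because \cref{asm:dissipative} yields a Poincaré inequality for $\rho_{\mathbf{z}}$ (as invoked in \cref{lemma:spectral_gap} and the references therein). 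This guarantees absolute convergence of the integral defining $u$.

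Next I would verify that $u$ solves $L_\mathbf{z} u = G$ by differentiating under the integral and using the Kolmogorov equation $\partial_t P_t^{\mathbf{z}} G = L_{\mathbf{z}} P_t^{\mathbf{z}} G$, so that $L_{\mathbf{z}} u = -\int_0^\infty \partial_t P_t^{\mathbf{z}} G \, dt = G - \lim_{t\to\infty} P_t^{\mathbf{z}} G = G$. For uniqueness, any two solutions differ by an element of $\ker L_{\mathbf{z}}$, which is reduced to constants by the spectral gap; uniqueness is then enforced by the natural normalization $\int u \, d\rho_{\mathbf{z}} = 0$. The polynomial growth bound $|u(x)| \leq C'(1+|x|^{k'})$ is obtained by combining three ingredients: the hypothesis $|G(x)|\leq C(1+|x|^k)$, the moment estimate of \cref{prop:ME_continuous} which controls $\mathbb{E}[|X(t)|^{k} \mid X(0)=x]$ uniformly in $t$, and an exponential contraction estimate of the form $|P_t^{\mathbf{z}} G(x)| \leq C(1+|x|^{k+2})\, e^{-\lambda t}$, obtained by applying the Poincaré inequality to $P_{t/2}^{\mathbf{z}} G$. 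Splitting the time integral at a fixed horizon and balancing the polynomial pre-factor against the exponential decay yields the polynomial bound with any exponent $k'>k+2$.

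The main obstacle is the gradient estimate on $\nabla u$, since differentiation kills the centering that made the time integral converge in the first place. I would handle this by differentiating the stochastic representation through the tangent process $J(t) := \nabla_x X(t)$, which satisfies a linear SDE whose drift $-\nabla^2 F_{\mathbf{z}}(X(t))$ is, under \cref{asm:smooth_loss,asm:dissipative}, dissipative in a matrix sense for large $|X(t)|$ and Lipschitz throughout; this yields exponential-in-expectation bounds on $|J(t)|$ together with polynomial moment bounds on $|\nabla G(X(t)) J(t)|$. Plugging this into $\nabla u(x) = -\int_0^\infty \mathbb{E}[\nabla G(X(t))\, J(t) \mid X(0)=x]\, dt$ and again splitting at a finite horizon against the exponential decay of the centred semigroup applied to $\nabla G$ produces the desired polynomial bound, with the extra $+2$ in the exponent $k'>k+2$ absorbing the losses from both the tangent-process moments and the contraction estimate. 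All of this is precisely the content of \citet{pardoux2001,pardoux2003poisson}; the task reduces to checking that our Lipschitz-gradient and dissipativity assumptions are strong enough to invoke those theorems.
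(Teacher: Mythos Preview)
Your proposal is correct and aligns exactly with the paper's approach: the paper does not give an independent proof at all but simply states that the result is adapted from \citet{pardoux2001,pardoux2003poisson} and that \cref{asm:smooth_loss,asm:dissipative} place us in a special case of those theorems. Your sketch unpacks the probabilistic argument behind that citation (semigroup representation, exponential ergodicity from the Poincar\'e inequality, moment bounds from \cref{prop:ME_continuous}, and the tangent-process estimate for $\nabla u$), but the substance is identical---both you and the paper ultimately defer to Pardoux--Veretennikov.
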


We note here that \citet{pardoux2001, pardoux2003poisson} proved the above result under much more general conditions, and it is straight forward to verify that our assumptions fall under a special case of the original result.

To get higher regularity, 
we will first state standard a higher regularity result from for bounded domains. 

\begin{theorem}
\citep[Section 6.3.1, Theorem 3, Infinite Differentiability in the Interior]{evans2010partial}
\label{thm:evans_poisson_regularity}
Let $U \subset \mathbb{R}^d$ be a bounded open domain with boundary $\de U \in C^1$, 
$L$ be a uniformly elliptic operator with smooth ($C^\infty(U)$) coefficients, 
$f \in C^\infty(U)$, 
and $u$ is a weak solution of the equation 
\[ Lu = f \, . 
\]
Then we have $u \in C^\infty(U)$.
\end{theorem}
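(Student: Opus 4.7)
The plan is to establish this classical interior regularity result via the difference quotient method of Nirenberg combined with an elliptic bootstrap and Sobolev embedding. Write $L u = \sum_{|\alpha|\le 2} a_\alpha(x) \de_\alpha u$ with $a_\alpha \in C^\infty(U)$, and fix a weak solution $u \in H^1_{\text{loc}}(U)$ of $Lu = f$. First I would recast the PDE in divergence form locally, or else work directly with the non-divergence form by rewriting the weak formulation appropriately, so that a standard energy/coercivity estimate from uniform ellipticity is available on compactly supported test functions.

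The first substantive step is to upgrade $u$ from $H^1_{\text{loc}}$ to $H^2_{\text{loc}}$. For any $V \Subset W \Subset U$, pick a cutoff $\zeta \in C^\infty_c(W)$ with $\zeta \equiv 1$ on $V$, and apply the difference quotient $D_k^h u(x) := h^{-1}(u(x+he_k) - u(x))$ (for $h$ small compared to $\mathrm{dist}(\mathrm{supp}\,\zeta,\de W)$). Testing the weak form of $Lu = f$ against $-D_k^{-h}(\zeta^2 D_k^h u)$, moving difference quotients onto the other factor, and exploiting uniform ellipticity together with boundedness of $a_\alpha$ on $W$ and of $\nabla a_\alpha$ (needed to handle commutators $[D_k^h, a_\alpha]$), one obtains
\begin{equation*}
\| \zeta\, D_k^h \nabla u \|_{L^2(W)}^2 \le C\bigl( \|f\|_{L^2(W)}^2 + \|u\|_{H^1(W)}^2 \bigr),
\end{equation*}
uniformly in $h$. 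Sending $h \to 0$ yields $\nabla \de_k u \in L^2(V)$ for each $k$, hence $u \in H^2_{\text{loc}}(U)$.

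Next comes the bootstrap: once $u \in H^2_{\text{loc}}$, the function $\de_k u$ is itself a weak solution of an elliptic equation obtained by formally differentiating $Lu = f$ in $x_k$, namely $L(\de_k u) = \de_k f - \sum_\alpha (\de_k a_\alpha) \de_\alpha u$, whose right-hand side lies in $L^2_{\text{loc}}$ (using smoothness of $a_\alpha$ and the newly-gained $H^2_{\text{loc}}$ regularity of $u$). Applying the same difference-quotient argument to $\de_k u$ gives $\de_k u \in H^2_{\text{loc}}$, i.e.\ $u \in H^3_{\text{loc}}(U)$. Iterating this inductively, where at each stage the smoothness of the $a_\alpha$ allows us to keep differentiating the equation, yields $u \in H^m_{\text{loc}}(U)$ for every $m \in \mathbb{N}$. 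Finally, Sobolev embedding $H^m_{\text{loc}}(U) \hookrightarrow C^{m - \lfloor d/2 \rfloor - 1}_{\text{loc}}(U)$ and taking $m \to \infty$ gives $u \in C^\infty(U)$.

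The main obstacle is the first step: controlling the commutators $[D_k^h, a_\alpha(x)]\de_\alpha u$ that arise when one pushes a difference quotient through the variable coefficients, and in particular handling the second-order (top-order) term cleanly. The trick is the cutoff weight $\zeta^2$ that absorbs one factor of $D_k^h \nabla u$ on the left while leaving only a lower-order error on the right via Cauchy--Schwarz with a small parameter; ellipticity is then what provides the lower bound on the coercive form needed to close the estimate. Once this difference-quotient estimate is in place, the bootstrap and Sobolev embedding are routine.
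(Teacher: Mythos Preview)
The paper does not supply its own proof of this statement: it is quoted verbatim as a classical result from \citet[Section 6.3.1, Theorem 3]{evans2010partial} and used as a black box to derive \cref{prop:appendix_poisson_regularity}. Your sketch is the standard Nirenberg difference-quotient / bootstrap / Sobolev-embedding argument, which is precisely the route taken in Evans' textbook, so in that sense your approach matches the cited source even though the present paper itself offers no proof.
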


We will adapt this classical result to our problem. 

\begin{proposition}
\label{prop:appendix_poisson_regularity}
Suppose \cref{asm:smooth_loss,asm:dissipative} 
for Poisson equations of the form 
\[ L_\mathbf{z} u = G \, ,
\]
where $\int_{\R^d} G d\rho_\mathbf{z} = 0$.
If additionally $F_\mathbf{z}, G \in C^\infty_{\pol}(\R^d)$, then we have 
$u \in C^\infty_{\pol}(\R^d)$.
\end{proposition}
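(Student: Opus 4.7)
The plan is to combine the existence and first-order estimates from \cref{prop:poisson_exist_unique} with the interior smoothness result \cref{thm:evans_poisson_regularity} and an induction on multi-index order to propagate polynomial growth from $u$ and $\nabla u$ to every higher derivative of $u$.

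First, I would invoke \cref{prop:poisson_exist_unique} to obtain the unique polynomial-growth solution $u$ with $|u(x)|+|\nabla u(x)|\leq C(1+|x|^{k'})$ for some exponent $k'$. Since $F_\mathbf{z}, G\in C^\infty_{\pol}(\R^d)$, the operator $L_\mathbf{z} = \frac{1}{\beta}\Delta - \nabla F_\mathbf{z}\cdot\nabla$ is uniformly elliptic with $C^\infty$ coefficients on every bounded open set $U\subset \R^d$, and $G|_U\in C^\infty(U)$. Applying \cref{thm:evans_poisson_regularity} on the exhaustion $U_n = B_n(0)$ then yields $u\in C^\infty(\R^d)$. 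It remains only to upgrade the polynomial growth from $u$ and $\nabla u$ to all higher partial derivatives.

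For that upgrade I would argue by induction on $|\alpha|$; the base case $|\alpha|\leq 1$ is already handled. Suppose polynomial growth holds for all derivatives of order at most $k$, and fix $|\beta|=k$. Since $\frac{1}{\beta}\Delta$ commutes with $\partial_\beta$ while the transport term produces a Leibniz commutator, I obtain
\begin{equation*}
L_\mathbf{z}(\partial_\beta u) \;=\; \partial_\beta G + \sum_{0<\gamma\leq\beta}\binom{\beta}{\gamma}(\partial_\gamma \nabla F_\mathbf{z})\cdot\nabla(\partial_{\beta-\gamma}u) \;=:\; H_\beta.
\end{equation*}
Each $\nabla(\partial_{\beta-\gamma}u)$ with $\gamma\neq 0$ has polynomial growth by the induction hypothesis (its order being at most $k$), so together with $F_\mathbf{z}, G\in C^\infty_{\pol}(\R^d)$ one concludes $H_\beta\in C^\infty_{\pol}(\R^d)$. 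The centering $\int H_\beta\, d\rho_\mathbf{z} = \int L_\mathbf{z}(\partial_\beta u)\, d\rho_\mathbf{z} = 0$ follows from integration by parts, which is licit because $\partial_\beta u$ has polynomial growth while $\rho_\mathbf{z}$ decays at Gaussian rate under \cref{asm:dissipative}. Applying \cref{prop:poisson_exist_unique} to $L_\mathbf{z} v = H_\beta$ then produces a polynomial-growth solution $v$ with $|v|+|\nabla v|$ polynomially bounded; by the Liouville-type uniqueness up to additive constants, $\partial_\beta u = v + c$, so $\nabla(\partial_\beta u)$ has polynomial growth. Every multi-index of order $k+1$ arises as $\beta + e_j$ for some such $\beta$, closing the induction and giving $u\in C^\infty_{\pol}(\R^d)$.

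The hard part will be carefully invoking the mean-zero identity and the uniqueness-up-to-constants step at each induction stage; both are standard consequences of the dissipative Pardoux-Veretennikov framework underlying \cref{prop:poisson_exist_unique}, but the integration by parts must be justified with polynomial-growth test functions against the Gaussian-tailed $\rho_\mathbf{z}$. As an alternative that avoids the induction entirely, one can apply interior Schauder estimates of the form $\|u\|_{C^{k+2,\alpha}(B_1(x_0))} \leq C_k(x_0)(\|G\|_{C^{k,\alpha}(B_2(x_0))} + \|u\|_{L^\infty(B_2(x_0))})$ with the constant $C_k(x_0)$ polynomial in $|x_0|$, since the coefficient $\nabla F_\mathbf{z}$ is polynomially bounded; the right-hand side is then polynomial in $|x_0|$ by the same reason combined with the first-order bound from \cref{prop:poisson_exist_unique}, yielding the same conclusion.
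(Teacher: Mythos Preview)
Your proposal is correct and follows essentially the same approach as the paper: interior smoothness from \cref{thm:evans_poisson_regularity} on an exhaustion, then induction on the derivative order by differentiating the Poisson equation and reapplying \cref{prop:poisson_exist_unique} to the resulting equation for $\partial_\alpha u$. You are in fact more careful than the paper about the centering condition and the uniqueness-up-to-constants identification; the paper simply applies the growth bound to $\partial_\alpha u$ without spelling these out.
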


\begin{proof}

Since we already have existence and uniqueness of solutions for the Poisson equation from Proposition \ref{prop:poisson_exist_unique}, 
we can restrict the solution $u$ to any open bounded domain $U$ with smooth boundary. 
Then using Theorem  \ref{thm:evans_poisson_regularity}, on any cover of $\mathbb{R}^d$ using open bounded domains with smooth boundary $\{U_i\}_{i \in I}$, we obtain that $u \in C^\infty(\mathbb{R}^d)$. 

To show polynomial growth, we will consider an induction on $k = |\alpha|$, 
where $\alpha \in \mathbb{N}^d$ is a multi-index. 
We start by observing that the $k = 0$ case follows trivially from Proposition \ref{prop:poisson_exist_unique}. 
Assuming the case is true for $0, 1, \ldots, k-1$, 
we will prove the case for $|\alpha| = k$. 
We start by computing 
\[ \de_\alpha (L_\mathbf{z} u)
    = L_{\mathbf{z}} \de_\alpha u
    + \sum_{l=1}^{J} \sum_{
    \substack{\alpha_1 + \alpha_2 = \alpha \\
        |\alpha_1| = l }
    }
    (\de_{\alpha_1} L_\mathbf{z}) (\de_{\alpha_2} u)
    = \de_\alpha G \,.
\]

This implies $v = \de_\alpha u$ solves the Poisson equation 
\[ L_\mathbf{z} v = \widetilde{G}
    := \de_\alpha G - \sum_{l=1}^{J} \sum_{
    \substack{\alpha_1 + \alpha_2 = \alpha \\
        |\alpha_1| = l }
    }
    (\de_{\alpha_1} L_\mathbf{z}) (\de_{\alpha_2} u)
    \, ,
\]
where we define for any $\phi \in C^1(\mathbb{R}^d)$
\[ (\de_{\alpha_1} L_\mathbf{z}) \phi
    := - \langle \nabla \de_{\alpha_1}
        F_\mathbf{z}, \nabla \phi \rangle. 
\]

Since all the $|\alpha_2| < k$, then by the induction hypothesis, we have that there exist constants $C, k > 0$, such that 
for all $\alpha_1 + \alpha_2 = \alpha, 
|\alpha_2| < k$, we have 
\[ |(\de_{\alpha_1} L_\mathbf{z}) 
    (\de_{\alpha_2} u)|
    \leq C(1 + |x|^k) \,. 
\]

Therefore $\widetilde G$ must also only have polynomial growth. 
Finally, applying Proposition \ref{prop:poisson_exist_unique} on
$v = \de_\alpha u$, 
we obtain that there exist constants 
$C', k' > 0$ such that 
\[ |\de_\alpha u| \leq C'( 1 + |x|^{k'} )\, ,
\]
which is the desired result. 

\end{proof}

\subsection{Moment Bounds for $\rho_\mathbf{z}, \rho_{\mathbf{z}^{(i)}}$}

\begin{lemma}
\label{lm:appendix_moment_bound}
For all $k\in \mathbb{N}$, 
we have that 
\[ \int_{\R^d} |x|^k d\rho_\mathbf{z} 
    < \infty \,.
\]

Additionally, by absorbing any factors of the type 
$\frac{n-1}{n}$ or $\frac{n+1}{n}$ into $\beta$, 
we have for all $\phi \in C^\infty_{\pol}(\R^d)$, 
\[ \|\phi\|_{L^2(\rho_{\mathbf{z}^{(i)}})}
    + \|\phi q_{z_i}\|_{L^2(\rho_{\mathbf{z}^{(i)} })}
    + \|\phi \nabla f(x,z_i) q_{z_i}     \|_{L^2(\rho_{\mathbf{z}^{(i)}})}
    < \infty \,. 
\]
\end{lemma}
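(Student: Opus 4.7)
\medskip
\noindent\textbf{Proof Plan.}
The plan is to exploit the Gaussian-type tail that \cref{asm:dissipative} forces on any Gibbs density whose potential satisfies the dissipativity condition. The first bound is essentially immediate from \cref{lm:quad_bd}, while the terms involving $q_{z_i}$ are handled by rewriting the integrand so that a single effective potential appears, and then checking that this potential inherits the dissipative lower bound with a mildly modified constant.

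\medskip
\noindent\textbf{Step 1: Moments of $\rho_{\mathbf{z}}$.}
First I would apply the lower bound from \cref{lm:quad_bd} pointwise to each $f(\cdot,z_j)$ and average to obtain $F_{\mathbf{z}}(x) \geq \tfrac{m}{3}|x|^2 - \tfrac{b}{2}\log 3$. Substituting into $\rho_{\mathbf{z}}(x) \propto e^{-\beta F_{\mathbf{z}}(x)}$ yields a Gaussian-type envelope $\rho_{\mathbf{z}}(x) \leq C e^{-\beta m|x|^2/3}$, after which $\int |x|^k \, d\rho_{\mathbf{z}} < \infty$ for all $k\in\mathbb{N}$ by direct comparison with a Gaussian integral.

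\medskip
\noindent\textbf{Step 2: The $L^2(\rho_{\mathbf{z}^{(i)}})$ norms.}
For the norm $\|\phi\|_{L^2(\rho_{\mathbf{z}^{(i)}})}$, I would observe that $F_{\mathbf{z}^{(i)}} = \tfrac{1}{n}\sum_{j\neq i} f(\cdot,z_j)$ satisfies the dissipative inequality with constants $(\tfrac{n-1}{n}m, \tfrac{n-1}{n}b)$; integrating along rays gives a quadratic lower bound $F_{\mathbf{z}^{(i)}}(x) \geq \tfrac{(n-1)m}{3n}|x|^2 - C$, and absorbing the factor $\tfrac{n-1}{n}$ into $\beta$ (permissible since $n\ge 2$) reduces this to the same Gaussian-tail bound as in Step 1. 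The polynomial growth of $\phi$ then makes $\int \phi^2 \, d\rho_{\mathbf{z}^{(i)}}$ finite.

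\medskip
\noindent\textbf{Step 3: Terms with $q_{z_i}$.}
For $\|\phi q_{z_i}\|_{L^2(\rho_{\mathbf{z}^{(i)}})}$, the key algebraic move is
\begin{equation*}
    q_{z_i}^2 \, d\rho_{\mathbf{z}^{(i)}}
    \;=\; \frac{Z^{(i)}}{Z^2}\, \exp\!\left(-\beta\Big(F_{\mathbf{z}^{(i)}} + \tfrac{2}{n}f(\cdot,z_i)\Big)\right) dx \,,
\end{equation*}
so the effective potential is $\widetilde F := F_{\mathbf{z}^{(i)}} + \tfrac{2}{n}f(\cdot,z_i)$, which satisfies $\langle x, \nabla \widetilde F\rangle \geq \tfrac{n+1}{n}(m|x|^2 - b)$. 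Integrating gives $\widetilde F(x) \geq \tfrac{(n+1)m}{3n}|x|^2 - C$, and after absorbing $\tfrac{n+1}{n}$ into $\beta$ one again lands on a Gaussian envelope, so $\int \phi^2 \, q_{z_i}^2 \, d\rho_{\mathbf{z}^{(i)}} < \infty$ for every $\phi \in C^\infty_{\pol}(\mathbb{R}^d)$. The normalizing prefactor $Z^{(i)}/Z^2$ is finite by the same Step 1 argument applied to $Z$ and $Z^{(i)}$ individually.

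\medskip
\noindent\textbf{Step 4: Gradient term.}
For $\|\phi\,\nabla f(\cdot,z_i)\, q_{z_i}\|_{L^2(\rho_{\mathbf{z}^{(i)}})}$, I would invoke the linear gradient bound $|\nabla f(x,z_i)| \leq M|x|+M_1$ from \cref{lm:quad_bd}. This lets me absorb $|\nabla f(\cdot,z_i)|^2$ into a new polynomial weight $\widetilde \phi \in C^\infty_{\pol}(\mathbb{R}^d)$, reducing to the case already handled in Step 3. There is no genuine obstacle in this lemma: the only subtlety is tracking the harmless factors $\tfrac{n\pm 1}{n}$ and justifying that they can be absorbed into $\beta$ uniformly in $n\geq 2$, which is exactly what the lemma's hypothesis permits.
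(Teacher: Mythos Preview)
Your proposal is correct and follows the same overall skeleton as the paper's proof---in particular, Steps 3 and 4 coincide almost verbatim with the paper's argument: rewrite $q_{z_i}^2\,\rho_{\mathbf{z}^{(i)}}$ as a new Gibbs density $\widehat\rho$ with potential $F_{\mathbf{z}}+\tfrac{1}{n}f(\cdot,z_i)=F_{\mathbf{z}^{(i)}}+\tfrac{2}{n}f(\cdot,z_i)$, then absorb the polynomial-growth factor $|\nabla f(\cdot,z_i)|$ into $\phi$.

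The one genuine difference is in how you establish finiteness of the polynomial moments themselves. The paper does this by invoking the continuous-time moment estimates of \cref{prop:ME_continuous} together with stationarity, writing $\int|x|^k\,d\rho_{\mathbf{z}}=\lim_{t\to\infty}\mathbb{E}|X(t)|^k\le C_{k/2}$. You instead go directly through the quadratic lower bound of \cref{lm:quad_bd} to obtain a Gaussian envelope $\rho_{\mathbf{z}}(x)\le C e^{-\beta m|x|^2/3}$ and integrate by comparison. Your route is more elementary and self-contained (no SDE machinery needed), while the paper's route reuses an already-proved proposition and thereby also tracks explicit constants $C_p$ in terms of $d,\beta,m,b$. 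Either argument works here; the choice is purely a matter of what one wants to cite.
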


\begin{proof}

These bounds follows directly from Proposition \ref{prop:ME_continuous}, 
since we can write for even $k$
\[ \int_{\R^d} |x|^k d\rho_\mathbf{z}
    =
    \lim_{t\to\infty} \mathbb{E} |X(t)|^k 
    \leq C_{k/2} \,.
\]

For an odd $k$, we use Young's inequality to write 
\[ |x|^k \leq C |x|^{k+1},
\]
for some constant $C$. 

Since $\phi < C_\phi (1 + |x|^{k_\phi})$, 
we have $\|\phi\|_{L^2(\rho_{\mathbf{z}^{(i)}})} < \infty$ follows immediately. 

At the same time, we can write 
\[ q_{z_i}^2 \rho_{\mathbf{z}^{(i)}}
    = \frac{1}{Z'} \exp\left( 
        -\beta \left(F_\mathbf{z} + 
        \frac{1}{n} f(x,z_i) 
        \right) \right)
    = \widehat \rho \,. 
\]

Therefore we can obtain a new bound as
\[ \|\phi q_{z_i}\|_{L^2(\rho_{\mathbf{z}^{(i)} })}
= \|\phi\|_{L^2(\widehat\rho)} < \infty \, ,
\]
where the same argument from Proposition \ref{prop:ME_continuous} can be applied. 

To complete the proof, we simply observe that 
$|\phi \nabla f(x,z_i)| \in C^\infty_{\pol}(\R^d)$, 
and the bound can be obtained from the previous case. 

\end{proof}

\subsection{Proof of Lemma \cref{lm:bound_l2_norm}}
\label{subsec:bound_l2_norm}
We begin by restating the Lemma for easier reference.

\begin{lemma} [Sufficient to Bound $L^2$ norms]
A sufficient condition for uniform stability of $\pi^N_\mathbf{z}$ is 
if for each $k = 1, \ldots, N$, there exist a constant $C_{\mu_k}>0$, 
independent of data and $n$, such that
\begin{align} \label{eq:L2_norm}
\| \mu_{k,\mathbf{z}} q_{z_i} - 
    \mu_{k,\overline{\mathbf{z}}} q_{\overline{z_i}} \|_{
        L^2(\rho_{\mathbf{z}^{(i)}})} 
    := \left[ \int_{\mathbb{R}^d} (\mu_{k,\mathbf{z}} q_{z_i}
    - \mu_{k,\overline{\mathbf{z}}} q_{\overline{z_i}})^2
    d \rho_{\mathbf{z}^{(i)}}
    \right]^{1/2} 
    \leq \frac{ C_{\mu_k} }{n}.
\end{align}
\end{lemma}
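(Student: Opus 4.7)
The plan is to reduce the uniform stability condition directly to an $L^2(\rho_{\mb{z}^{(i)}})$ computation, so that the hypothesis is invoked term-by-term in the $\eta$-expansion of $\pi^N_{\mb{z}}$.

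First, I would expand the difference $\pi^N_{\mb{z}}(f(\cdot,z)) - \pi^N_{\ol{\mb{z}}}(f(\cdot,z))$ using the decomposition $\pi^N_{\mb{z}} = \rho_{\mb{z}} \sum_{k=0}^N \eta^k \mu_{k,\mb{z}}$ from \eqref{eq:appendix_mu_decomp}, and place the two Gibbs measures over a common reference via $\rho_{\mb{z}} = \rho_{\mb{z}^{(i)}} q_{z_i}$ and $\rho_{\ol{\mb{z}}} = \rho_{\mb{z}^{(i)}} q_{\ol{z}_i}$:
\begin{equation*}
\pi^N_{\mb{z}}(f(\cdot,z)) - \pi^N_{\ol{\mb{z}}}(f(\cdot,z)) = \sum_{k=0}^N \eta^k \int_{\R^d} f(x,z)\bigl(\mu_{k,\mb{z}} q_{z_i} - \mu_{k,\ol{\mb{z}}} q_{\ol{z}_i}\bigr) \, d\rho_{\mb{z}^{(i)}}(x) \,.
\end{equation*}
Cauchy--Schwarz applied termwise then yields
\begin{equation*}
\bigl| \pi^N_{\mb{z}}(f(\cdot,z)) - \pi^N_{\ol{\mb{z}}}(f(\cdot,z)) \bigr| \leq \sum_{k=0}^N \eta^k \, \| f(\cdot,z) \|_{L^2(\rho_{\mb{z}^{(i)}})} \, \bigl\| \mu_{k,\mb{z}} q_{z_i} - \mu_{k,\ol{\mb{z}}} q_{\ol{z}_i} \bigr\|_{L^2(\rho_{\mb{z}^{(i)}})} \,.
\end{equation*}

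Next, I would verify that $\| f(\cdot,z) \|_{L^2(\rho_{\mb{z}^{(i)}})}$ is finite and bounded uniformly in $z \in \mathcal{Z}$ and $\mb{z}^{(i)} \in \mathcal{Z}^{n-1}$. This follows from the polynomial growth of $f(\cdot,z)$ supplied by \cref{asm:smooth_loss} together with the uniform-in-data polynomial moment estimate for $\rho_{\mb{z}^{(i)}}$ given by \cref{lm:appendix_moment_bound}. Consequently, the overall bound reduces to controlling the $L^2$ norms $\| \mu_{k,\mb{z}} q_{z_i} - \mu_{k,\ol{\mb{z}}} q_{\ol{z}_i} \|_{L^2(\rho_{\mb{z}^{(i)}})}$.

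For $k \geq 1$, the hypothesis gives each of these norms a bound of $C_{\mu_k}/n$. The $k=0$ term, where $\mu_{0,\mb{z}} = \mu_{0,\ol{\mb{z}}} = 1$, reduces to controlling $\| q_{z_i} - q_{\ol{z}_i} \|_{L^2(\rho_{\mb{z}^{(i)}})}$, which is the classical uniform stability of the Gibbs density: writing both $q_{z_i}$ and $q_{\ol{z}_i}$ as exponentials with perturbations of order $1/n$ in the potential and in the log partition function, one controls this norm at $O(1/n)$ using the polynomial moments of $\rho_{\mb{z}^{(i)}}$. Summing the resulting $N+1$ terms and using $\eta \in (0,1)$ to bound the geometric tail yields $C/n$ uniformly over $z \in \mathcal{Z}$ and over the differing coordinate $i$, establishing that $\{\pi^N_{\mb{z}}\}$ is $O(1/n)$-uniformly stable in the sense of \cref{def:uniform_stability}.

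The main obstacle is the standalone $k=0$ estimate. Although the computation itself is routine, some care is needed to show that both the exponential factor $\exp(-\beta f(\cdot, z_i)/n)$ and the partition-function ratio $Z^{(i)}/Z$ versus $Z^{(i)}/\ol{Z}$ contribute at the $O(1/n)$ scale once tested against the polynomial tails of $\rho_{\mb{z}^{(i)}}$. Everything else is a transparent application of triangle and Cauchy--Schwarz inequalities combined with the assumed $L^2$ bounds.
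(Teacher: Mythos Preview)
Your approach is essentially the same as the paper's: rewrite $\pi^N_{\mb z}$ and $\pi^N_{\ol{\mb z}}$ against the common reference measure $\rho_{\mb z^{(i)}}$, apply the triangle inequality in $k$, then Cauchy--Schwarz, and bound $\|f(\cdot,z)\|_{L^2(\rho_{\mb z^{(i)}})}$ uniformly via the moment estimates of \cref{lm:appendix_moment_bound}. The paper then sums $\sum_{k=0}^N \eta^k C_{\mu_k}/n \le \widetilde C/(n(1-\eta))$ and is done.

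One point where you are actually more careful than the paper: the lemma as stated only hypothesizes the bound for $k=1,\dots,N$, while the paper's proof silently folds $k=0$ into the same sum as if a $C_{\mu_0}$ were supplied. You correctly isolate this as the separate task of bounding $\|q_{z_i}-q_{\ol z_i}\|_{L^2(\rho_{\mb z^{(i)}})}$, i.e.\ the classical Gibbs stability estimate, and note that it is $O(1/n)$ by the usual exponential perturbation argument combined with polynomial moment control. This is a genuine (minor) gap you have closed that the paper glosses over; in the paper's overall induction the $k=0$ case is eventually handled because $G_0=0$ feeds into \cref{lm:energy_est1}, but strictly within the scope of this lemma your treatment is the cleaner one.
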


\begin{proof} We start by rewriting the definition of $\pi^N_\mathbf{z}$
using the new definitions
\[ \pi^N_\mathbf{z} = \rho_\mathbf{z} \sum_{k=0}^N \eta^k \mu_{k, \mathbf{z}}
    = \rho_{\mathbf{z}^{(i)}} \sum_{k=0}^N \eta^k \mu_{k, \mathbf{z}} q_{z_i}.
\]
Using this fact, we can apply the triangle inequality 
on the uniform stability condition to get 
\begin{align*}
    \sup_{z \in \mathcal{Z}}
    \left| \int_{\mathbb{R}^d} f(x,z) 
        (\pi^N_\mathbf{z} - \pi^N_{\overline{\mathbf{z}}}) dx
    \right|
    &\leq \sup_{z \in \mathcal{Z}}
    \sum_{k=0}^N \eta^k
    \left| \int_{\mathbb{R}^d} f(x,z) \rho_{\mathbf{z}^{(i)}}
        (\mu_{k,\mathbf{z}} q_{z_i} - 
        \mu_{k,\overline{\mathbf{z}}} q_{\overline{z_i}}) dx
    \right| \\
    &\leq \sup_{z \in \mathcal{Z}}
    \sum_{k=0}^N \eta^k \| f(x,z) \|_{L^2( \rho_{\mathbf{z}^{(i)}} )}
        \| \mu_{k,\mathbf{z}} q_{z_i} - 
        \mu_{k,\overline{\mathbf{z}}} q_{\overline{z_i}} \|_{
            L^2( \rho_{\mathbf{z}^{(i)}} )},
\end{align*}
where the last line follows from the Cauchy-Schwarz inequality. Since $f(x,z)$ is has a bound independent of $z$ from Lemma \ref{lm:appendix_moment_bound}, 
we can define a bounded constant 
$C_1 := \sup_{z \in \mathcal{Z}} \| f(x,z) \|_{L^2( \rho_{\mathbf{z}^{(i)}} )}$. 
Using the premise of the Lemma statement, we have 
\[ \sup_{z \in \mathcal{Z}}
    \left| \int_{\mathbb{R}^d} f(x,z) 
        (\pi^N_\mathbf{z} - \pi^N_{\overline{\mathbf{z}}}) dx
    \right|
    \leq C_1 \sum_{k=0}^N \eta^k \frac{C_{\mu_k}}{n}
    \leq \frac{\widetilde C}{n(1-\eta)}, 
\]
for some new constant $\widetilde C$, hence proving uniform stability. 

\end{proof}

\subsection{Proof of Lemma \ref{lm:energy_est1}}
\label{subsec:energy_est1}

We will once again restate the Lemma. 

\begin{lemma}[Zeroth Order Energy Estimate]
For all $k \geq 0$, assume there exists a non-negative constant $C_{G_k}$ such that 
\[ \norm{G_{k, \mathbf{z}} q_{z_i} - 
    G_{k, \overline{\mathbf{z}}} q_{\overline{z_i}} }_{
            L^2\left( \rho_{\mathbf{z}^{(i)}} \right)} \leq \frac{C_{G_k}}{n}.
 \]
Then there exists another non-negative constant $C_{\mu_k}$ such that 
\[
\norm{ \mu_{k,\mathbf{z}} q_{z_i} - 
    \mu_{k,\overline{\mathbf{z}}} q_{\overline{z_i}}
    }_{
            L^2\left( \rho_{\mathbf{z}^{(i)}} \right)} \leq \frac{C_{\mu_k}}{n}.
\]
\end{lemma}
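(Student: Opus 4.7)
The plan is to run a standard Dirichlet-form energy estimate, but written relative to the common base measure $\rho_{\mathbf{z}^{(i)}}$ rather than $\rho_{\mathbf{z}}$ or $\rho_{\overline{\mathbf{z}}}$, so that the two Poisson equations \eqref{Eq:poisson_z} and \eqref{Eq:poisson_zbar} can be subtracted cleanly. Let $u := \mu_{k,\mathbf{z}} q_{z_i} - \mu_{k,\overline{\mathbf{z}}} q_{\overline{z_i}}$; we want an $L^2(\rho_{\mathbf{z}^{(i)}})$ bound on $u$ of order $1/n$ given the hypothesis on the right-hand sides.

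The first step is to pair each Poisson equation against a common smooth test function $\phi$ and use the self-adjointness of $L_\mathbf{z}$ in $L^2(\rho_\mathbf{z})$, together with $d\rho_\mathbf{z} = q_{z_i}\,d\rho_{\mathbf{z}^{(i)}}$, to write
\begin{equation*}
    -\tfrac{1}{\beta} \int \langle \nabla \mu_{k,\mathbf{z}}, \nabla \phi \rangle \, q_{z_i}\, d\rho_{\mathbf{z}^{(i)}}
    = \int G_{k,\mathbf{z}}\, \phi\, q_{z_i}\, d\rho_{\mathbf{z}^{(i)}},
\end{equation*}
and similarly with $\overline{z_i}$. Subtracting and rewriting $q_{z_i}\nabla\mu_{k,\mathbf{z}} = \nabla(\mu_{k,\mathbf{z}} q_{z_i}) - \mu_{k,\mathbf{z}} \nabla q_{z_i}$ gives an identity of the form
\begin{equation*}
    -\tfrac{1}{\beta}\int \langle \nabla u - R, \nabla\phi\rangle\, d\rho_{\mathbf{z}^{(i)}}
    = \int (G_{k,\mathbf{z}} q_{z_i} - G_{k,\overline{\mathbf{z}}} q_{\overline{z_i}})\, \phi\, d\rho_{\mathbf{z}^{(i)}},
\end{equation*}
where $R := \mu_{k,\mathbf{z}}\nabla q_{z_i} - \mu_{k,\overline{\mathbf{z}}}\nabla q_{\overline{z_i}}$. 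The crucial point is that $\nabla q_{z_i} = -\frac{\beta}{n}\nabla f(\cdot,z_i)\, q_{z_i}$ (and similarly for $\overline{z_i}$), so $R$ carries an explicit prefactor of $1/n$.

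Next, I would test with $\phi = u$ to obtain
\begin{equation*}
    \tfrac{1}{\beta}\int |\nabla u|^2\, d\rho_{\mathbf{z}^{(i)}}
    = \tfrac{1}{\beta}\int \langle R, \nabla u\rangle\, d\rho_{\mathbf{z}^{(i)}}
    - \int (G_{k,\mathbf{z}} q_{z_i} - G_{k,\overline{\mathbf{z}}} q_{\overline{z_i}})\, u\, d\rho_{\mathbf{z}^{(i)}},
\end{equation*}
then apply Cauchy--Schwarz on both right-hand terms and the Poincar\'{e} inequality of \cref{lemma:spectral_gap} to the second (controlling $\|u\|_{L^2(\rho_{\mathbf{z}^{(i)}})}$ by $\|\nabla u\|_{L^2(\rho_{\mathbf{z}^{(i)}})}/\sqrt{\lambda'}$). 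Dividing through by $\|\nabla u\|_{L^2}$ yields $\|\nabla u\|_{L^2(\rho_{\mathbf{z}^{(i)}})} \le C/n$, and a second application of Poincar\'{e} transfers this to $\|u\|_{L^2(\rho_{\mathbf{z}^{(i)}})} \le C_{\mu_k}/n$, which is the claim.

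Two technical points deserve care. First, the Poincar\'{e} inequality requires the centering $\int u\, d\rho_{\mathbf{z}^{(i)}} = 0$; this follows from the standard normalization $\int \mu_{k,\mathbf{z}}\, d\rho_\mathbf{z} = 0$ for $k \ge 1$ (and from $\int q_{z_i}\, d\rho_{\mathbf{z}^{(i)}} = 1 = \int q_{\overline{z_i}}\, d\rho_{\mathbf{z}^{(i)}}$ in the trivial case $k=0$, where $\mu_0 \equiv 1$). Second, the remainder term $R$ must be controlled uniformly: by the $1/n$ factor it suffices to bound $\|\mu_{k,\mathbf{z}} \nabla f(\cdot,z_i)\, q_{z_i}\|_{L^2(\rho_{\mathbf{z}^{(i)}})}$ and its $\overline{\mathbf{z}}$ analogue, which is finite by \cref{asm:smooth_loss}, the polynomial regularity of $\mu_{k,\mathbf{z}}$ from \cref{prop:appendix_poisson_regularity}, and the moment bounds in \cref{lm:appendix_moment_bound}. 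The main obstacle is the latter bookkeeping --- verifying that $\mu_{k,\mathbf{z}} \in C^\infty_{\pol}(\mathbb{R}^d)$ with polynomial-growth bounds that are uniform in $\mathbf{z}$, so that the constants produced at each $k$ do not blow up in $n$ --- but this reduces to already-established polynomial regularity of Poisson solutions.
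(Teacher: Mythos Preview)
Your proposal is correct and follows essentially the same route as the paper: write the weak/Dirichlet form of each Poisson equation against a test function, pass to the common base measure $\rho_{\mathbf{z}^{(i)}}$ via $d\rho_{\mathbf{z}} = q_{z_i}\,d\rho_{\mathbf{z}^{(i)}}$, use the product rule $q_{z_i}\nabla\mu_{k,\mathbf{z}} = \nabla(\mu_{k,\mathbf{z}} q_{z_i}) - \mu_{k,\mathbf{z}}\nabla q_{z_i}$, subtract, test with $g=u$, then apply Cauchy--Schwarz and the Poincar\'e inequality of \cref{lemma:spectral_gap} to convert the $\|\nabla u\|$ bound into the desired $\|u\|$ bound. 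Your observation that $\nabla q_{z_i} = -\tfrac{\beta}{n}\nabla f(\cdot,z_i)q_{z_i}$ supplies the $1/n$ for the remainder $R$ is exactly the mechanism the paper uses, and your explicit check of the centering $\int u\,d\rho_{\mathbf{z}^{(i)}}=0$ (from $\int \mu_{k,\mathbf{z}}\,d\rho_{\mathbf{z}}=0$ for $k\ge 1$ and $\int q_{z_i}\,d\rho_{\mathbf{z}^{(i)}}=1$ for $k=0$) is a point the paper uses implicitly here and only addresses explicitly later in \cref{cor:gen_energy_est}.
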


\begin{proof} %

We first recall existence and uniqueness results for Poisson equations from Lemma \ref{prop:poisson_exist_unique}.
\begin{align} 
&L_\mathbf{z} \mu_{k, \mathbf{z}} 
    = G_{k, \mathbf{z}}, 
    \label{Eq:appendix_poisson_z}
    \\
   & L_{\overline{\mathbf{z}}} \mu_{k, \overline{\mathbf{z}}} = G_{k, \overline{\mathbf{z}}}. 
    \label{Eq:appendix_poisson_zbar}
\end{align}
Then for all $g \in C^\infty_{\pol}(\mathbb{R}^d)$, we can write the weak formulation associated to Equation \eqref{Eq:appendix_poisson_z}

\[ \int_{\mathbb{R}^d} - g L_\mathbf{z} 
    \mu_{k, \mathbf{z}} d \rho_\mathbf{z}
    = \int_{\mathbb{R}^d} - g G_{k,\mathbf{z}}
        d \rho_\mathbf{z}.
\]
Applying the Green's Theorem to the left hand side term gives
\[ \int_{\mathbb{R}^d} - g L_\mathbf{z} 
    \mu_{k, \mathbf{z}} d \rho_\mathbf{z}
    = \frac{1}{\beta} \int_{\mathbb{R}^d}
        \langle \nabla g, 
            \nabla \mu_{k, \mathbf{z}} \rangle
        d \rho_\mathbf{z}
    = \frac{1}{\beta} \int_{\mathbb{R}^d}
        \langle \nabla g, 
            \nabla \mu_{k, \mathbf{z}} q_{z_i} \rangle
        d \rho_{\mathbf{z}^{(i)}}.
\]
Since $\nabla \mu_{k, \mathbf{z}} q_{z_i}
 = \nabla (\mu_{k, \mathbf{z}} q_{z_i})
    - \mu_{k, \mathbf{z}} \nabla q_{z_i}$, 
we can write 
\begin{equation}\label{var_form_z}
\frac{1}{\beta} \int_{\mathbb{R}^d}
    \langle \nabla g, 
        \nabla (\mu_{k, \mathbf{z}} q_{z_i})
    \rangle d\rho_{\mathbf{z}^{(i)}}
    = \int_{\mathbb{R}^d} - g G_{k,\mathbf{z}}
        q_{z_i} d \rho_{\mathbf{z}^{(i)}}
    + \frac{1}{\beta} \int_{\mathbb{R}^d}
    \langle \nabla g, 
        \mu_{k, \mathbf{z}} \nabla q_{z_i}
    \rangle d\rho_{\mathbf{z}^{(i)}}.
\end{equation}
We proceed similarly for Equation \eqref{Eq:appendix_poisson_zbar} and write
\begin{equation}\label{var_form_zbar}
\frac{1}{\beta} \int_{\mathbb{R}^d}
    \langle \nabla g, 
        \nabla (\mu_{k, \overline{\mathbf{z}}} q_{\overline{z_i}})
    \rangle d\rho_{\mathbf{z}^{(i)}}
    = \int_{\mathbb{R}^d} - g G_{k,\overline{\mathbf{z}}}
        q_{\overline{z_i}} d \rho_{\mathbf{z}^{(i)}}
    + \frac{1}{\beta} \int_{\mathbb{R}^d}
    \langle \nabla g, 
        \mu_{k,\overline{\mathbf{z}}} \nabla q_{\overline{z_i}}
    \rangle d\rho_{\mathbf{z}^{(i)}}.
\end{equation}
We now take the difference of the two integral formulations \ref{var_form_z} and \ref{var_form_zbar}. 
Applying the Cauchy-Schwarz inequality in $L^2\left( \rho_{\mathbf{z}^{(i)}}\right)$, we get 
\begin{equation}
\label{eq:stability_energy_est}
\begin{aligned}
    \frac{1}{\beta}& \int_{\mathbb{R}^d}
    \langle \nabla g, 
        \nabla (\mu_{k, \mathbf{z}} q_{z_i}
    - \mu_{k, \overline{\mathbf{z}}}
        q_{\overline{z_i}} )
    \rangle d\rho_{\mathbf{z}^{(i)}} \\
    &=  \int_{\mathbb{R}^d} - g ( G_{k,\mathbf{z}}
        q_{z_i} - 
        G_{k, \overline{\mathbf{z}}} 
        q_{\overline{z_i}} ) 
        d \rho_{\mathbf{z}^{(i)}}
    + \frac{1}{\beta} \int_{\mathbb{R}^d}
    \langle \nabla g, 
        \mu_{k, \mathbf{z}} \nabla q_{z_i}
    - \mu_{k, \overline{\mathbf{z}}} 
        \nabla q_{\overline{z_i}}
    \rangle d\rho_{\mathbf{z}^{(i)}} \\
    &\leq  \norm{g}_{L^2\left( \rho_{\mathbf{z}^{(i)}}\right)}\norm{ G_{k,\mathbf{z}}
        q_{z_i} - G_{k, \overline{\mathbf{z}}} q_{\overline{z_i}} }_{L^2\left( \rho_{\mathbf{z}^{(i)}}\right)}  
        + \frac{1}{\beta}\norm{\nabla g}_{L^2\left( \rho_{\mathbf{z}^{(i)}}\right)} \norm{\mu_{k, \mathbf{z}} \nabla q_{z_i}
    - \mu_{k, \overline{\mathbf{z}}} 
        \nabla q_{\overline{z_i}}}_{L^2\left( \rho_{\mathbf{z}^{(i)}}\right)} \, .
\end{aligned}
\end{equation}
Taking $g = \mu_{k, \mathbf{z}} q_{z_i}
    - \mu_{k, \overline{\mathbf{z}}}
        q_{\overline{z_i}}$ and using Lemma \ref{lemma:spectral_gap}, on the spectral gap for  $\rho_{\mathbf{z}^{(i)}}$, we obtain that
\begin{equation}
\begin{aligned}
    &\quad\, \frac{1}{\beta} \norm{\nabla \left(\mu_{k, \mathbf{z}} q_{z_i}
    - \mu_{k, \overline{\mathbf{z}}}
        q_{\overline{z_i}} \right) }_{L^2\left( \rho_{\mathbf{z}^{(i)}}\right)} \\
    &\leq \frac{1}{\lambda'}\norm{ G_{k,\mathbf{z}}
        q_{z_i} - G_{k, \overline{\mathbf{z}}} q_{\overline{z_i}} }_{L^2\left( \rho_{\mathbf{z}^{(i)}}\right)}  
        + \frac{1}{\beta} \norm{\mu_{k, \mathbf{z}} \nabla q_{z_i}
    - \mu_{k, \overline{\mathbf{z}}} 
        \nabla q_{\overline{z_i}}}_{L^2\left( \rho_{\mathbf{z}^{(i)}}\right)} \, .
\end{aligned}
\end{equation}
From the assumption in this Lemma and from the equality 
\[ 
\nabla q_{z_i} = \frac{-\beta}{n} \nabla f(x, z_i) q_{z_i},
\]
we conclude that
\begin{equation}
\begin{aligned}
   \norm{\nabla \left(\mu_{k, \mathbf{z}} q_{z_i}
    - \mu_{k, \overline{\mathbf{z}}}
        q_{\overline{z_i}} \right) }_{L^2\left( \rho_{\mathbf{z}^{(i)}}\right)} 
    \leq  \frac{\beta}{\lambda'} \frac{C_{G_k}}{n}  
        +  \frac{1}{n}\norm{h}_{L^2\left( \rho_{\mathbf{z}^{(i)}}\right)},
\end{aligned}
\end{equation}
where $h = \mu_{k, \mathbf{z}} \nabla f(x,z_i) q_{z_i} - \mu_{k, \overline{\mathbf{z}}} \nabla f(x, \overline{z_i}) q_{\overline{z_i}}$, 
and we can use Lemma \ref{lm:appendix_moment_bound} 
to bound the norm of $h$. 
We conclude the proof using the Poincar\'e inequality.

\end{proof}

\subsection{A Couple of Corollaries}

\begin{corollary}
[Generalized Zeroth Order Energy Estimate]
\label{cor:gen_energy_est}
If $u_\mathbf{z}, u_{\overline{\mathbf{z}}}, 
G_\mathbf{z}, G_{\overline{\mathbf{z}}}$ are known functions 
that satisfy the pair of PDEs
\[ \begin{cases}
    L_\mathbf{z} u_\mathbf{z} = G_\mathbf{z}, \\
    L_{\overline{\mathbf{z}}} u_{\overline{\mathbf{z}}} = G_{\overline{\mathbf{z}}},
\end{cases}
\]
and there exist constants $C_{\overline{u}} > 0, C_G > 0$ 
independent of data and $n$, such that 
\begin{align*}
    \left| \int_{\mathbb{R}^d} u_\mathbf{z} d\rho_\mathbf{z}
    - \int_{\mathbb{R}^d} u_{\overline{\mathbf{z}}} d\rho_{\overline{\mathbf{z}}} 
    \right|
    &\leq \frac{C_{\overline{u}}}{n} \, ,  \\
    \| G_{\mathbf{z}} q_{z_i} - 
        G_{\overline{\mathbf{z}}} q_{\overline{z_i}} \|_{
            L^2( \rho_{\mathbf{z}^{(i)}} )} 
    &\leq \frac{C_{G}}{n} \, . 
\end{align*}

Then there exists a new constant $C_u > 0$ 
such that 
\[ \| u_{\mathbf{z}} q_{z_i} - 
    u_{\overline{\mathbf{z}}} q_{\overline{z_i}} \|_{
            L^2( \rho_{\mathbf{z}^{(i)}} )}
    \leq \frac{C_{u}}{n}
    \, .
\]

\end{corollary}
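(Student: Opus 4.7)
The plan is to adapt the proof of \cref{lm:energy_est1}, whose argument gave an $O(1/n)$ bound on the \emph{gradient} of $u_{\mathbf{z}}q_{z_i}-u_{\overline{\mathbf{z}}}q_{\overline{z_i}}$ in $L^2(\rho_{\mathbf{z}^{(i)}})$, and then handle the fact that—unlike the special case $u=\mu_{k,\mathbf{z}}$, which is automatically centered under $\rho_{\mathbf{z}}$ by construction of $\pi^N_{\mathbf{z}}$—we no longer have zero mean. The new hypothesis on $\bigl|\int u_{\mathbf{z}} d\rho_{\mathbf{z}}-\int u_{\overline{\mathbf{z}}} d\rho_{\overline{\mathbf{z}}}\bigr|\leq C_{\overline u}/n$ is exactly what restores control of the mean.

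First I would repeat verbatim the weak-formulation computation leading to \eqref{eq:stability_energy_est}, but with $\mu_{k,\cdot}$ replaced by $u_{\cdot}$. Taking the test function $g=u_{\mathbf{z}}q_{z_i}-u_{\overline{\mathbf{z}}}q_{\overline{z_i}}$, the right-hand side splits into (i) $\|g\|_{L^2(\rho_{\mathbf{z}^{(i)}})}\cdot\|G_{\mathbf{z}}q_{z_i}-G_{\overline{\mathbf{z}}}q_{\overline{z_i}}\|_{L^2(\rho_{\mathbf{z}^{(i)}})}$, controlled by the hypothesis on $G$, and (ii) $\tfrac{1}{\beta}\|\nabla g\|_{L^2(\rho_{\mathbf{z}^{(i)}})}\cdot\|u_{\mathbf{z}}\nabla q_{z_i}-u_{\overline{\mathbf{z}}}\nabla q_{\overline{z_i}}\|_{L^2(\rho_{\mathbf{z}^{(i)}})}$. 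Using $\nabla q_{z_i}=-\tfrac{\beta}{n}\nabla f(\cdot,z_i)q_{z_i}$ together with \cref{lm:appendix_moment_bound}, the latter factor is $O(1/n)$ (polynomial growth of $u$ is inherited via \cref{prop:poisson_exist_unique} applied to $G$). After absorbing one factor of $\|\nabla g\|_{L^2(\rho_{\mathbf{z}^{(i)}})}$, one obtains the gradient bound
\begin{equation*}
\bigl\|\nabla\bigl(u_{\mathbf{z}}q_{z_i}-u_{\overline{\mathbf{z}}}q_{\overline{z_i}}\bigr)\bigr\|_{L^2(\rho_{\mathbf{z}^{(i)}})}\;\leq\;\frac{C_1}{n}.
\end{equation*}

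Next comes the step that distinguishes the corollary from \cref{lm:energy_est1}: a centering argument. Set
\begin{equation*}
c_n\;:=\;\int_{\mathbb{R}^d}\bigl(u_{\mathbf{z}}q_{z_i}-u_{\overline{\mathbf{z}}}q_{\overline{z_i}}\bigr)\,d\rho_{\mathbf{z}^{(i)}}\;=\;\int_{\mathbb{R}^d}u_{\mathbf{z}}\,d\rho_{\mathbf{z}}-\int_{\mathbb{R}^d}u_{\overline{\mathbf{z}}}\,d\rho_{\overline{\mathbf{z}}},
\end{equation*}
where I used $\rho_{\mathbf{z}^{(i)}}q_{z_i}=\rho_{\mathbf{z}}$ (and analogously for the barred version). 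By the first hypothesis, $|c_n|\leq C_{\overline u}/n$. Applying the Poincar\'e inequality of \cref{lemma:spectral_gap} to the centered function $g-c_n$ yields $\|g-c_n\|_{L^2(\rho_{\mathbf{z}^{(i)}})}\leq\lambda'^{-1/2}\|\nabla g\|_{L^2(\rho_{\mathbf{z}^{(i)}})}\leq C_1/(n\sqrt{\lambda'})$, and the triangle inequality $\|g\|_{L^2(\rho_{\mathbf{z}^{(i)}})}\leq\|g-c_n\|_{L^2(\rho_{\mathbf{z}^{(i)}})}+|c_n|$ (noting $\|1\|_{L^2(\rho_{\mathbf{z}^{(i)}})}=1$) closes the estimate with $C_u=C_1/\sqrt{\lambda'}+C_{\overline u}$.

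The only place where I expect any care is verifying that the auxiliary norm $\|u_{\mathbf{z}}\nabla f(\cdot,z_i)q_{z_i}-u_{\overline{\mathbf{z}}}\nabla f(\cdot,\overline{z_i})q_{\overline{z_i}}\|_{L^2(\rho_{\mathbf{z}^{(i)}})}$ is truly $O(1)$ uniformly in $n$ and data—this relies on the polynomial-growth regularity of $u_{\mathbf{z}}$ inherited from the Poisson equation and on \cref{lm:appendix_moment_bound}, together with the triangle-inequality decomposition $u_{\mathbf{z}}\nabla f_i q_{z_i}-u_{\overline{\mathbf{z}}}\nabla \bar f_i q_{\overline{z_i}}=u_{\mathbf{z}}\nabla f_i q_{z_i}+u_{\overline{\mathbf{z}}}\nabla\bar f_iq_{\overline{z_i}}$ treating each term separately. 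Everything else is a mechanical repetition of the \cref{lm:energy_est1} argument, so the real content of the corollary is precisely the centering trick made available by the new first hypothesis.
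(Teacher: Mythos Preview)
Your proposal is correct and follows essentially the same approach as the paper: both recognize that the only new ingredient beyond \cref{lm:energy_est1} is that $g=u_{\mathbf{z}}q_{z_i}-u_{\overline{\mathbf{z}}}q_{\overline{z_i}}$ need not be centered under $\rho_{\mathbf{z}^{(i)}}$, and both use the first hypothesis to bound the mean $\overline g=c_n$ by $C_{\overline u}/n$ before applying Poincar\'e to $g-\overline g$. The paper states this via the Pythagorean form $\|g\|^2=\|g-\overline g\|^2+\overline g^{\,2}\le (\lambda')^{-1}\|\nabla g\|^2+\overline g^{\,2}$ while you use the triangle inequality, but the arguments are equivalent.
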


\begin{proof}

Observe the only condition from Lemma \ref{lm:energy_est1}
that we fail to satisfy is that 
$\overline u := \int_{\mathbb{R}^d} u_\mathbf{z} d\rho_\mathbf{z} \neq 0$.
As a result, we need to use the Poincar\'{e} inequality 
on the centered function, 
i.e. for all $g \in C^\infty_{\pol}(\mathbb{R}^d)$ we have 
\[ \| g \|^2_{ L^2( \rho_{ \mathbf{z}^{(i)} } ) }
    = \| g - \overline g \|^2_{ L^2( \rho_{ \mathbf{z}^{(i)} } ) }
    + (\overline g)^2
    \leq \frac{1}{\lambda'}
        \| \nabla g \|^2_{ L^2( \rho_{ \mathbf{z}^{(i)} } ) }
        + (\overline g)^2 \, .
\]

Letting $g = u_{\mathbf{z}} q_{z_i} - 
u_{\overline{\mathbf{z}}} q_{\overline{z_i}}$, 
it is then sufficient to bound $(\overline g)^2$. 
To complete the proof, we can rewrite $g$ 
to match the assumption 
\[ (\overline g)^2 
    = \left( \int_{\mathbb{R}^d} u_{\mathbf{z}} q_{z_i} - 
    u_{\overline{\mathbf{z}}} q_{\overline{z_i}} d \rho_{\mathbf{z}^{(i)}}
    \right)^2
    \leq \left( \frac{ C_{\overline u} }{ n } \right)^2 \, ,
\]
which gives us the desired bound of 
\[ \| u_{\mathbf{z}} q_{z_i} - 
    u_{\overline{\mathbf{z}}} q_{\overline{z_i}} 
    \|^2_{L^2( \rho_{\mathbf{z}^{(i)}} )}
    \leq \frac{ C_{u - \overline u}^2 + C_{\overline{u}}^2}{n^2}
    \, ,
\]
where $C_{u - \overline u}$ is the same as $C_{\mu_k}$ 
from Lemma \ref{lm:energy_est1}.

\end{proof}

\begin{corollary}[First Order Energy Estimate]
\label{cor:energy_est1_improve}
If there exist a constant $C_{G_k}>0$ such that 
\[ \|G_{k, \mathbf{z}} q_{z_i} - 
    G_{k, \overline{\mathbf{z}}} q_{\overline{z_i}} \|_{
            L^2( \rho_{\mathbf{z}^{(i)}} )} \leq \frac{C_{G_k}}{n}, 
\]
then there exist new constant $C_{\nabla \mu_k}>0$ such that 
\[ \| \nabla \mu_{k,\mathbf{z}} q_{z_i} - 
    \nabla \mu_{k,\overline{\mathbf{z}}} q_{\overline{z_i}} \|_{
            L^2( \rho_{\mathbf{z}^{(i)}} )}
    \leq \frac{C_{\nabla \mu_k}}{n}.
\]
\end{corollary}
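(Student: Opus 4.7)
The plan is to piggyback directly on the computation already carried out in the proof of \cref{lm:energy_est1}. Recall that, after choosing $g = \mu_{k,\mathbf{z}} q_{z_i} - \mu_{k,\overline{\mathbf{z}}} q_{\overline{z_i}}$ in the weak formulation and applying the Poincar\'e inequality of \cref{lemma:spectral_gap}, the intermediate estimate obtained there is
\[
\norm{\nabla(\mu_{k,\mathbf{z}} q_{z_i} - \mu_{k,\overline{\mathbf{z}}} q_{\overline{z_i}})}_{L^2(\rho_{\mathbf{z}^{(i)}})} \leq \frac{\beta}{\lambda'}\frac{C_{G_k}}{n} + \frac{1}{n}\norm{h}_{L^2(\rho_{\mathbf{z}^{(i)}})},
\]
where $h := \mu_{k,\mathbf{z}}\nabla f(x,z_i) q_{z_i} - \mu_{k,\overline{\mathbf{z}}}\nabla f(x,\overline{z_i}) q_{\overline{z_i}}$, and $\norm{h}_{L^2(\rho_{\mathbf{z}^{(i)}})}$ is finite and independent of $n$ by \cref{lm:appendix_moment_bound} together with the polynomial growth of $\mu_{k,\mathbf{z}}$ guaranteed by \cref{prop:appendix_poisson_regularity}.

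Next, I would apply the product rule to split the gradient as
\[
\nabla\bigl(\mu_{k,\mathbf{z}} q_{z_i} - \mu_{k,\overline{\mathbf{z}}} q_{\overline{z_i}}\bigr) = \bigl[(\nabla \mu_{k,\mathbf{z}}) q_{z_i} - (\nabla \mu_{k,\overline{\mathbf{z}}}) q_{\overline{z_i}}\bigr] + \bigl[\mu_{k,\mathbf{z}} \nabla q_{z_i} - \mu_{k,\overline{\mathbf{z}}} \nabla q_{\overline{z_i}}\bigr],
\]
and use the triangle inequality in $L^2(\rho_{\mathbf{z}^{(i)}})$ to isolate the target quantity:
\[
\norm{(\nabla \mu_{k,\mathbf{z}}) q_{z_i} - (\nabla \mu_{k,\overline{\mathbf{z}}}) q_{\overline{z_i}}}_{L^2(\rho_{\mathbf{z}^{(i)}})} \leq \norm{\nabla(\mu_{k,\mathbf{z}} q_{z_i} - \mu_{k,\overline{\mathbf{z}}} q_{\overline{z_i}})}_{L^2(\rho_{\mathbf{z}^{(i)}})} + \norm{\mu_{k,\mathbf{z}} \nabla q_{z_i} - \mu_{k,\overline{\mathbf{z}}} \nabla q_{\overline{z_i}}}_{L^2(\rho_{\mathbf{z}^{(i)}})}.
\]
The first term on the right is $O(1/n)$ by the displayed intermediate estimate. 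For the second term, I would use the explicit identity $\nabla q_{z_i} = -\frac{\beta}{n}\nabla f(\cdot,z_i)\, q_{z_i}$ (and analogously for $\overline{z}_i$) to recognise it as exactly $\frac{\beta}{n}\norm{h}_{L^2(\rho_{\mathbf{z}^{(i)}})}$, which is also $O(1/n)$ by the moment estimate noted above. Combining these two bounds yields the desired control with constant $C_{\nabla \mu_k} = \frac{\beta}{\lambda'} C_{G_k} + (1+\beta)\sup \norm{h}_{L^2(\rho_{\mathbf{z}^{(i)}})}$.

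The main (and only) obstacle is ensuring $\norm{h}_{L^2(\rho_{\mathbf{z}^{(i)}})}$ is bounded uniformly in $n$ and in the choice of $z_i, \overline{z}_i \in \mathcal{Z}$. This was already addressed at the end of the proof of \cref{lm:energy_est1} via the polynomial growth of $\mu_{k,\mathbf{z}}$ and $\nabla f(\cdot,z_i)$ combined with \cref{lm:appendix_moment_bound}, so no new machinery is required; the corollary follows by essentially reading a different line of the same chain of inequalities.
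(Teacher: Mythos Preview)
Your proposal is correct and follows essentially the same route as the paper: both split via the product rule $\nabla\mu\, q = \nabla(\mu q) - \mu\,\nabla q$, control the first piece using the gradient bound $\|\nabla g\|_{L^2(\rho_{\mathbf z^{(i)}})} = O(1/n)$ already obtained inside the proof of \cref{lm:energy_est1}, and handle the second piece via the identity $\nabla q_{z_i} = -\tfrac{\beta}{n}\nabla f(\cdot,z_i)\,q_{z_i}$ together with \cref{lm:appendix_moment_bound}. The only cosmetic difference is that the paper phrases the first bound as $T_1 \le (\lambda')^{1/2} C_{\mu_k}/n$ rather than quoting the intermediate line you cite, but these are the same estimate.
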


\begin{proof}

We start by using the product rule 
$\nabla \mu_{k,\mathbf{z}} q_{z_i}
= \nabla ( \mu_{k,\mathbf{z}} q_{z_i} )
- \mu_{k,\mathbf{z}} \nabla q_{z_i}$
to write 
\begin{equation}
\begin{aligned}
    &\quad \| \nabla \mu_{k,\mathbf{z}} q_{z_i} - 
    \nabla \mu_{k,\overline{\mathbf{z}}} q_{\overline{z_i}} \|_{
            L^2( \rho_{\mathbf{z}^{(i)}} )} \\
    &\leq
    \| \nabla ( \mu_{k,\mathbf{z}} q_{z_i} - 
        \mu_{k,\overline{\mathbf{z}}} q_{\overline{z_i}} ) \|_{
            L^2( \rho_{\mathbf{z}^{(i)}} )}
    + 
    \| \mu_{k,\mathbf{z}} \nabla q_{z_i} - 
     \mu_{k,\overline{\mathbf{z}}} \nabla q_{\overline{z_i}} \|_{
            L^2( \rho_{\mathbf{z}^{(i)}} )} \\
    &=: T_1 + T_2.
\end{aligned}
\end{equation}

Now observe that in the proof of Lemma \ref{lm:energy_est1}, 
we already proved a bound for 
$T_1 = \| \nabla g \|_{ L^2( \rho_{\mathbf{z}^{(i)}} ) }$, 
which we can write as 
\[ T_1 \leq \frac{ (\lambda')^{1/2} C_{\mu_k} }{n}.
\]

To control $T_2$, we simply need to compute 
$\nabla q_{z_i} = \frac{-\beta}{n} \nabla f(x, z_i) q_{z_i} $ to get
\[ T_2 \leq \frac{\beta}{n} \| \mu_{k,\mathbf{z}} \nabla f(x, z_i) q_{z_i} - 
     \mu_{k,\overline{\mathbf{z}}} \nabla f(x, \overline{z_i}) 
        q_{\overline{z_i}} \|_{
            L^2( \rho_{\mathbf{z}^{(i)}} )}.
\]

Denoting $h = |\mu_{k,\mathbf{z}} \nabla f(x, z_i) q_{z_i} - 
     \mu_{k,\overline{\mathbf{z}}} \nabla f(x, \overline{z_i}) 
        q_{\overline{z_i}}|$, 
we can put the two bounds together and get 
the desired result  
\[ \| \nabla \mu_{k,\mathbf{z}} q_{z_i} - 
    \nabla \mu_{k,\overline{\mathbf{z}}} q_{\overline{z_i}} \|_{
            L^2( \rho_{\mathbf{z}^{(i)}} )}
    \leq \frac{ (\lambda')^{1/2} C_{\mu_k} + 
        \beta \| h \|_{L^2( \rho_{\mathbf{z}^{(i)}} )} }{ n }, 
\]
where we can provide a bound on $h$ using 
Lemma \ref{lm:appendix_moment_bound}. 

\end{proof}

\subsection{Energy Estimate with Weighted Norm}

\begin{lemma}[Energy Estimate with $C^\infty_{\pol}(\mathbb{R}^d)$ Coefficient]
\label{lm:energy_est1_pol}
If for all $\phi \in C^\infty_{\pol}(\mathbb{R}^d)$, 
there exists a constant $C_{\phi G_k}>0$ such that 
\[ \| \phi G_{k, \mathbf{z}} q_{z_i} - 
    \phi G_{k, \overline{\mathbf{z}}} q_{\overline{z_i}} \|_{
            L^2( \rho_{\mathbf{z}^{(i)}} )} \leq \frac{C_{\phi G_k}}{n}, 
\]
then there exists a new constant $C_{\phi\mu_k}>0$ 
depending on $\phi$ such that 
\[ \| \phi \mu_{k,\mathbf{z}} q_{z_i} - 
    \phi \mu_{k,\overline{\mathbf{z}}} q_{\overline{z_i}} \|_{
            L^2( \rho_{\mathbf{z}^{(i)}} )}
    \leq \frac{C_{\phi\mu_k}}{n}.
\]
\end{lemma}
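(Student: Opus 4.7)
The plan is to mimic the proof of Lemma~\ref{lm:energy_est1} but with a polynomially weighted test function that carries the factor $\phi$ through the energy estimate. Let $\psi := \mu_{k,\mathbf{z}} q_{z_i} - \mu_{k,\overline{\mathbf{z}}} q_{\overline{z_i}}$ and $\Delta G := G_{k,\mathbf{z}} q_{z_i} - G_{k,\overline{\mathbf{z}}} q_{\overline{z_i}}$. As a preliminary I will apply the hypothesis with $\phi \equiv 1$ and combine with Lemma~\ref{lm:energy_est1} and Corollary~\ref{cor:energy_est1_improve} to obtain the baseline estimates $\|\psi\|_{L^2(\rho_{\mathbf{z}^{(i)}})} \leq C/n$ and $\|\nabla\psi\|_{L^2(\rho_{\mathbf{z}^{(i)}})} \leq C/n$, which will control the unweighted remainder terms that appear later.

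Next, I will subtract the two weak formulations of the Poisson equations $L_{\mathbf{z}}\mu_{k,\mathbf{z}} = G_{k,\mathbf{z}}$ and $L_{\overline{\mathbf{z}}}\mu_{k,\overline{\mathbf{z}}} = G_{k,\overline{\mathbf{z}}}$, using the test function $g = \phi^2 \psi$. Expanding $\nabla g = 2\phi\psi\,\nabla\phi + \phi^2\nabla\psi$ and integrating the cross term $\int \phi\,\langle\nabla\phi, \nabla(\psi^2)\rangle\,d\rho_{\mathbf{z}^{(i)}}$ by parts against $\rho_{\mathbf{z}^{(i)}}$ (using $\nabla\rho_{\mathbf{z}^{(i)}} = -\beta \nabla F_{\mathbf{z}^{(i)}}\rho_{\mathbf{z}^{(i)}}$) will yield an identity of the form
\[
\tfrac{1}{\beta}\int \phi^2 |\nabla\psi|^2\,d\rho_{\mathbf{z}^{(i)}} + \tfrac{1}{\beta}\int \widetilde\phi\,\psi^2\,d\rho_{\mathbf{z}^{(i)}} = -\int \phi^2 \psi \,\Delta G\,d\rho_{\mathbf{z}^{(i)}} + \tfrac{1}{\beta}\int \langle \nabla(\phi^2\psi), h\rangle\,d\rho_{\mathbf{z}^{(i)}},
\]
where $\widetilde\phi \in C^\infty_{\pol}(\mathbb{R}^d)$ collects $\phi\Delta\phi$, $|\nabla\phi|^2$, and $\beta\phi\langle\nabla\phi, \nabla F_{\mathbf{z}^{(i)}}\rangle$, and $h := \mu_{k,\mathbf{z}}\nabla q_{z_i} - \mu_{k,\overline{\mathbf{z}}}\nabla q_{\overline{z_i}}$ is order $1/n$ with polynomial-growth coefficients. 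I will then apply Cauchy--Schwarz and Young's inequality: the leading RHS term bounds as $\|\phi\psi\|_{L^2(\rho_{\mathbf{z}^{(i)}})}\cdot\|\phi \Delta G\|_{L^2(\rho_{\mathbf{z}^{(i)}})} \leq \|\phi\psi\|_{L^2(\rho_{\mathbf{z}^{(i)}})}\cdot C_{\phi G_k}/n$ by hypothesis, while the $h$-dependent terms use $\|\nabla\phi \cdot h\|_{L^2}, \|\phi h\|_{L^2} \leq C/n$ from $h$'s structure and Lemma~\ref{lm:appendix_moment_bound}.

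Finally, I will invoke the Poincar\'e inequality of Lemma~\ref{lemma:spectral_gap} applied to $\phi\psi$ centered against its mean $\overline{\phi\psi} := \int \phi\psi\,d\rho_{\mathbf{z}^{(i)}}$, which is $O(1/n)$ by Cauchy--Schwarz against the baseline estimate. The gradient piece splits as $\|\nabla(\phi\psi)\|_{L^2}^2 \leq 2\|\phi\nabla\psi\|_{L^2}^2 + 2\|\psi\nabla\phi\|_{L^2}^2$, with the first term absorbed from the energy identity above.

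The main obstacle will be the cross term $\int \widetilde\phi\,\psi^2\,d\rho_{\mathbf{z}^{(i)}}$ together with the residual $\|\psi\nabla\phi\|_{L^2(\rho_{\mathbf{z}^{(i)}})}$: both are weighted $L^2$-norms of $\psi$ whose weights $\widetilde\phi$ and $|\nabla\phi|^2$ have polynomial degree strictly smaller than that of $\phi^2$. This naturally suggests proceeding by induction on the polynomial degree of $\phi$, with base case $\phi\equiv 1$ supplied by Lemma~\ref{lm:energy_est1} and Corollary~\ref{cor:energy_est1_improve}. Under this induction all spurious cross terms are of order $1/n$ by the inductive hypothesis, and combining with the Poincar\'e step closes the argument to yield $\|\phi\psi\|_{L^2(\rho_{\mathbf{z}^{(i)}})} \leq C_{\phi\mu_k}/n$.
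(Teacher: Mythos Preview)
Your overall strategy—testing with a weighted function and running an induction on the polynomial degree of $\phi$—matches the paper's approach, but there is a concrete error in the degree-reduction step. You claim that the weight $\widetilde\phi$ arising from your integration by parts has polynomial degree strictly smaller than that of $\phi^2$. This is false: one of the constituents you list is $\beta\phi\langle\nabla\phi,\nabla F_{\mathbf{z}^{(i)}}\rangle$, and since $\nabla F_{\mathbf{z}^{(i)}}$ has linear growth (by Lemma~\ref{lm:quad_bd}, and necessarily so under the dissipative \cref{asm:dissipative}), this term has degree $p+(p-1)+1=2p$, the \emph{same} as $\phi^2$. The induction on degree is therefore circular at exactly this point, and the argument as written does not close.

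The paper circumvents this by a different manipulation: rather than integrating the cross term by parts against $\rho_{\mathbf{z}^{(i)}}$, it first writes $L_{\mathbf{z}}(\phi\mu_{k,\mathbf{z}}) = \phi G_{k,\mathbf{z}} + \mu_{k,\mathbf{z}} L_{\mathbf{z}}\phi + \tfrac{2}{\beta}\langle\nabla\phi,\nabla\mu_{k,\mathbf{z}}\rangle$ and then applies the Dirichlet-form identity $-\int g\mu\, L_{\mathbf{z}}\phi\,d\rho_{\mathbf{z}} = \tfrac{1}{\beta}\int\langle\nabla(g\mu),\nabla\phi\rangle\,d\rho_{\mathbf{z}}$. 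The $\nabla F_{\mathbf{z}}$ hidden inside $L_{\mathbf{z}}\phi$ cancels exactly against $\nabla\log\rho_{\mathbf{z}}$, so only $\nabla\phi$- and $\nabla^2\phi$-type weights survive, and the recursion genuinely drops the degree. A simpler patch that stays within your own framework is to skip the integration by parts of $\int\phi\langle\nabla\phi,\nabla(\psi^2)\rangle\,d\rho_{\mathbf{z}^{(i)}}$ altogether and instead bound the cross term $2\int\phi\psi\langle\nabla\phi,\nabla\psi\rangle\,d\rho_{\mathbf{z}^{(i)}}$ directly by Young's inequality: absorb $\epsilon\|\phi\nabla\psi\|^2$ into the left-hand side and leave $C_\epsilon\|\psi\,\nabla\phi\|^2$, whose weight has degree $p-1$ and is covered by the inductive hypothesis.
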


\begin{proof}

\noindent
\textbf{Step 1. Reduction to a Recursive Argument on the Polynomial Degree}

We will start by making the observation that
since $\phi \in C^\infty_{\pol}(\mathbb{R}^d)$, 
there exist constants $C_\phi > 0, k_\phi \in \mathbb{N}$
such that 
\[ |\phi(x)| \leq C_\phi \left( 1 + \sum_{j=1}^d |x_j|^{k_\phi}
    \right) =: \widehat \phi(x) \,.
\]
Then observe it is sufficient to bound 
the case where $\phi(x)$ is exactly 
a polynomial of the $\widehat \phi(x)$ type, i.e. 
\[ \| \phi \mu_{k,\mathbf{z}} q_{z_i} - 
    \phi \mu_{k,\overline{\mathbf{z}}} q_{\overline{z_i}} \|_{
            L^2( \rho_{\mathbf{z}^{(i)}} )}
    \leq 
    \| \widehat \phi
        ( \mu_{k,\mathbf{z}} q_{z_i} - 
                \mu_{k,\overline{\mathbf{z}}} q_{\overline{z_i}} )
    \|_{
            L^2( \rho_{\mathbf{z}^{(i)}} )}.
\]

The choice for the form of $\widehat \phi$ has 
a couple of advantages.
Observe that for any first order 
differential operator $\de_j$, 
we have that 
$|\de_j \widehat \phi| = C_\phi k_\phi |x_j|^{k_\phi - 1}$ 
only depends on a single coordinate. 
Furthermore, $|\de_j^{k_\phi} \widehat \phi| = C_\phi (k_\phi !)$
is also a constant. 

This implies it is sufficient to prove a bound of the form 
\begin{equation}
\label{eq:energy_est_pol_induction}
\begin{aligned}
    &\quad \| \nabla ( \widehat \phi
            ( \mu_{k,\mathbf{z}} q_{z_i} - 
                    \mu_{k,\overline{\mathbf{z}}} q_{\overline{z_i}} ) )
        \|^2_{L^2( \rho_{\mathbf{z}^{(i)}} )} \\
    &\leq \frac{\widehat C_{1,\phi}}{n^2}
    + \widehat C_{2,\phi} \sum_{j=1}^d \| \nabla (\de_j \widehat \phi
            ( \mu_{k,\mathbf{z}} q_{z_i} - 
                    \mu_{k,\overline{\mathbf{z}}} q_{\overline{z_i}} ) )
        \|^2_{L^2( \rho_{\mathbf{z}^{(i)}} )} \\
    &\quad + \widehat C_{3,\phi} \sum_{j=1}^d
        \| \nabla ( \de_j^2 \widehat \phi
            ( \mu_{k,\mathbf{z}} q_{z_i} - 
                    \mu_{k,\overline{\mathbf{z}}} q_{\overline{z_i}} ) )
        \|^2_{L^2( \rho_{\mathbf{z}^{(i)}} )}.
\end{aligned}
\end{equation}

Observe that since $\de_j \widehat \phi, \de_j^2 \widehat \phi$ 
are lower degree polynomials, 
we can recursively apply the above bound to 
all terms involving $\widehat \phi$ 
until they vanish,  
hence recovering the desired result using Lemma \ref{lm:energy_est1}. 
In particular, when $\widehat \phi$ is a degree-$1$ polynomial, the result follows directly from \cref{eq:energy_est_pol_induction}. 
From this point onwards, we will assume without loss of generality 
$\phi$ is a polynomial of the form $\widehat \phi$
defined previously.

\vspace{0.2cm}
\noindent
\textbf{Step 2. Energy Estimate}

To prove the desired bound in \cref{eq:energy_est_pol_induction}, 
we will follow the same proof structure as Lemma \ref{lm:energy_est1}, 
and write down 
the weak form of the Poisson equation in terms of 
the quantity we want to bound. 
To start, we will first compute 
\begin{align*}
    L_\mathbf{z} ( \phi \mu_{k,\mathbf{z}} ) 
    &= - \langle \nabla F_\mathbf{z},
        \nabla( \phi \mu_{k, \mathbf{z}} )
        \rangle 
        + \frac{1}{\beta} (\Delta \phi \mu_{k,\mathbf{z}}
            + 2 \langle \nabla \phi, 
                \nabla \mu_{k, \mathbf{z}} \rangle 
            + \phi \Delta \mu_{k, \mathbf{z}}) \\
    &= \phi L_\mathbf{z} \mu_{k, \mathbf{z}}
        + \mu_{k, \mathbf{z}} L_\mathbf{z} \phi
        + \frac{2}{\beta} \langle \nabla \phi, 
            \nabla \mu_{k, \mathbf{z}} \rangle \\
    &= \phi G_{k, \mathbf{z}} + \mu_{k, \mathbf{z}} L_\mathbf{z} \phi
        + \frac{2}{\beta} \langle \nabla \phi, 
            \nabla \mu_{k, \mathbf{z}} \rangle
\end{align*}

Then we once again write the equation in integral form 
for any test function $g \in C^\infty_{\pol}(\mathbb{R}^d)$
\begin{align*}
    &\quad \frac{1}{\beta} \int_{\mathbb{R}^d}
        \langle \nabla g,
            \nabla (\phi \mu_{k, \mathbf{z}} )
        \rangle
        d \rho_{\mathbf{z} } \\
    &= \int_{\mathbb{R}^d}
        - g \phi G_{k, \mathbf{z}} 
        - g \mu_{k, \mathbf{z}} L_\mathbf{z} \phi
        - \frac{2}{\beta} g \langle \nabla \phi, 
            \nabla \mu_{k, \mathbf{z}} \rangle
        d \rho_{\mathbf{z}} \\
    &= \int_{\mathbb{R}^d}
        - g \phi G_{k, \mathbf{z}} 
        + \frac{1}{\beta} \langle \nabla (g \mu_{k, \mathbf{z}}), \nabla \phi
            \rangle
        - \frac{2}{\beta} g \langle \nabla \phi, 
            \nabla \mu_{k, \mathbf{z}} \rangle
        d \rho_{\mathbf{z}} \\
    &= \int_{\mathbb{R}^d}
        - g \phi G_{k, \mathbf{z}} 
        + \frac{1}{\beta} \langle 
            \mu_{k, \mathbf{z}} \nabla g, 
            \nabla \phi
            \rangle
        - \frac{1}{\beta} \langle 
            g \nabla \mu_{k, \mathbf{z}}, 
            \nabla \phi
            \rangle
        d \rho_{\mathbf{z}},
\end{align*}
where we used Green's Theorem in the third line above. 

In the same way as Lemma \ref{lm:energy_est1}, 
we will use the product-rule to write 
$\nabla (\phi \mu_{k, \mathbf{z}}) q_{z_i}
= \nabla (\phi \mu_{k, \mathbf{z}} q_{z_i} )
- \phi \mu_{k, \mathbf{z}} \nabla q_{z_i}$, 
and get the equation in the following form
\begin{align*}
    &\quad \frac{1}{\beta} \int_{\mathbb{R}^d}
        \langle \nabla g,
            \nabla (\phi \mu_{k, \mathbf{z}} q_{z_i} )
        \rangle
        d \rho_{\mathbf{z}^{(i)} } \\
    &= \int_{\mathbb{R}^d}
        - g \phi G_{k, \mathbf{z}} q_{z_i}
        + \frac{1}{\beta} \langle 
            \mu_{k, \mathbf{z}} q_{z_i} \nabla g, 
            \nabla \phi
            \rangle
        - \frac{1}{\beta} \langle 
            g \nabla \mu_{k, \mathbf{z}} q_{z_i}, 
            \nabla \phi
            \rangle
        + \frac{1}{\beta} \langle 
            \nabla g, 
            \phi \mu_{k,\mathbf{z}} \nabla q_{z_i}
            \rangle
        d \rho_{\mathbf{z}^{(i)}}.
\end{align*}

We can choose $g = \phi (\mu_{k, \mathbf{z}} q_{z_i}
- \mu_{k, \overline{\mathbf{z}}} q_{\overline{z_i}} )$, 
and taking the difference with the $\overline{\mathbf{z}}$ equation 
to get 
\begin{equation}
\label{eq:energy_est_pol_terms}
\begin{aligned}
    \frac{1}{\beta} \int_{\mathbb{R}^d}
        \langle \nabla g, \nabla g \rangle
        d \rho_{\mathbf{z}^{(i)}} 
    &= \int_{\mathbb{R}^d} - g \phi (G_{k, \mathbf{z}} q_{z_i}
        - G_{k, \overline{\mathbf{z}}} q_{\overline{z_i}}) 
        d \rho_{\mathbf{z}^{(i)}} \\
    &\quad + \frac{1}{\beta} 
        \int_{\mathbb{R}^d} \langle 
            \nabla g (\mu_{k, \mathbf{z}} q_{z_i} - 
                \mu_{k, \overline{\mathbf{z}}} q_{\overline{z_i}} ), 
            \nabla \phi
            \rangle d \rho_{\mathbf{z}^{(i)}} \\
    &\quad - \frac{1}{\beta} 
        \int_{\mathbb{R}^d} \langle 
            g (\nabla \mu_{k, \mathbf{z}} q_{z_i} - 
                \nabla \mu_{k, \overline{\mathbf{z}}} q_{\overline{z_i}}), 
            \nabla \phi
            \rangle d \rho_{\mathbf{z}^{(i)}} \\
    &\quad + \frac{1}{\beta} 
        \int_{\mathbb{R}^d} \langle 
            \nabla g, 
            \phi \mu_{k,\mathbf{z}} \nabla q_{z_i}
            - \phi \mu_{k,\overline{\mathbf{z}}} \nabla q_{\overline{z_i}}
            \rangle
    d \rho_{\mathbf{z}^{(i)}} \\
    &=: T_1 + T_2 + T_3 + T_4.
\end{aligned} 
\end{equation}

\vspace{0.2cm}
\noindent
\textbf{Step 3. Controlling Terms $T_1, T_2, T_3, T_4$}

Since $\overline g := \int_{\mathbb{R}^d} g d \rho_{\mathbf{z}^{(i)}} \neq 0$, 
we need to use a modified Poincar\'{e} inequality, namely 
\begin{equation}
\label{eq:poincare_pol}
    \|g\|^2_{L^2( \rho_{\mathbf{z}^{(i)}} )} 
    \leq \frac{1}{\lambda'} \| \nabla g \|^2_{L^2( \rho_{\mathbf{z}^{(i)}} )}
        + (\overline g)^2.
\end{equation}

Observe that we can apply the Cauchy-Schwarz inequality 
to $\overline g$ and get 
\begin{equation}
\label{eq:g_bar_bound_pol}
\begin{aligned}
    |\overline g| 
    &\leq \int_{\mathbb{R}^d} |\phi| | \mu_{k, \mathbf{z}} q_{z_i} - 
                \mu_{k, \overline{\mathbf{z}}} q_{\overline{z_i}} |
                d \rho_{\mathbf{z}^{(i)}}  \\
    &\leq \|\phi\|_{L^2( \rho_{\mathbf{z}^{(i)}} )}
        \| \mu_{k, \mathbf{z}} q_{z_i} - 
                \mu_{k, \overline{\mathbf{z}}} q_{\overline{z_i}} \|_{
                    L^2( \rho_{\mathbf{z}^{(i)}} )} \\
    &\leq \frac{C_{\mu_k} }{n} \|\phi\|_{L^2( \rho_{\mathbf{z}^{(i)}} )},
\end{aligned}
\end{equation}
where we used the result of Lemma \ref{lm:energy_est1}.
Here we note the norm of $\phi$ can be bounded using Lemma \ref{lm:appendix_moment_bound}.

Returning to \cref{eq:energy_est_pol_terms}, 
we will control $T_1$ using Young's inequality and the assumption, i.e.
\begin{equation}
\label{eq:energy_est_pol_t1}
\begin{aligned}
    T_1 
    &\leq 
        \frac{1}{2 c_{y_1}} \|g\|^2_{L^2( \rho_{\mathbf{z}^{(i)}} )}
        + \frac{c_{y_1}}{2} \| \phi (G_{k, \mathbf{z}} q_{z_i}
        - G_{k, \overline{\mathbf{z}}} q_{\overline{z_i}}) \|^2_{
            L^2( \rho_{\mathbf{z}^{(i)}} )} \\
    &\leq
        \frac{1}{2 c_{y_1} \lambda' } \| \nabla g \|^2_{
            L^2( \rho_{\mathbf{z}^{(i)}} )}
        + \frac{1}{2 c_{y_1}} \frac{C_{\mu_k}^2 }{n^2} 
            \| \phi \|^2_{L^2( \rho_{\mathbf{z}^{(i)}} )}
        + \frac{c_{y_1}}{2} \frac{C_{G_k}^2}{n^2}, 
\end{aligned}
\end{equation}
where we also used the modified Poincar\'{e} inequality \eqref{eq:poincare_pol} above 
with the $|\overline g|$ bound.

To control $T_2$, we will apply Cauchy-Schwarz and Young's inequalities 
to separate $\nabla g$, i.e. 
\begin{align*}
    T_2 
    &\leq 
        \frac{1}{\beta} \int_{\mathbb{R}^d} |\nabla g|
            | (\mu_{k, \mathbf{z}} q_{z_i} - 
                \mu_{k, \overline{\mathbf{z}}} q_{\overline{z_i}} ) \nabla \phi |
            d \rho_{\mathbf{z}^{(i)}} \\
    &\leq 
        \frac{1}{2 \beta c_{y_2}}
        \|\nabla g\|^2_{L^2( \rho_{\mathbf{z}^{(i)}} )}
        + \frac{c_{y_2}}{2 \beta} \| (\mu_{k, \mathbf{z}} q_{z_i} - 
                \mu_{k, \overline{\mathbf{z}}} q_{\overline{z_i}} ) \nabla \phi 
                \|^2_{L^2( \rho_{\mathbf{z}^{(i)}} )}.
\end{align*}

To convert to the desired form of \cref{eq:energy_est_pol_induction}, 
we observe 
\begin{align*}
    &\quad \| (\mu_{k, \mathbf{z}} q_{z_i} - 
                \mu_{k, \overline{\mathbf{z}}} q_{\overline{z_i}} ) \nabla \phi 
                \|^2_{L^2( \rho_{\mathbf{z}^{(i)}} )} \\
    &= \int_{ \mathbb{R}^d } \sum_{j=1}^d (\de_j \phi)^2 
        (\mu_{k, \mathbf{z}} q_{z_i} - 
                \mu_{k, \overline{\mathbf{z}}} q_{\overline{z_i}} )^2
                d \rho_{\mathbf{z}^{(i)}} \\
    &= \sum_{j=1}^d \| \de_j \phi  (\mu_{k, \mathbf{z}} q_{z_i} - 
                \mu_{k, \overline{\mathbf{z}}} q_{\overline{z_i}} ) 
                \|^2_{L^2( \rho_{\mathbf{z}^{(i)}} )} \\
    &\leq \sum_{j=1}^d \frac{1}{\lambda'} 
            \| \nabla (\de_j \phi  (\mu_{k, \mathbf{z}} q_{z_i} - 
                    \mu_{k, \overline{\mathbf{z}}} q_{\overline{z_i}} ) )
            \|^2_{L^2( \rho_{\mathbf{z}^{(i)}} )}
            + \| \de_j \phi \|^2_{L^2( \rho_{\mathbf{z}^{(i)}} )}
            \frac{C_{\mu_k}^2}{n^2} , 
\end{align*}
where we used the modified form of the Poincar\'{e} inequality \eqref{eq:poincare_pol}
and the bound on $|\overline g|$. 
Putting everything together, we have the following bound 
\begin{equation}
\label{eq:energy_est_pol_t2}
\begin{aligned}
    T_2 
    &\leq 
    \frac{1}{2 \beta c_{y_2}}
        \|\nabla g\|^2_{L^2( \rho_{\mathbf{z}^{(i)}} )}
        + \frac{c_{y_2}}{2 \beta} 
        \sum_{j=1}^d \frac{1}{\lambda'} 
            \| \nabla (\de_j \phi  (\mu_{k, \mathbf{z}} q_{z_i} - 
                    \mu_{k, \overline{\mathbf{z}}} q_{\overline{z_i}} ) )
            \|^2_{L^2( \rho_{\mathbf{z}^{(i)}} )} \\
    &\quad + \frac{c_{y_2}}{2 \beta} \sum_{j=1}^d
            \| \de_j \phi \|^2_{L^2( \rho_{\mathbf{z}^{(i)}} )}
            \frac{C_{\mu_k}^2}{n^2}.
\end{aligned}
\end{equation}

To control $T_3$, we will apply a similar approach as $T_2$. 
We will start with Young's inequality 
to isolate $g$ first 
\begin{align*}
    T_3 
    &\leq 
        \frac{1}{\beta} \int_{\mathbb{R}^d}
        \frac{1}{2 c_{y_3}} g^2 
        + \frac{c_{y_3}}{2} \langle \nabla \mu_{k, \mathbf{z}} q_{z_i} - 
                \nabla \mu_{k, \overline{\mathbf{z}}} q_{\overline{z_i}}, 
                \nabla \phi \rangle^2
        d \rho_{\mathbf{z}^{(i)}} \\
    &\leq \frac{1}{2 c_{y_3} \lambda' \beta} 
        \| \nabla g \|^2_{L^2( \rho_{\mathbf{z}^{(i)}} )}
        + \frac{c_{y_3}}{2 \beta} \| |\nabla \mu_{k, \mathbf{z}} q_{z_i} - 
                \nabla \mu_{k, \overline{\mathbf{z}}} q_{\overline{z_i}}| 
                \, |\nabla \phi| \|^2_{L^2( \rho_{\mathbf{z}^{(i)}} )}.
\end{align*}

Next we can rewrite in terms of a sum again 
\[ \| |\nabla \mu_{k, \mathbf{z}} q_{z_i} - 
                \nabla \mu_{k, \overline{\mathbf{z}}} q_{\overline{z_i}}| 
                \, |\nabla \phi| \|^2_{L^2( \rho_{\mathbf{z}^{(i)}} )}
    = \sum_{j=1}^d \| \de_j \phi
                (\nabla \mu_{k, \mathbf{z}} q_{z_i} - 
                \nabla \mu_{k, \overline{\mathbf{z}}} q_{\overline{z_i}}) 
                \|^2_{L^2( \rho_{\mathbf{z}^{(i)}} )}.
\]

Using the product rule, we can write 
\[ \nabla \mu_{k, \mathbf{z}} q_{z_i} \de_j \phi
    = \nabla (\mu_{k, \mathbf{z}} q_{z_i} \de_j \phi)
    - \mu_{k, \mathbf{z}} \nabla (q_{z_i} \de_j \phi),
\]
then we use the inequality $(a+b)^2 \leq 2 a^2 + 2 b^2$ twice to write
\begin{align*}
    &\quad \| \de_j \phi
        (\nabla \mu_{k, \mathbf{z}} q_{z_i} - 
        \nabla \mu_{k, \overline{\mathbf{z}}} q_{\overline{z_i}}) 
        \|^2_{L^2( \rho_{\mathbf{z}^{(i)}} )} \\
    &\leq 
        2 \| \nabla ( \de_j \phi \mu_{k, \mathbf{z}} q_{z_i} 
            - \de_j \phi \mu_{k, \overline{\mathbf{z}}} q_{\overline{z_i}}) 
                \|^2_{L^2( \rho_{\mathbf{z}^{(i)}} )}
        + 2 \| \mu_{k, \mathbf{z}} \nabla (q_{z_i} \de_j \phi)
            - \mu_{k, \overline{\mathbf{z}}} \nabla (q_{\overline{z_i}} \de_j \phi)
                \|^2_{L^2( \rho_{\mathbf{z}^{(i)}} )} \\
    &\leq 
        2 \| \nabla ( \de_j \phi \mu_{k, \mathbf{z}} q_{z_i} 
            - \de_j \phi \mu_{k, \overline{\mathbf{z}}} q_{\overline{z_i}}) 
                \|^2_{L^2( \rho_{\mathbf{z}^{(i)}} )}
        + 4 \| ( \mu_{k, \mathbf{z}} \nabla q_{z_i} 
                    - \mu_{k, \overline{\mathbf{z}}} \nabla q_{\overline{z_i}} ) 
                    \de_j \phi
                \|^2_{L^2( \rho_{\mathbf{z}^{(i)}} )} \\
    &\quad + 4 \| ( \mu_{k, \mathbf{z}} q_{z_i}
                - \mu_{k, \overline{\mathbf{z}}} q_{\overline{z_i}} ) 
                \nabla \de_j \phi
                \|^2_{L^2( \rho_{\mathbf{z}^{(i)}} )}.
\end{align*}

Recalling $\nabla q_{z_i} = q_{z_i} \frac{-\beta}{n} \nabla f(x, z_i)$, 
we have
\begin{align*}
    &\quad 
        \| ( \mu_{k, \mathbf{z}} \nabla q_{z_i} 
                    - \mu_{k, \overline{\mathbf{z}}} \nabla q_{\overline{z_i}} ) 
                    \de_j \phi
                \|^2_{L^2( \rho_{\mathbf{z}^{(i)}} )} \\
    &\leq \frac{\beta^2}{n^2}
        \| ( \mu_{k, \mathbf{z}} q_{z_i} \nabla f(x, z_i)
                    - \mu_{k, \overline{\mathbf{z}}} q_{\overline{z_i}} 
                    \nabla f(x, \overline{z_i})) 
                    \de_j \phi
                \|^2_{L^2( \rho_{\mathbf{z}^{(i)}} )} \\
    &=: \frac{\beta^2}{n^2}
        \| h_{1,j} \|^2_{L^2( \rho_{\mathbf{z}^{(i)}} )}.
\end{align*}

Since $\de_j \phi$ is only a function of $x_j$, 
we also have 
\begin{align*}
    &\quad \| ( \mu_{k, \mathbf{z}} q_{z_i}
                - \mu_{k, \overline{\mathbf{z}}} q_{\overline{z_i}} ) 
                \nabla \de_j \phi
                \|^2_{L^2( \rho_{\mathbf{z}^{(i)}} )} \\
    &= \| ( \mu_{k, \mathbf{z}} q_{z_i}
                - \mu_{k, \overline{\mathbf{z}}} q_{\overline{z_i}} ) 
                \de_j^2 \phi 
                \|^2_{L^2( \rho_{\mathbf{z}^{(i)}} )} \\
    &\leq \frac{1}{\lambda'} \| \nabla (( \mu_{k, \mathbf{z}} q_{z_i}
                    - \mu_{k, \overline{\mathbf{z}}} q_{\overline{z_i}} ) 
                    \de_j^2 \phi )
                \|^2_{L^2( \rho_{\mathbf{z}^{(i)}} )}
            + \| \de_j^2 \phi \|^2_{L^2( \rho_{\mathbf{z}^{(i)}} )}
            \frac{C_{\mu_k}^2}{n^2}
\end{align*}

Putting everything together, we have the following bound 
\begin{equation}
\label{eq:energy_est_pol_t3}
\begin{aligned}
    T_3 
    &\leq
        \frac{1}{2 c_{y_3} \lambda' \beta} 
        \| \nabla g \|^2_{L^2( \rho_{\mathbf{z}^{(i)}} )}
    + \frac{c_{y_3}}{2 \beta} \sum_{j=1}^d
        2 \| \nabla ( \de_j \phi \mu_{k, \mathbf{z}} q_{z_i} 
            - \de_j \phi \mu_{k, \overline{\mathbf{z}}} q_{\overline{z_i}}) 
                \|^2_{L^2( \rho_{\mathbf{z}^{(i)}} )} \\
    &\quad 
    + \frac{c_{y_3}}{2 \beta} \sum_{j=1}^d \left(
        4 \frac{\beta^2}{n^2}
        \| h_{1,j} \|^2_{L^2( \rho_{\mathbf{z}^{(i)}} )}
        + \frac{4}{\lambda'} \| \nabla (( \mu_{k, \mathbf{z}} q_{z_i}
                        - \mu_{k, \overline{\mathbf{z}}} q_{\overline{z_i}} ) 
                        \de_j^2 \phi)
                \|^2_{L^2( \rho_{\mathbf{z}^{(i)}} )} 
    + \frac{4 C_{\mu_k}^2}{n^2}
    \right),
\end{aligned}
\end{equation}
which satisfies the desired form in \cref{eq:energy_est_pol_induction}.

Lastly we have control over $T_4$ due to 
the computation $\nabla q_{z_i} = q_{z_i} \frac{-\beta}{n} \nabla f(x,z_i)$,
which leads to 
\begin{equation}
\label{eq:energy_est_pol_t4}
\begin{aligned}
    T_4 
    &\leq 
        \int_{\mathbb{R}^d} \frac{1}{\beta}
        |\nabla g| \frac{\beta}{n} 
        | \phi (\mu_{k,\mathbf{z}} q_{z_i} \nabla f(x,z_i)
            - \mu_{k,\overline{\mathbf{z}}} q_{\overline{z_i}} 
                \nabla f(x,\overline{z_i}))|
        d\rho_{\mathbf{z}^{(i)}} \\
    &\leq
        \frac{1}{2 c_{y_4}} \|\nabla g\|^2_{L^2( \rho_{\mathbf{z}^{(i)}} )}
        + \frac{c_{y_4}}{2n^2} 
            \|h_2\|^2_{L^2( \rho_{\mathbf{z}^{(i)}} )},
\end{aligned}
\end{equation}
where we define 
$h_2 := | \phi (\mu_{k,\mathbf{z}} q_{z_i} \nabla f(x,z_i)
            - \mu_{k,\overline{\mathbf{z}}} q_{\overline{z_i}} 
                \nabla f(x,\overline{z_i}))| $.

Putting together \cref{eq:energy_est_pol_t1,eq:energy_est_pol_t2,eq:energy_est_pol_t3,eq:energy_est_pol_t4}, we have a bound of the desired form 
\begin{equation}
\label{eq:energy_est_pol_induction_final}
\begin{aligned}
    &\quad \| \nabla ( \phi
            ( \mu_{k,\mathbf{z}} q_{z_i} - 
                    \mu_{k,\overline{\mathbf{z}}} q_{\overline{z_i}} ) )
        \|^2_{L^2( \rho_{\mathbf{z}^{(i)}} )} \\
    &\leq \frac{\widehat C_{1,\phi}}{n^2}
    + \widehat C_{2,\phi} \sum_{j=1}^d \| \nabla (\de_j \phi
            ( \mu_{k,\mathbf{z}} q_{z_i} - 
                    \mu_{k,\overline{\mathbf{z}}} q_{\overline{z_i}} ) )
        \|^2_{L^2( \rho_{\mathbf{z}^{(i)}} )} \\
    &\quad + \widehat C_{3,\phi} \sum_{j=1}^d
        \| \nabla ( \de_j^2 \phi
            ( \mu_{k,\mathbf{z}} q_{z_i} - 
                    \mu_{k,\overline{\mathbf{z}}} q_{\overline{z_i}} ) )
        \|^2_{L^2( \rho_{\mathbf{z}^{(i)}} )}.
\end{aligned}
\end{equation}
where we have the constants 
\begin{align*}
    \widehat C_{0,\phi} &= \frac{1}{\beta} - \frac{1}{2 c_{y_1} \lambda'}
        - \frac{1}{2\beta c_{y_2}}
        - \frac{1}{2 c_{y_3} \beta \lambda'}
        - \frac{1}{2 c_{y_4}} \, ,\\
    \widehat C_{1,\phi} &= \frac{1}{\widehat C_{0,\phi}} \bigg(
        \frac{ C_{\mu_k}^2 }{ 2 c_{y_1} } 
            \|\phi\|^2_{L^2( \rho_{\mathbf{z}^{(i)}} )}
        + \frac{c_{y_1} C_{G_k}^2 }{2} 
        + \frac{c_{y_2}}{2 \beta} 
            \| \nabla \phi \|^2_{L^2( \rho_{\mathbf{z}^{(i)}} )}
            C_{\mu_k}^2 
            \\
        &\quad 
        + 2 c_{y_3} \beta \sum_{j=1}^d
        \| h_{1,j} \|^2_{L^2( \rho_{\mathbf{z}^{(i)}} )}
        + \frac{2 c_{y_3} C_{\mu_k}^2 d }{\beta}
        + \frac{c_{y_4}}{ 2 } 
            \|h_2\|^2_{L^2( \rho_{\mathbf{z}^{(i)}} )}
        \bigg) \, ,
        \\
    \widehat C_{2,\phi} &= \frac{1}{\widehat C_{0,\phi}}
        \frac{c_{y_2}}{2 \beta \lambda'} \, , \\
    \widehat C_{3,\phi} &= \frac{1}{\widehat C_{0,\phi}}
        \frac{2 c_{y_3}}{\beta \lambda'} \, .
\end{align*}

Here we note the terms $h_{1,j}, h_2$ 
have bounded norms due 
Lemma \ref{lm:appendix_moment_bound}. 

\vspace{0.2cm}
\noindent
\textbf{Step 4. Completing the Proof}

We start by observing that 
$\| \nabla (\de_j \phi
            ( \mu_{k,\mathbf{z}} q_{z_i} - 
                    \mu_{k,\overline{\mathbf{z}}} q_{\overline{z_i}} ) )
        \|^2_{L^2( \rho_{\mathbf{z}^{(i)}} )}$
satisfy an inequality in the same form as 
\cref{eq:energy_est_pol_induction_final}, i.e. we can get a bound in the form of 
\begin{equation*}
\begin{aligned}
    &\quad \| \nabla ( \phi
            ( \mu_{k,\mathbf{z}} q_{z_i} - 
                    \mu_{k,\overline{\mathbf{z}}} q_{\overline{z_i}} ) )
        \|^2_{L^2( \rho_{\mathbf{z}^{(i)}} )} \\
    &\leq \frac{\overline C_{1,l}}{n^2}
    + \sum_{j=1}^d \overline C_{2,l,j} \| \nabla (\de_j^{l+1} \phi
            ( \mu_{k,\mathbf{z}} q_{z_i} - 
                    \mu_{k,\overline{\mathbf{z}}} q_{\overline{z_i}} ) )
        \|^2_{L^2( \rho_{\mathbf{z}^{(i)}} )} \\
    &\quad + \sum_{j=1}^d \overline C_{3,l,j}
        \| \nabla ( \de_j^{l+2} \phi
            ( \mu_{k,\mathbf{z}} q_{z_i} - 
                    \mu_{k,\overline{\mathbf{z}}} q_{\overline{z_i}} ) )
        \|^2_{L^2( \rho_{\mathbf{z}^{(i)}} )},
\end{aligned}
\end{equation*}
where we have the following recursion update for constants 
\begin{align*}
    \overline C_{1,l+1} &= 
        \overline C_{1,l} + 
        \sum_{j=1}^d \overline C_{2,l,j} \widehat C_{1, \de_j^{l+1} \phi}
        \, , \\
    \overline C_{2,l+1,j} &= 
        \overline C_{2,l,j} \widehat C_{2, \de_j^{l+1}} + \overline C_{3,l,j}
        \, , \\
    \overline C_{3,l+1,j} &= \overline C_{2,l,j} \widehat C_{3,\de_j^{l+1} \phi}
        \, .
\end{align*}

Finally, we obtain the desired bound 
from using the modified Poincar\'{e} \eqref{eq:poincare_pol} 
and Cauchy-Schwarz inequalities 
\begin{align*}
    &\quad \| \phi
            ( \mu_{k,\mathbf{z}} q_{z_i} - 
                    \mu_{k,\overline{\mathbf{z}}} q_{\overline{z_i}} ) 
        \|^2_{L^2( \rho_{\mathbf{z}^{(i)}} )} \\
    &\leq \frac{1}{\lambda'} \| \nabla ( \phi
            ( \mu_{k,\mathbf{z}} q_{z_i} - 
                    \mu_{k,\overline{\mathbf{z}}} q_{\overline{z_i}} ) )
        \|^2_{L^2( \rho_{\mathbf{z}^{(i)}} )} 
        +
        \|\phi\|^2_{L^2( \rho_{\mathbf{z}^{(i)}} )} 
        \| \mu_{k,\mathbf{z}} q_{z_i} - 
            \mu_{k,\overline{\mathbf{z}}} q_{\overline{z_i}} 
        \|^2_{L^2( \rho_{\mathbf{z}^{(i)}} )} 
        \\
    &\leq \frac{\overline C_{1,k_\phi}}{\lambda' n^2}
        + \left( 
            \frac{d \overline C_{2, k_\phi} C_\phi^2 (k_\phi !)^2 }{
            \lambda' }
            + \frac{ \|\phi\|^2_{L^2( \rho_{\mathbf{z}^{(i)}} )}  }{
                \lambda'
            }
        \right)
            \|\nabla ( \mu_{k,\mathbf{z}} q_{z_i} - 
                    \mu_{k,\overline{\mathbf{z}}} q_{\overline{z_i}} ) 
            \|^2_{L^2( \rho_{\mathbf{z}^{(i)}} )} \\
    &\leq \left(
        \frac{ \overline C_{1,k_\phi} }{\lambda'}
        + d \overline C_{2, k_\phi} C_\phi^2 (k_\phi !)^2 
        C_{\mu_k}^2
        + \|\phi\|^2_{L^2( \rho_{\mathbf{z}^{(i)}} )} 
        \right)
    \frac{1}{n^2} \, .
\end{align*}

\end{proof}

\subsection{Proof of Lemma \ref{lm:energy_est2}}
\label{subsec:energy_est2}

We will once again restate the Lemma for easier reference.

\begin{lemma}[Higher Order Energy Estimates]
Fix $k \in \mathbb{N}$. 
If for all $\phi \in C^\infty_{\pol}(\mathbb{R}^d)$, 
$\ell < k$, 
and multi-index $\alpha \in \mathbb{N}^d$, 
there exists a constant $C_{\phi \de_\alpha G_\ell} > 0$, 
such that we have
\[ \| \phi (\de_\alpha G_{\ell, \mathbf{z}}) q_{z_i} - 
    \phi (\de_\alpha G_{\ell, \overline{\mathbf{z}}}) q_{\overline{z_i}} \|_{
            L^2( \rho_{\mathbf{z}^{(i)}} )} 
    \leq \frac{C_{\phi \de_\alpha G_\ell}}{n}, 
\]
then for all $J \in \mathbb{N}$, 
and degree-$J$ differential operator $L$ 
with coefficients in $C^\infty_{\pol}(\mathbb{R}^d)$ (i.e. \\
$ L := \sum_{0 \leq |\alpha| \leq J} \phi^\alpha(x) \de_\alpha,
$
where $\phi^\alpha(x) \in C^\infty_{\pol}(\mathbb{R}^d)$ for each 
$|\alpha| \leq J$), 
there exist a constant $C_{L \mu_k} > 0$ such that 
\[ \| L \mu_{k,\mathbf{z}} q_{z_i} - 
    L \mu_{k,\overline{\mathbf{z}}} q_{\overline{z_i}} \|_{
            L^2( \rho_{\mathbf{z}^{(i)}} )} 
    \leq \frac{C_{L \mu_k}}{n}.
\]

In particular, the above inequality holds for 
$L = \phi' \de_{\alpha'} L^*_{\ell, \mathbf{z}}$
with $\phi' \in C^\infty_{\pol}(\R^d)$, 
$\alpha' \in \mathbb{N}^d$, 
and $J = 2\ell + 2 + |\alpha'|$, 
therefore there exists a constant $C_{\phi' \de_{\alpha'} G_k} > 0$ 
such that 
\[ \| \phi' (\de_{\alpha'} G_{k, \mathbf{z}}) q_{z_i} - 
    \phi' (\de_{\alpha'} G_{k, \overline{\mathbf{z}}}) q_{\overline{z_i}} \|_{
            L^2( \rho_{\mathbf{z}^{(i)}} )} 
    \leq \frac{C_{\phi' \de_{\alpha'} G_k}}{n}, 
\]
hence proving the induction step from $k-1$ to $k$. 
\end{lemma}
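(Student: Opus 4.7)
The proof is a nested induction: an outer strong induction on $k$ and, at each $k$, an inner induction that reduces the claim for a general degree-$J$ operator $L = \sum_{|\alpha|\leq J} \phi^\alpha \de_\alpha$ to the single-term case $L = \phi\,\de_\alpha$ with $\phi \in C^\infty_{\pol}(\mathbb{R}^d)$ and $|\alpha| = m$, for which I would induct on $m$. The triangle inequality over the finite sum then delivers the main conclusion from the single-term bound.

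My first step is to upgrade the hypothesis from $\ell < k$ to $\ell = k$, i.e., to establish $\|\phi(\de_\alpha G_{k,\mathbf{z}})q_{z_i} - \phi(\de_\alpha G_{k,\overline{\mathbf{z}}})q_{\overline{z_i}}\|_{L^2(\rho_{\mathbf{z}^{(i)}})} \leq C/n$ for arbitrary $\phi, \alpha$. Expanding $G_{k,\mathbf{z}} = -\sum_{\ell=1}^k L^*_{\ell,\mathbf{z}}\mu_{k-\ell,\mathbf{z}}$ and applying the decomposition of \eqref{eq:higher_order_energy_decomp}, each summand splits as $T_{1,\ell} + T_{2,\ell}$. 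The operator-difference piece $T_{2,\ell}$ is $O(1/n)$ by Lemma \ref{lm:closeness_of_operators}. The remainder $T_{1,\ell}$ has the form $\phi\,\de_\alpha L^*_{\ell,\mathbf{z}}(\mu_{k-\ell,\mathbf{z}} q_{z_i} - \mu_{k-\ell,\overline{\mathbf{z}}} q_{\overline{z_i}})$; since $k-\ell < k$ for every $\ell \geq 1$, the needed bound is precisely the lemma's main conclusion at the smaller index, available by outer induction. The $\ell = k$ boundary case uses $\mu_0 \equiv 1$, reducing it to $L^*_{k,\mathbf{z}}\cdot 1 - L^*_{k,\overline{\mathbf{z}}}\cdot 1 = \tfrac{1}{n}\widehat L_k^*\cdot 1$ via Lemma \ref{lm:closeness_of_operators}.

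Next I would bound $\phi\,\de_\alpha \mu_{k,\mathbf{z}}$ by induction on $m = |\alpha|$. The base $m = 0$ is exactly Lemma \ref{lm:energy_est1_pol} applied to $L_{\mathbf{z}}\mu_{k,\mathbf{z}} = G_{k,\mathbf{z}}$ using the preliminary bound on $\phi G_{k,\mathbf{z}}$. For the inductive step, taking $\de_\alpha$ of the Poisson equation produces
\[
L_{\mathbf{z}}(\de_\alpha \mu_{k,\mathbf{z}}) \;=\; \de_\alpha G_{k,\mathbf{z}} \;+\; \sum_{0 < \gamma \leq \alpha}\binom{\alpha}{\gamma}(\de_\gamma \nabla F_{\mathbf{z}})\cdot \nabla \de_{\alpha-\gamma}\mu_{k,\mathbf{z}} \;=:\; \widetilde G_{\alpha,\mathbf{z}},
\]
so $\de_\alpha \mu_{k,\mathbf{z}}$ again solves a Poisson equation with the same operator $L_{\mathbf{z}}$. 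Applying Lemma \ref{lm:energy_est1_pol} to this derived equation reduces the task to a weighted bound on the $q$-difference of $\widetilde G_{\alpha,\mathbf{z}}$. The $\de_\alpha G_k$ contribution is covered by the preliminary step, and each commutator summand with $|\gamma|\geq 2$ involves derivatives of $\mu_k$ of order $|\alpha|-|\gamma|+1 \leq m-1$, handled by the inductive hypothesis after splitting off the $\nabla F_\mathbf{z} - \nabla F_{\overline{\mathbf{z}}} = O(1/n)$ piece.

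The main obstacle is that the $|\gamma| = 1$ commutator summands produce order-$m$ derivatives of $\mu_k$ --- the very order being bounded --- so the single-index induction on $m$ does not directly close. I would resolve this by carrying out the inductive step simultaneously over all $|\alpha| = m$: form the vector $\bigl(\de_\alpha \mu_{k,\mathbf{z}} q_{z_i} - \de_\alpha \mu_{k,\overline{\mathbf{z}}} q_{\overline{z_i}}\bigr)_{|\alpha|=m}$, assemble the coupled weak identities coming from the system of $\de_\alpha$-Poisson equations, and close the estimate with Young's inequality combined with the Poincar\'{e} inequality from Lemma \ref{lemma:spectral_gap}. Because $\nabla^2 F_{\mathbf{z}}$ is uniformly bounded by Assumption \ref{asm:smooth_loss}, the leading-order coupling is a bounded perturbation of $L_{\mathbf{z}}$ and can be absorbed into the energy-estimate constants, yielding an estimate of the same form as Lemma \ref{lm:energy_est1_pol}. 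Once $\|\phi\,\de_\alpha \mu_{k,\mathbf{z}} q_{z_i} - \phi\,\de_\alpha \mu_{k,\overline{\mathbf{z}}} q_{\overline{z_i}}\| \leq C/n$ holds at every order, the main conclusion for arbitrary $L$ follows by the triangle inequality, and the ``in particular'' statement is obtained by specializing to $L = \phi'\de_{\alpha'}L^*_{\ell,\mathbf{z}}$ at index $k-\ell$ and summing over $\ell \in \{1,\dots,k\}$, combined once more with Lemma \ref{lm:closeness_of_operators}, thereby completing the outer induction step $k-1 \to k$.
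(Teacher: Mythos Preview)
Your nested-induction skeleton and the reduction to single terms $L=\phi\,\de_\alpha$ via the triangle inequality match the paper exactly, as does the device of differentiating the Poisson equation to obtain $L_{\mathbf z}(\de_\alpha\mu_{k,\mathbf z})=\widetilde G_{\alpha,\mathbf z}$ and then invoking the weighted energy estimate (Lemma~\ref{lm:energy_est1_pol} / Corollary~\ref{cor:gen_energy_est}). Your preliminary step of upgrading the hypothesis to $\ell=k$ first is a clean reordering; the paper's proof simply consumes the $\de_\alpha G_{k}$ bound inside the induction without isolating it beforehand.

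The one substantive difference is in how the $|\gamma|=1$ commutator terms are closed. You propose a simultaneous energy estimate over the full vector $(\de_\alpha\mu_k)_{|\alpha|=m}$, absorbing the Hessian coupling via Young's inequality. The paper avoids this: for the problematic piece $\phi\,\nabla\de_{\alpha_2}\mu_k$ (with $|\alpha_2|=J-1$), it applies the product rule $\phi\,\nabla\de_{\alpha_2}\mu_k=\nabla(\phi\,\de_{\alpha_2}\mu_k)-(\nabla\phi)\,\de_{\alpha_2}\mu_k$, and observes that the first term is already controlled because the energy estimate at level $|\alpha_2|$ (i.e.\ the proof of Lemma~\ref{lm:energy_est1_pol} applied to the derived Poisson equation for $\de_{\alpha_2}\mu_k$) bounds $\|\nabla g\|_{L^2(\rho_{\mathbf z^{(i)}})}$ \emph{before} applying Poincar\'e, so the gradient bound comes for free alongside the zeroth-order bound. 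In effect the paper's induction hypothesis is slightly stronger than its literal statement: at each level one has both $\|\phi\,\de_{\alpha_2}\mu_k\,q-\cdots\|$ and $\|\nabla(\phi\,\de_{\alpha_2}\mu_k)\,q-\cdots\|$ of order $1/n$, which suffices to close the scalar induction without treating a coupled system. Your route works too and is arguably more transparent, but the paper's product-rule trick is shorter.
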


\begin{proof}
We start by using the triangle inequality on the definition of $L$ to get 
\[ \| L \mu_{k,\mathbf{z}} q_{z_i} - 
    L \mu_{k,\overline{\mathbf{z}}} q_{\overline{z_i}}
    \|_{
            L^2( \rho_{\mathbf{z}^{(i)}} )}
    \leq
    \sum_{ 0 \leq |\alpha| \leq J }
    \| \phi^\alpha ( \de_\alpha \mu_{k,\mathbf{z}} q_{z_i} - 
        \de_\alpha \mu_{k,\overline{\mathbf{z}}} q_{\overline{z_i}} )
    \|_{L^2( \rho_{\mathbf{z}^{(i)}} )} \,.
\]
Therefore it is sufficient to prove the case when
$L = \phi^\alpha \de_\alpha$, 
and sum the up the constants after.  

At this point, we will prove the desired result 
using induction on $J = |\alpha|$.

\vspace{0.2cm}
\noindent
\textbf{Step 1. Induction Case $J=1$}

The case when $\phi(x) = 1, J = 1$ follows from 
Corollary \ref{cor:energy_est1_improve}.
To prove this result for all $\phi \in C^\infty_{\pol}(\mathbb{R}^d)$, 
we start by computing 
\[ \de_\alpha (L_\mathbf{z} \mu_{k, \mathbf{z}} ) 
    = (\de_\alpha L_\mathbf{z}) \mu_{k, \mathbf{z}}
    + L_\mathbf{z} ( \de_\alpha \mu_{k, \mathbf{z}} ), 
\]
where we define $\de_\alpha L_\mathbf{z}$ 
as the derivative only on its coefficients, i.e. 
\[ \de_\alpha L_\mathbf{z} := 
    \sum_{\mathbf{k}} \de_\alpha \phi^\mathbf{k} \de_\mathbf{k} \, .
\]

Now we observe that $\de_{\alpha} \mu_{k,\mathbf{z}}$ 
satisfies a Poisson equation 
\begin{align*}
    L_\mathbf{z} \de_{\alpha} \mu_{k,\mathbf{z}} 
    &= \de_\alpha ( L_\mathbf{z} \mu_{k,\mathbf{z}} )
        - (\de_\alpha L_\mathbf{z}) \mu_{k,\mathbf{z}} \\
    &= \de_\alpha G_{k, \mathbf{z}}
        + \langle \nabla \de_\alpha F_\mathbf{z},
            \nabla \mu_{k, \mathbf{z}}
        \rangle \\
    &=: \widetilde G_\mathbf{z}.
\end{align*}

Using this equation, 
it is now sufficient to check 
the conditions of Corollary \ref{cor:gen_energy_est}. 
To this end, we write 
\begin{align*}
    \| \widetilde G_\mathbf{z} q_{z_i} 
    - \widetilde G_\mathbf{z} q_{\overline{z_i}}
    \|_{L^2( \rho_{\mathbf{z}^{(i)}} )}
    &\leq 
    \| \de_\alpha G_{k,\mathbf{z}} q_{z_i}
        - \de_\alpha G_{k,\overline{\mathbf{z}}} q_{\overline{z_i}} 
    \|_{L^2( \rho_{\mathbf{z}^{(i)}} )} \\
    &\quad 
        + \| \langle \nabla \de_\alpha F_{\mathbf{z}^{(i)}}, 
        \nabla \mu_{k,\mathbf{z}} q_{z_i} 
        - \nabla \mu_{k,\overline{\mathbf{z}}} q_{\overline{z_i}}
        \rangle
    \|_{L^2( \rho_{\mathbf{z}^{(i)}} )} \\
    &\quad 
        + \frac{1}{n} \| \langle \nabla \de_\alpha f(x, z_i),
            \nabla \mu_{k,\mathbf{z}} q_{z_i} 
            \rangle \|_{L^2( \rho_{\mathbf{z}^{(i)}} )} \\
    &\quad 
        + \frac{1}{n} \| \langle \nabla \de_\alpha f(x, \overline{z_i}),
            \nabla \mu_{k,\overline{\mathbf{z}}} q_{\overline{z_i}} 
            \rangle \|_{L^2( \rho_{\mathbf{z}^{(i)}} )} \\
    &= T_1 + T_2 + T_3 + T_4 \, .
\end{align*}

Notice that $T_1$ is bounded by assumption, 
and $T_3, T_4$ are bounded by 
Lemma \ref{lm:appendix_moment_bound}. 
We now turn to $T_2$, denoting 
$\phi = \de_\alpha F_{\mathbf{z}^{(i)}}$, 
we can write
\begin{align*}
    T_2
    &\leq \| \, | \nabla \phi | \,
        | \nabla \mu_{k,\mathbf{z}} q_{z_i} 
            - \nabla \mu_{k,\overline{\mathbf{z}}} q_{\overline{z_i}} | \,
        \|_{L^2( \rho_{\mathbf{z}^{(i)}} )} \\
    &= \sum_{j=1}^d \| \de_j \phi | \nabla \mu_{k,\mathbf{z}} q_{z_i} 
            - \nabla \mu_{k,\overline{\mathbf{z}}} q_{\overline{z_i}} | \,
        \|_{L^2( \rho_{\mathbf{z}^{(i)}} )} \\
    &\leq \sum_{j=1}^d \| \nabla ( \de_j \phi \mu_{k,\mathbf{z}}) q_{z_i} 
            - \nabla (\de_j \phi \mu_{k,\overline{\mathbf{z}}}) q_{\overline{z_i}} 
        \|_{L^2( \rho_{\mathbf{z}^{(i)}} )} \\
    &\quad + \sum_{j=1}^d \| \nabla \de_j \phi \mu_{k,\mathbf{z}} q_{z_i} 
            - \nabla \de_j \phi \mu_{k,\overline{\mathbf{z}}} q_{\overline{z_i}} 
        \|_{L^2( \rho_{\mathbf{z}^{(i)}} )} \, \\
    &\leq \frac{1}{n} \sum_{j=1}^d C_{\nabla (\de_j\phi \mu_k)} 
        + C_{|\nabla \de_j \phi| \mu_k} \, ,
\end{align*}
where both bounds follow from 
Lemma \ref{lm:energy_est1_pol}.
To summarize we have the following bound 
\[ \| \widetilde G_\mathbf{z} q_{z_i} 
    - \widetilde G_\mathbf{z} q_{\overline{z_i}}
    \|_{L^2( \rho_{\mathbf{z}^{(i)}} )}
    \leq 
    \frac{1}{n}
    \left[ C_{\de_\alpha G_k} 
    + \sum_{j=1}^d C_{\nabla (\de_j\phi \mu_k)} 
    + C_{|\nabla \de_j \phi| \mu_k}
    + \|h_1\|_{L^2( \rho_{\mathbf{z}^{(i)}} )}
    + \|h_2\|_{L^2( \rho_{\mathbf{z}^{(i)}} )}
    \right] \, ,
\]
where we define 
$h_1 := |\langle \nabla \de_\alpha f(x, z_i),
            \nabla \mu_{k,\mathbf{z}} q_{z_i} 
            \rangle|, 
h_2 := |\langle \nabla \de_\alpha f(x, \overline{z_i}),
            \nabla \mu_{k,\overline{\mathbf{z}}} q_{\overline{z_i}} 
            \rangle|$.

\vspace{0.2cm}
\noindent
\textbf{Step 2. Induction Step}

Assuming the estimates in the Lemma statement are true for $1, 2, \cdots, J-1$, 
we will now prove the inequality for the case $J$.
We begin by computing the product rule
\[ \de_\alpha (L_\mathbf{z} \mu_{k, \mathbf{z}})
    = L_{\mathbf{z}} \de_\alpha \mu_{k, \mathbf{z}}
    + \sum_{l=1}^{J} \sum_{
    \substack{\alpha_1 + \alpha_2 = \alpha \\
        |\alpha_1| = l }
    }
    (\de_{\alpha_1} L_\mathbf{z}) (\de_{\alpha_2} \mu_{k, \mathbf{z}}), 
\]
where we define for all $|\alpha| > 0$
\[ (\de_\alpha L_\mathbf{z}) \phi := 
    - \langle \nabla \de_\alpha F_\mathbf{z}, \nabla \phi \rangle. 
\]
Here we also observe the Laplacian term is 
contained in $L_{\mathbf{z}} \de_\alpha \mu_{k, \mathbf{z}}$.

By invoking the original equation 
$L_\mathbf{z} \mu_{k,\mathbf{z}} = G_{k, \mathbf{z}}$, 
we can write a new Poisson equation of the form 
\begin{equation}
\label{eq:poisson_higher_order}
    L_{\mathbf{z}} \de_\alpha \mu_{k, \mathbf{z}} 
    = \de_\alpha G_{k, \mathbf{z}}
    - \sum_{l=1}^{J} \sum_{
    \substack{\alpha_1 + \alpha_2 = \alpha \\
        |\alpha_1| = l }
    }
    (\de_{\alpha_1} L_\mathbf{z}) (\de_{\alpha_2} \mu_{k, \mathbf{z}}), 
\end{equation}
where we note that $|\alpha_2| < J$ on the right hand side. 

By using Lemma \ref{lm:energy_est1} 
and Corollary \ref{cor:gen_energy_est}, 
we observe it is sufficient to provide an 
$L^2(\rho_{\mathbf{z}^{(i)}})$-norm bound on 
quantities of the type 
\[ g := (\de_{\alpha_1} L_\mathbf{z}) (\de_{\alpha_2} \mu_{k, \mathbf{z}}) q_{z_i}
    - (\de_{\alpha_1} L_{\overline{\mathbf{z}}}) 
    (\de_{\alpha_2} \mu_{k, {\overline{\mathbf{z}}}}) q_{\overline{z_i}} . 
\]

Then we can rewrite $g$ by decomposing into 
more familiar terms
\begin{align*}
    g &= - \langle \nabla \de_{\alpha_1} F_{\mathbf{z}^{(i)}}, 
        (\nabla \de_{\alpha_2} \mu_{k, \mathbf{z}}) q_{z_i} 
        - (\nabla \de_{\alpha_2} \mu_{k, {\overline{\mathbf{z}}}}) q_{\overline{z_i}}
        \rangle \\
    &\quad - \frac{1}{n} \langle \nabla \de_{\alpha_1} f(x,z_i), 
        (\nabla \de_{\alpha_2} \mu_{k, \mathbf{z}}) q_{z_i} 
        \rangle \\
    &\quad + \frac{1}{n} \langle \nabla \de_{\alpha_1} f(x,\overline{z_i}), 
        (\nabla \de_{\alpha_2} \mu_{k, {\overline{\mathbf{z}}}}) q_{\overline{z_i}}
        \rangle \\
    &=: T_1 + T_2 + T_3.
\end{align*}

To control $T_1$, we start by denoting 
$\phi = | \nabla \de_{\alpha_1} F_{\mathbf{z}^{(i)}} |$
and use Cauchy-Schwarz and the product rule to get
\begin{align*}
    \|T_1\|_{ L^2 ( \rho_{ \mathbf{z}^{(i)}} ) }
    &\leq 
    \| \phi
        (\nabla \de_{\alpha_2} \mu_{k, \mathbf{z}}) q_{z_i} 
    - \phi(\nabla \de_{\alpha_2} \mu_{k, {\overline{\mathbf{z}}}}) q_{\overline{z_i}} 
    \|_{ L^2 ( \rho_{ \mathbf{z}^{(i)}} ) } \\
    &\leq 
    \| \nabla (\phi \de_{\alpha_2} \mu_{k, \mathbf{z}}) q_{z_i} 
    - \nabla(\phi \de_{\alpha_2} \mu_{k, {\overline{\mathbf{z}}}}) q_{\overline{z_i}} 
    \|_{ L^2 ( \rho_{ \mathbf{z}^{(i)}} ) } \\
    &\quad +
    \| \nabla \phi
        ( \de_{\alpha_2} \mu_{k, \mathbf{z}}) q_{z_i} 
    - \nabla \phi( \de_{\alpha_2} \mu_{k, {\overline{\mathbf{z}}}}) q_{\overline{z_i}} 
    \|_{ L^2 ( \rho_{ \mathbf{z}^{(i)}} ) } \\
    &\leq
        \frac{C_{\nabla (\phi \de_{\alpha_2} \mu_k) }}{n}
        + \sum_{j=1}^d \frac{C_{\de_j \phi \de_{\alpha_2} \mu_k }}{n},
\end{align*}
where we get the bound from the induction assumption 
since $|\alpha_2| < J$. 

To control $T_2, T_3$, it is sufficient to apply 
Lemma \ref{lm:appendix_moment_bound}. 

To complete the proof, 
it is sufficient to invoke Lemma \ref{lm:energy_est1_pol}
on \cref{eq:poisson_higher_order}, 
so that we can handle operators of the form 
$L = \phi^\alpha \de_\alpha$, 
where $\phi^\alpha \in C^\infty_{\pol}(\R^d)$.

\end{proof}

\subsection{Proof of Lemma \ref{lm:coefficients}}
\label{subsec:coefficients}

We will once again state the Lemma for easier reference. 

\begin{lemma}
[$C^\infty_{\pol}(\R^d)$ Coefficients]
For all $\ell \geq 0$ and $\alpha \in \N^d$, 
the operator $L_{\ell, \mathbf{z}}^*$ has $C^\infty_{\pol}(\R^d)$ coefficients. 
I.e. there exist functions $\phi_{\ell, \alpha} \in C^{\infty}_{pol}(\R^d)$ 
such that we can write 
\[
L^*_{\ell, \mathbf{z}}  = \sum_{0 \leq \abs{\alpha} \leq 2\ell + 2}  \phi_{\ell, \alpha}(x)\partial_{\alpha} \,.
\]
\end{lemma}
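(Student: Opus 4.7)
The plan is to proceed by induction on $\ell$, relying on the recursive construction of $L_{\ell,\mathbf{z}}$ from the $A_j$'s in \eqref{eq:appendix_L_j}, and then to pass to the $L^2(\rho_{\mathbf{z}})$-adjoint at the end. The base case $\ell=0$ is immediate because $L_0 = A_1 = L_{\mathbf{z}}$, which has coefficients $-\nabla F_{\mathbf{z}}$ and $1/\beta$, both manifestly in $C^\infty_{\pol}(\R^d)$ under \cref{asm:smooth_loss}, and is of order $2 = 2(0)+2$.

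For the structural input I will first recall from \cref{prop:AsExp} that each $A_j$ is a differential operator of order $2j$ whose coefficients are polynomial expressions in the derivatives of $F_{\mathbf{z}}$ and in $1/\beta$. Since $F_{\mathbf{z}} = \frac{1}{n}\sum_i f(\cdot,z_i)$ with $f(\cdot,z)\in C^\infty_{\pol}(\R^d)$ uniformly in $z$, and since $C^\infty_{\pol}(\R^d)$ is an algebra stable under differentiation, every such coefficient of $A_j$ lies in $C^\infty_{\pol}(\R^d)$.

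The induction step then proceeds as follows. Assuming $L_m$ is of order $2m+2$ with $C^\infty_{\pol}(\R^d)$ coefficients for all $m<\ell$, I read off from \eqref{eq:appendix_L_j} that $L_\ell$ is a finite linear combination of $A_{\ell+1}$ and compositions $L_{n_1}\cdots L_{n_m}A_{n_{m+1}+1}$ with $n_1+\cdots+n_{m+1}=\ell-m$. A direct order count gives
\[
\sum_{i=1}^m (2n_i+2) + 2(n_{m+1}+1) = 2\ell+2,
\]
matching the order of $A_{\ell+1}$. Moreover, each such composition is itself a differential operator with $C^\infty_{\pol}(\R^d)$ coefficients, since the coefficients of a product $\psi(x)\de_\alpha\circ\varphi(x)\de_\gamma$ are finite sums of products of $\psi$ and derivatives of $\varphi$, all lying in $C^\infty_{\pol}(\R^d)$ by the closure properties above. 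Hence $L_\ell = \sum_{|\alpha|\leq 2\ell+2}\phi_{\ell,\alpha}(x)\de_\alpha$ with $\phi_{\ell,\alpha}\in C^\infty_{\pol}(\R^d)$.

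Finally I will compute the adjoint. For any summand $\psi(x)\de_\alpha$, integration by parts against $\rho_{\mathbf{z}}\propto e^{-\beta F_{\mathbf{z}}}$ yields
\[
(\psi\de_\alpha)^* v = (-1)^{|\alpha|}\rho_{\mathbf{z}}^{-1}\de_\alpha(\psi v \rho_{\mathbf{z}}),
\]
and the general Leibniz rule rewrites this as a finite sum of terms $\widetilde\psi(x)\de_{\alpha'}v$ with $|\alpha'|\leq|\alpha|$, where each $\widetilde\psi$ is a polynomial in derivatives of $\psi$ and in the ``logarithmic derivatives'' $\rho_{\mathbf{z}}^{-1}\de_\delta\rho_{\mathbf{z}}$. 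Fa\`a di Bruno expresses the latter as polynomials in the derivatives of $-\beta F_{\mathbf{z}}$, which lie in $C^\infty_{\pol}(\R^d)$. Thus each $\widetilde\psi\in C^\infty_{\pol}(\R^d)$, the order bound $2\ell+2$ is preserved, and summing over the summands of $L_\ell$ yields the claimed representation of $L^*_{\ell,\mathbf{z}}$.

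The main obstacle is essentially bookkeeping: one must carefully verify that the compositions in \eqref{eq:appendix_L_j} preserve both polynomial growth of coefficients and the order bound $2\ell+2$, and that the adjoint computation introduces no singular or super-polynomially growing factors. Both points ultimately reduce to the closure of $C^\infty_{\pol}(\R^d)$ under multiplication and differentiation, together with the fact that $\nabla\log\rho_{\mathbf{z}} = -\beta\nabla F_{\mathbf{z}}\in C^\infty_{\pol}(\R^d)$ (and similarly for all higher logarithmic derivatives) under \cref{asm:smooth_loss}.
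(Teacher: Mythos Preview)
Your proposal is correct and follows essentially the same route as the paper: establish that each $A_j$ has $C^\infty_{\pol}$ coefficients from its recursive construction, induct on $\ell$ using \eqref{eq:appendix_L_j} together with the closure of $C^\infty_{\pol}(\R^d)$ under products and differentiation, and then pass to the adjoint via integration by parts against $\rho_{\mathbf{z}}$, noting that $\rho_{\mathbf{z}}^{-1}\de_\delta\rho_{\mathbf{z}}$ is a polynomial in derivatives of $F_{\mathbf{z}}$. Your explicit order count $\sum(2n_i+2)+2(n_{m+1}+1)=2\ell+2$ is a useful detail the paper leaves implicit, and your base case $L_0=L_{\mathbf{z}}$ is in fact the correct reading (the paper's proof writes $L_0=I$, which appears to be a slip).
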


\begin{proof}

We start by writing out the recursive definition of the operator 
$A_{N+1}$, where if $A_N = \sum_{\alpha} A_N^\alpha(x) \de_\alpha$, 
then we have
\[ A_{N+1} = \sum_\alpha \sum_{j=1}^d A_N^\alpha(x) 
    \left( - \de_j F_\mathbf{z} 
    \de_j + A_N^\alpha \frac{1}{\beta} \de_{jj} 
    \right) \de_\alpha.
\]

Since the only coefficients of $L_\mathbf{z}$ are $F_\mathbf{z}$ and $1/\beta$, 
all products of such coefficients must also be $C^\infty_{\pol}(\R^d)$. 

Next we will prove $L_N$ has $C^\infty_{\pol}(\R^d)$
coefficients by induction on $N$. 
The $N=0$ case follows trivially since $L_0 = I$. 
Now assuming the case for $0,1, \ldots, N-1$, 
we will prove the case for $L_N$. 

We recall the definition of $L_N$
\[ L_N = A_{N+1} + \sum_{l=1}^N \frac{B_l}{l!} 
    \sum_{n_1 + \cdots n_{l+1} = n-l} L_{n_1} \cdots L_{n_l} A_{n_{l+1} + 1 }
    \, . 
\]

Observe any composition of operators with $C^\infty_{\pol}(\mathbb{R}^d)$ is still an operator with $C^\infty_{\pol}(\mathbb{R}^d)$ coefficients. 
And since $L_N$ is defined recursively using $L_j$ with $j<N$, we have that $L_N$ must have $C^\infty_{\pol}(\mathbb{R}^d)$ coefficients. 

We will now compute the adjoint operator using integration by parts. First we let $\psi_\alpha \in C^\infty_{\pol}(\mathbb{R}^d)$ be the coefficients of $L_N$, i.e. 
\[ L_N = \sum_{0 \leq |\alpha| \leq 2N + 2}
    \psi_\alpha \de_\alpha \,.
\]

Then for any $f,g \in C^\infty_{\pol}(\mathbb{R}^d)$, we have 
\begin{align*}
    \int_{\mathbb{R}^d} f L_N g \rho dx 
    &= \int_{\mathbb{R}^d} g \sum_{0 \leq \abs{\alpha} \leq 2N + 2} \psi_{\alpha}(x)\partial_{\alpha} g \rho dx \\
    &= \int_{\mathbb{R}^d} \sum_{0 \leq \abs{\alpha} \leq 2N + 2} (-1)^{|\alpha|} f \de_\alpha
    (g \psi_\alpha \rho) dx \,.
\end{align*}

Since $g, \psi_\alpha, F_\mathbf{z} \in C^\infty_{\pol}(\mathbb{R}^d)$, we must then also have that $\de_\alpha (g \psi_\alpha \rho) =: \phi_\alpha \rho$ with $\phi_\alpha \in C^\infty_{\pol}(\mathbb{R}^d)$, which is the desired result.

\end{proof}

\subsection{Proof of Lemma \ref{lm:closeness_of_operators}}
\label{subsec:closeness_of_operators}

We will once again start by restating the Lemma.

\begin{lemma} 
For all $\ell > 0$, there exists a differential operator $\widehat L_\ell:= \widehat L_\ell(x)$ of order $2\ell+2$ with $C^\infty_{\pol}(\mathbb{R}^d)$ coefficients, and independent of $n$, 
such that we can write 
\[ \frac{1}{n} \widehat L_\ell^* 
    = L^*_{\ell, \mathbf{z}} - L^*_{\ell, \overline{\mathbf{z}}} \,.
\]
Furthermore, for all $\ell \geq 0$, 
$\phi \in C^\infty_{\pol}(\R^d)$, 
and $\alpha \in \mathbb{N}^d$, 
there exist non-negative constants $C_{\ell, \phi \de_\alpha}$, depending on the $L^2\left( \rho_{\mathbf{z}^{(i)}} \right)$-norm of $\mu_{k-\ell, \overline{\mathbf{z}}}$ and its derivatives up to order $2\ell+2 + |\alpha|$, such that
\[
\norm{\phi \de_\alpha (( L^*_{\ell, \mathbf{z}} - L^*_{\ell, \overline{\mathbf{z}}} )
            \mu_{k-\ell, \overline{\mathbf{z}}}) q_{\overline{z_i}} }_{ L^2\left( \rho_{\mathbf{z}^{(i)}} \right)} \leq \frac{C_{\ell,\phi\de_\alpha}}{n} \,.
\]
\end{lemma}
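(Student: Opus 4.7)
The plan is to proceed by induction on $\ell$, exploiting the explicit recursive construction of the operators $L_{\ell, \mathbf{z}}$ from the asymptotic expansion. The key structural observation is that each $L_{\ell, \mathbf{z}}$ (and hence its adjoint $L^*_{\ell, \mathbf{z}}$) has coefficients that are \emph{polynomial expressions} in the derivatives of $F_{\mathbf{z}}$ and in $\beta^{-1}$. Tracking this polynomial dependence carefully is what lets us factor out the $1/n$ from the difference.

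First I would verify, following the recursion \eqref{eq:appendix_L_j} together with the definition of $A_j$ from the proof of \cref{prop:AsExp}, that there exist polynomials $\{P_{\ell, \alpha}\}$ such that
\begin{equation*}
  L_{\ell, \mathbf{z}} \;=\; \sum_{|\alpha| \leq 2\ell + 2} P_{\ell, \alpha}\bigl( \{ \partial_\gamma F_\mathbf{z} \}_{|\gamma| \leq \ell+1};\, \beta^{-1}\bigr) \,\partial_\alpha \,.
\end{equation*}
The base case $L_0 = L_\mathbf{z}$ is immediate, and the induction step uses that both $A_{\ell+1}$ and each product $L_{n_1} \cdots L_{n_s} A_{n_{s+1}+1}$ with $n_1 + \cdots + n_{s+1} = \ell - s$ produces operators with polynomial coefficients, and that the order constraint $n_1 + \cdots + n_{s+1} = \ell - s$ caps the total differential order at $2\ell + 2$. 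Passing to adjoints (as in the proof of \cref{lm:coefficients}) preserves this polynomial structure, so the same description holds for $L^*_{\ell, \mathbf{z}}$.

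Given this, the difference $L^*_{\ell, \mathbf{z}} - L^*_{\ell, \overline{\mathbf{z}}}$ has coefficients of the form $P_{\ell, \alpha}(\{\partial_\gamma F_\mathbf{z}\}) - P_{\ell, \alpha}(\{\partial_\gamma F_{\overline{\mathbf{z}}}\})$. I would apply the standard polynomial telescoping identity
\begin{equation*}
  P(a_1, \ldots, a_m) - P(b_1, \ldots, b_m) \;=\; \sum_{j=1}^{m} (a_j - b_j) \, Q_j(a_1, \ldots, a_j, b_j, \ldots, b_m)
\end{equation*}
for polynomials $Q_j$, together with the key identity $\partial_\gamma F_\mathbf{z} - \partial_\gamma F_{\overline{\mathbf{z}}} = \tfrac{1}{n}\bigl(\partial_\gamma f(\cdot, z_i) - \partial_\gamma f(\cdot, \overline{z_i})\bigr)$. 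Factoring out $1/n$ from each coefficient and absorbing the remaining $C^\infty_{\pol}$ factors into the new coefficients yields an operator $\widehat{L}^*_\ell$ of order at most $2\ell+2$ with $C^\infty_{\pol}(\mathbb{R}^d)$ coefficients independent of $n$, such that $L^*_{\ell, \mathbf{z}} - L^*_{\ell, \overline{\mathbf{z}}} = \tfrac{1}{n}\widehat{L}^*_\ell$. The operator $\widehat{L}_\ell$ itself is then obtained by reading off the adjoint structure on the $C^\infty_{\pol}$ coefficients via the explicit integration-by-parts formula used in \cref{lm:coefficients}.

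For the norm bound, $\widehat{L}^*_\ell \mu_{k-\ell, \overline{\mathbf{z}}}$ is a polynomial combination of derivatives of $\mu_{k-\ell, \overline{\mathbf{z}}}$ of order $\leq 2\ell+2$ with $C^\infty_{\pol}$ coefficients; since $\mu_{k-\ell, \overline{\mathbf{z}}} \in C^\infty_{\pol}(\mathbb{R}^d)$ by \cref{prop:appendix_poisson_regularity}, this function lies in $C^\infty_{\pol}$ as well. Then $\phi \partial_\alpha \widehat{L}^*_\ell \mu_{k-\ell, \overline{\mathbf{z}}}$ still has polynomial growth, and its $L^2(\rho_{\mathbf{z}^{(i)}})$ norm (weighted by the bounded factor $q_{\overline{z_i}}$) is finite by \cref{lm:appendix_moment_bound}. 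The resulting constant $C_{\ell, \phi \partial_\alpha}$ absorbs polynomial bounds on $\mu_{k-\ell, \overline{\mathbf{z}}}$ up to order $2\ell+2 + |\alpha|$, and the prefactor $1/n$ gives the stated decay.

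The main obstacle is the bookkeeping in the first step: ensuring both that $\widehat{L}_\ell$ is genuinely independent of $n$ (no hidden $n$-dependence leaks through the telescoping) and that the polynomial recursion for $L_{\ell, \mathbf{z}}$ yields a differential order of at most $2\ell+2$ throughout. The latter is the subtle point because the formula \eqref{eq:appendix_L_j} involves products of the $L_{n_i}$'s, whose orders could a priori accumulate; the saving grace is the weighted degree constraint $n_1 + \cdots + n_{s+1} = \ell - s$ which, combined with the induction hypothesis that $L_{n_i}$ has order $\leq 2n_i + 2$, caps the total order at exactly $2\ell + 2$.
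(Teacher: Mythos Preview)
Your overall plan---extract a $1/n$ from the coefficient difference via polynomial telescoping, then bound the remainder with \cref{lm:appendix_moment_bound}---matches the paper's strategy, and your Step~3 (the norm bound) is fine. The gap is in the structural claim you make at the outset: in the SGLD setting the coefficients of $L_{\ell,\mathbf{z}}$ are \emph{not} polynomials in $\{\partial_\gamma F_\mathbf{z}\}$. From the proof of \cref{prop:AsExp} one has $A_j(x)=\mathbb{E}_{\zeta_0}\widetilde{A}_j(x,\zeta_0)$, where the expectation is over the random minibatch $\zeta_0$. The coefficients of $\widetilde{A}_j$ are indeed polynomials in $\partial_\gamma F_{\zeta_0}$, but after averaging a typical coefficient of $A_j$ reads $\mathbb{E}_{\zeta_0}\bigl[\partial_{i_1}F_{\zeta_0}\cdots\partial_{i_l}F_{\zeta_0}\bigr]$, and since $\mathbb{E}[F_{\zeta_0}^2]\neq F_\mathbf{z}^2$ when $n_b<n$ this is not a polynomial in the $\partial_\gamma F_\mathbf{z}$ alone. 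Consequently your identity $\partial_\gamma F_\mathbf{z}-\partial_\gamma F_{\overline{\mathbf{z}}}=\tfrac{1}{n}(\cdots)$ cannot be applied to telescope the difference of the \emph{averaged} operators.

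The paper repairs this by running the telescoping one level down, on $\widetilde{A}_N(x,\zeta_0)-\widetilde{A}_N(x,\overline{\zeta_0})$, with $\zeta_0$ and $\overline{\zeta_0}$ coupled to share the same index set. The relevant difference is then $F_{\zeta_0}-F_{\overline{\zeta_0}}=\tfrac{1}{n_b}\bigl(f(\cdot,z_i)-f(\cdot,\overline{z_i})\bigr)\mathds{1}_B$ with $B=\{z_i\in\zeta_0\}$; your polynomial telescoping now legitimately pulls out a common factor $\tfrac{1}{n_b}\mathds{1}_B$, and the $1/n$ emerges only after averaging, via $\mathbb{E}_{\zeta_0}\bigl[\tfrac{1}{n_b}\mathds{1}_B\,\phi_{\zeta_0}\bigr]=\tfrac{1}{n}\widetilde{\phi}$ (using $\mathbb{P}(B)=n_b/n$). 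Once $\widehat{A}_N$ is constructed this way, the induction through \eqref{eq:appendix_L_j} to build $\widehat{L}_\ell$ and the final norm bound proceed exactly as you sketch.
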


\begin{proof}

We will separate the proof into several steps. 

\vspace{0.2cm}
\noindent
\textbf{Step 1. Write 
$\frac{1}{n} \widehat A_N = A_{N, \mathbf{z}} - A_{N, \overline{\mathbf{z}}}$}

We start by recalling $\zeta_0$ is a uniform subsample 
of $\mathbf{z}$ of size $n_b$, 
but we can further define without loss of generality 
$\overline{\zeta_0}$ 
as a uniform subsample of $\overline{\mathbf{z}}$, 
such that they $\zeta_0 = \overline{\zeta_0}$ 
whenever $z_i \notin \zeta_0$. 
Then we have 
\[ A_{N, \mathbf{z}} = \mathbb{E}_\mathbf{z} 
    \widetilde{A}_{N,\zeta_0},
\]
where the expectation is over the randomness of 
$\zeta_0$ only. 

We will then write out the recursive definition of the operator 
$\widetilde{A}_{N+1}(x, \zeta_0)$, 
where we expand the coefficients of the operator as 
$\widetilde{A}_N(x, \zeta_0) = 
\sum_{\alpha} \widetilde{A}_N^\alpha(x,\zeta_0) \de_\alpha$, 
then we can write
\[ \widetilde{A}_{N+1}(x,\zeta_0) 
  = \sum_\alpha \sum_{j=1}^d \widetilde{A}_N^\alpha(x,\zeta_0) 
    \left( - \de_j F_{\zeta_0} \de_j 
      + \frac{1}{\beta} \de_{jj} 
    \right) \de_\alpha.
\]

Since $A_0 = I, A_1 = L$, we can observe this forms a binomial type expansion, 
where \textbf{formally} if we define a sense of multiplication such that 
the differential operators $\de_\alpha$ does not interact with the coefficients, 
we can write
\[ `` \widetilde{A}_N(x,\zeta_0) = \left( \sum_{j=1}^d - \de_j F_{\zeta_0}
    \de_j + \frac{1}{\beta} \de_{jj} 
    \right)^N " \,,
\]
where we write in quotation marks to denote the fact that 
we are defining a new multiply operation. 

Rigorously, we can write all terms of $\widetilde{A}_N(x,\zeta_0)$ as follows 
\begin{align*}
    \widetilde{A}_N(x,\zeta_0) 
    &= \sum_{l=0}^N \binom{N}{l} \sum_{i_1, \ldots, i_N = 1}^d 
    (-1)^l \de_{i_1} F_{\zeta_0} \cdots \de_{i_l} F_{\zeta_0}
        \frac{1}{\beta^{N-l}} \de_{i_1} \cdots \de_{i_l} 
        (\de_{i_{l+1}} \cdots \de_{i_N})^2 \\
    &=: \sum_{\alpha} c_\alpha 
        \de_{i_1} F_{\zeta_0} \cdots \de_{i_l} F_{\zeta_0}
        \de_\alpha \, ,
\end{align*}
where we slightly abuse the notation in the last sum, 
such that the sum is still over $l, i_1, \ldots, i_N$, 
and therefore $\alpha$ may repeat. 

Now consider the same definition for $\overline{\mathbf{z}}$
and $\overline{\zeta_0}$, we can write 
\[ \widetilde{A}_{N}(x,\zeta_0) - \widetilde{A}_{N}(x,\overline{\zeta_0})
    = \sum_{\alpha} c_\alpha 
        (\de_{i_1} F_{\zeta_0} \cdots \de_{i_l} F_{\zeta_0}
        - \de_{i_1} F_{\overline{{\zeta_0}}} \cdots 
        \de_{i_l} F_{\overline{{\zeta_0}}})
        \de_\alpha \, .
\]

To find an operator $\widehat A_N$ such that 
$\frac{1}{n} \widehat A_N = A_{N, \mathbf{z}} - A_{N, \overline{\mathbf{z}}}$, 
it is sufficient to show that for each $\alpha$, 
we can find a function $\phi^\alpha$ independent of $n$ 
such that 
\[ \frac{1}{n} \phi^\alpha 
    = \mathbb{E}_\mathbf{z} \bigg( 
      \de_{i_1} F_{\zeta_0} \cdots \de_{i_l} F_{\zeta_0}
        - \de_{i_1} F_{\overline{{\zeta_0}}} \cdots 
        \de_{i_l} F_{\overline{{\zeta_0}}} 
        \bigg)
        \, .
\] 

To this end, we can use the decompositions
\[ F_\mathbf{z} = F_{\mathbf{z}^{(i)}} + \frac{1}{n} f(x,z_i), \quad 
  F_{\zeta_0} = F_{\zeta_0^{(i)}} + \frac{1}{n_b} f(x,z_i) \mathds{1}_B,
\]
where $\mathds{1}$ is the indicator function, 
and the event $B$ is defined as 
$B = \{z_i \in \zeta_0\} = \{\overline{z_i} \in \overline{\zeta_0}\}$.
 
This leads to 
\begin{align*}
    &\quad \de_{i_1} F_{\zeta_0} \cdots \de_{i_l} F_{\zeta_0}
        - \de_{i_1} F_{\overline{{\zeta_0}}} \cdots 
        \de_{i_l} F_{\overline{{\zeta_0}}} \\
    &= 
    \de_{i_1} F_{\zeta_0^{(i)}} 
    ( \de_{i_2} F_{\zeta_0} \cdots \de_{i_l} F_{\zeta_0}
            - \de_{i_2} F_{\overline{\zeta_0}} \cdots 
            \de_{i_l} F_{\overline{\zeta_0}} ) \\
    &\quad + \frac{1}{n_b} \mathds{1}_B
            \left(\de_{i_1} f(x,z_i)
            \de_{i_2} F_{\zeta_0} \cdots \de_{i_l} F_{\zeta_0}
            - \de_{i_1} f(x, \overline{z_i}) 
            \de_{i_2} F_{\overline{\zeta_0}} \cdots 
            \de_{i_l} F_{\overline{\zeta_0}}
            \right),
\end{align*}
where we observe that the terms inside the second bracket 
are $n$-independent. 
If we take the expectation $\mathbb{E}_\mathbf{z}$ on these terms, 
we will be averaging over $\binom{n}{n_b}$ terms, 
and only $\frac{n_b}{n} \binom{n}{n_b} = \binom{n-1}{n_b - 1}$ 
of these terms will be non-zero 
due to the indicator function $\mathds{1}_B$. 

Hence if we let $\phi_{\zeta_0}$ be any $n$-independent function, 
we will have 
\begin{equation}
\label{eq:zeta_avg_ratio}
  \mathbb{E}_\mathbf{z} 
    \frac{1}{n_b} \mathds{1}_B
    \phi_{\zeta_0}
  = \frac{1}{n_b} 
    \frac{1}{n} \sum_{\zeta_0 \subset \mathbf{z}} \mathds{1}_B \phi_{\zeta_0} 
  = \frac{1}{n} \widetilde{\phi}, 
\end{equation}
where $\widetilde{\phi}$ is also $n$-independent.

We can continue taking the expansion to get 
\begin{align*}
  &\quad \de_{i_1} F_{\zeta_0} \cdots \de_{i_l} F_{\zeta_0}
        - \de_{i_1} F_{\overline{{\zeta_0}}} \cdots 
        \de_{i_l} F_{\overline{{\zeta_0}}} \\
  &= \frac{1}{n_b} \mathds{1}_B \sum_{j=1}^l
        \de_{i_1} F_{\zeta_0^{(i)}} \cdots 
        \de_{i_{j-1}} F_{\zeta_0^{(i)}} 
        \de_{i_j} f(x,z_i)
        \de_{i_{j+1}} F_{\zeta_0} \cdots
        \de_{i_l} F_{\zeta_0} \\
    &\quad - \frac{1}{n_b} \mathds{1}_B \sum_{j=1}^l
        \de_{i_1} F_{\zeta_0^{(i)}} \cdots 
        \de_{i_{j-1}} F_{\zeta_0^{(i)}} 
        \de_{i_j} f(x,\overline{z_i})
        \de_{i_{j+1}} F_{\overline{\zeta_0}} \cdots
        \de_{i_l} F_{\overline{\zeta_0}} \\
    &=: \frac{1}{n_b} \mathds{1}_B \phi^\alpha_{\zeta_0, \overline{\zeta_0}} \, ,
\end{align*}
where the sum follows from 
recursively applying the first step to the terms 
$\de_{i_1} F_{\zeta_0^{(i)}} 
( \de_{i_2} F_{\zeta_0} \cdots \de_{i_l} F_{\zeta_0}
- \de_{i_2} F_{\overline{\zeta_0}} \cdots 
\de_{i_l} F_{\overline{\zeta_0}} )$.

Computing the expectation using Equation \eqref{eq:zeta_avg_ratio}, 
we get that 
\[ \mathbb{E}_\mathbf{z} \frac{1}{n_b} \mathds{1}_B 
  \phi^\alpha_{\zeta_0, \overline{\zeta_0}}
  = \frac{1}{n} \phi^\alpha.
\]

This implies we can define the desired operator as 
\[ \widehat A_N = \sum_\alpha c_\alpha \phi^\alpha \de_\alpha,
\]
hence completing the proof for the first step.

\vspace{0.2cm}
\noindent
\textbf{Step 2. Finding $\widehat L_N$}

We will prove by induction over $N$
that there exists an operator $\widehat L_N$ 
independent of $n$ such that 
$\frac{1}{n} \widehat L_N = L_{N,\mathbf{z}} - L_{N, \overline{\mathbf{z}}}$.

For the $N=1$ case, it follows by computing
\[ L_{N,\mathbf{z}} - L_{N, \overline{\mathbf{z}}}
    = L_\mathbf{z} - L_{\overline{\mathbf{z}}}
    = \frac{1}{n} \sum_{j=1}^d 
        - \de_j ( f(x, z_i) - f(x, \overline{z_i}) ) \de_j.
\]

Now assuming the case for $N-1$, 
we will prove the statement for case $N$.
We first recall the definition of $L_N$
\[ L_N = A_{N+1} + \sum_{l=1}^N \frac{B_l}{l!} 
    \sum_{n_1 + \cdots n_{l+1} = n-l} L_{n_1} \cdots L_{n_l} A_{n_{l+1} + 1 }
    \, . 
\]

Since $A_{N, \mathbf{z}} - A_{N, \overline{\mathbf{z}}} 
= \frac{1}{n} \widehat A_N$, 
it is sufficient to analyze 
\begin{align*}
    &\quad
    L_{n_1, \mathbf{z}} \cdots L_{n_l, \mathbf{z}} A_{n_{l+1} + 1, \mathbf{z} }
    - L_{n_1, \overline{\mathbf{z}}} \cdots L_{n_l, \overline{\mathbf{z}}}
    A_{n_{l+1} + 1, \overline{\mathbf{z}} } \\
    &= ( L_{n_1, \mathbf{z}} \cdots L_{n_l, \mathbf{z}}
        - L_{n_1, \overline{\mathbf{z}}} \cdots L_{n_l, \overline{\mathbf{z}}} )
     A_{n_{l+1} + 1, \mathbf{z} }
     + L_{n_1, \overline{\mathbf{z}}} \cdots L_{n_l, \overline{\mathbf{z}}}
        \frac{1}{n} \widehat A_{n_{l+1} + 1}
\end{align*}

Now observe that since $n_1, \cdots, n_l < N$, 
we have $\frac{1}{n} \widehat L_{n_j} 
= L_{n_j, \mathbf{z}} - L_{n_j, \overline{\mathbf{z}}}$
for all $j = 1, \cdots, l$.
This allows us to write  
\begin{align*}
    &\quad L_{n_1, \mathbf{z}} \cdots L_{n_l, \mathbf{z}}
        - L_{n_1, \overline{\mathbf{z}}} \cdots L_{n_l, \overline{\mathbf{z}}} \\
    &= ( L_{n_1, \mathbf{z}} - L_{n_1, \overline{\mathbf{z}}} )
        L_{n_2, \mathbf{z}} \cdots L_{n_l, \mathbf{z}}
        + L_{n_1, \overline{\mathbf{z}}} 
            ( L_{n_2, \mathbf{z}} \cdots L_{n_l, \mathbf{z}} 
        - L_{n_2, \overline{\mathbf{z}}} \cdots L_{n_l, \overline{\mathbf{z}}} )
        \\
    &= \frac{1}{n} \widehat L_{n_1} 
        L_{n_2, \mathbf{z}} \cdots L_{n_l, \mathbf{z}}
        + L_{n_1, \overline{\mathbf{z}}} 
        ( L_{n_2, \mathbf{z}} \cdots L_{n_l, \mathbf{z}}
        - L_{n_2, \overline{\mathbf{z}}} \cdots L_{n_l, \overline{\mathbf{z}}}
        ) \\
    &= \frac{1}{n} \sum_{j=1}^l 
        L_{n_1, \overline{\mathbf{z}}}
        \cdots 
        L_{n_{j-1}, \overline{\mathbf{z}}}
        \widehat L_{n_j}
        L_{n_{j+1}, \mathbf{z}} \cdots L_{n_l, \mathbf{z}}
\end{align*}

Putting it together we have 
\begin{align*}
    &\quad \widehat L_N \\
    &= \widehat A_{N+1} + \sum_{l=1}^N \frac{B_l}{l!} 
    \sum_{n_1 + \cdots n_{l+1} = n-l}
    \sum_{j=1}^l 
        L_{n_1, \overline{\mathbf{z}}}
        \cdots 
        L_{n_{j-1}, \overline{\mathbf{z}}}
        \widehat L_{n_j}
        L_{n_{j+1}, \mathbf{z}} \cdots L_{n_l, \mathbf{z}}
    A_{n_{l+1} + 1, \mathbf{z} } \\
    &\quad 
    + \sum_{l=1}^N \frac{B_l}{l!} 
        \sum_{n_1 + \cdots n_{l+1} = n-l}
        L_{n_1, \overline{\mathbf{z}}} \cdots L_{n_l, \overline{\mathbf{z}}}
        \widehat A_{n_{l+1} + 1} \, .
\end{align*}

Since the adjoint operation is linear, 
we can compute the adjoint for each operator separately. 
Hence we have the desired result 
\[ \frac{1}{n} \widehat L^*_N 
    = L_{N, \mathbf{z}} - L_{N, \overline{\mathbf{z}}} \, .
\]

\vspace{0.2cm}
\noindent
\textbf{Step 3. Providing the Bound}

To complete the bound it is sufficient 
to observe 
\[ \norm{\phi \de_\alpha (( L^*_{\ell, \mathbf{z}} - L^*_{\ell, \overline{\mathbf{z}}} )
            \mu_{k-\ell, \overline{\mathbf{z}}}) q_{\overline{z_i}} }_{ L^2\left( \rho_{\mathbf{z}^{(i)}} \right)}
    = \frac{1}{n} 
    \norm{\phi \de_\alpha ( \widehat L_N^*
            \mu_{k-\ell, \overline{\mathbf{z}}}) q_{\overline{z_i}} }_{ L^2\left( \rho_{\mathbf{z}^{(i)}} \right)}
    \leq \frac{C}{n}, 
\]
where the bound follows from 
Lemma \ref{lm:appendix_moment_bound}.

\end{proof}

\section{Runtime Complexity: Proof of \cref{cor:runtime}}
\label{sec:appendix_proof_corollary} 

We will restate and prove the result here. 

\begin{corollary}
[Runtime Complexity]
Suppose $\{X_k\}_{k\geq 0}$ is any discretization 
of Langevin diffusion \eqref{eq:langevin_diffusion} 
admiting an approximate stationary distribution $\pi^N_{\mb{z}}$ 
of the type in \cref{thm:backward_analysis_main}. 
Then there exists a constant $C>0$ (depending on $N$), 
such that for all 
\begin{equation}
    \epsilon > 0 \,, \quad 
	0 < \eta < \min\left\{\frac{2m}{M^2} \,, C \epsilon^{1/N} \right\} \,, \quad 
	n \geq \frac{C}{\epsilon (1-\eta)} \,, \quad 
	k \geq \frac{C}{\epsilon^{1/N}} \log \frac{1}{\epsilon} \,, 
\end{equation}
we achieve the following expected generalization bound 
\begin{equation}
	\left| \, 
		\mathbb{E} 
		\left[ F( X_k )
			- F_{\mb{z}}( X_k )
		\right] \, 
	\right| 
	\leq \epsilon \,. 
\end{equation}
\end{corollary}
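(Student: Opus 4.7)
The plan is to bound the generalization error by splitting it into three pieces via the approximate stationary distribution $\pi^N_{\mb{z}}$, namely write
\begin{equation*}
    \mathbb{E}[F(X_k) - F_{\mb{z}}(X_k)]
    = \mathbb{E}\bigl[F(X_k) - \pi^N_{\mb{z}}(F)\bigr]
    + \mathbb{E}\bigl[\pi^N_{\mb{z}}(F) - \pi^N_{\mb{z}}(F_{\mb{z}})\bigr]
    + \mathbb{E}\bigl[\pi^N_{\mb{z}}(F_{\mb{z}}) - F_{\mb{z}}(X_k)\bigr],
\end{equation*}
and then apply the triangle inequality. The middle term is the pure generalization error of the modified stationary distribution, which is controlled by \cref{thm:pi_gen_bound}, yielding a contribution of at most $C/(n(1-\eta))$. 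The outer two terms are both of the form $\mathbb{E}[\phi(X_k)] - \pi^N_{\mb{z}}(\phi)$ with $\phi \in \{F, F_{\mb{z}}\}$, which after taking the outer expectation over $\mb{z}$ falls into the scope of \cref{thm:backward_analysis_main}.

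To apply \cref{thm:backward_analysis_main}, I would first note that \cref{asm:smooth_loss} guarantees $F, F_{\mb{z}} \in C^{6N+2}_\ell(\R^d)$ uniformly in $\mb{z}$, so the test-function seminorm $\|\phi - \rho_{\mb{z}}(\phi)\|_{6N+2, \ell}$ is bounded by some constant depending only on $M_k$ for $|k|\leq 6N+2$ (and not on $n$). Since $X_0 = x$ is deterministic, the prefactor $(1+|x|)^{\ell'}$ is also an absolute constant. Hence each of the outer two terms is bounded by $C\bigl(e^{-\lambda k \eta / 2} + \eta^N\bigr)$ for a new constant $C$ depending on $N$ (and on $x$, $M_k$, $\beta$, $m$, $b$, $\lambda$, $\ell'$, but not on $\epsilon$, $\eta$, $k$, or $n$).

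Combining the three estimates produces the bound
\begin{equation*}
    \left|\mathbb{E}[F(X_k) - F_{\mb{z}}(X_k)]\right|
    \leq C\bigl(e^{-\lambda k\eta / 2} + \eta^N\bigr) + \frac{C}{n(1-\eta)}.
\end{equation*}
It remains to choose parameters to make each of the three summands at most $\epsilon/3$. Taking $\eta \leq (\epsilon/(3C))^{1/N}$ handles the $\eta^N$ term, yielding the stated $\eta \leq C'\epsilon^{1/N}$ condition (while also keeping $\eta < 2m/M^2$ so that \cref{thm:backward_analysis_main} applies). Taking $n \geq 3C/(\epsilon(1-\eta))$ handles the third summand. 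Finally, to make $e^{-\lambda k\eta / 2} \leq \epsilon/(3C)$ we require $k \geq (2/(\lambda\eta))\log(3C/\epsilon)$; plugging in the upper bound on $\eta$ gives $k \geq (C''/\epsilon^{1/N})\log(1/\epsilon)$ for a new constant $C''$, as claimed. Absorbing all constants into a single $C$ finishes the proof.

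The step that requires the most care is tracking that the constant $C$ in \cref{thm:backward_analysis_main} is indeed independent of $\mb{z}$ (so that taking expectation over $\mb{z}\sim \mathcal{D}^n$ is harmless), which follows from the uniform-in-$z$ regularity assumption in \cref{asm:smooth_loss}; otherwise the argument is a routine combination of the two preceding theorems and a choice of constants.
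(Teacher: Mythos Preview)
Your proposal is correct and follows essentially the same route as the paper: the same three-term decomposition through $\pi^N_{\mb{z}}$, the same invocation of \cref{thm:backward_analysis_main} on the outer terms and \cref{thm:pi_gen_bound} on the middle term, and the same parameter choices to make each summand $O(\epsilon)$. Your additional remarks about the uniform-in-$\mb{z}$ constants are a welcome bit of extra care that the paper leaves implicit.
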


\begin{proof}

We will start by decomposing the generalization error 
via an approximation step with $\pi^N_{\mb{z}}$ 
\begin{equation}
\begin{aligned}
	& \left| \, 
		\mathbb{E} 
		\left[ F( X_k )
			- F_{\mb{z}}( X_k )
		\right] \, 
	\right| \\
	\leq& 
		\left| \, 
			\mathbb{E} 
			\left[ F( X_k )
				- \pi^N_{\mb{z}}( F ) 
			\right] \, 
		\right| 
		+ 
		\left| \, 
			\mathbb{E} 
			\left[ 
				\pi^N_{\mb{z}}( F ) 
				- 
				\pi^N_{\mb{z}}( F_{\mb{z}} ) 
			\right] \, 
		\right| 
		+ 
		\left| \, 
			\mathbb{E} 
			\left[ 
				\pi^N_{\mb{z}}( F_{\mb{z}} ) 
				- 
				F_{\mb{z}}( X_k ) 
			\right] \, 
		\right| 
		\\ 
	\leq& 
		C ( e^{-\lambda k \eta / 2} + \eta^N ) 
		+ 
		\frac{ C }{ n (1-\eta) } 
		\,, 
\end{aligned}
\end{equation}
where we used the results of \cref{thm:backward_analysis_main,thm:pi_gen_bound} 
and absorbed dependence on $x, F, F_{\mb{z}}$ 
into the constant $C$. 

It is then sufficient to confirm all three terms are of order $O(\epsilon)$. 
Observe clearly choosing $\eta = O(\epsilon^{1/N})$ 
and $n = \Omega( \epsilon^{-1} )$ is sufficient for the latter two terms. 
We will then observe that 
\begin{equation}
	e^{-\lambda k \eta / 2} = O(\epsilon) 
	\iff 
	k = \Omega( \eta^{-1} \log(\epsilon^{-1}) ) \,, 
\end{equation}
and substituting in the choice of $\eta^{-1} = \epsilon^{-1/N}$ 
gives us the desired result. 

\end{proof}

\appendix

\addcontentsline{toc}{section}{References}
\bibliographystyle{plainnat}
\bibliography{backward.bib}

\section{Signed Measure Results}
\label{sec:appendix_signed_measure}

We start this section by mentioning that the approximate stationary measure 
constructed using weak backward error analysis   
can be a signed measure. 

\begin{proposition}
[The Approximate Stationary Measure is Signed]
\label{prop:signed_measure}
Consider the following simple Ornstein-Uhlenbeck process
in $\mathbb{R}^d$
\[ dX(t) = - X(t) dt + \sqrt{\frac{2}{\beta}} dW(t),
\]
with $f(x, z) = \frac{1}{2} |x|^2$ for all $z \in \mathcal{Z}$. 
We have that whenever $\frac{2}{d} < \eta$, 
the approximate density $\pi^{1}(x)$ is not always positive, 
specifically 
\[ \pi^{1}(0) = (1 + \eta \mu_1(0)) \rho(0) < 0.
\] 
\end{proposition}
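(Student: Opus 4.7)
The plan is to carry out every step of the weak backward error analysis construction of \cref{sec:sketch} in closed form, which is possible because the quadratic potential $f(x,z) = |x|^2/2$ makes every operator preserve polynomials and admits an explicit polynomial solution to the Poisson equation for $\mu_1$.

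First I would compute the leading correction $L_1$. Expanding the Euler step $X_1 = (1-\eta)x + \sqrt{2\eta/\beta}\,\xi$ with $\xi \sim \mathcal{N}(0, I_d)$ by Taylor's theorem in $\phi$ and evaluating the Gaussian moments of the increment $\Delta = X_1 - x$, I would read off $A_2$ as the $\eta^2$ coefficient of $\mathbb{E}\phi(X_1)$, obtaining
\[ A_2\phi = \tfrac{1}{2}\langle x,\nabla^2\phi\,x\rangle - \tfrac{1}{\beta}\langle x,\nabla\Delta\phi\rangle + \tfrac{1}{2\beta^2}\Delta^2\phi. \]
A direct computation of $L^2$ using the identity $\Delta(\langle x,\nabla\phi\rangle) = 2\Delta\phi + \langle x,\nabla\Delta\phi\rangle$, followed by the defining relation $A_2 = L_1 + \tfrac{1}{2}L^2$, leads to massive cancellation and gives the remarkably simple operator $L_1\phi = -\tfrac{1}{2}\langle x,\nabla\phi\rangle + \tfrac{1}{\beta}\Delta\phi$.

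Next I would compute the source of the Poisson equation satisfied by $\mu_1$. Using $\nabla\rho = -\beta x\rho$, a pair of integrations by parts in $L^2(\rho)$ produces the adjoint
\[ L_1^*\psi = -\tfrac{3}{2}\langle x,\nabla\psi\rangle - \tfrac{d}{2}\psi + \tfrac{\beta|x|^2}{2}\psi + \tfrac{1}{\beta}\Delta\psi, \]
so that on constants $L_1^* 1 = (\beta|x|^2 - d)/2$. I would then solve $L\mu_1 = -L_1^* 1 = (d - \beta|x|^2)/2$ by the polynomial ansatz $\mu_1(x) = a + b|x|^2$: computing $L\mu_1 = -2b|x|^2 + 2bd/\beta$ and matching coefficients forces $b = \beta/4$, while the homogeneous constant $a$ is fixed by the mass-preservation requirement $\int \mu_1 \, d\rho = 0$ (needed so that $\pi^1$ has total mass $1 + O(\eta^2)$). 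Since $\mathbb{E}_\rho|x|^2 = d/\beta$, this gives $a = -d/4$, so that $\mu_1(x) = (\beta|x|^2 - d)/4$.

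Evaluating at the origin yields $\pi^1(0) = \rho(0)\bigl(1 + \eta\mu_1(0)\bigr) = \rho(0)(1 - \eta d/4)$, which is strictly negative once $\eta$ is sufficiently large (my computation gives the threshold $\eta d > 4$, slightly weaker than the stated $\eta > 2/d$, which one might reconcile by checking whether the paper's convention for normalizing $\mu_1$ differs). The argument is entirely elementary; the only bookkeeping obstacle is carrying out the integrations by parts correctly to extract $L_1^*$, and the conceptual content is simply that $\mu_1$ is negative at the mode of $\rho$, so the first-order correction forces the approximate stationary density below zero near the origin for large step sizes, witnessing that $\pi^N_{\mathbf{z}}$ need not be a positive measure.
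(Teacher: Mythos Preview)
Your approach is the same as the paper's: compute $A_2$, form $L_1 = A_2 - \tfrac12 L^2$, take adjoints to find $L_1^*1$, and solve the resulting Poisson equation for $\mu_1$ by a quadratic ansatz with the mean-zero constraint. The only organizational difference is that you first simplify $L_1$ to the second-order operator $-\tfrac12\langle x,\nabla\rangle + \tfrac1\beta\Delta$, whereas the paper computes $A_2^*1$ directly via $(A_2^*1)\rho = \partial_{ij}(x_ix_j\rho) + \tfrac{2}{\beta}\partial_{ijj}(x_i\rho) + \tfrac{1}{\beta^2}\partial_{iijj}\rho$; both routes are equivalent in principle.

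The numerical discrepancy you flagged is real, and your value is the correct one. The paper's $A_2$, as built by iterated It\^o expansion in the proof of Proposition~\ref{prop:AsExp}, is \emph{twice} the true $\eta^2$ coefficient of $\mathbb{E}\phi(X_1)$ (the double time integral contributes a factor $\eta^2/2$, not $\eta^2$), so the paper's $L_1^*1 = \beta|x|^2 - d$ is off by a factor of $2$ from your $L_1^*1 = (\beta|x|^2-d)/2$. One can confirm your $\mu_1(0)=-d/4$ independently: in one dimension the exact stationary law of the Euler scheme is Gaussian with variance $\sigma^2 = \tfrac{2}{\beta(2-\eta)}$, and expanding $\pi_\infty/\rho = \sqrt{1-\eta/2}\,\exp(\beta\eta x^2/4)$ to first order in $\eta$ gives $\mu_1 = (\beta x^2 - 1)/4$, hence $\mu_1(0)=-1/4$ per coordinate. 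So your threshold $\eta > 4/d$ is the right one; the qualitative conclusion that $\pi^1$ becomes signed for large enough $\eta$ is of course unaffected.
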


\begin{proof}

We first write down the generator of this process 
\[ L \phi = \sum_{i=1}^d - x_i \de_i \phi + \frac{1}{\beta} \de_{ii} \phi,
\]
and therefore leading to the following operator $A_2$,
\begin{align*}
    A_2 \phi &= 
        \sum_{\mathbf{k}} \sum_{i=1}^d
        A^\mathbf{k}_1(x) \left[
        (-x_i) \de_i + \frac{1}{\beta} \de_{ii} 
        \right] 
        \de_\mathbf{k} \phi \\
    &= \sum_{\mathbf{k}} \sum_{i,j=1}^d
    x_i x_j \de_{ij} \phi - \frac{2}{\beta} x_i \de_{ijj} \phi
        + \frac{1}{\beta^2} \de_{iijj} \phi.
\end{align*}

Following the definition of $L_1$ \eqref{eq:appendix_L_j}, we can compute 
\[ L_1 = A_2 + B_1 L_0 A_1 = A_2 + B_1 L^2,
\]
where $B_1$ is the first Bernoulli number. 
Now observe that since $L$ is self-adjoint with respect to $L^2(\rho)$, 
i.e. $L^\star = L$, we have
\[ L_1^\star 1 = A_2^\star + B_1 (L^\star)^2 1 = A_2^\star 1. \]

Here we will compute explicitly $A_2^\star 1$, and we start by writing
\[ (A_2^\star 1) \rho = \sum_{i,j=1}^d 
    \de_{ij}(x_i x_j \rho) + \frac{2}{\beta} \de_{ijj} (x_i \rho)
    + \frac{1}{\beta^2} \de_{iijj} \rho
    =: T_1 + T_2 + T_3.
\]

We first compute one derivative of $T_3$ to match $T_2$ to get 
\[ T_3 = \sum_{i,j=1}^d \de_{ijj} \frac{1}{\beta} ( -x_i \rho),
\]
this implies that we can combine $T_2$ and $T_3$
\[ T_2 + T_3 = \sum_{i,j=1}^d  \de_{ijj} \frac{1}{\beta} (x_i \rho)
    = \sum_{i,j=1}^d  
        \de_{ij} \left( \frac{\delta_{ij}}{\beta} \rho - x_i x_j \rho \right),
\]
where $\delta_{ij}$ denotes the Kronecker delta, 
and we add the $T_1$ term as well to get 
\[ T_1 + T_2 + T_3 = 
    \sum_{i,j=1}^d  \de_{ij} \frac{\delta_{ij}}{\beta} \rho
    = \sum_{i=1}^d \de_{ii} \frac{1}{\beta} \rho 
    = \sum_{i=1}^d \de_i \left( - x_i \rho \right)
    = \sum_{i=1}^d \beta x_i^2 \rho - d \rho.
\]

This implies we have the following PDE for $\mu_1$
\[ L \mu_1 = - L_1^\star 1 \implies
    \sum_{i=1}^d - x_i \de_i \mu_1 + \frac{1}{\beta} \de_{ii} \mu_1 = 
    d - \sum_{i=1}^d \beta x_i^2.
\]

Since the equation has a unique solution that satisfies the integral constraint 
$\int \mu_1 d \rho = 0$ 
(see Proposition \ref{prop:poisson_exist_unique}), 
we can explicitly guess the solution 
\[ \mu_1 = \frac{\beta}{2} |x|^2 - c,
\]
where to satisfy the integral constraint, we must have 
\[ c = \int \frac{\beta}{2} |x|^2 \frac{1}{ (2\pi / \beta )^{d/2}} 
    \exp\left( \frac{ -\beta |x|^2 }{2} \right) dx = \frac{d}{2}. 
\]

This implies 
\[ \inf_x \mu_1 = \mu_1(0) = - \frac{d}{2}.
\]

Finally this implies that whenever $\eta > \frac{2}{d}$, 
we have
\[ \pi^{1}(0) = \left(1 - \eta \frac{d}{2} \right) \rho(0) < 0.
\]

\end{proof}

Now that we know our approximate measure $\pi^N$ can be signed, 
we can no longer define uniform stability with respect to 
an expectation over the random algorithm. 
Instead we naturally extend the definition based on 
the integral with respect to the signed measure instead. 

\begin{definition}[Uniform Stability]
\label{def:uniform_stability}
A collection of distributions $\{\pi_\mathbf{z}\}$ 
on $\mathbb{R}^d$ indexed by $\mathbf{z} \in \mathcal{Z}^n$ 
is said to be \textbf{$\epsilon$-uniformly stable} if 
for all $\mathbf{z}, \overline{\mathbf{z}} \in \mathcal{Z}^n$
with only one differing coordinate 
\begin{equation}
    \sup_{z \in \mathcal{Z}} 
    \left| 
        \pi_{\mb{z}}( f(\cdot, z) ) - 
        \pi_{\overline{\mb{z}}}( f(\cdot, z) )
    \right| 
    \leq \epsilon.
\end{equation}
\end{definition}

\begin{proposition}
[Generalization]
Suppose the collection of distributions $\{\pi_\mathbf{z}\}$
is $\epsilon$-uniformly stable, 
and that for all $(\mathbf{z}, z) \in \mathcal{Z}^{n+1}$, 
we also have $f(\cdot, z) \in L^1( \pi_{\mb{z}} )$. 
Then $\{\pi_\mathbf{z}\}$ has $\epsilon$-expected generalization error, 
or more precisely 
\begin{equation}
    \left| \, 
        \mathbb{E} \left[
        \pi_\mathbf{z}( F_{\mb{z}} )
        - 
        \pi_\mathbf{z}( F ) 
        \right] \, 
    \right| 
    \leq 
        \epsilon \,.
\end{equation}
\end{proposition}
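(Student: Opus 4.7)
The plan is to use the classical renaming/symmetrization argument of \citet{Bousquet:2002}, which extends verbatim to signed measures since it relies only on linearity of the integral $\pi_{\mb z}(\cdot)$ and the uniform stability bound. The integrability hypothesis $f(\cdot,z) \in L^1(\pi_{\mb z})$ is only needed to guarantee that every quantity written below is well-defined.

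First I would expand $\mathbb{E}[\pi_{\mb z}(F_{\mb z})]$ using the definition $F_{\mb z}(x) = \frac{1}{n}\sum_{i=1}^n f(x,z_i)$ and linearity of $\pi_{\mb z}(\cdot)$ to write
\begin{equation*}
  \mathbb{E}\bigl[\pi_{\mb z}(F_{\mb z})\bigr]
  = \frac{1}{n}\sum_{i=1}^n \mathbb{E}\bigl[\pi_{\mb z}\bigl(f(\cdot,z_i)\bigr)\bigr].
\end{equation*}
For the other term I would introduce an independent ghost sample $z_i' \sim \mathcal{D}$, independent of $\mb z$. Since $z_i'$ is independent of $\mb z$ and has the same law as a fresh draw from $\mathcal{D}$, the tower property gives $\mathbb{E}[\pi_{\mb z}(F)] = \mathbb{E}[\pi_{\mb z}(f(\cdot,z_i'))]$ for every $i$, hence
\begin{equation*}
  \mathbb{E}\bigl[\pi_{\mb z}(F)\bigr]
  = \frac{1}{n}\sum_{i=1}^n \mathbb{E}\bigl[\pi_{\mb z}\bigl(f(\cdot,z_i')\bigr)\bigr].
\end{equation*}

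Next I would perform the key symmetrization step. Define $\mb z^{[i]} := (z_1,\ldots,z_{i-1},z_i',z_{i+1},\ldots,z_n)$, which differs from $\mb z$ only in the $i^{\text{th}}$ coordinate. Since $(\mb z, z_i')$ and $(\mb z^{[i]}, z_i)$ have the same joint distribution (they are related by swapping the i.i.d.\ pair $z_i \leftrightarrow z_i'$), a renaming of integration variables yields
\begin{equation*}
  \mathbb{E}\bigl[\pi_{\mb z}\bigl(f(\cdot,z_i')\bigr)\bigr]
  = \mathbb{E}\bigl[\pi_{\mb z^{[i]}}\bigl(f(\cdot,z_i)\bigr)\bigr].
\end{equation*}
Subtracting the two identities and using the triangle inequality, I obtain
\begin{equation*}
  \Bigl|\mathbb{E}\bigl[\pi_{\mb z}(F_{\mb z}) - \pi_{\mb z}(F)\bigr]\Bigr|
  \leq \frac{1}{n}\sum_{i=1}^n
    \mathbb{E}\Bigl|\pi_{\mb z}\bigl(f(\cdot,z_i)\bigr) - \pi_{\mb z^{[i]}}\bigl(f(\cdot,z_i)\bigr)\Bigr|.
\end{equation*}

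Finally, since $\mb z$ and $\mb z^{[i]}$ differ in exactly one coordinate, each summand is bounded pointwise by the $\epsilon$-uniform stability of $\{\pi_{\mb z}\}$, because the definition takes a supremum over $z \in \mathcal{Z}$ that dominates the value at the (random) $z = z_i$ almost surely. Averaging over $i$ gives the claimed bound $\epsilon$. No single step is really hard; the only subtlety is to make sure the renaming argument applies to signed measures, but this is immediate since $\pi_{\mb z}(\phi)$ is a deterministic functional of the data configuration $\mb z$ and a test function $\phi$, and the equality in distribution of $(\mb z, z_i')$ and $(\mb z^{[i]}, z_i)$ has nothing to do with positivity of $\pi_{\mb z}$.
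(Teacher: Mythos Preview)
Your proposal is correct and is essentially the same argument as the paper's: both carry out the standard Bousquet--Elisseeff renaming/symmetrization, introducing ghost samples, swapping $z_i \leftrightarrow z_i'$ to rewrite $\mathbb{E}[\pi_{\mb z}(F)]$ in terms of a dataset differing in one coordinate, and then invoking uniform stability termwise. The only cosmetic difference is that the paper draws a full ghost sample $\widehat{\mb z} = (\overline{z_1},\ldots,\overline{z_n})$ at once, while you introduce $z_i'$ coordinate by coordinate; the two presentations are equivalent.
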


\begin{proof}
it is sufficient to realize that since $f(\cdot, z)$
is integrable with respect to $\pi_\mathbf{z}$, 
all the usual manipulations are well defined. 
We will include the proof for completeness 
as it is an unintuitive claim.  

We will start by denoting $\mathbf{z} = (z_1, \ldots, z_n)$, 
$\widehat{\mathbf{z}} = (\overline{z_1}, \ldots, \overline{z_n})$, 
and also the replaced one data set
$\widehat{\mathbf{z}}^{(i)} = (z_1, \ldots, z_{i-1}, \overline{z_i}, z_{i+1}, \ldots, z_n)$.

With this we can write 
\begin{align*}
    \mathbb{E}_{\mathbf{z}\sim \mathcal{D}^n} \int_{\mathbb{R}^d}
        \frac{1}{n} \sum_{i=1}^n f(x, z_i) d\pi_\mathbf{z}(x)
    &= \mathbb{E}_{(\mathbf{z}, \widehat{\mathbf{z}}) \sim \mathcal{D}^{2n}} 
        \int_{\mathbb{R}^d} \frac{1}{n} \sum_{i=1}^n 
        f(x, \overline{z_i}) d\pi_{\widehat{\mathbf{z}}^{(i)}}(x) \\
    &= \mathbb{E}_{(\mathbf{z}, \widehat{\mathbf{z}}) \sim \mathcal{D}^{2n}} 
        \int_{\mathbb{R}^d} \frac{1}{n} \sum_{i=1}^n 
        f(x, \overline{z_i}) d\pi_{\mathbf{z}}(x) + \delta,
\end{align*}
where we define $\delta$ as 
\[ \delta := 
    \mathbb{E}_{(\mathbf{z}, \widehat{\mathbf{z}}) \sim \mathcal{D}^{2n}} 
        \int_{\mathbb{R}^d} \frac{1}{n} \sum_{i=1}^n 
        f(x, \overline{z_i}) d\pi_{\widehat{\mathbf{z}}^{(i)}}(x)
        - \int_{\mathbb{R}^d} \frac{1}{n} \sum_{i=1}^n 
        f(x, \overline{z_i}) d\pi_{\mathbf{z}}(x).
\]

Therefore, $\epsilon$-uniform stability implies 
the difference between the two integrals is at most $\epsilon$, 
hence we have the generalization error is $|\delta| \leq \epsilon$.

\end{proof}

We observe that uniform stability is a stronger notion
than needed here. In fact, we will only need a sense of stability in expectation for generalization in expectation \citep{SSSSS10}.  
Regardless, we have shown that if $\pi^N_\mathbf{z}$ 
is $\epsilon$-uniformly stable, then we have also have that $|T_2| \leq \epsilon$ as desired.

\section{Additional Calculations}
\label{sec:calc}

In this section, we will consider the Ornstein-Uhlenbeck process 
in $\mathbb{R}$, or $d=1$
\[ dX(t) = - X(t)dt + \sqrt{2/\beta} \, dW(t),
\]
where $W(t)$ is the standard \textbf{one-dimensional} Brownian motion. 
This corresponds to the loss function being $F(x) = \frac{1}{2} x^2$.
This implies we have the Gibbs distribution is a one-dimensional Gaussian 
\[ \rho(x) = \frac{1}{\sqrt{2\pi / \beta}} 
        \exp\left(-\frac{\beta x^2}{2}\right).
\]

\subsection{Computing the Relevant Operators}

The above definition leads us to the following simple generator 
\[ L \phi = -x \de \phi + \frac{1}{\beta} \de^2 \phi, 
\]
and therefore leading to the operators 
\[ A_k = \sum_{\ell = 0}^k \binom{k}{\ell} (-x)^\ell \frac{1}{\beta^{k-\ell}}
  \de^{2k - \ell}.
\]

We will also obtain the operators 
\begin{align*}
  L_0 &= L, \\
  L_1 &= A_2 + B_1 L_0 A_1 = A_2 + B_1 L^2, \\
  L_2 &= A_3 + B_1 (L_0 A_2 + L_1 A_1) + B_2 (L_0 L_0 A_1). 
\end{align*}

Next we will compute several adjoint operators 
with respect to $L^2(\rho)$. 

First we recall $L$ is self-adjoint, i.e. $L^* = L$.

Then we can compute for all $f,g \in C^\infty_{\pol}(\mathbb{R})$
\[ \int \de f g \rho dx 
  = \int - f \de g \rho  - f g \de \rho dx
  = \int f ( - \de + \beta x ) g \rho dx,
\]
which means $(\de)^* = -\de + \beta x$. 

At this point, we will use Mathematica \citep*{Mathematica} 
to compute all adjoint operators by composition of 
the first order adjoint operator. 
For example, we can compute all of the following using a recursion scheme, 
where $a \in C^\infty_{\pol}(\mathbb{R})$
\begin{align*}
  (a \de)^* &= -a \de - \de a + \beta x a, \\
  (\de^2)^* &= \de^2 - 2\beta x \de + \beta^2 x^2 - \beta, \\
  (a \de^2)^* &= a \de^2 + 2( \de a - \beta x a) \de 
      + (\de^2 a - \beta a - 2 \beta x \de a + \beta^2 x^2 a), \\
  (\de^3)^* &= - \de^3 + 3 \beta x \de^2 + (3\beta - 3\beta^2 x^2) \de
      + (\beta^3 x^3 - 3 \beta^2 x).
\end{align*}

Next we will compute $A_2^*, A_3^*$
\begin{align*}
  A_2^* 
    &= \left(x^2 \de^2 - \frac{2 x}{\beta} \de^3 + \frac{1}{\beta^2} \de^4
      \right)^* \\
    &= \frac{1}{\beta^*} \de^4 - \frac{2x}{\beta} \de^3 
      + x^2 \de^2 - 2x \de + (\beta x - 1). \\
  A_3^* 
    &= \left(- x^3 \de^3 + \frac{3x^2}{\beta} \de^4 + \frac{-3x}{\beta^2} \de^5
      + \frac{1}{\beta^3} \de^6 \right)^* \\
    &= \frac{1}{\beta^3} \de^6 - \frac{3x}{\beta^2} \de^5
      + \frac{3x^2}{\beta} \de^4 - \left( x^3 + \frac{6x}{\beta} \right) \de^3
      + \left(9x^2 - \frac{9}{\beta} \right) \de^2 
      - (9x - 3\beta x^3) \de.
\end{align*}

Finally, we can conclude 
\begin{align*}
  L_1^* &= A_2^* + B_1 L^2, \\
  L_2^* &= A_3^* + B_1 ( A_2^* L + L L_1^* ) + B_2 L^3.
\end{align*}

\subsection{Solving the PDEs}

To compute the approximations for $\pi^2 = \rho( 1 + \eta \mu_1 + \eta^2 \mu_2)$, 
we will need to solve the following two PDEs
\[ L \mu_1 = - L_1^* 1, \quad 
    L \mu_2 = - L_1^* \mu_1 - L_2^* 1, 
\]
with the natural constraint that $\int \mu_1 d\rho = \int \mu_2 d\rho = 0$.

\subsubsection{First PDE}

From the previous section, we have that 
\[ - L_1^* 1 = - (A_2^* + B_1 L^2) 1 = - A_2^* 1 = 1 - \beta x^2.
\]

This means we need to first solve 
\[ - x \de \mu_1 + \frac{1}{\beta} \de^2 \mu_1 = 1 - \beta x^2.
\]

Here we can guess the solution $\mu_1 = \frac{\beta}{2} x^2 + c$, 
which gives us 
\[ -x (\beta x) + \frac{1}{\beta} \beta = 1 - \beta x^2,
\]
and the constant is naturally 
\[ c = - \int \frac{\beta}{2} x^2 d\rho = - \frac{\beta}{2} \frac{1}{\beta} 
  = \frac{-1}{2},
\]
where we used the fact that the Gaussian variance is $1/\beta$.

\subsubsection{Second PDE}

We need to compute $ - L_1^* \mu_1 = - (A_2^* + B_1 L^2) \mu_1$, 
starting with the first term $A_2^* \mu_1$
\[ \left( \frac{1}{\beta^2} \de^4 - \frac{2x}{\beta} \de^3 + x^2 \de^2
    - \frac{1}{\beta} \de^2 + \beta x^2 - 1 \right) 
    \left( \frac{\beta}{2} x^2 - \frac{1}{2}
    \right)
    = \frac{1}{2}(\beta x^2 - 1).
\]

\subsection{Toy Example}
\label{subsec:toy_example}

In this subsection, we consider a one-dimensional problem of 
optimizing a loss function $F(x) = \frac{1}{2} x^2$ 
with deterministic gradient. 
The Langevin update can be written as 
\[ X_{k+1} = X_k - \eta X_k + \sqrt{ \frac{2\eta}{\beta} } \xi_k,
    \quad X_0 = x.
\]
where $\xi_k \sim N(0,1)$ are i.i.d. random variables. 

In this case, the corresponding continuous time Langevin process is 
described by the following SDE
\[ dX(t) = - X(t) dt + \sqrt{ \frac{2}{\beta} } dW(t), 
    \quad X(0) = x, 
\]
where $W(t)$ is a standard Brownian motion. 

Observe we can compute the time-marginal distributions of both processes. 
Let's start by observing that $X_k$ is a sum of i.i.d. normal random variables, 
hence it must also be a normal random variable. 
We simply need to estimate the mean and variance. 

To compute the parameters, we start by rewriting the update rule as
\begin{align*}
  X_{k+1} &= X_k - \eta X_k + \sqrt{ \frac{2\eta}{\beta} } \xi_k \\
  &= (1-\eta) X_k + \sqrt{2\eta}{\beta} \xi_k \\
  &= (1-\eta) \left( (1-\eta) X_{k-1} + \sqrt{\frac{2\eta}{\beta}} \xi_{k-1} 
    \right) + \sqrt{2\eta}{\beta} \xi_k \\
  &= (1-\eta)^{k+1} X_0 + \sum_{\ell = 0}^k 
    \sqrt{2\eta}{\beta} (1-\eta)^\ell \xi_\ell, 
\end{align*}
this implies $\mathbb{E}X_k = (1-\eta)^{k+1} x$, 
and we also have the variance 
\[ \mathbb{E} (X_k - \mathbb{E}X_k )^2 
  = \sum_{\ell = 0}^k \frac{2\eta}{\beta} (1-\eta)^{2\ell}
  = \frac{2}{\beta (2-\eta)} \left[ 1 - (1-\eta)^{2(k+1)} \right]. 
\]

It is well known that the continuous time SDE is the Ornstein-Uhlenbeck process, 
which has the following solution \citep{kuo06StocInt}
\[ X(t) = X(0) e^{-t} + \int_0^t e^{-(t-s)} dW(s),
\]
which is a Gaussian process. The mean and variance can also be computed as 
\begin{align*}
  \mathbb{E} X(t) &= x e^{-t}, \\
  \mathbb{E} ( X(t) - \mathbb{E} X(t))^2 &= 
  \int_0^t e^{-2(t-s)} ds 
  = \frac{1}{\beta} (1 - e^{-2t}).
\end{align*}

In the actual plot of Figure \ref{fg:backward_error} (b), 
we used parameters $\beta = 20, \eta = 0.5$. 
Additionally, the stationary distribution of the discrete Langevin algorithm 
is found by a kernel density smoothing for $10,000$ steps of simulation 
of $X_k$, and we smoothed using a normal density with 
parameter $\sigma = 0.1$.

\subsection{Plot for Figure \ref{fg:backward_error} (a)}

In the plot, we consider the forward Euler discretization of the ODE
\[ \frac{dy}{dt} = y^2, \quad y(0) = 1,
\]
which gives the following update
\[ y_{k+1} = y_k + \eta y_k^2, \quad y_0 = 1.
\]

The true solution is given by 
\[ y(t) = \frac{1}{1 - t},
\]
which the modified equation is given by 
\[ \frac{dv}{dt} = v^2 - \eta v^3 + \eta^2 \frac{3}{2} v^4 + \cdots, 
  \quad v(0) = 1.
\]

To generate the plot in Figure \ref{fg:backward_error} (a), 
we used a step size $\eta = \frac{1}{6}$ 
for the forward Euler solver, 
and for the modified equation we approximated the solution 
using forward Euler with step size $\eta' = \frac{1}{600}$ 
so it is sufficiently close to the true solution.

\end{document}